\theoremstyle{plain}
\newtheorem{theorem}{Theorem}
\newtheorem{proposition}{Proposition}
\newtheorem{lemma}{Lemma}
\newtheorem{corollary}{Corollary}
\theoremstyle{definition}
\newtheorem{definition}{Definition}
\newtheorem{assumption}{Assumption}
\theoremstyle{remark}
\newtheorem{remark}{Remark}
\newcommand{\R}{\mathbb{R}}
\newcommand\numberthis{\addtocounter{equation}{1}\tag{\theequation}}
\def\bw{\mathbf{w}}
\def\gep{\epsilon}
\def\X{\mathcal{X}}
\def\bx{\mathbf{x}}
\def\EX{\mathbb{E}}
\def\bw{\mathbf{w}}
\def\gep{\epsilon}
\def\0{\mathbf{0}}
\def\var{\hbox{Var}}
\def\begeqn{\begin{equation}}
\def\endeqn{\end{equation}}
\def\begth{\begin{theorem}}
\def\endth{\end{theorem}}
\def\begprop{\begin{proposition}}
\def\endprop{\end{proposition}}
\def\begcor{\begin{corollary}}
\def\endcor{\end{corollary}}
\def\begdef{\begin{definition}}
\def\enddef{\end{definition}}
\def\beglemm{\begin{lemma}}
\def\endlemm{\end{lemma}}
\def\begexm{\begin{example}}
\def\endexm{\end{example}}
\def\begrem{\begin{remark}}
\def\endrem{\end{remark}}
\def\begdef{\begin{definition}}
\def\enddef{\end{definition}}
\def\bw{\mathbf{w}}
\def\bx{\mathbf{x}}
\def\EX{\mathbb{E}}
\def\R{\mathbb{R}}
\def\X{\mathcal{X}}
\def\equalr#1{\protected@xdef\@thanks{\@thanks
        \protect\footnotetext{$^\dagger$#1}}}
\def\correspond#1{\protected@xdef\@thanks{\@thanks
        \protect\footnotetext{$^\ast$#1}}}
\newcommand{\acksection}{\section*{Acknowledgments and Disclosure of Funding}}
\begin{document}
\twocolumn[

\aistatstitle{Stability and Generalization of Stochastic Compositional Gradient Descent Algorithms}

\aistatsauthor{  Ming Yang$^\dagger$ \equalr{Equal contribution}\And  Xiyuan Wei$^\dagger$ \And  Tianbao Yang \And Yiming Ying }

\aistatsaddress{ University   at Albany, SUNY\\
       Albany, NY, United States\\
  \texttt{myang6@albany.edu} \\ \And  Texas A\&M University \\ 
College Station, TX, USA \\
  \texttt{xwei@tamu.edu}\\ \And   Texas A\&M University \\ 
College Station, TX, USA \\
  \texttt{tianbao-yang@tamu.edu}\\\And University  at Albany, SUNY\\
       Albany, NY, United States\\
  \texttt{yying@albany.edu}\\} ]

 % \footnote{Equal contribution}
\begin{abstract}
Many machine learning tasks can be formulated as a stochastic compositional optimization (SCO) problem such as reinforcement learning, AUC maximization, and meta-learning, where the objective function involves a nested composition associated with an expectation. While a significant amount of studies has been devoted to studying the convergence behavior of SCO algorithms, there is little work on understanding their generalization, i.e., how these learning algorithms built from training examples would behave on future test examples. In this paper, we provide the stability and generalization analysis of stochastic compositional gradient descent algorithms through the lens of algorithmic stability in the framework of statistical learning theory. Firstly, we introduce a stability concept called {\em compositional uniform stability} and establish its quantitative relation with generalization for SCO problems. Then, we establish the compositional uniform stability results for two popular stochastic compositional gradient descent algorithms, namely SCGD and SCSC.  Finally, we derive {\em dimension-independent} excess risk bounds for SCGD and SCSC by trade-offing their stability results and optimization errors. To the best of our knowledge, these are the first-ever-known results on stability and generalization analysis of stochastic compositional gradient descent algorithms.
\end{abstract}

\section{Introduction} \label{section1}
Recently,  {\em stochastic compositional optimization (SCO)} has gained considerable interests \cite[e.g.]{chen2021solving,chen2021tighter,dentcheva2017statistical,ghadimi2020single,hu2020sample,tolstaya2018nonparametric,wang2017stochastic,wang2016accelerating,zhang2020optimal} in machine learning. It  has the following form: 
\begeqn\label{eq:sco-true} \min_{x\in \X} \Bigl\{F(x) = {f}\circ g(x) = \EX_\nu[ f_\nu ( \EX_\omega[ g_\omega(x) ] ) ]\Bigr\},
\endeqn 
where $f\circ g(x) = f(g(x))$ denotes the function composition, $f: \R^d \to \R$ and $g: \R^p \to 
\R^d$ are differentiable functions,  $\nu$, $\omega$ are random variables, and $\X$ is a convex domain in $\R^p.$  
SCO generalizes the classic (non-compositional) stochastic optimization where its objective function $F(\cdot)$ involves nested compositions of functions and  each composition is  associated with an 
expectation.

SCO problem \eqref{eq:sco-true} instantiates a number of application domains.  For instance, reinforcement learning \citep{sutton2018reinforcement,szepesvari2010algorithms} aims to get 
a value function of the given policy  which can be regarded as an SCO problem \citep{wang2017stochastic}. The risk-averse
portfolio optimization \citep{shapiro2021lectures}, bias-variance issues in supervised learning \citep{dentcheva2017statistical,tolstaya2018nonparametric}, and group distributionally robust optimization \citep{jiang2022optimal} can also be formulated in similar SCO forms. Model-agnostic meta-learning (MAML) \citep{finn2017model} finds a common initialization for a quickly adaption  to  new tasks which was essentially an SCO problem as pointed out in \cite{chen2021solving}. The recent task of AUC maximization \cite[e.g.][]{kar2013generalization,liu2018fast,ying2016stochastic,AUCSurvey2023,zhao2011online} for imbalanced classification  aims to rank positive examples above  negative ones. In \cite{lei2021stochastic,AUCSurvey2023}, it can be regarded as an SCO problem: $ \min_{\bw\in \mathbb{R}^d}\mathbb{E}[\left(h_\mathbf{w}(\bx)) - a(\bw))\right)^2|y=1]+\mathbb{E}[\left(h_\mathbf{w}(\bx^{\prime})-b(\bw)\right)^2|y^{\prime}=1]+\left(1-a\left(\mathbf{w}\right)+b\left(\mathbf{w}\right)\right)^2,$ where $h_\bw(\cdot)$ is the decision function, $a(\bw) = \EX[h_\bw(\bx)|y=1]$, and $b(\bw) = \EX[h_\bw(\bx')|y'=-1].$ Likewise, other important  learning tasks such as the maximization of the area  under precision-recall curves (AUCPRC) and other compositional performance measures  can be cast in a similar fashion \citep{yang2022algorithmic}. 

There is a  substantial amount of studies devoted to studying the convergence behavior of stochastic compositional optimization algorithms for solving \eqref{eq:sco-true}. \cite{wang2017stochastic} pioneered the non-asymptotic   analysis of the so-called stochastic compositional gradient decent algorithms (SCGD) which employed two time scales with a slower stepsize for updating the variable   and a faster one used in the moving average sequence $y_{t+1}$ to track the inner function $g(x_t)$. An accelerated version of SCGD   has been
 analyzed in \cite{wang2016accelerating} and its adapted variant was studied in \cite{tutunov2020compositional}.  In particular, \cite{chen2021solving} proposed the stochastically corrected SCGD called SCSC which was shown to enjoy the same convergence rate as that of the standard SGD in the non-compositonal setting. Further extensions and their convergence analysis were investigated in different settings such that the  single timescale \citep{ghadimi2020single,ruszczynski2021stochastic},   variance reduction techniques \citep{hu2019efficient,devraj2019stochastic,lin2018improved}, and applications to non-standard learning tasks \citep{yang2022algorithmic}.  

On the other important front, one crucial aspect of machine learning is the development of learning algorithms that can achieve strong generalization performance. Generalization refers to the ability of a learning algorithm to perform well on unseen or future test data, despite being trained on a limited set of historical training data.  In the last couple of years, we have witnessed a large amount of work on addressing the generalization analysis of the vanilla stochastic gradient descent (SGD) with focus on the classical ERM formulation in the {\em non-compositional} setting. In particular,  stability and generalization of SGD have been studied using the uniform argument stability \citep{bassily2020stability,charles2018stability,hardt2016train,kuzborskij2018data} and on-average  model stability \citep{lei2020fine}.  In \cite{farnia2021train,lei2021stability,zhang2021generalization}, different stability and generalization measures are investigated for minimax optimization algorithms. However, to the best of our knowledge, there is no work on understanding the important stability and generalization properties of stochastic compositional optimization algorithms despite its  surging popularity in solving many machine learning tasks \cite[e.g.][]{chen2021solving,dentcheva2017statistical,jiang2022optimal,wang2017stochastic,AUCSurvey2023,yang2022algorithmic}.

\noindent\textbf{Our Contributions.} In this paper, we are mainly interested in the stability and generalization of stochastic compositional optimization algorithms in the framework of Statistical Learning Theory\cite[e.g.]{vapnik2013nature,bousquet2004introduction}.
Our main contributions are summarized as follows. 
\vspace*{-2mm}
\begin{itemize}[leftmargin=4mm]
 \setlength\itemsep{0.5mm}
\item We introduce a stability concept called {\em compositional uniform stability} which is tailored to handle the function composition structure in SCO problems. Furthermore, we show the qualitative connection between  this stability concept and the generalization error for randomized SCO algorithms. Regarding to the technical contributions, we show that this connection can mainly be derived by estimating the stability terms involving the outer function $f_\nu$ and the vector-valued generalization term of the inner function $g_\omega$ which will be further estimated using the sample-splitting argument \cite[e.g.]{bousquet2020sharper,lei2022nonsmooth}.

\item More specifically,  we establish the compositional uniform stability   of SCGD and SCSC in the convex and smooth case. Our stability bound mainly involves two   terms, i.e., the empirical variance  associated with the inner function $g_\omega$ and the convergence of  the moving-average sequence   to track $g_S(x_t)$. Then we establish the excess risk bounds $\mathcal{O}(1/\sqrt{n}+1/\sqrt{m})$ for both SCGD and SCSC by balancing the stability results and optimization errors, where $n$ and $m$ denote the numbers of training data  involving $\nu$ and $\omega$, respectively. Our results demonstrate that, to achieve the same excess risk rate of $\mathcal{O}(1/\sqrt{n}+1/\sqrt{m})$, SCGD requires a larger number of iterations, approximately $T\asymp \max(n^{3.5},m^{3.5})$, while SCSC only needs $T\asymp \max(n^{2.5},m^{2.5})$.

\item  We further extend the analysis of stability and generalization for SCGD and SCSC in the strongly convex and smooth case. Specifically, we demonstrate that SCGD requires approximately $T\asymp \max(n^{5/3},m^{5/3})$ iterations, while SCSC only needs $T\asymp \max(n^{7/6},m^{7/6})$ iterations to achieve the excess risk rate of $\mathcal{O}(1/\sqrt{n}+1/\sqrt{m})$. 

\end{itemize}

\vspace*{-2mm}
\subsection{Related Work} 
\vspace*{-2mm}
In this section, we review related works on algorithmic stability and generalization analysis of stochastic optimization algorithms, and algorithms for compositional problems.

\textbf{Stochastic Compositional Optimization.} 
The seminal work by \cite{wang2017stochastic} introduced SCGD with two time scales, and \cite{wang2016accelerating} presented an accelerated version. \cite{lian2017finite} incorporated variance reduction, while \cite{ghadimi2020single} proposed a modified SCGD with a single timescale. \cite{chen2021solving} introduced SCSC, a stochastically corrected version with the same convergence rate as vanilla SGD. \cite{ruszczynski2021stochastic, zhang2020optimal} explored problems with multiple levels of compositions, and \cite{wang2022finite} proposed SOX for compositional problems. Recently, there has been growing interest in applying stochastic compositional optimization algorithms to optimize performance measures in machine learning, such as AUC scores \cite[e.g.]{qi2021stochastic, lei2021stochastic, yang2022algorithmic}. Most of these studies have primarily focused on convergence analysis.

\textbf{Algorithmic Stability and Generalization for the Non-Compositional Setting.} Uniform stability and generalization of ERM were established by \cite{bousquet2002stability} in the strongly convex setting. \cite{elisseeff2005stability} studied stability of randomized algorithms, and \cite{feldman2019high, bousquet2020sharper} derived high-probability generalization bounds for uniformly stable algorithms. \cite{hardt2016train} established uniform argument stability and generalization of SGD in expectation for smooth convex functions. \cite{kuzborskij2018data} established data-dependent stability results for SGD. On-average model stability and generalization of SGD were derived in \cite{lei2020fine} for convex objectives in both smooth and non-smooth settings. Stability and generalization of SGD with convex and Lipschitz continuous objectives were studied in \cite{bassily2020stability}. For non-convex and smooth cases, stability of SGD was investigated in \cite{charles2018stability, lei2020fine, lei2022stability}. Further extensions were conducted for SGD in pairwise learning \citep{shen2019stability, yang2021simple}, Markov Chain SGD \citep{wang2022stability}, and minimax optimization algorithms \citep{farnia2021train, lei2021stability}. However, existing studies have primarily focused on SGD algorithms  and their variants for the standard ERM problem in the non-compositional setting.

Recently, \cite{hu2020sample} studied the generalization and uniform stability of the exact minimizer of the ERM counterpart for the SCO problem using the uniform convergence approach \citep{bartlett2002rademacher,vapnik2013nature,zhou2002covering}. They also showed uniform stability of its ERM minimizer under the assumption of a H\"{o}lderian error bound condition that instantiates strong convexity. Their bounds are algorithm-independent. To the best of our knowledge, there is no existing work on stability and generalization for stochastic compositional optimization algorithms, despite their popularity in solving machine learning tasks.

\noindent\textbf{Organization of the Paper.} The paper is organized as follows. Section 2 formulates the learning problem and introduces necessary stability concepts. Two popular stochastic compositional optimization algorithms, SCGD \citep{wang2017stochastic} and SCSC \citep{chen2021solving}, for solving \eqref{eq:sco-true} are presented. The main results on stability and generalization for SCGD and SCSC algorithms are illustrated in Section 3. Finally, Section 4 concludes the paper.

\section{Problem Setting}   \label{sec:problem}
In this section, we illustrate the target of generalization analysis and the stability concept used in the framework of Statistical Learning Theory \citep{vapnik2013nature,bousquet2004introduction}. Then, we describe two popular optimization schemes, i.e.,  SCGD and SCSC, for solving the SCO problems as well as other necessary notations.

\noindent\textbf{Target of Generalization Analysis.} For simplicity, we are mainly concerned with the case that the random variables $\nu$ and $\omega$ are independent which means that $g(\bx) = \EX[g_\omega (\bx)] = \EX[g_\omega (\bx) | \nu]$ for any $\nu$. This is the case which was considered in \cite{wang2017stochastic}. In practice, we do not know the population distributions for $\nu$ and $\omega$ for SCO problem \eqref{eq:sco-true} but only have access to a set of training data $ S  = S_\nu \cup S_\omega$ where both 
$ S_\nu = \bigl\{ \nu_i:  i =1, \ldots, n \bigr\}$ and $   S_\omega = \bigl\{\omega_j:  j =1, \ldots m \bigr\}$ are {\em distributed independently and identically (i.i.d.)}.  
As such, SCO problem \eqref{eq:sco-true} is reduced to the following nested empirical risk for SCO:   
\begeqn\label{eq:sco-erm} \min_{x\in \X}\bigl\{F_S(x) := f_S(g_S(x)) = \frac{1}{n}\sum_{i=1}^n f_{\nu_i}\bigl( \frac{1}{m} \sum_{j=1}^m g_{\omega_j}(x) \bigr)\bigr\},
\endeqn 
where $g_S: \R^p \to \R^d $ and $f_S: \R^d \to \R$ are the empirical versions of $f$ and $g$ in \eqref{eq:sco-true} and are defined, respectively, by $g_S(x)= \frac{1}{m} \sum_{j=1}^m g_{\omega_j}(x)$  and $ f_S(y) = {1\over n}\sum_{i=1}^n f_{\nu_i}(y).$
We refer to $F(x)$ and $F_S(x)$ as the {\em (nested) true risk and empirical risk}, respectively, in this stochastic compositional setting. 

Denote the least (nested) true and empirical  risks, respectively, by $F(x_\ast) = \inf_{x\in \X} F(x)$ and  $F(x^S_\ast) = \inf_{x\in \X} F_S(x).$ For a randomized algorithm $A$, denote by $A(S)$ its output model  based on  the training data $S$.   Then, our ultimate goal is to analyze the {\em excess generalization error (i.e., excess risk)} of $A(S)$  which is given by 
 $F(A(S)) - F(x_\ast).$
 It can be decomposed as follows:  {\small \begin{align*}\label{eq:error-decomp}
    &\EX_{S,A}[F(A(S)) - F(x_*) ] = \EX_{S,A}[F(A(S)) - F_S(A(S)) ] \\ & + \EX_{S,A}[F_S(A(S)) - F_S(x_*) ]\\ &
     \leq \EX_{S,A}[F(A(S)) - F_S(A(S)) ] \\ & + \EX_{S,A}[F_S(A(S)) - F_S(x_*^S) ], \numberthis 
\end{align*} }
where we have used the fact that $F_S(x_*^S)\leq F_S(x_*)$ by the definition of $x_*^S.$  The first term on the right hand side of \eqref{eq:error-decomp} is called the {\em generalization (error) gap} (i.e., estimation error) and the second term is the optimization error.  The optimization error (convergence analysis) in our study builds upon the analysis conducted in previous works such as \citep{wang2017stochastic,chen2021solving}. However, our main focus is on estimating the generalization gap using the algorithmic stability approach \citep{bousquet2002stability,hardt2016train,lei2020fine}. In order to achieve this, we introduce a proper definition of stability in the compositional setting, which will be outlined below. 

\noindent\textbf{Uniform Stability for SCO.} Existing work of stability analysis \cite[e.g.]{hardt2016train,kuzborskij2018data,lei2020fine} focused on SGD algorithms in the non-compositional ERM setting.  We will extend the algorithmic stability analysis to estimate the estimation error (i.e., generalization gap) for SCO problems.

In our new setting,   when we consider  neighboring training data sets differing  in one single data point,  the change of one data point can happen in either  $S_\nu$ or $S_\omega.$  In particular, for any $i\in [1,n]$ and $j\in [1,m]$, let $ S^{i,\nu}$ be the i.i.d copy of $S$ where only $i$-th data point $\nu_i$ in $S_\nu$ is changed to $\nu'_i$ while $S_\omega$ remains the same. Likewise,   denote by $ S^{j,\omega}$ the i.i.d copy of $S$ where only $j$-th data point $\omega_\ell$ in $S_\omega$ is changed to $\omega'_j$ while $S_\nu$ remains unchanged. Throughout the paper, we also denote by $S'= S'_\nu \cup S'_\omega$ the i.i.d. copy of $S$ where $S'_\nu = \{ \nu'_1,\ldots, \nu'_n\}$ and $S'_\omega = \{ \omega'_1,\ldots, \omega'_m\}.$

\begin{definition}[Compositional Uniform Stability]\label{def:stability} We say that a randomized algorithm $A$ is $(\gep_\nu,\gep_\omega)$-uniformly stable for SCO problem \eqref{eq:sco-true}  if, any $i\in [1,n]$, $j \in [1,m]$, there holds 
\begin{align*}
    \EX_A [\| A(S) - A(S^{i,\nu})\|] \le \gep_\nu, \\\hbox{ and } \EX_A[\| A(S) - A(S^{j,\omega})\|] \le \gep_\omega,\numberthis \label{eq:stable}
\end{align*}
% \begeqn\label{eq:stable}
% \EX_A [\| A(S) - A(S^{i,\nu})\|] \le \gep_\nu, \hbox{ and } \EX_A[\| A(S) - A(S^{j,\omega})\|] \le \gep_\omega,
% \endeqn
where the expectation $\EX_A[\cdot]$ is taken w.r.t. the internal randomness of $A$ not the data points.
%, and the uniform bound holds true for any 
\end{definition}

We will show the  relationship between the {\em compositional uniform stability}  (i.e., Definition \ref{def:stability}) and the generalization error (gap) which holds true for any randomized algorithm. To this end, we need the following assumption. 
\begin{assumption} \label{assum:1}
    We assume  that $f_\nu$ and $g_\omega$ are Lipschitz continuous  with parameters $L_f$ and $L_g$, respectively, i.e., 
    \begin{enumerate}[label=({\roman*})]
        \item \label{assum:1a} $ \sup_{\nu}\|f_\nu(y)-f_\nu(\hat{y})\|\leq L_f\|y-\hat{y}\|$ for all $ y,\hat{y}\in \mathbb{R}^d.$
        \item \label{assum:1b} $\sup_{\omega}\|g_\omega(x)-g_\omega(\hat{x})\|\leq L_g\|x-\hat{x}\|$ for all $x,\hat{x}\in \mathbb{R}^p.$
    \end{enumerate}
\end{assumption}

The following theorem establishes the relationship between the stability of SCGD and its generalization.  
 
\begth  \label{thm:1} If Assumption \ref{assum:1} is true and the randomized algorithm $A$ is $\gep$-uniformly stable then 
% \begin{equation*}
%      \EX_{S,A}\Big[  F(A(S))  - F_S(A(S)) \Big] \leq L_f L_g\epsilon_\nu+4L_fL_g\epsilon_\omega\\
%      +L_f\sqrt{m^{-1}\mathbb{E}_{S,A}[\text{Var}_\omega(g_\omega(A(S)))]},
% \end{equation*}
\begin{align*}
        \EX_{S,A}\Big[  F(A(S))  - F_S(A(S)) \Big] \leq L_f L_g\epsilon_\nu+4L_fL_g\epsilon_\omega\\
     +L_f\sqrt{m^{-1}\mathbb{E}_{S,A}[\text{Var}_\omega(g_\omega(A(S)))]},
\end{align*}
where the variance term $\var_\omega(g_\omega(A(S))) = \EX_\omega\bigl[\| g_\omega(A(S)) - g(A(S))\|^2 \bigr].$
\endth

\begin{remark} Theorem 1 describes the relationship between the compositional uniform stability and generalization (gap) for any randomized algorithm for SCO problems. It can be regarded as an extension of the counterpart for the non-compositional setting \citep{hardt2016train}. Indeed,  if we let $g_\omega(x) = x$, then $g_S(x) = g_{\omega}(x) = x$ for any $\omega$ and $S$, the SCO problem is reduced to the standard non-compositional setting, i.e., $F(x) = \EX_\nu[f_{\nu}(x)]$ and $F_S(x) = {1\over n}\sum_{i=1}^n f_{\nu_i}(x).$  In this case, our result in Theorem 1 indicates, since there is no randomness w.r.t. $ \omega,$ that $\EX_{S,A}\big[F(A(S)) - F_S(A(S)) \big] \leq L_f \gep_\nu  $ which is exactly the case in the non-compositional setting \citep{hardt2016train}. 
\end{remark}

\begin{remark}\label{remark2} There are major technical challenges in deriving the relation between stability and generalization for SCO algorithms. To clearly see this, recall that, in the classical (non-compositional) setting, given i.i.d. data $S= \{z_1,\ldots,z_n\}$, the empirical and population risks are given by $F_S(A(S)) = \frac{1}{n} \sum_{i=1}^n f(A(S); z_i)$ and $F(A(S)) = \EX_z[ f(A(S); z]$, respectively. Let $S^i = \{z_1,\ldots, z_{i-1}, z'_i, z_{i+1},\ldots,z_n\}$ be the i.i.d. copy of $S$ but differs in the $i$-th data point. Using the symmetry between the i.i.d. datasets $S= \{z_1,\ldots, z_n\}$ and $S'= \{z'_1, z'_2, \ldots, z'_n\},$ one can immediately relate $\EX_{S,A} [F(A(S))-F_S(A(S))] = \EX_{S,A, S'} \big[\frac{1}{n} \sum_{i=1}^n f(A(S^i); z_i)-\frac{1}{n} \sum_{i=1}^n f(A(S); z_i)\big] \le L_f\|A(S^i) -A(S)\|.$ However, in our compositional setting, $\EX_{S,A}\big[  F(A(S))  - F_S(A(S)) \big]  = \EX_{S,A}\big[ \EX_\nu [f_\nu \big( g(A(S))\big)]  - \frac{1}{n}\sum_{i=1}^n f_{\nu_i}\bigl( g(A(S)) \bigr) \big] +  \EX_{S,A}\big[ \frac{1}{n}\sum_{i=1}^n \big( f_{\nu_i}\big( g(A(S)) \big) - f_{\nu_i}\big( \frac{1}{m} \sum_{j=1}^m g_{\omega_j}(A(S)) \bigr) \big)\big].$ The first term on the right-hand side of the above equality can be handled similarly to the non-compositional setting.  The main challenge comes from the second term  which, by the Lipschitz property of $f_\nu$,  involves a vector-valued generalization   $\EX_{S,A}\big[\big\|g(A(S))  - \frac{1}{m} \sum_{j=1}^m g_{\omega_j}(A(S)) \big \|\big]$ because one can not interchange the expectation and the norm. We will overcome this obstacle using the sample-splitting argument \cite[e.g.]{bousquet2020sharper,lei2022nonsmooth}
\end{remark}

\begin{algorithm}[t]
  \caption{\it (Stochastically Corrected) Stochastic Compositional Gradient Descent} \label{alg:1}
  \begin{algorithmic}[1]
    \STATE {\bf Inputs:} Training data $S_\nu = \bigl\{ \nu_i:  i =1, \ldots, n \bigr\}, \quad  S_\omega = \bigl\{\omega_j:  j =1, \ldots, m \bigr\}$; Number of iterations $T$, parameters $\{\eta_t\}, \{\beta_t\}$
    \STATE {Initialize $x_0 \in \mathcal{X}$ and  $y_0\in \mathbb{R}^d$} 
    \FOR {$t=0$ to $T-1$}
      \STATE {Randomly sample ${j_t}\in [1,m]$, obtain $g_{\omega_{j_t}}(x_t)$ and $\nabla g_{\omega_{j_t}}(x_t)\in \mathbb{R}^{p\times d}$ }
      \STATE {\textbf{SCGD update}: $y_{t+1}  = (1-\beta_t) y_{t} + \beta_t g_{\omega_{j_t}}(x_t)  $}
      % \STATE {\textbf{Variance Reduced}: $y_{t+1}  = (1-\beta_t) y_{t} + \beta_t g_{\omega_t}(x_t)+ (1- \beta_t)(g_{\omega_t}(x_t)- g_{\omega_t}(x_{t-1}))$}
      \STATE {\textbf{SCSC update}: $y_{t+1}  = (1-\beta_t) y_{t} + \beta_t g_{\omega_{j_t}}(x_t)+ (1- \beta_t)(g_{\omega_{j_t}}(x_t)- g_{\omega_{j_t}}(x_{t-1}))$}
      \STATE {Randomly sample ${i_t}\in [1,n]$, obtain $\nabla f_{\nu_{i_t}}(y_{t+1})\in \mathbb{R}^{d}$}
      \STATE \textbf{Update:}
      \STATE {$x_{t+1} =\Pi_{\mathcal{X}}\bigl(x_{t} -\eta_t \nabla  g_{\omega_{j_t}}(x_t) \nabla f_{\nu_{i_t}}(y_{t+1})  \bigr)$} \
    \ENDFOR
    \STATE {\bf Outputs:} $A(S)  =x_{T}$ or $x_\tau \sim \texttt{Unif}(\{x_t\}_{t=1}^T)$
  \end{algorithmic}
\end{algorithm}

\noindent\textbf{Optimization Algorithms.} We will study two popular optimization algorithms for solving \eqref{eq:sco-erm}, i.e., SCGD \citep{wang2017stochastic} and SCSC \citep{chen2021solving}. Their pseudo-code is given in Algorithm \ref{alg:1} where  a sequence of $$y_{t+1}=\left(1-\beta_t\right)y_t+\beta_t g_{\omega_{j_t}}\left(x_t\right)$$  is used to track the expectation of $g_S(x_t) = \EX_{j_t}[g_{\omega_{j_t}}(x_t)] = {1\over m} \sum_{j=1}^m g_{\omega_j}(x_t) $ (see Line 5 in Algorithm \ref{alg:1}).    
 As shown in \cite{wang2017stochastic}, SCGD needs to choose a smaller  stepsize $\eta_t$ than the stepsize $\beta_t$ to be convergent. This prevents the SCGD from choosing the same stepsize as SGD for the non-compositional stochastic problems.  To address this issue,   \cite{chen2021solving} proposed a stochastically corrected version of SCGD which is referred to as SCSC. In particular, the sequence of $y_{t+1}$ is now given as follows (see Line 6 in Algorithm \ref{alg:1}):   $$ y_{t+1}=(1-\beta_t)(y_t+g_{\omega_{j_t}}(x_t)-g_{\omega_{j_t}}(x_{t-1}))+\beta_t g_{\omega_{j_t}}(x_t).$$

We list definitions about strong convexity and smoothness which will be used in subsequent sections. 
 \begin{definition}
    A function $F: \mathbb{R}^p \to \mathbb{R} $ is $\sigma$-strongly convex with some $\sigma\ge 0$ if, for any $u, v\in \mathbb{R}^p$, we have $ 
         F\bigl(u\bigr)\ge F\bigl(v\bigr)+ \left \langle \nabla F\bigl(v\bigr), u-v\right \rangle +\frac{\sigma}{2} \|u-v\|^2.$
    If $\sigma=0$, we say that $F$ is convex. 
\end{definition}
The following is that the smoothness property of $F$ leads to a bound on the gradient update.
\begin{definition}\label{def 3}
    A function $F: \mathbb{R}^p \to \mathbb{R} $ is $L$-smooth if, for any $u, v\in \mathbb{R}^p$, we have $
        \|\nabla F \bigl(u\bigr)-\nabla F \bigl(v\bigr)\|\leq L \|u-v\|.$ 
\end{definition}
In general, smoothness implies the gradient update of F cannot be overly expansive.  Also the convexity and $L$-smooth of F implies that the gradients are co-coercive, hence we have\vspace{-3mm}
\begin{align}\label{co-coercive}
    \left \langle \nabla F\bigl(u\bigr)-\nabla F\bigl(v\bigr), u-v\right \rangle\ge \frac{1}{L}\|\nabla F\bigl(u\bigr)-\nabla F\bigl(v\bigr)\|^2.
\end{align}
Note that if $F$ is $\sigma$ strongly convex, then $\varphi\bigl(x\bigr)=F\bigl(x\bigr)-\frac{\sigma}{2}\|x\|^2$ is convex with $\bigl(L-\sigma\bigr)$-smooth. Then, applying (\ref{co-coercive}) to $\varphi$ yields the following inequality:\vspace{-3mm}
\begin{align*} 
    \langle \nabla F\left(u\right)-\nabla F(v), u-v \rangle 
        \ge\frac{L\sigma}{L+\sigma}\|u-v\|^2\\+\frac{1}{L+\sigma}\|\nabla F(u)-\nabla F(v)\|^2.\numberthis   \label{sm+str}
\end{align*}

\section{Stability and Generalization}\label{sec:results}
In this section, we will present our main results on estimating the stability bounds for SCGD and SCSC which subsequently can lead to estimation of their generalization gaps from Theorem \ref{thm:1}. Then,  we start from the error decomposition \eqref{eq:error-decomp} to derive the   bounds for their excess risks by trade-offing the bounds for the above  generalization (error) gaps and  optimization errors.
% We consider both the convex and strongly convex setting:
% \begin{assumption} \label{assum:3}
%     We assume that the following conditions hold true.
%     \begin{enumerate}[label=({\roman*})]
%         \item With probability 1 w.r.t $\nu$ and $S$, the function  $f_\nu(g_S(\cdot))$ is convex.
%         \item With probability 1 w.r.t $\nu$ and $S$, the function  $f_\nu(g_S(\cdot))$ is \(\sigma\)-strongly convex.
%     \end{enumerate}
% \end{assumption}
We will present results  in two different cases, i.e.,   convex and strongly convex settings,  in different subsections.
For brevity, we summarize our results for the excess risks for both SCGD and SCSC in Table \ref{table1}. Before illustrating our main results, we list some assumptions.  %we list some standard assumptions as in \citep{wang2017stochastic}. 
\begin{assumption} \label{assum:2}
    We assume that the following conditions hold true.
    \vspace{-3mm}
    \begin{enumerate}[label=({\roman*})]
        \item \label{part1} With probability 1 w.r.t $S$, there holds $\sup_{x\in \X} {1\over m}\sum_{j=1}^m\bigl[\| g_{\omega_j}(x) - g_S(x)\|^2 \bigr]\leq V_g$.
        \item \label{part1b} With probability 1 w.r.t $S$, there holds $\sup_{x\in \X}{1\over m}\sum_{j=1}^m[\|\nabla g_{\omega_{j}}(x)-\nabla g_S(x)\|^2]\leq C_g$.
        \item \label{part2} With probability $1$ w.r.t. $\nu$, the function $f_\nu(\cdot)$ has Lipschitz continuous gradients, i.e., $ \|\nabla f_{\nu}(y)- \nabla f_{\nu}(\bar{y})\|\leq C_f\|y- \bar{y}\|$ for all $y, \bar{y}\in \mathbb{R}^d$.
         \item \label{part3} With probability 1 w.r.t $\nu$ and $S$, the function  $f_\nu(g_S(\cdot))$ is $L$-smooth, i.e., $\|\nabla g_S(x) \nabla f_\nu(g_S(x)) - \nabla g_S(x') \nabla f_\nu(g_S(x'))\| \le L \|x-x'\|$ for any $x,x'\in \X$.
    \end{enumerate}
\end{assumption}
%\begin{remark}In this remark, we discuss how these assumptions play the role in stability and generalization analysis. Note that we have three variance terms about the inner function. In Theorem \ref{thm:1}, we have the variance term $\EX_\omega\bigl[\| g_\omega(A(S)) - g(A(S))\|^2 \bigr]$. This variance term comes from the estimation of the vector-valued generalization $\EX_{S,A}\big[\big\|g(A(S))  - \frac{1}{m} \sum_{j=1}^m g_{\omega_j}(A(S)) \big \|\big]$ as we mentioned in Remark \ref{remark2}. Part \ref{part1} of Assumption \ref{assum:2} is used to estimate the moving-averaging term $\EX_A[\|y_{t+1}-g_S(x_t)\|^2]$. Part \ref{part1b} of Assumption \ref{assum:2} denotes the empirical variance of the gradient of the inner function, a key element utilized in the stability analysis proof. Assumption \ref{assum:1} and Part \ref{part2} of Assumption \ref{assum:2} are standard assumptions in \citep{wang2017stochastic}. In practical application, the AUC maximization can be regarded as SCO problems as we discussed in Section \ref{section1}.  In \citep{ying2016stochastic}, to solve the AUC optimization problem, they require a critical assumptions of uniform boundedness for model parameters i.e. $\|\mathbf{w}\|\leq R$ which might be difficult to adjust. In our stability and generalization analysis, we also assume the model parameters are uniform bounded. \end{remark}

\subsection{Convex Setting}
In this subsection, we present our main results for SCGD and SCSC in the convex setting. 

\textbf{Stability Results.} The following theorem  establishes  the \textit{compositional uniform Stability} (See Definition \ref{def:stability}) for SCGD and SCSC in the convex setting.   

 \begin{table*}[t]
	\centering
 \caption{Number of Iterations T that Achieves Excess Risk for SCGD And SCSC Algorithm}
 \label{table1}
\begin{tabular}{|c|c|c|c|}
	\hline
	\multicolumn{2}{|c|}{\diagbox{setting}{algorithm}} &  SCGD&SCSC\\
	\hline
	\multirow{3}*{Convex \(F_S\)}
	&\# Iterations&$T \asymp \max(n^{3
 .5},m^{3.5})$&$T \asymp \max(n^{2.5},m^{2.5})$\\
	\cline{2-4}
	&Excess risk  &$\mathcal{O}\left(\frac{1}{\sqrt{n}}+\frac{1}{\sqrt{m}}\right)$&$\mathcal{O}\left(\frac{1}{\sqrt{n}}+\frac{1}{\sqrt{m}}\right)$\\
	\hline
	\multirow{3}*{Strongly Convex \(F_S\)}
	&\# Iterations&$T \asymp \max(n^{5/3},m^{5/3})$&$T \asymp \max(n^{7/6},m^{7/6})$\\
	\cline{2-4}
	& Excess risk &$\mathcal{O}\left(\frac{1}{\sqrt{n}}+\frac{1}{\sqrt{m}}\right)$&$\mathcal{O}\left(\frac{1}{\sqrt{n}}+\frac{1}{\sqrt{m}}\right)$\\
	\hline
\end{tabular}
% \vspace*{2mm}
% \vspace*{-3mm}
\end{table*}

 \begin{theorem}[Stability, Convex]\label{thm:cov_sta}
Suppose that Assumption \ref{assum:1} and \ref{assum:2} hold true and $f_{\nu}(g_S(\cdot))$ is  convex.  Consider Algorithm \ref{alg:1} with $\eta_t=\eta\leq \frac{1}{2L}$, and $\beta_t=\beta \in (0,1)$ for any $t\in [0,T-1]$. Then,   the outputs $A(S)  =x_{T}$ of both SCGD and SCSC at iteration $T$ are compositional uniformly stable with
\vspace{-2mm}
\begin{align*}
    \gep_\nu+ \gep_\omega\! = \!\mathcal{O}\Bigl( \frac{L_fL_g}{n} \eta T +\frac{L_fL_g}{m} \eta T   
+\sqrt{C_g}L_f\eta\sqrt{T} \\+ C_fL_g\sup_{S}\sum_{j=0}^{T-1} \eta \bigl(\mathbb{E}_{A}[\| y_{j+1}-g_S(x_j)\|^2 ] \bigr)^{1\over 2}\Bigr).\numberthis\label{stab_convex}
\end{align*}
\vspace{-6mm}
\end{theorem}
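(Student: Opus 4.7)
I would prove the two parts of Definition \ref{def:stability} separately by the standard coupling argument adapted to the compositional setting. Run Algorithm \ref{alg:1} twice, on $S$ and on its perturbed copy ($S^{i,\nu}$ or $S^{j,\omega}$), driven by the same initialization and the same random indices $\{(i_t,j_t)\}_{t=0}^{T-1}$, producing sequences $(x_t,y_t)$ and $(x_t',y_t')$. Set $\delta_t=\EX_A\|x_t-x_t'\|$ and $\Delta_t=\EX_A\|y_t-y_t'\|$. The key device is to split the stochastic compositional gradient as
\begin{align*}
\nabla g_{\omega_{j_t}}(x_t)\nabla f_{\nu_{i_t}}(y_{t+1})
= &\,\nabla g_S(x_t)\nabla f_{\nu_{i_t}}(g_S(x_t))\\
&\, + B^{(1)}_t + B^{(2)}_t,
\end{align*}
where $B^{(1)}_t=\nabla g_{\omega_{j_t}}(x_t)\bigl[\nabla f_{\nu_{i_t}}(y_{t+1})-\nabla f_{\nu_{i_t}}(g_S(x_t))\bigr]$ is the tracking bias and $B^{(2)}_t=\bigl[\nabla g_{\omega_{j_t}}(x_t)-\nabla g_S(x_t)\bigr]\nabla f_{\nu_{i_t}}(g_S(x_t))$ is the inner-sampling bias. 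With $\eta\le 1/(2L)$ and $f_{\nu_{i_t}}\!\circ g_S$ convex and $L$-smooth (Assumption \ref{assum:2}), the clean part of the step is a non-expansive gradient map through $\Pi_{\mathcal{X}}$, so only the two biases and the rare event ``perturbed index is drawn'' drive $\delta_{t+1}-\delta_t$.

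\textbf{The $\gep_\nu$ branch.} Since $S$ and $S^{i,\nu}$ share $S_\omega$, the inner function $g_S$ and the moving-average updates coincide across the two runs, so $y_{t+1}=y'_{t+1}$ throughout once we also show $x_t$-dependence drops out; more carefully, $y_t-y'_t$ only enters through $x_t-x'_t$ via the common $g_{\omega_{j_t}}$. With probability $1-1/n$ the index $i_t\neq i$ and the one-step recursion is $\delta_{t+1}\le\delta_t+\eta\EX\|B^{(1)}_t-B^{(1)\prime}_t\|+\eta\EX\|B^{(2)}_t-B^{(2)\prime}_t\|$; with probability $1/n$ we use Lipschitzness of $f_\nu$ (Assumption \ref{assum:1}\ref{assum:1a}) and boundedness of $\nabla g_{\omega_{j_t}}$ to pick up $2\eta L_fL_g$. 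Summing $t=0,\ldots,T-1$ yields the $(L_fL_g/n)\eta T$ contribution. Assumption \ref{assum:2}\ref{part2} bounds $\|B^{(1)}_t\|\le L_g C_f\|y_{t+1}-g_S(x_t)\|$, producing the $C_fL_g\sum_j\eta(\EX_A\|y_{j+1}-g_S(x_j)\|^2)^{1/2}$ tracking term after Jensen. For $B^{(2)}_t$, the iterates $x_t,x_t'$ are close, so $\EX\|B^{(2)}_t-B^{(2)\prime}_t\|$ is dominated by $L_f$ times the empirical variance bounded by $C_g$; aggregating across iterations requires a second-moment argument rather than the triangle inequality.

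\textbf{The $\gep_\omega$ branch.} For $S$ vs.\ $S^{j,\omega}$ the empirical inner function $g_S$ itself changes and $y_t$, $y_t'$ genuinely diverge, so I track the coupled recursion $(\delta_t,\Delta_t)$. With probability $1-1/m$, $j_t\neq j$ and the same clean-plus-bias decomposition applies after replacing $g_S$ by $g_{S^{j,\omega}}$ on the primed branch; with probability $1/m$, $j_t=j$ and Lipschitzness of $g_\omega$ and boundedness of $\nabla g_\omega$ contribute additive $O(\eta L_fL_g/m)$ to $\delta_{t+1}$ and $O(\beta L_g/m)$ to $\Delta_{t+1}$. For SCGD, $\Delta_{t+1}\le(1-\beta)\Delta_t+\beta L_g\delta_t+O(\beta L_g/m)$ suffices; for SCSC one must additionally absorb the correction $g_{\omega_{j_t}}(x_t)-g_{\omega_{j_t}}(x_{t-1})$ via $L_g(\delta_t+\delta_{t-1})$. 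Solving the coupled recursion, the $(L_fL_g/m)\eta T$ term falls out, stacking on the same tracking and sampling terms as in the $\gep_\nu$ branch.

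\textbf{Where the difficulty lies.} The main technical obstacle is bookkeeping the bias $B^{(2)}_t$: a crude triangle inequality on $\sum_t\eta\|B^{(2)}_t-B^{(2)\prime}_t\|$ would give $\eta T$ rather than the desired $\eta\sqrt{T}$. Recovering $\eta\sqrt{T}$ requires exploiting the zero-mean structure of $\nabla g_{\omega_{j_t}}(x_t)-\nabla g_S(x_t)$ with respect to $\omega_{j_t}$ together with (approximate) cross-iteration independence, using a second-moment / sample-splitting style argument analogous to the one alluded to in Remark \ref{remark2}. The tracking bias $B^{(1)}_t$ cannot be collapsed inside the stability proof itself and is therefore carried forward symbolically as $\sum_j\eta(\EX_A\|y_{j+1}-g_S(x_j)\|^2)^{1/2}$, to be estimated separately for SCGD and SCSC using the tracking-error analysis of \cite{wang2017stochastic} and \cite{chen2021solving} when the result is combined with optimization error in the excess-risk theorem.
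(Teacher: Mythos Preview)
Your coupling skeleton and the decomposition into a ``clean'' non-expansive map plus the two biases $B^{(1)}_t$ (tracking) and $B^{(2)}_t$ (inner-sampling) are exactly right, and you have correctly located the hard spot: getting $\sqrt{C_g}L_f\eta\sqrt{T}$ rather than $\eta T$ from $B^{(2)}_t$. But your proposed fix---a cross-iteration second-moment/sample-splitting argument in the spirit of Remark~\ref{remark2}---is not how the paper proceeds, and in fact a simpler single-step argument suffices.

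The paper works with the \emph{squared} distance $u_t^2=\EX_A\|x_t-x_t'\|^2$ rather than $\delta_t=\EX_A\|x_t-x_t'\|$. Expanding $\|x_{t+1}-x'_{t+1}\|^2$ gives a cross term $-2\eta\langle \cdot,\,x_t-x_t'\rangle$ and a quadratic term $\eta^2\|\cdot\|^2$. In the cross term, the $B^{(2)}$ contribution $\langle(\nabla g_{\omega_{j_t}}(x_t)-\nabla g_S(x_t))\nabla f_{\nu_{i_t}}(g_S(x_t)),\,x_t-x_t'\rangle$ has expectation zero over $j_t$ alone (since $x_t,x_t'$ are $\mathcal F_t$-measurable and $\EX_{j_t}\nabla g_{\omega_{j_t}}(x_t)=\nabla g_S(x_t)$); no cross-iteration independence is used. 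In the quadratic term the $B^{(2)}$ piece contributes $O(\eta^2L_f^2C_g)$ per step via Assumption~\ref{assum:2}\ref{part1b}, summing to $O(\eta^2T)$. The conversion from $\eta^2T$ inside $u_t^2$ to $\eta\sqrt{T}$ for $u_T$ is done by the deterministic recursion lemma $u_t^2\le S_t+\sum_{\tau<t}\alpha_\tau u_\tau\Rightarrow u_t\le\sqrt{S_t}+\sum_{\tau<t}\alpha_\tau$ (Lemma~\ref{recursion lemma}, from \cite{schmidt2011convergence}). This is the key device you are missing.

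Two further corrections. First, you do not need to track $\Delta_t=\EX_A\|y_t-y_t'\|$ in the $\epsilon_\omega$ branch: the $B^{(1)}$ contribution is bounded as $\|B^{(1)}_t\|\le C_fL_g\|y_{t+1}-g_S(x_t)\|$ and its primed analogue by $C_fL_g\|y'_{t+1}-g_{S'}(x'_t)\|$, both of which are absorbed into the $\sup_S$ in the final statement---there is no coupled $(\delta_t,\Delta_t)$ system to solve. Second, in the $\epsilon_\omega$ branch the zero-mean property of $B^{(2)}_t$ under $\EX_{j_t}$ holds only unconditionally, not conditionally on $\{j_t\ne l\}$; the paper handles this by writing $\EX_A[T_1\mathbb I_{[j_t\ne l]}]=\EX_A[T_1]-\EX_A[T_1\mathbb I_{[j_t=l]}]=-\EX_A[T_1\mathbb I_{[j_t=l]}]$ and bounding the residual by $O(L_fL_g/m)\cdot u_t$, which folds into the $(L_fL_g/m)\eta T$ term.
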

The proof for the above theorem will be given in Appendix \ref{pr_stab_con}. 
\begin{remark}\label{rem:stability} In this remark, we discuss how the function composition  plays a role in the stability analysis for SCGD and SCSC and then compare our results with that for SGD in the non-compositional setting \citep{hardt2016train}. To  this end, 
considering the step sizes $\eta_t = \eta$ and $n=m$,  then \eqref{stab_convex} is reduced to the following estimation:
\vspace{-2mm}
\begin{align*}
    \gep_\nu+ \gep_\omega= & \mathcal{O}\Bigl( \frac{\eta T}{n}  +\sqrt{C_g}\eta\sqrt{T}\\ &+ \eta \sup_{S} \sum_{j=0}^{T-1} \bigl(\mathbb{E}_{A}[\|y_{j+1}-g_S(x_j)\|^2]\bigr)^{\frac{1}{2}}\Bigr).\numberthis\label{eq:inter}\vspace{-3mm}
\end{align*}
% \begin{equation}\label{eq:inter} \gep_\nu+ \gep_\omega=  \mathcal{O}\Bigl( \frac{\eta T}{n}  +\sqrt{C_g}\eta\sqrt{T} + \eta \bigl(\sup_{S} \sum_{j=0}^{T-1} \mathbb{E}_{A}[\|y_{j+1}-g_S(x_j)\|^2]\bigr)^{\frac{1}{2}}\Bigr).\end{equation}
It was shown in \cite{hardt2016train} that  the uniform stability for SGD with convex and smooth losses is   of the order $\mathcal{O}\bigl( {\eta T \over n}\bigr)$. By comparing these two results,  we can see how the compositional structure plays a role in the stability analysis. Indeed, in contrast to the result for SGD, there are two extra terms in \eqref{eq:inter} for SCGD and SCSC, i.e., $\sqrt{C_g}\eta\sqrt{T}$  and $ \eta \sup_{S} \sum_{j=0}^{T-1} \bigl(\mathbb{E}_{A}[\|y_{j+1}-g_S(x_j)\|^2]\bigr)^{\frac{1}{2}}.$ Here, $C_g$ is  the (empirical) variance of the gradient of inner function, i.e., $\sup_{x\in \X}{1\over m}\sum_{j=1}^m\|\nabla g_{\omega_{j}}(x)-\nabla g_S(x)\|^2\leq C_g$ given in Assumption \ref{assum:2} and the other extra term arises when the moving-average sequence $y_{t+1}$ is used to track $g_S(x_t)$. Notice that, if we let $g_\omega(x) = x$, then $g_S(x) = g_{\omega}(x) = x$ for any $\omega$ and $S$, then SCGD and SCSC reduce to the classical SGD, and our stability result \eqref{eq:inter} is the same as that of SGD   since two extra terms mentioned above will be all zeros due to the fact that $y_{j+1} = g_S(x_j) = x_j$ and $C_g = 0$ in this  case.   
\end{remark}

Combining \eqref{stab_convex} with the estimation for  $\mathbb{E}_{A}[\| y_{j+1}-g_S(x_j)\|^2]$ \citep{wang2017stochastic,chen2021solving} (see also Lemma \ref{Mengdi-lemma} and its self-contained proof in Appendix \ref{tech_lemma}), one can get the following explicit stability results.  
\begin{corollary}\label{cor:1}Let Assumption \ref{assum:1} and \ref{assum:2} hold true and $f_{\nu}(g_S(\cdot))$ be convex. Consider Algorithm \ref{alg:1} with $\eta_t=\eta\leq \frac{1}{2L}$, and $\beta_t=\beta \in (0,1)$ for any $t\in [0,T-1]$ and the output  $A(S)  =x_{T}$. Let $c$ be an arbitrary constant. Then, we have the following results: 
\vspace{-3mm}
\begin{itemize}[leftmargin=3mm]
 \setlength\itemsep{0.3mm}
\item SCGD is compositional uniformly stable with\vspace{-1mm}
\begin{align*}
     \epsilon_\nu + \epsilon_\omega= \mathcal{O}\bigl(&\eta T n^{-1}+\eta T m^{-1}+\eta T^{\frac{1}{2}}\\&+\eta T^{-c/2+1}\beta^{-c/2}+\eta^2 \beta^{-1}T+\eta\beta^{1/2}T\bigr).
\end{align*}
\vspace{-8mm}\item SCSC is compositional uniformly stable with 
\begin{align*}
    \epsilon_\nu + \epsilon_\omega=\mathcal{O}\bigl(&\eta T n^{-1}+\eta T m^{-1}+\eta T^{\frac{1}{2}}\\&+\eta T^{-c/2+1}\beta^{-c/2}+\eta^2 \beta^{-\frac{1}{2}}T+\eta\beta^{1/2}T\bigr).
\end{align*}
        % \begin{equation*}
        %     \epsilon_\nu + \epsilon_\omega=\mathcal{O}\bigl(\eta T n^{-1}+\eta T m^{-1}+\eta T^{\frac{1}{2}}+\eta T^{-c/2+1}\beta^{-c/2}+\eta^2 \beta^{-\frac{1}{2}}T+\eta\beta^{1/2}T\bigr)
        % \end{equation*}
\end{itemize}\vspace{-3mm}
\end{corollary}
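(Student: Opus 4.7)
The plan is to derive both parts of the corollary by substituting explicit tracking-error bounds for SCGD and SCSC into the abstract stability estimate \eqref{stab_convex} from Theorem \ref{thm:cov_sta}. First, I would absorb the constants $L_f, L_g, C_f, \sqrt{C_g}$ into the $\mathcal{O}(\cdot)$ notation, reducing \eqref{stab_convex} to
\[
\epsilon_\nu + \epsilon_\omega = \mathcal{O}\Bigl(\eta T n^{-1} + \eta T m^{-1} + \eta T^{1/2} + \sup_S \sum_{j=0}^{T-1} \eta \bigl(\mathbb{E}_A[\|y_{j+1} - g_S(x_j)\|^2]\bigr)^{1/2}\Bigr).
\]
Hence the task reduces entirely to controlling the tracking-error summation for each of the two update rules.

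Next, I would invoke Lemma \ref{Mengdi-lemma} (proved in Appendix \ref{tech_lemma}) to extract a per-iteration tracking bound. For SCGD, unrolling the recursion $y_{t+1} - g_S(x_t) = (1-\beta)(y_t - g_S(x_{t-1})) + \beta(g_{\omega_{j_t}}(x_t) - g_S(x_t)) + (1-\beta)(g_S(x_{t-1}) - g_S(x_t))$ and using $\|x_t - x_{t-1}\| = \mathcal{O}(\eta)$ yields a bound of the form $\mathbb{E}_A[\|y_{t+1} - g_S(x_t)\|^2] = \mathcal{O}((1-\beta)^{t+1} + \beta + \eta^2/\beta^2)$. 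For SCSC, the stochastic correction $(1-\beta)(g_{\omega_{j_t}}(x_t) - g_{\omega_{j_t}}(x_{t-1}))$ cancels the deterministic drift of the inner iterate, producing the sharper bound $\mathbb{E}_A[\|y_{t+1} - g_S(x_t)\|^2] = \mathcal{O}((1-\beta)^{t+1} + \beta + \eta^2/\beta)$.

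Then, to turn the geometric decay into a summable expression, I would apply the elementary inequality $(1-\beta)^{t+1} \leq C_c (\beta(t+1))^{-c}$, valid for every $c \geq 0$, which introduces the free parameter $c$ in the final bound. Taking square roots via $\sqrt{a+b+d} \leq \sqrt{a}+\sqrt{b}+\sqrt{d}$, multiplying by $\eta$ and summing over $t \in [0, T-1]$ splits the tracking-error sum into three pieces: the initial-error tail $\mathcal{O}(\eta \beta^{-c/2} T^{1-c/2})$, the steady-state fluctuation $\mathcal{O}(\eta \beta^{1/2} T)$, and the drift contribution, which equals $\mathcal{O}(\eta^2 \beta^{-1} T)$ for SCGD and $\mathcal{O}(\eta^2 \beta^{-1/2} T)$ for SCSC.

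Collecting these pieces alongside the $\eta T n^{-1}$, $\eta T m^{-1}$ and $\eta T^{1/2}$ contributions of Theorem \ref{thm:cov_sta} yields the two stated bounds term by term. The only nonroutine step is the derivation of the tracking lemma for SCSC, where one has to verify that the stochastic correction indeed removes the $\eta^2/\beta^2$ bias and leaves only $\eta^2/\beta$; this careful bookkeeping is the main obstacle and is carried through in Appendix \ref{tech_lemma}. Once Lemma \ref{Mengdi-lemma} is in place, the corollary follows by the plug-and-compute procedure outlined above.
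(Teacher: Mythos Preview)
Your proposal is correct and follows essentially the same route as the paper: start from the abstract stability bound \eqref{stab_convex} of Theorem~\ref{thm:cov_sta}, plug in the tracking-error estimates of Lemma~\ref{Mengdi-lemma} (which already encapsulate the geometric-to-polynomial conversion $(1-\beta)^t \le (c/e)^c(\beta t)^{-c}$ via Lemma~\ref{lem:sum_prod}), take square roots using subadditivity, and sum over $j$. The only cosmetic difference is that you describe the intermediate bound with the raw factor $(1-\beta)^{t+1}$ before converting, whereas the paper states Lemma~\ref{Mengdi-lemma} directly in the polynomial form $(t\beta)^{-c}$; substantively the arguments coincide.
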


\textbf{Generalization results.} Using the error decomposition  \eqref{eq:error-decomp},  Corollary \ref{cor:1} and Theorem \ref{thm:1}, we can derive the excess risk rates. To this end, we need the following results to estimate the optimization error, i.e., $F_S(A(S))- F_S(x_*^S).$
 
\begin{theorem}[Optimization, Convex] \label{thm:opt_convex}
    Suppose Assumption \ref{assum:1} and \ref{assum:2} \ref{part1}, \ref{part2} hold for the empirical risk $F_S$ and $F_S$ is convex, \(\EX_A \|x_t- x_*^S\|^2\) is bounded by \(D_x\) for all $t\in [0,T-1]$ and \(\EX_A \|y_1- g_S(x_0)\|^2\) is bounded by \(D_y\). Let \(A(S)= \frac{1}{T}\sum_{t= 1}^T x_t\) be the solution produced by Algorithm \ref{alg:1} with SCGD or SCSC update, $\eta_t=\eta$ and $\beta_t=\beta$ for some $a,b\in \left(0,1\right]$. Let $c$ be an arbitrary constant.
    \vspace{-3mm}
    \begin{itemize}[leftmargin=3mm]
        \setlength\itemsep{0.5mm}
        \item For SCGD update,  there holds
        \vspace{-1mm}
            \begin{align*}
                \EX_A[F_S(A(S))- &F_S(x_*^S)] \\
                = \mathcal{O}\Bigl(D_x(\eta T)^{-1}&+ L_f^2L_g^2\eta+ C_fD_y(\beta T)^{1-c}(\eta T)^{-1}\\&+ C_fV_g\beta^2 \eta^{-1}+ C_fL_f^2L_g^3D_x\eta \beta^{-1}\Bigr).
            \end{align*}
        \vspace{-8mm}\item For SCSC update, there holds \vspace{-1mm}
            \begin{align*}
                \EX_A[F_S(A(S))- &F_S(x_*^S)] \\
                = \mathcal{O}\Bigl(D_x(\eta T)^{-1}&+ L_f^2L_g^2\eta+ C_fD_y(\beta T)^{-c}\beta^{-\frac{1}{2}}\\+ C_fV_g\beta^{\frac{1}{2}}&+ C_fL_f^2L_g^3\eta^2 \beta^{-\frac{3}{2}}+ C_fL_g^2D_x \beta^{\frac{1}{2}}\Bigr).
            \end{align*}
    \end{itemize}
\end{theorem}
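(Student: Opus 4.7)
The plan is a standard online-convex-optimization regret analysis for the biased projected-gradient update, combined with Jensen's inequality applied to the averaged iterate $A(S) = \frac{1}{T}\sum_{t=1}^T x_t$, and closed out by invoking Lemma \ref{Mengdi-lemma} to control the tracking error $\EX_A\|y_{t+1}-g_S(x_t)\|^2$. The SCGD and SCSC cases differ only through this tracking lemma; the rest of the argument is identical in structure.

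By nonexpansiveness of $\Pi_\X$ and the update rule of Algorithm \ref{alg:1}, I would first expand
\[
\|x_{t+1}-x_*^S\|^2 \le \|x_t-x_*^S\|^2 - 2\eta\langle v_t, x_t-x_*^S\rangle + \eta^2\|v_t\|^2,
\]
with $v_t := \nabla g_{\omega_{j_t}}(x_t)\nabla f_{\nu_{i_t}}(y_{t+1})$. Assumption \ref{assum:1} gives $\|v_t\|\le L_fL_g$, which accounts for the $L_f^2 L_g^2\eta$ term. Next I would split $\langle v_t, x_t-x_*^S\rangle = \langle \nabla F_S(x_t), x_t-x_*^S\rangle + \langle v_t - \nabla F_S(x_t), x_t-x_*^S\rangle$, use convexity of $F_S$ to replace the first bracket by $F_S(x_t)-F_S(x_*^S)$, and decompose the bias by conditioning on $(x_t,y_t)$. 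Introducing the auxiliary quantity $y'_{t+1}:=(1-\beta)y_t+\beta g_S(x_t)$, the conditional bias splits into (i) a deterministic piece $\nabla g_S(x_t)[\nabla f_S(y'_{t+1})-\nabla f_S(g_S(x_t))]$ with norm $\le L_g C_f(1-\beta)\|y_t-g_S(x_t)\|$ by Assumption \ref{assum:2}\ref{part2}, and (ii) a $j_t$-dependent residual $\EX_{j_t}[\nabla g_{\omega_{j_t}}(x_t)(\nabla f_S(y_{t+1})-\nabla f_S(y'_{t+1}))]$ of norm $\le L_g C_f \beta\sqrt{V_g}$ by Assumption \ref{assum:1}\ref{assum:1b} and \ref{assum:2}\ref{part1}. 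Cauchy--Schwarz using $\EX_A\|x_t-x_*^S\|\le\sqrt{D_x}$, telescoping the sequence $\|x_t-x_*^S\|^2$, dividing by $\eta T$, and applying Jensen's inequality then produce a bound of the shape
\[
\O\!\Bigl(\tfrac{D_x}{\eta T}+L_f^2L_g^2\eta+(C_f V_g\text{-terms})+\tfrac{L_gC_f\sqrt{D_x}}{T}\sum_t\sqrt{\EX\|y_{t+1}-g_S(x_t)\|^2}\Bigr).
\]
The final step substitutes Lemma \ref{Mengdi-lemma}, which yields essentially $\EX\|y_{t+1}-g_S(x_t)\|^2 = \O(D_y(1-\beta)^t + V_g\beta + L_f^2L_g^4\eta^2\beta^{-2})$ for SCGD, and the strictly better $\O(D_y(1-\beta)^t + V_g\beta^2 + L_f^2L_g^4\eta^4\beta^{-3})$ for SCSC (because the correction $g_{\omega_{j_t}}(x_t)-g_{\omega_{j_t}}(x_{t-1})$ cancels the leading drift). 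Using the standard polynomial-decay bound $(1-\beta)^t \le c_0(\beta t)^{-c}$ valid for an arbitrary constant $c>0$ to convert the initial-condition piece into a $(\beta T)^{1-c}$ (resp.\ $(\beta T)^{-c}$) factor and regrouping terms then recovers exactly the two displays in the theorem, with the SCGD-vs.-SCSC gap visible in the $V_g$ exponent ($\beta^2\eta^{-1}$ vs.\ $\beta^{1/2}$) and the drift exponent ($\eta\beta^{-1}$ vs.\ $\eta^2\beta^{-3/2}$).

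\textbf{Main obstacle.} The delicate point is the coupling between $y_{t+1}$ and $\omega_{j_t}$: because $y_{t+1}=(1-\beta)y_t+\beta g_{\omega_{j_t}}(x_t)$, conditioning on $y_{t+1}$ destroys the independence of $j_t$ needed to unbias $\nabla g_{\omega_{j_t}}(x_t)$. I would therefore condition only on $(x_s,y_s)_{s\le t}$ and isolate the $j_t$-dependent part of $y_{t+1}$ via the $C_f$-Lipschitz property of $\nabla f_S$ around $y'_{t+1}$; this is the source of the $\beta\sqrt{V_g}$ residual above. Keeping the $\beta$- and $\eta$-exponents consistent through the Cauchy--Schwarz step, the $t$-summation, and the geometric-to-polynomial decay conversion is the most bookkeeping-heavy part of the proof and is precisely what distinguishes the final SCGD and SCSC rates.
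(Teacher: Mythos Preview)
Your high-level skeleton (expand $\|x_{t+1}-x_*^S\|^2$, use convexity, telescope, apply Jensen, close with Lemma~\ref{Mengdi-lemma}) matches the paper, but the execution diverges at the bias step in a way that will \emph{not} reproduce the exponents in the theorem, and your ``main obstacle'' is a red herring.

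\textbf{The coupling you worry about is bypassed, not confronted.} The paper does not try to unbias $\nabla g_{\omega_{j_t}}(x_t)$ against the $j_t$-dependent $y_{t+1}$. Instead it pivots on $\nabla g_{\omega_{j_t}}(x_t)\nabla f_{\nu_{i_t}}(g_S(x_t))$: this term has conditional expectation exactly $\nabla F_S(x_t)$ because $g_S(x_t)$ is $\mathcal F_t$-measurable, and the residual $u_t=2\eta\langle x_t-x_*^S,\nabla g_{\omega_{j_t}}(x_t)[\nabla f_{\nu_{i_t}}(g_S(x_t))-\nabla f_{\nu_{i_t}}(y_{t+1})]\rangle$ is bounded \emph{pointwise} by $2\eta L_gC_f\|x_t-x_*^S\|\,\|g_S(x_t)-y_{t+1}\|$ via Assumption~\ref{assum:2}\ref{part2}. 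No conditioning on $j_t$, no auxiliary $y'_{t+1}$, no $\beta\sqrt{V_g}$ residual is needed.

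\textbf{The missing device is Young's inequality with a tunable $\gamma_t$.} The paper splits $u_t\le \frac{C_fL_g^2\eta}{\gamma_t}\|x_t-x_*^S\|^2+\gamma_tC_f\eta\|g_S(x_t)-y_{t+1}\|^2$ and then \emph{chooses $\gamma_t$ differently for SCGD ($\gamma_t=\beta/\eta$) and SCSC ($\gamma_t=1/\sqrt\beta$)}. This choice is exactly what produces the distinct $\beta,\eta$ exponents in the two displays (e.g.\ $V_g\beta^2\eta^{-1}$ vs.\ $V_g\beta^{1/2}$, and $\eta\beta^{-1}$ vs.\ $\eta^2\beta^{-3/2}$). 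Your Cauchy--Schwarz route with $\sqrt{D_x}$ instead yields terms like $\sqrt{D_x}\sqrt{V_g\beta}$ and $\sqrt{D_xD_y}(\beta T)^{-c/2}$, which do not match the theorem's $C_fV_g\beta^2\eta^{-1}$, $C_fD_y(\beta T)^{1-c}(\eta T)^{-1}$, or the linear-in-$D_x$ drift term $C_fL_f^2L_g^3D_x\eta\beta^{-1}$. Also, note that the $D_x$ bound is used in the paper to absorb the \emph{extra} $\frac{C_fL_g^2\eta}{\gamma_t}\|x_t-x_*^S\|^2$ arising from Young---that is the source of the $C_fL_g^2D_x\beta^{1/2}$ term in the SCSC display.

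\textbf{Your SCSC tracking bound is incorrect.} Lemma~\ref{Mengdi-lemma} for SCSC gives $\EX_A\|y_{t+1}-g_S(x_t)\|^2\le (c/e)^c(t\beta)^{-c}D_y+L_f^2L_g^3\eta^2\beta^{-1}+2V_g\beta$, i.e.\ the improvement over SCGD is $\eta^2/\beta$ instead of $\eta^2/\beta^2$; the $V_g\beta$ term is \emph{the same} for both. Your claimed $V_g\beta^2$ and $\eta^4\beta^{-3}$ are wrong, and plugging them in would not recover the SCSC display even with the correct Young-inequality step.
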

\vspace{-3mm}
The boundedness assumptions are satisfied if the domain $\X$ is bounded in $\mathbb{R}^p$. The detailed proofs are given in Appendix \ref{pf_opt_cv} and \ref{gene_cv}. Note that the upper-bounds for the optimization error given in the above theorem hold   true uniformly for any training data $S$.

% \begin{corollary}
%   Let $\gamma= T^c$, $\eta= T^{-a}$, $\beta= T^{-b}$.
%   \begin{itemize}
%     \item With vanilla update, set $c= a- b$, $a= \frac{3}{4}$ and $b= \frac{1}{2}$, we get $\tilde{\mathcal{O}}(\epsilon^{-4})$ complexity.
%     \item With variance reduced update, set $c= \frac{b}{2}$ and $a= b= \frac{2}{3}$, we get $\tilde{\mathcal{O}}(\epsilon^{-3})$ complexity.
%   \end{itemize}
% \end{corollary}

% \begin{theorem}[Generalization, Convex] \label{thm:vr:gen_convex}
%     Suppose Assumption \ref{assum:wang2017} holds, $F$ is G-lipschitz continuous, convex and $L$-smooth. Let $A$ be produced by \ref{alg:1} with $\eta=T^{-a}$ and $\beta=T^{-b}$ where $a,b\in \left(0,1\right]$, denote the optimal of $F$ by $x_*$. If we select $T_{\nu}\asymp n^{3}, T_{\omega}\asymp m^{3}$, $\eta= \beta=T^{-\frac{4}{5}}$, then
%     %By running SCGD for $T= \mathcal{O}(n^{4})$ iterations, we get
%     \begin{equation*}
%         \EX_{S,A}\Big[F(A(S)) - F(x_*) \Big]\leq \mathcal{O}(\frac{1}{\sqrt{n}})+ \mathcal{O}(\frac{1}{\sqrt{m}})

Combining the above results with the stability bounds in Corollary \ref{cor:1} and Theorem \ref{thm:1}, we can derive the following excess risk bounds for SCGD and SCSC. 

\begin{theorem}[Excess Risk Bound, Convex] \label{thm:gen_convex}
    Suppose Assumptions \ref{assum:1} and \ref{assum:2} hold true and $f_{\nu}(g_S(\cdot))$ is convex, \(\EX_A [\|x_t- x_*^S\|^2]\) is bounded by \(D_x\) for all $t\in [0,T-1]$ and \(\EX_A [\|y_1- g_S(x_0)\|^2]\) is bounded by \(D_y\). Let $A(S)= \frac{1}{T}\sum_{t= 1}^T x_t$ be a solution produced by Algorithm \ref{alg:1} with SCGD or SCSC update and $\eta=T^{-a}$ and $\beta=T^{-b}$ for some $a,b\in \left(0,1\right]$.  \vspace{-4mm}
\begin{itemize}[leftmargin=3mm]
 \setlength\itemsep{0.5mm}
\item     If we select $T \asymp \max(n^{3.5},m^{3.5})$, $\eta=T^{-\frac{6}{7}}$ and $\beta=T^{-\frac{4}{7}}$, then, for the SCGD update, we have that  $ 
        \EX_{S,A}\big[F(A(S)) - F(x_*) \big]= \mathcal{O}\bigl(\frac{1}{\sqrt{n}}+\frac{1}{\sqrt{m}}\bigr).$ 
\item If we select $T \asymp \max(n^{2.5},m^{2.5})$, $\eta=T^{-\frac{4}{5}}$ and $\beta=T^{-\frac{4}{5}}$, then, for the SCSC update, there holds $
        \EX_{S,A}\Big[F(A(S)) - F(x_*) \Big]= \mathcal{O}(\frac{1}{\sqrt{n}}+\frac{1}{\sqrt{m}}).$
        \end{itemize}
\end{theorem}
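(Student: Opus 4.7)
The starting point is the error decomposition (\ref{eq:error-decomp}), which splits the excess risk into the generalization gap $\EX_{S,A}[F(A(S))-F_S(A(S))]$ and the optimization error $\EX_{S,A}[F_S(A(S))-F_S(x_*^S)]$. For the gap I would invoke Theorem~\ref{thm:1}, which bounds it by $L_fL_g\epsilon_\nu+4L_fL_g\epsilon_\omega+L_f\sqrt{m^{-1}\EX_{S,A}[\var_\omega(g_\omega(A(S)))]}$. The variance summand is already $\mathcal{O}(m^{-1/2})$ by Assumption~\ref{assum:2}, so only $\epsilon_\nu+\epsilon_\omega$, bounded by Corollary~\ref{cor:1}, needs to be balanced against the optimization error of Theorem~\ref{thm:opt_convex}.

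After substituting these three results, every contribution becomes a monomial in $T,\eta,\beta$ (possibly carrying an additional factor $1/n$ or $1/m$). Setting $\eta=T^{-a}$, $\beta=T^{-b}$ collapses the excess risk to a finite sum of powers of $T$, $n$ and $m$. The auxiliary constant $c$ appearing in the tracking-error term $\eta T^{1-c/2}\beta^{-c/2}$ of Corollary~\ref{cor:1} is taken arbitrarily large, rendering that contribution negligible; the exponents $(a,b)$ and the growth of $T$ must then be chosen so that every remaining monomial is at most $\mathcal{O}(n^{-1/2}+m^{-1/2})$.

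For SCGD with $\eta=T^{-6/7}$, $\beta=T^{-4/7}$ and $T\asymp\max(n^{7/2},m^{7/2})$, I would verify: the dominant stability terms $\eta T n^{-1}$ and $\eta T m^{-1}$ equal $T^{1/7}n^{-1}=n^{-1/2}$ and $T^{1/7}m^{-1}=m^{-1/2}$; the coupled terms $\eta^2\beta^{-1}T$ and $\eta\beta^{1/2}T$ both collapse to $T^{-1/7}$; the residual $\eta T^{1/2}=T^{-5/14}$ is strictly smaller; and on the optimization side, $D_x(\eta T)^{-1}$, $\beta^2\eta^{-1}$, $\eta\beta^{-1}$ each produce at most $T^{-1/7}$. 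For SCSC with $\eta=\beta=T^{-4/5}$ and $T\asymp\max(n^{5/2},m^{5/2})$, the analogous substitution shows that every stability or optimization monomial reduces to at most $T^{-1/5}=\mathcal{O}(n^{-1/2}+m^{-1/2})$; a single exponent $4/5$ works for both $\eta$ and $\beta$ precisely because the stochastic correction removes the restrictive coupling between the two step sizes that forces $a\neq b$ in SCGD.

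The main obstacle is the simultaneous balancing of the stability and optimization bounds, each of which contains mutually opposed coupled terms in $\eta$ and $\beta$ (for SCGD, $\eta^2\beta^{-1}T$ versus $\beta^2\eta^{-1}$; for SCSC, $\eta^2\beta^{-3/2}$ versus $\beta^{1/2}$). The stated exponents are essentially the unique choice that drives every such monomial simultaneously below $\max(n^{-1/2},m^{-1/2})$; everything else is algebra. Once these matchings are verified and the two pieces are summed back into (\ref{eq:error-decomp}), the claimed $\mathcal{O}(n^{-1/2}+m^{-1/2})$ excess-risk bound follows.
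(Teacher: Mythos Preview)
Your plan is correct and the exponent bookkeeping matches the paper's, but you have glossed over one point. Corollary~\ref{cor:1} (and Theorem~\ref{thm:cov_sta}) gives stability of the \emph{last iterate} $A(S)=x_T$, whereas Theorem~\ref{thm:gen_convex} is about the average $A(S)=\tfrac{1}{T}\sum_{t=1}^T x_t$; applying Theorem~\ref{thm:1} to the average requires stability of the average, which is not what Corollary~\ref{cor:1} literally provides.

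The paper handles this differently from what you sketched: it applies Theorem~\ref{thm:1} to each iterate $x_t$ (viewing $S\mapsto x_t$ as the randomized algorithm), bounds $\EX_{S,A}[F(x_t)-F_S(x_t)]$ via the per-iterate stability estimates \eqref{eq:stability_delta:3}--\eqref{eq:stability_omega:3} combined with Lemma~\ref{Mengdi-lemma}, sums these from $t=1$ to $T$, adds the telescoped optimization recursion \eqref{eq:opt_convex:11} (resp.\ \eqref{eq:opt_convex:12} for SCSC), and only \emph{then} divides by $T$ and invokes convexity of $F$ to pass to $F(A(S))$. Your route can be repaired cheaply: by convexity of the norm the stability of the average is at most $\tfrac{1}{T}\sum_t\EX_A\|x_t-x_t^{i,\nu}\|$, and since the per-iterate bounds established in the proof of Theorem~\ref{thm:cov_sta} are monotone in $t$, the average inherits the $x_T$ bound from Corollary~\ref{cor:1}. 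Either argument yields the same dominant monomials, and your identification of $(a,b)=(\tfrac{6}{7},\tfrac{4}{7})$ for SCGD and $(a,b)=(\tfrac{4}{5},\tfrac{4}{5})$ for SCSC, together with the stated choices of $T$, is exactly what the paper arrives at.
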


\begin{remark}
In the recent work \citep{hu2020sample}, the uniform convergence using the concentration inequalities and the covering number are used to study the generalization gap (estimation error) of the ERM minimizer related to SCO problems. Applying their results to our case,   they proved the following results:  assuming that $\X$  is a bounded domain, $f_\nu$ and $g_\omega$ are both Lipschitz continuous and bounded, there holds, with high probability, 
$ F(A(S)) - F_S(A(S)) \le \sup_{x\in \X} |F(x - F_S(x)| = \mathcal{O}\Bigl( \sqrt{p \over m+n}\Bigr)$ which is highly dependent on the dimension of the domain $\X\subseteq\R^p.$ 
Comparing with their bounds, we can get excess risk bounds which is {\em dimension independent}.   Dimension-independent generalization bounds were also provided in \cite{hu2020sample} which requires the H\"{o}lder error bound condition (e.g., strong convexity). The proof there heavily depends on the property of the ERM minimizer of the SCO problem and does not apply to SCGD and SCSC.  
 \end{remark}
\begin{remark}
    Theorem \ref{thm:gen_convex} shows that the generalization error for  SCGD can be achieved the rate $\mathcal{O}\bigl(1/\sqrt{n}+1/\sqrt{m}\bigr)$ in the convex case after   selecting appropriately the iteration number $T$ and step sizes $\eta$ and $\beta$. Recall that  
 in the non-compositional setting, \cite{hardt2016train,lei2020fine} established generalization error bounds $\mathcal{O}\bigl(1/\sqrt{n}\bigr)$ by choosing  $T\asymp n$ for SGD in the convex and smooth case. To achieve a similar rate, our results indicate that SCGD and SCSC need more iterations to do that.  The reason may be due to the usage of   the moving-average sequence $y_{t+1}$ to track $g_S(x_t)$ and the (empirical) variance term for the inner function $g_\omega$ as mentioned in Remark \ref{rem:stability}. 
\end{remark}

\begin{remark}
    Note that in Theorem \ref{thm:cov_sta} we present the stability result of the last iterate \(A(S)= x_T\). While in Theorem \ref{thm:gen_convex} we present the generalization bound of \(A(S)= \frac{1}{T}\sum_{t= 1}^T x_t\), which is the average of the intermediate iterates \(x_1, \ldots, x_T\). This stems from the fact that generalization is a combination of stability and optimization, and the main focus of optimization is the average of intermediate iterates in the convex setting (see e.g. \citep{wang2017stochastic}).
\end{remark}

\subsection{Strongly Convex Setting}
\textbf{Stability Results.} The following theorem  establishes  the \textit{compositional uniform Stability} (See Definition \ref{def:stability}) for SCGD and SCSC in the strongly convex setting.   

\begin{theorem}[Stability, Strongly Convex] \label{thm:stab_sconvex}
Suppose that Assumption \ref{assum:1} and \ref{assum:2} hold true and  $f_{\nu}(g_S(\cdot))$ is $\sigma$-strongly convex. Consider Algorithm \ref{alg:1} with  $\eta_t = \eta \leq1/\bigl(2L+2\sigma\bigr)$ and $\beta_t = \beta \in (0,1)$ for $t\in [0,T-1]$ and the output  $A(S)  =x_{T}$. Then,  
SCGD and SCSC are compositional uniform stable with 
\vspace{-1mm}
\begin{align*}\label{thm:str_conv_sta}
&\epsilon_{\nu}+\epsilon_{\omega}\\&= \mathcal{O}\Big(\frac{L_gL_f(L+\sigma)}{\sigma L m}+\frac{L_gL_f(L+\sigma)}{\sigma Ln}+\frac{L_f\sqrt{C_g(L+\sigma)\eta}}{\sqrt{\sigma L}}\\
&+C_fL_g\eta\sup_{S}\bigl\{ \sum_{j=0}^{T-1}(1-\eta\frac{L\sigma}{L+\sigma})^{T-j-1}\\ &\quad \qquad\qquad \qquad \times\bigl(\mathbb{E}_{A}[\| y_{j+1}-g_S(x_j)\|^2 ] \bigr)^{1\over 2}\Bigr) \bigr\}. \numberthis
\end{align*}\vspace{-6mm}
\end{theorem}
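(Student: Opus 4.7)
My plan is to adapt the stability template of Theorem~\ref{thm:cov_sta}, now exploiting the $\sigma$-strong convexity of $f_\nu(g_S(\cdot))$ through the coercivity inequality~(\ref{sm+str}) to replace the $\mathcal{O}(\eta T)$ error accumulation of the convex case by a geometric sum bounded by $(L+\sigma)/(\eta L\sigma)$. Fix $i\in[1,n]$, run the algorithm on $S$ and on $S^{i,\nu}$ with shared internal randomness, and set $\Delta_t := \|x_t-x_t^{i,\nu}\|$; the $S^{j,\omega}$ case is entirely analogous with $n$ replaced by $m$ in the burn-in. Non-expansiveness of $\Pi_\X$ gives $\Delta_{t+1}^2 \le \|x_t-x_t^{i,\nu}-\eta(h_t-h_t^{i,\nu})\|^2$ for the compositional gradient $h_t := \nabla g_{\omega_{j_t}}(x_t)\nabla f_{\nu_{i_t}}(y_{t+1})$, and I would decompose $h_t-h_t^{i,\nu} = A_t + B_t - B_t^{i,\nu}$ with $A_t := \bar h_t - \bar h_t^{i,\nu}$ and $\bar h_t := \nabla g_S(x_t)\nabla f_{\nu_{i_t}}(g_S(x_t))$, so that $\bar h_t$ is the exact gradient of $f_{\nu_{i_t}}(g_S(\cdot))$ at $x_t$.

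On the event (probability $(n-1)/n$ in the $\nu$-case, probability one in the $\omega$-case) that $\nu_{i_t}$ is common to $S$ and $S^{i,\nu}$, inequality~(\ref{sm+str}) applied to the $\sigma$-strongly convex, $L$-smooth map $f_{\nu_{i_t}}(g_S(\cdot))$ and the restriction $\eta \le 1/(2L+2\sigma)$ yield $\|x_t-x_t^{i,\nu}-\eta A_t\|^2 \le (1-\eta\tfrac{2L\sigma}{L+\sigma})\Delta_t^2$. On the exceptional event (probability $1/n$ resp.\ $1/m$) I bound the oracle difference bluntly by $\eta L_fL_g$ via Assumption~\ref{assum:1}; after the geometric sum this burn-in becomes the first two terms of the stated bound. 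The residual $B_t = h_t-\bar h_t$ splits into (a) an inner-gradient noise $(\nabla g_{\omega_{j_t}}(x_t)-\nabla g_S(x_t))\nabla f_{\nu_{i_t}}(g_S(x_t))$, mean zero in $j_t$ with second moment at most $L_f^2 C_g$ by Assumptions~\ref{assum:1} and \ref{assum:2}\ref{part1b}, and (b) a non-centered tracking piece $\nabla g_{\omega_{j_t}}(x_t)(\nabla f_{\nu_{i_t}}(y_{t+1})-\nabla f_{\nu_{i_t}}(g_S(x_t)))$ of norm at most $C_f L_g\|y_{t+1}-g_S(x_t)\|$ by Assumptions~\ref{assum:1} and \ref{assum:2}\ref{part2}.

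Assembling via a Young split $\|u+v\|^2 \le (1+\alpha)\|u\|^2+(1+\alpha^{-1})\|v\|^2$ with $\alpha\asymp\eta L\sigma/(L+\sigma)$---chosen precisely to absorb the noise cross terms back into the contraction factor---and taking expectations over the internal randomness produces a recursion of the shape
\begin{align*}
\mathbb{E}_A\Delta_{t+1}^2 \le{}& \bigl(1-\eta\tfrac{L\sigma}{L+\sigma}\bigr)\mathbb{E}_A\Delta_t^2 + \eta^2 L_f^2 C_g \\
&+ \eta^2 C_f^2 L_g^2\,\mathbb{E}_A\|y_{t+1}-g_S(x_t)\|^2 + \tfrac{2\eta L_f L_g}{n}\mathbb{E}_A\Delta_t,
\end{align*}
with $1/n$ replaced by $1/m$ and the inner-variance term dropped for the $\omega$-case. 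Unrolling, applying $\sqrt{x+y}\le\sqrt x+\sqrt y$, and the geometric bound $\sum_{t=0}^{T-1}(1-\eta L\sigma/(L+\sigma))^{T-t-1}\le (L+\sigma)/(\eta L\sigma)$ converts the burn-in into the $L_fL_g(L+\sigma)/(\sigma L n)$ and $L_fL_g(L+\sigma)/(\sigma L m)$ terms, produces the inner-variance contribution $L_f\sqrt{C_g(L+\sigma)\eta/(\sigma L)}$ via the weighted variance sum $\sum_t(1-\eta L\sigma/(L+\sigma))^{2(T-t-1)}\eta^2 \le \eta(L+\sigma)/(2L\sigma)$, and yields the stated geometrically-weighted sum over $(\mathbb{E}_A\|y_{j+1}-g_S(x_j)\|^2)^{1/2}$ for the tracking contribution. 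The main obstacle is the bookkeeping that keeps centered and non-centered noise separate, so the weighted variance sum collects only one extra power of $\eta$ and delivers a $\sqrt\eta$-scaling after the square root rather than the $\sqrt{\eta T}$ of Theorem~\ref{thm:cov_sta}; this $\sqrt\eta$ gain is exactly where strong convexity pays off. Because SCGD and SCSC differ only in the update of $y_{t+1}$, the recursion and hence the stated stability bound are identical for both algorithms, with the tracking-error factor kept symbolic here and instantiated by Lemma~\ref{Mengdi-lemma} in the downstream corollary.
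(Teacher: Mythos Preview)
Your high-level plan---mirror the convex stability argument but substitute the strongly-convex coercivity~(\ref{sm+str}) to obtain a $(1-\eta\tfrac{L\sigma}{L+\sigma})$ contraction and then sum geometrically---is exactly the paper's strategy. The gap lies in how you handle the noise. The Young split $\|u+v\|^2\le(1+\alpha)\|u\|^2+(1+\alpha^{-1})\|v\|^2$ with $\alpha\asymp\eta L\sigma/(L+\sigma)$ does \emph{not} produce the recursion you wrote: the noise-squared coefficient becomes $(1+\alpha^{-1})\eta^2\asymp\eta(L+\sigma)/(L\sigma)$, not $\eta^2$. Unrolling that corrected recursion and taking a square root yields an $\eta$-\emph{independent} $C_g$ contribution of order $L_f\sqrt{C_g}\,(L+\sigma)/(L\sigma)$, losing the $\sqrt{\eta}$ factor in the claimed $L_f\sqrt{C_g(L+\sigma)\eta/(L\sigma)}$ that is needed downstream in Corollary~\ref{cor:2} and Theorem~\ref{thm:gen_sconvex}. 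The ``weighted variance sum $\sum_t(1-\eta L\sigma/(L+\sigma))^{2(T-t-1)}\eta^2$'' you invoke would arise from a martingale second-moment calculation for independent centered increments, but the tracking error $y_{t+1}-g_S(x_t)$ is neither centered nor independent across iterations, so that route is unavailable.

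What the paper actually does is expand $\Delta_{t+1}^2=\Delta_t^2-2\eta\langle\cdot,\cdot\rangle+\eta^2\|\cdot\|^2$ directly and treat each noise piece in the \emph{cross} term separately: the centered inner-gradient noise has mean zero in $j_t$, so its cross contribution vanishes in expectation---this is what confines $C_g$ to the $\eta^2\|\cdot\|^2$ block and delivers the $\sqrt{\eta}$ scaling after the geometric sum; the tracking cross term is bounded via Cauchy--Schwarz as $2\eta C_fL_g\|y_{t+1}-g_S(x_t)\|\,\Delta_t$ and is kept \emph{linear} in $\Delta_t$ rather than Young-split away. The resulting inequality has the form $u_t^2\le S_t+\sum_{j<t}\alpha_j u_j$ with $u_t=(\mathbb{E}_A\Delta_t^2)^{1/2}$ and is resolved via Lemma~\ref{recursion lemma}; the $\sum_j\alpha_j$ output of that lemma is precisely the weighted sum $C_fL_g\eta\sum_j(1-\eta\tfrac{L\sigma}{L+\sigma})^{T-j-1}(\mathbb{E}_A\|y_{j+1}-g_S(x_j)\|^2)^{1/2}$ appearing in the statement. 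Your sketch omits both this linear tracking cross term and the recursion lemma; without them the bound as stated is not reachable. (Minor point: the $C_g$ term does not drop in the $\omega$-case---it still appears through the $\eta^2\|\cdot\|^2$ block, cf.\ the paper's derivation around~\eqref{strexpejt}.)
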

The proof for Theorem \ref{thm:stab_sconvex} is given in Appendix \ref{pf_strcon_stab}. 
\begin{remark}
  The stability for SGD with $\sigma$-strongly convex and smooth losses is of the order $\mathcal{O}(\frac{1}{\sigma n})$ which was established in \cite{hardt2016train}. Comparing the result  of SGD with our SCGD and SCSC, we have two extra terms if $n=m$, i.e., $\eta\sup_{S}\sum_{j=0}^{T-1}(1-\eta\frac{L\sigma}{L+\sigma})^{T-j-1}\bigl(\mathbb{E}_{A}[\| y_{j+1}-g_S(x_j)\|^2 ] \bigr)^{1\over 2}$ and $\frac{L_f\sqrt{C_g(L+\sigma)\eta}}{\sqrt{\sigma L}}$, where $C_g$ is the (empirical)  variance of the gradient of inner function, i.e.$\sup_{x\in \X}{1\over m}\sum_{j=1}^m\|\nabla g_{\omega_{j}}(x)-\nabla g_S(x)\|^2\leq C_g$ . We can see that if $g_{\omega}(x)=x$, then $g_S(x)=g_{\omega}(x)$ for any $\omega$ and $S$. In this case, $\mathbb{E}_{A}[\| y_{j+1}-g_S(x_j)\|^2 ]$ and $C_g$ will be zeros. Therefore, our stability results in Theorem \ref{thm:stab_sconvex} match that of SGD in the non-compositional setting \cite{hardt2016train}.

  %We now give a discussion comparing the stability results of strongly convex problems with convex problems.  It was shown that the corresponding term  $\frac{1}{n}$ is better than $\frac{\eta T}{n}$ and$\sqrt{V_g \eta}$ is better than $\sqrt{V_g}\eta\sqrt{T}$  as $\eta\ge\frac{1}{T}$. Notice that, the last term of the stability bounds for strongly convex problems has another coefficient $(1-\eta\sigma)^{T-j}$ which will lead a small stability bounds as $\eta\sigma\leq 1$. Actually, this coefficient comes from the $\sigma$-strongly convex and $L$ smooth properties of $f_\nu(g_S(x))$ when we using the inequality \eqref{sm+str}. 
\end{remark}

Combining Theorem \ref{thm:stab_sconvex} with the estimation for  $\mathbb{E}_{A}[\| y_{j+1}-g_S(x_j)\|^2]$  in Lemma \ref{Mengdi-lemma} and using the Lemma \ref{lem:weighted_avg} which is given in Appendix \ref{tech_lemma},  we can derive the explicit  stability bounds in the following corollary. Its detailed proof is given at the end of Section  \ref{pf_strcon_stab} in the appendix.
\begin{corollary}\label{cor:2}Let Assumption \ref{assum:1} and \ref{assum:2} hold true and $f_{\nu}(g_S(\cdot))$ be $\sigma$-strongly convex. Consider Algorithm \ref{alg:1} with  $\eta_t = \eta \leq1/\bigl(2L+2\sigma\bigr)$ and $\beta_t = \beta \in (0,1)$ for $t\in [0,T-1]$ and the output  $A(S)  =x_{T}$. Let $c$ be an arbitrary constant. Then, we have the following results: \vspace{-3mm}
\begin{itemize}[leftmargin=3mm]
 \setlength\itemsep{0mm}
\item SCGD is compositional uniformly stable with\vspace{-2mm}
        \begin{equation*}
            \epsilon_\nu + \epsilon_\omega= \mathcal{O}\bigl(n^{-1}+m^{-1}+\eta^{\frac{1}{2}}+\eta\beta^{-1}+\beta^{\frac{1}{2}}+T^{-\frac{c}{2}}\beta^{-\frac{c}{2}}\bigr).
        \end{equation*}\vspace{-7mm}
\item SCSC is compositional uniformly stable with \vspace{-2mm}
        \begin{equation*}
            \epsilon_\nu + \epsilon_\omega=\mathcal{O}\bigl(n^{-1}+m^{-1}+\eta^{\frac{1}{2}}+\eta\beta^{-\frac{1}{2}}+\beta^{\frac{1}{2}}+T^{-\frac{c}{2}}\beta^{-\frac{c}{2}}\bigr).
        \end{equation*}
\end{itemize}\vspace{-3mm}
\end{corollary}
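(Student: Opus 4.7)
The plan is to plug the tracking-error estimates for $y_{j+1}-g_S(x_j)$ into the abstract stability bound from Theorem \ref{thm:stab_sconvex} and then collapse the resulting weighted geometric sum to a constant using a standard ``decaying average'' lemma. Concretely, in the strongly convex case all the algorithm-independent terms in \eqref{thm:str_conv_sta} are already explicit: the two sample-size terms contribute the $n^{-1}+m^{-1}$ piece (with $L$, $\sigma$, $L_f$, $L_g$ absorbed into the $\mathcal{O}(\cdot)$), and the outer-variance term $L_f\sqrt{C_g(L+\sigma)\eta/(\sigma L)}$ contributes the $\eta^{1/2}$ piece. So the whole task reduces to estimating
\[
I_T \;:=\; \eta\sup_{S}\sum_{j=0}^{T-1}\Bigl(1-\eta\tfrac{L\sigma}{L+\sigma}\Bigr)^{T-j-1}\Bigl(\mathbb{E}_A\bigl[\|y_{j+1}-g_S(x_j)\|^2\bigr]\Bigr)^{1/2}.
\]

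First, I would invoke the tracking lemma (Lemma \ref{Mengdi-lemma} in the appendix, which is of the type established in \citep{wang2017stochastic,chen2021solving}) to split $\mathbb{E}_A[\|y_{j+1}-g_S(x_j)\|^2]$ into a \emph{steady-state} part plus a \emph{transient} part. For SCGD this gives a bound of the form $\mathcal{O}\bigl(\beta+\eta^2/\beta^2+(1-\beta)^{j+1}\bigr)$, while for SCSC the extra smoothing of the correction term improves the middle piece to $\mathcal{O}\bigl(\beta+\eta^2/\beta+(1-\beta)^{j+1}\bigr)$. Taking square roots and using $\sqrt{a+b+c}\le\sqrt{a}+\sqrt{b}+\sqrt{c}$ then yields, respectively,
\[
\bigl(\mathbb{E}_A\|y_{j+1}-g_S(x_j)\|^2\bigr)^{1/2}=\mathcal{O}\bigl(\beta^{1/2}+\eta\beta^{-1}+(1-\beta)^{(j+1)/2}\bigr)
\]
for SCGD, and with $\eta\beta^{-1}$ replaced by $\eta\beta^{-1/2}$ for SCSC.

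Next, I would split $I_T$ accordingly. The two ``steady-state'' terms $\beta^{1/2}$ and $\eta\beta^{-1}$ (resp.\ $\eta\beta^{-1/2}$) are uniform in $j$, so
\[
\eta\sum_{j=0}^{T-1}\Bigl(1-\eta\tfrac{L\sigma}{L+\sigma}\Bigr)^{T-j-1}\le \frac{L+\sigma}{L\sigma}=\mathcal{O}(1),
\]
contributing $\mathcal{O}(\beta^{1/2}+\eta\beta^{-1})$ and $\mathcal{O}(\beta^{1/2}+\eta\beta^{-1/2})$ for SCGD and SCSC. The transient contribution is the product of two geometric factors with different decay rates: $(1-\eta\mu)^{T-j-1}$ decaying as $j\to T$ and $(1-\beta)^{(j+1)/2}$ decaying as $j\to\infty$. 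For this I would apply Lemma \ref{lem:weighted_avg}, which bounds sums of the form $\eta\sum_{j=0}^{T-1}(1-\eta\mu)^{T-j-1}(1-\beta)^{(j+1)/2}$ by $\mathcal{O}(T^{-c/2}\beta^{-c/2})$ for any $c>0$ by balancing the two rates (essentially using $(1-\beta)^{(j+1)/2}\le(c/\beta j)^{c/2}$ together with the geometric-series bound above).

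Adding the three pieces produces exactly the claimed bounds
\[
\mathcal{O}\bigl(n^{-1}+m^{-1}+\eta^{1/2}+\eta\beta^{-1}+\beta^{1/2}+T^{-c/2}\beta^{-c/2}\bigr)
\]
for SCGD, and the analogous expression with $\eta\beta^{-1/2}$ in place of $\eta\beta^{-1}$ for SCSC. I expect the main obstacle to be the clean handling of the mixed geometric sum in $I_T$: naively bounding the transient by a constant loses the $T^{-c/2}$ factor, so one has to use Lemma \ref{lem:weighted_avg} carefully and verify that the constant absorbed into $\mathcal{O}(\cdot)$ indeed depends only on $c$, $L$, $\sigma$ (and not on $T$ or $\beta$), which is what allows $c$ to be chosen arbitrarily when later tuning $T$, $\eta$, $\beta$ for the excess-risk bound.
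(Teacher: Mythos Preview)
Your proposal is correct and follows essentially the same route as the paper: plug the tracking estimate (Lemma~\ref{Mengdi-lemma}) into the abstract bound of Theorem~\ref{thm:stab_sconvex}, split into steady-state and transient pieces, collapse the steady-state via $\eta\sum_{j}(1-\eta\mu)^{T-j-1}\le (L+\sigma)/(L\sigma)$, and handle the transient by converting the exponential decay to $(j\beta)^{-c/2}$ (Lemma~\ref{lem:sum_prod}) before applying the Chebyshev-type weighted-average inequality (Lemma~\ref{lem:weighted_avg}). The only cosmetic difference is that the paper applies Lemma~\ref{lem:sum_prod} already inside Lemma~\ref{Mengdi-lemma}, so the transient enters as $(j\beta)^{-c}$ rather than $(1-\beta)^{j+1}$ before the square root; after that the two computations coincide term by term.
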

\textbf{Generalization results.} Using the error decomposition  \eqref{eq:error-decomp},  Corollary \ref{cor:2} and Theorem \ref{thm:stab_sconvex}, we can derive the excess risk rates. To this end, we need the following results to estimate the optimization error, i.e., $F_S(A(S))- F_S(x_*^S).$

% 230811: last iterate convergence
% \begin{theorem}[Optimization, Strongly Convex] \label{thm:opt_sconvex}
%   Suppose Assumption \ref{assum:1} and \ref{assum:2} \ref{part1}, \ref{part2}- \ref{part3} hold for the empirical risk $F_S$, and $F_S$ is $\sigma$-strongly convex. Let \(A(S)= x_T\) be the solution produced by Algorithm \ref{alg:1} with SCGD or SCSC update and $\eta_t= \eta$ and $\beta_t= \beta$ for some $a,b\in \left(0,1\right]$. Let $c$ be an arbitrary constant.
%   \begin{itemize}[leftmargin=*]
%     \item For SCGD update,  there holds
%         \begin{align*}
%         &\mathbb{E}_A[F_S(A(S))- F_S(x_*^S)] \\
%         =& \mathcal{O}\left( D_x(\eta T)^{-c}+ LL_f^2L_g^2\eta+ C_f^2L_g^2D_y (\eta T)^{-c}\eta+ C_f^2L_g^2D_y(\beta T)^{-c}+ C_f^4L_g^5C_g\eta^2\beta^{-2}+ C_f^2L_g^2V_g\beta\right).
%         \end{align*}
%     \item For SCSC update,  there holds
%         \begin{align*}
%         &\mathbb{E}_A[F_S(A(S))- F_S(x_*^S)] \\
%         =& \mathcal{O}\left( D_x(\eta T)^{-c}+ LL_f^2L_g^2\eta+ C_f^2L_g^2D_y (\eta T)^{-c}\eta+ C_f^2L_g^2D_y(\beta T)^{-c}+ C_f^4L_g^5C_g\eta^2\beta^{-1}+ C_f^2L_g^2V_g\beta\right).
%         \end{align*}
%   \end{itemize}
% \end{theorem}

\begin{theorem}[Optimization, Strongly Convex] \label{thm:opt_sconvex}
  Suppose Assumption \ref{assum:1} and \ref{assum:2} \ref{part1}, \ref{part2} hold for the empirical risk $F_S$, and $F_S$ is $\sigma$-strongly convex, and \(\eta, T\) is chosen such that \((\eta (T- 1))^{-1}\leq \frac{\sigma}{2}\). Let \(A(S)= \left(\sum_{t= 1}^T (1- \sigma\eta/2)^{T- t} x_t\right) / \left(\sum_{t= 1}^T (1- \sigma\eta/2)^{T- t}\right)\) be the solution produced by Algorithm \ref{alg:1} with SCGD or SCSC update and $\eta_t= \eta$ and $\beta_t= \beta$ for some $a,b\in \left(0,1\right]$.\vspace{-3mm}
  \begin{itemize}[leftmargin=*]
    \item For SCGD update,  there holds\vspace{-2mm}
        \begin{align*}
        \mathbb{E}_A[F_S(A(S))&- F_S(x_*^S)] \\
        = \mathcal{O}\bigl( D_x(\eta T)^{-c}&+ L_f^2L_g^2\eta+ \frac{C_f^2L_g^2D_y}{\sigma}(\beta T)^{-c}\\
        &+ \frac{C_f^2L_g^2V_g}{\sigma}\beta+ \frac{C_f^2L_f^2L_g^5}{\sigma}\eta^2\beta^{-2}\bigr).
        \end{align*}\vspace{-8mm}
    \item For SCSC update,  there holds\vspace{-2mm}
        \begin{align*}
        \mathbb{E}_A[F_S(A(S))&- F_S(x_*^S)] \\
        = \mathcal{O}\bigl( D_x(\eta T)^{-c}&+ L_f^2L_g^2\eta+ \frac{C_f^2L_g^2D_y}{\sigma}(\beta T)^{-c}\\
        &+ \frac{C_f^2L_g^2V_g}{\sigma}\beta+ \frac{C_f^2L_f^2L_g^5}{\sigma}\eta^2\beta^{-1}\bigr).
        \end{align*}
  \end{itemize}
\end{theorem}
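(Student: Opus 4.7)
The plan is to derive a one-step recursion $\mathbb{E}_A\|x_{t+1}-x_*^S\|^2\le(1-\sigma\eta/2)\mathbb{E}_A\|x_t-x_*^S\|^2-2\eta\,\mathbb{E}_A[F_S(x_t)-F_S(x_*^S)]+R_t$ with a small residual $R_t$, multiply it by the geometric weight $w_t=(1-\sigma\eta/2)^{T-t}$, sum over $t=1,\dots,T$, and apply Jensen's inequality against the convex $F_S$ to the weighted average $A(S)$. This is the strong-convexity analogue of the argument behind Theorem \ref{thm:opt_convex}: strong convexity buys the contraction factor that replaces uniform $1/T$ averaging with geometric averaging.

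Starting from $x_{t+1}=\Pi_\X(x_t-\eta G_t)$ with $G_t=\nabla g_{\omega_{j_t}}(x_t)\nabla f_{\nu_{i_t}}(y_{t+1})$, I would expand using non-expansiveness of $\Pi_\X$ to get $\|x_{t+1}-x_*^S\|^2\le\|x_t-x_*^S\|^2-2\eta\langle G_t,x_t-x_*^S\rangle+\eta^2\|G_t\|^2$, and split $G_t=\nabla F_S(x_t)+(G_t-\nabla F_S(x_t))$. The noise $G_t-\nabla F_S(x_t)$ further splits into a mean-zero sampling piece (using $\mathbb{E}_{j_t}\nabla g_{\omega_{j_t}}(x_t)=\nabla g_S(x_t)$ and $\mathbb{E}_{i_t}\nabla f_{\nu_{i_t}}(y_{t+1})=\nabla f_S(y_{t+1})$ conditionally on $y_{t+1}$) and a tracking-bias piece $\nabla g_S(x_t)[\nabla f_S(y_{t+1})-\nabla f_S(g_S(x_t))]$ whose norm is at most $C_fL_g\|y_{t+1}-g_S(x_t)\|$ by Assumption \ref{assum:2}\ref{part2}. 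Inequality (\ref{sm+str}) applied to $F_S$ provides $\langle\nabla F_S(x_t),x_t-x_*^S\rangle\ge\tfrac{\sigma L}{L+\sigma}\|x_t-x_*^S\|^2+\tfrac{1}{L+\sigma}\|\nabla F_S(x_t)\|^2$, whose second term absorbs the $\|\nabla F_S(x_t)\|^2$ contribution of $\eta^2\|G_t\|^2$ under $\eta\le 1/(2L+2\sigma)$; using $F_S(x_t)-F_S(x_*^S)\le\langle\nabla F_S(x_t),x_t-x_*^S\rangle$ then yields the desired recursion with $R_t=\mathcal{O}\bigl(\eta^2 L_f^2L_g^2+\eta^2 C_f^2L_g^2V_g+\eta^2 C_f^2L_g^2\,\mathbb{E}_A\|y_{t+1}-g_S(x_t)\|^2\bigr)$.

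Next I would insert the tracking-error bound from Lemma \ref{Mengdi-lemma} into $R_t$: for SCGD this gives $\mathbb{E}_A\|y_{t+1}-g_S(x_t)\|^2=\mathcal{O}(V_g\beta+L_f^2L_g^4\eta^2\beta^{-2}+D_y(1-\beta)^{t})$, while for SCSC the $\eta^2\beta^{-2}$ factor sharpens to $\eta^2\beta^{-1}$. Multiplying the recursion by $w_t=(1-\sigma\eta/2)^{T-t}$ and summing from $t=1$ to $T$, the iterate squared-distances telescope, leaving only $\|x_1-x_*^S\|^2\le D_x$ with prefactor $(1-\sigma\eta/2)^{T-1}\lesssim(\eta T)^{-c}$ via the elementary inequality $e^{-x}\le c_0 x^{-c}$ for any $c>0$; the constant-size noise terms $L_f^2L_g^2$, $V_g\beta$, and $\eta^2\beta^{-2}$ (resp.\ $\eta^2\beta^{-1}$) get amplified by $\sum_t w_t\asymp 2/(\sigma\eta)$, producing $\mathcal{O}(L_f^2L_g^2\eta)$ and the various $\sigma^{-1}$-scaled factors, while the transient $D_y(1-\beta)^{t}$ summed against $w_t$ converts (using Lemma \ref{lem:weighted_avg}) to $\sigma^{-1}C_f^2L_g^2D_y(\beta T)^{-c}$. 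Finally, convexity of $F_S$ plus Jensen's inequality against the weighted average $A(S)$ produces $F_S(A(S))-F_S(x_*^S)\le\sum_t w_t\,\mathbb{E}_A[F_S(x_t)-F_S(x_*^S)]/\sum_t w_t$, completing both stated bounds. The main obstacle is the careful matching of the $\eta^2\|G_t\|^2$ term with the strong-convexity slack $\tfrac{1}{L+\sigma}\|\nabla F_S(x_t)\|^2$ so that the tracking bias enters $R_t$ only at second order $\eta^2$ rather than first order $\eta$; without this, the bias in $y_{t+1}$ would not inherit the sharp $\beta$-dependence from Lemma \ref{Mengdi-lemma} and would wipe out the $\eta^2\beta^{-2}$-versus-$\eta^2\beta^{-1}$ separation between SCGD and SCSC that the theorem records.
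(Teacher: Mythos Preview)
Your overall architecture (one-step recursion with contraction, geometric weighting, Jensen) is the same as the paper's, but there is a genuine gap in the step where you claim the tracking bias enters only at order $\eta^2$.

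Write $G_t=\nabla g_{\omega_{j_t}}(x_t)\nabla f_{\nu_{i_t}}(g_S(x_t))+b_t$ with $b_t=\nabla g_{\omega_{j_t}}(x_t)\bigl[\nabla f_{\nu_{i_t}}(y_{t+1})-\nabla f_{\nu_{i_t}}(g_S(x_t))\bigr]$. The first summand has conditional mean $\nabla F_S(x_t)$, but $b_t$ does \emph{not} have mean zero (since $y_{t+1}$ depends on $\omega_{j_t}$). Hence the cross term $-2\eta\langle b_t,x_t-x_*^S\rangle$ survives after taking $\mathbb{E}_A[\cdot|\mathcal{F}_t]$ and is genuinely first order in $\eta$. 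There is no way to absorb $\langle b_t,x_t-x_*^S\rangle$ with the slack $\tfrac{1}{L+\sigma}\|\nabla F_S(x_t)\|^2$ coming from \eqref{sm+str}: $b_t$ is not aligned with $\nabla F_S(x_t)$, and the only available handle is $\|x_t-x_*^S\|$. The paper deals with this exactly as one must: apply Young's inequality
\[
2\eta\,|\langle b_t,x_t-x_*^S\rangle|\le \tfrac{\sigma\eta}{2}\|x_t-x_*^S\|^2+\tfrac{2C_f^2L_g^2\eta}{\sigma}\,\|y_{t+1}-g_S(x_t)\|^2,
\]
sacrificing half of the strong-convexity contraction (from $1-\sigma\eta$ to $1-\sigma\eta/2$) and picking up the tracking error at order $\eta/\sigma$, not $\eta^2$. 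This is precisely the source of the $\sigma^{-1}$ prefactors on the $V_g\beta$, $D_y(\beta T)^{-c}$, and $\eta^2\beta^{-2}$ (resp.\ $\eta^2\beta^{-1}$) terms in the statement. Your proposed residual $R_t=\mathcal{O}(\eta^2 L_f^2L_g^2+\eta^2 C_f^2L_g^2V_g+\eta^2 C_f^2L_g^2\,\mathbb{E}_A\|y_{t+1}-g_S(x_t)\|^2)$ would, after dividing by $2\eta$, produce terms like $\eta C_f^2L_g^2V_g\beta$ and $C_f^2L_f^2L_g^5\eta^3\beta^{-2}$ with no $\sigma^{-1}$, which does not match the theorem. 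The paper does not invoke \eqref{sm+str} here at all; it uses the plain strong-convexity inequality $\langle\nabla F_S(x_t),x_t-x_*^S\rangle\ge F_S(x_t)-F_S(x_*^S)+\tfrac{\sigma}{2}\|x_t-x_*^S\|^2$ and bounds $\eta^2\|G_t\|^2$ crudely by $\eta^2 L_f^2L_g^2$.

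A second, smaller issue: your ``mean-zero sampling piece'' decomposition relies on $\mathbb{E}_{j_t}[\nabla g_{\omega_{j_t}}(x_t)\nabla f_S(y_{t+1})]=\nabla g_S(x_t)\nabla f_S(y_{t+1})$, which is false because $y_{t+1}$ depends on $\omega_{j_t}$. The clean split is the one above, inserting $g_S(x_t)$ inside $\nabla f_{\nu_{i_t}}(\cdot)$ rather than inserting $\nabla g_S$ outside. Also, the $\eta^2C_f^2L_g^2V_g$ term you list does not arise from any of the stated assumptions; $V_g$ bounds the variance of $g_\omega$, not of $\nabla g_\omega$, and enters only through Lemma~\ref{Mengdi-lemma}. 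Fixing these two points (use strong convexity directly, then Young's inequality on the bias against $\|x_t-x_*^S\|^2$) recovers the paper's one-step lemma and the rest of your outline goes through verbatim.
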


% 230811: last iterate convergence
% \begin{theorem}[Excess Risk Bound, Strongly Convex] \label{thm:gen_sconvex}
%     Suppose Assumption \ref{assum:1} and \ref{assum:2} hold true and  $f_{\nu}(g_S(\cdot))$ is $\sigma$-strongly convex. Denote \(D_x:= \EX_A [F_S(x_0)- F_S(x_*^S)]\) and \(D_y:= \mathbb{E}_A[\|y_1- g_S(x_0)\|^2]\) . Let $A(S)= x_T$ be a solution produced by Algorithm \ref{alg:1} with SCGD or SCSC update and $\eta=T^{-a}$ and $\beta=T^{-b}$ for some $a,b\in \left(0,1\right]$.  
% \begin{itemize}[leftmargin=3mm]
%  \setlength\itemsep{0.5mm}
% \item     If we select $T \asymp \max(n^{\frac{5}{3}},m^{\frac{5}{3}})$, $\eta=T^{-\frac{9}{10}}$ and $\beta=T^{-\frac{3}{5}}$, then, for the SCGD update, we have that  $ 
%         \EX_{S,A}\big[F(A(S)) - F(x_*) \big]= \mathcal{O}\bigl(\frac{1}{\sqrt{n}}+\frac{1}{\sqrt{m}}\bigr).$ 
% \item If we select $T \asymp \max(n^{\frac{7}{6}},m^{\frac{7}{6}})$, $\eta= \beta=T^{-\frac{6}{7}}$, then, for the SCSC update, there holds $
%         \EX_{S,A}\Big[F(A(S)) - F(x_*) \Big]= \mathcal{O}(\frac{1}{\sqrt{n}}+\frac{1}{\sqrt{m}}).$
%         \end{itemize}
% \end{theorem}

\begin{theorem}[Excess Risk Bound, Strongly Convex] \label{thm:gen_sconvex}
  Suppose Assumption \ref{assum:1} and \ref{assum:2} hold true,  $f_{\nu}(g_S(\cdot))$ is $\sigma$-strongly convex, and \(\eta, T\) is chosen such that \((\eta (T- 1))^{-1}\leq \frac{\sigma}{2}\). Denote \(D_x:= \mathbb{E}_A [F_S(x_0)- F_S(x_*^S)]\) and \(D_y:= \mathbb{E}_A[\|y_1- g_S(x_0)\|^2]\) . Let \(A(S)= \left(\sum_{t= 1}^T (1- \sigma\eta/2)^{T- t} x_t\right) / \left(\sum_{t= 1}^T (1- \sigma\eta/2)^{T- t}\right)\) be a solution produced by Algorithm \ref{alg:1} with SCGD or SCSC update and $\eta=T^{-a}$ and $\beta=T^{-b}$ for some $a,b\in \left(0,1\right]$.  \vspace{-3mm}
\begin{itemize}[leftmargin=3mm]
\setlength\itemsep{0.5mm}
\item     If we select $T \asymp \max(n^{\frac{5}{3}},m^{\frac{5}{3}})$, $\eta=T^{-\frac{9}{10}}$ and $\beta=T^{-\frac{3}{5}}$, then, for the SCGD update, we have that  $ 
      \mathbb{E}_{S,A}\big[F(A(S)) - F(x_*) \big]= \mathcal{O}\bigl(\frac{1}{\sqrt{n}}+\frac{1}{\sqrt{m}}\bigr).$ 
\item If we select $T \asymp \max(n^{\frac{7}{6}},m^{\frac{7}{6}})$, $\eta= \beta=T^{-\frac{6}{7}}$, then, for the SCSC update, there holds $
      \mathbb{E}_{S,A}\Big[F(A(S)) - F(x_*) \Big]= \mathcal{O}(\frac{1}{\sqrt{n}}+\frac{1}{\sqrt{m}}).$
      \end{itemize}
\end{theorem}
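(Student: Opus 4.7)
The plan is to start from the error decomposition in \eqref{eq:error-decomp}, which splits the excess risk into a generalization gap plus an optimization error. The generalization gap will be handled via Theorem \ref{thm:1} together with the explicit compositional stability bounds of Corollary \ref{cor:2}, while the optimization error is controlled directly by Theorem \ref{thm:opt_sconvex}. Since Corollary \ref{cor:2} is stated for the last iterate $x_T$ but $A(S)$ is now the $(1-\sigma\eta/2)$-weighted average, I would first observe that the stability argument behind Corollary \ref{cor:2} still applies to this weighted average: the recursion on $\mathbb{E}_A\|x_t - x_t'\|$ is contractive with rate $(1-\eta\sigma L/(L+\sigma))$ for the strongly convex and smooth case, and any convex combination of stable iterates is stable with the same order bound (one sums the weighted stability bounds using $\sum_{t=1}^T (1-\sigma\eta/2)^{T-t}$ as a normalizer).

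Next, I would apply Theorem \ref{thm:1} to the compositional stability parameters supplied by Corollary \ref{cor:2}, obtaining
\[
\mathbb{E}_{S,A}[F(A(S))-F_S(A(S))]\le L_fL_g\,\epsilon_\nu + 4L_fL_g\,\epsilon_\omega + L_f\sqrt{V_g/m},
\]
where the last term uses Assumption \ref{assum:2}\ref{part1} to bound $\mathrm{Var}_\omega(g_\omega(A(S)))$ uniformly by $V_g$ and immediately contributes an $\mathcal{O}(1/\sqrt{m})$ term. It remains to substitute the chosen $T,\eta,\beta$ into both the stability estimate from Corollary \ref{cor:2} and the optimization bound from Theorem \ref{thm:opt_sconvex}, and to verify that every resulting term is $\mathcal{O}(1/\sqrt{n}+1/\sqrt{m})$.

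For the SCGD update with $T\asymp\max(n^{5/3},m^{5/3})$, $\eta=T^{-9/10}$, $\beta=T^{-3/5}$, the dominant stability terms are $\eta^{1/2}=T^{-9/20}$, $\eta\beta^{-1}=T^{-3/10}$ and $\beta^{1/2}=T^{-3/10}$; the leftover $T^{-c/2}\beta^{-c/2}=T^{-c/5}$ is killed by choosing $c$ large. With $T\asymp n^{5/3}$ this gives $T^{-3/10}\asymp n^{-1/2}$, matching the target rate. On the optimization side, the leading contributions are $\eta=T^{-9/10}$, $\beta=T^{-3/5}$ and $\eta^2\beta^{-2}=T^{-3/5}$, each at most $n^{-1}$, which is dominated by $1/\sqrt n$. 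For the SCSC update with $T\asymp\max(n^{7/6},m^{7/6})$ and $\eta=\beta=T^{-6/7}$, the stability terms $\eta^{1/2}$, $\eta\beta^{-1/2}$, $\beta^{1/2}$ all equal $T^{-3/7}\asymp n^{-1/2}$, and the optimization terms $\eta$, $\beta$, $\eta^2\beta^{-1}$ are all $T^{-6/7}\asymp n^{-1}$, again safely inside $\mathcal{O}(1/\sqrt{n})$. Combining with the symmetric contribution in $m$ yields the claimed $\mathcal{O}(1/\sqrt n+1/\sqrt m)$ bound.

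The main obstacle is the delicate tuning of $T,\eta,\beta$: unlike the non-compositional setting, the stability bound contains the extra terms $\eta\beta^{-1}$ (or $\eta\beta^{-1/2}$ for SCSC) and the tracking term $T^{-c/2}\beta^{-c/2}$, while the optimization bound carries a $\beta$ that wants to be small but also an $\eta^2\beta^{-2}$ (resp.\ $\eta^2\beta^{-1}$) that forces $\beta$ to not be too small relative to $\eta$. Balancing all of these simultaneously to land at exactly the $T^{-3/10}$ and $T^{-3/7}$ rates, and then choosing $T$ so that this equals $1/\sqrt{\min(n,m)}$, is what dictates the specific exponents $5/3$, $9/10$, $3/5$ and $7/6$, $6/7$. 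Once that balancing has been carried out, the remaining work is a routine substitution and summation of the resulting polynomial-in-$T$ terms, and the invocation of Assumption \ref{assum:2} to bound $\mathrm{Var}_\omega(g_\omega(A(S)))$ by $V_g$ uniformly in $A(S)$.
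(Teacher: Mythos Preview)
Your proposal is correct and tracks the paper's argument closely; the balancing of exponents you carry out matches the paper's computations exactly, and the variance term is handled the same way there (it is simply absorbed into the $m^{-1/2}$ contribution).

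The only structural difference is in the order of operations. You propose to first argue that the weighted average $A(S)=\sum_t w_t x_t$ inherits the stability bound of Corollary~\ref{cor:2} (via $\|A(S)-A(S')\|\le \sum_t w_t\|x_t-x_t'\|$ and the fact that the per-iterate bounds derived in the proof of Theorem~\ref{thm:stab_sconvex} are uniform in $t$), and then apply Theorem~\ref{thm:1} once to $A(S)$. The paper instead applies Theorem~\ref{thm:1} to each iterate $x_t$ separately to obtain $\mathbb{E}_{S,A}[F(x_t)-F_S(x_t)]$, multiplies by $(1-\sigma\eta/2)^{T-t}$, telescopes together with the optimization recursion \eqref{eq:opt_sconvex:5}/\eqref{eq:opt_sconvex:11}, and only at the end invokes convexity of $F$ to pass from $\sum_t w_t\,\mathbb{E}[F(x_t)-F(x_*)]$ to $\mathbb{E}[F(A(S))-F(x_*)]$; along the way Lemma~\ref{lem:weighted_avg} is used to control the weighted sums $\sum_t(1-\sigma\eta/2)^{T-t}t^{-c}$. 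Your route is slightly more direct, but it does rely on the observation that the strongly-convex stability recursion yields the same $\mathcal{O}$-bound for every $x_t$ (not just $x_T$); you should make that explicit, since Corollary~\ref{cor:2} as stated concerns only the last iterate.
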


\begin{remark}
 % Theorem \ref{thm:gen_sconvex} tells us that the generalization of  SCGD under the strongly convex setting can be achieved $\mathcal{O}(1/\sqrt{n}+1/\sqrt{m})$ by selecting $\eta=T^{-\frac{2}{3}}$, $\beta=T^{-\frac{4}{9}}$ and $T \asymp \max(n^{\frac{9}{4}},m^{\frac{9}{4}})$. Comparing with the convex case, SCGD uses the less sample $T \asymp \max(n^{\frac{9}{4}},m^{\frac{9}{4}})$ (strongly convex) VS $T \asymp \max(n^{4},m^{4})$ (convex) and larger stepsize $\eta=1/\max (n^{\frac{3}{2}}, m^{\frac{3}{2}})$, $\beta=1/\max(n,m)$ (strongly convex) VS $\eta=1/\max (n^{\frac{7}{2}}, m^{\frac{7}{2}})$, $\beta=1/\max(n^2,m^2)$ (convex) to get the same generalization bounds. It is expected that strongly convex problem have a better result than convex problems.%Comparing the result with the classical SGD in strongly convex case \cite{hardt2016train}
    % Theorem \ref{thm:vr:gen_sconvex} shows that the generalization error of SCSC in strongly convex setting can be achieved $\mathcal{O}(1/\sqrt{n}+1/\sqrt{m})$ by carefully selecting $T \asymp \max(n^{2},m^{2})$, $\eta=T^{-1/2}\asymp 1/ \max(n,m)$ and $\beta=T^{-1/2}\asymp 1/\max(n ,m)$. This result is significantly improved compared to the  SCGD in strongly setting. 
    Theorem \ref{thm:gen_sconvex} shows that the generalization error for SCGD can be achieved the rate $\mathcal{O}(1/\sqrt{n}+1/\sqrt{m})$ in the strongly convex case after carefully selecting the iteration number $T$ and constant stepsize $\eta$ and $\beta$. It is worthy of noting that, for achieving the rate $\mathcal{O}(1/\sqrt{n}+1/\sqrt{m})$,  SCGD needs   iteration $T\asymp \max(n^{5/3},m^{5/3})$ in the strongly convex case while Theorem \ref{thm:gen_convex} shows that it needs more iterations, i.e.,   $T\asymp \max(n^{3.5},m^{3.5})$ in the convex case.   SCSC fruther improves the results as it only  needs iteration $T\asymp \max(n^{7/6},m^{7/6})$ in the strongly convex case.
\end{remark}

\section{Conclusion}\label{sec:conclusion}
In this paper, we conduct a comprehensive study on the stability and generalization analysis of stochastic compositional optimization (SCO) algorithms. We introduce the concept of compositional uniform stability to handle the function composition structure inherent in SCO problems. By establishing the connection between stability and generalization error, we provide stability bounds for two popular SCO algorithms: SCGD and SCSC. In the convex case with standard smooth assumptions, we demonstrate that both SCGD and SCSC achieve an excess generalization error rate of $\mathcal{O}(1/\sqrt{n}+1/\sqrt{m})$, with SCSC requiring fewer iterations than SCGD. Furthermore, we extend our analysis to the strongly convex case, where we show that SCGD and SCSC achieve the same rate of $\mathcal{O}(1/\sqrt{n}+1/\sqrt{m})$ with even fewer iterations than in the convex case. 

There are several directions for future research. Firstly, while our analysis only considers the convex and smooth cases, an interesting avenue for future research is to consider the case where the inner function and/or outer function are non-smooth and non-convex, e.g., neural networks with   Rectified Linear Unit (ReLU) activation function. Secondly, it would be interesting to get optimal  excess risk rates $\mathcal{O}(1/\sqrt{n}+1/\sqrt{m})$ with linear time complexity $T = \mathcal{O}\bigl(\max(n, m)\bigr)$ for SCGD and SCSC. 

%\section{Proof Sketch}
% \section{Stability and generalization for Varience Reduced SCGD}
% \subsection{Convex setting}

% \begin{lemma}[Lemma 1 in \cite{chen2020solving}]
%   Suppose Assumption \ref{assum:wang2017} holds. Then with variance reduced update we have
%   \begin{equation}  \label{eq:2}
%     \begin{aligned}
%         &E[\|y_{t+1}- g(x_t)\|^2|\mathcal{F}_t] \\
%         \leq& (1- \beta_t)\|y_t- g(x_{t-1})\|^2+ 2(1- \beta_t)^2C_g\|x_t- x_{t-1}\|^2 \\
%         &+ 2V_g\beta_t^2
%     \end{aligned}
%   \end{equation}
% \end{lemma}

% \subsection{Strongly Convex Setting}

\begin{ack}
 The work is partially supported by NSF grants under DMS-2110836, IIS-2103450, and IIS-2110546.
\end{ack}

%\small

\onecolumn
\newpage

\appendix
%{\Large \textcolor{red}{NOTE: Neurips 2023 does not allow appendix in the main submission, remember to move it to the separate supplementary material.}}
\section{Technical Lemmas}\label{tech_lemma}
 
\begin{table}[!htbp]
	\centering
 \caption{Notations}
 \label{notations}
\begin{tabular}{|c|c|c|}
	\hline
	  notations&meaning&mathematical language\\
	\hline
     $L_f$&$L_f$-Lipschitz continuous of $f_\nu(\cdot)$&$\sup_\nu\|f_\nu(y)-f_\nu(\hat{y})\|\leq L_f\|y-\hat{y}\|, \forall y,\hat{y}\in \mathbb{R^d}$\\
     \hline
     $C_f$&$C_f$-Lipschitz continuous  of $\nabla f_{\nu}(\cdot)$&$\sup_\nu\|\nabla f_{\nu}(y)- \nabla f_{\nu}(\bar{y})\|\leq C_f\|y- \bar{y}\|, \forall y,\bar{y}\in \mathbb{R^d}$\\
     \hline
     $L_g$&$L_g$-Lipschitz continuous of $g_\omega(\cdot)$&$\sup_\omega\left\|g_\omega\left(x\right)-g_\omega\left(\hat{x}\right)\right\|\leq L_g\left\|x-\hat{x}\right\|, \forall x,\hat{x}\in \mathcal{X}$\\
     \hline
      $V_g$&the empirical variance of the $g(\cdot)$& $\sup_{x\in \X}{1\over m}\sum_{j=1}^m\|g_{\omega_{j}}(x)-g_S(x)\|^2\leq V_g$\\
      \hline
      $C_g$&the empirical variance of the $\nabla g(\cdot)$& $\sup_{x\in \X}{1\over m}\sum_{j=1}^m\|\nabla g_{\omega_{j}}(x)-\nabla g_S(x)\|^2\leq C_g$\\
      % \hline
      %  $\sigma$&$\sigma$-strongly convex of $f_\nu(g_S(\cdot))$&$ 
      %    f_\nu(g_S(u))\ge f_\nu(g_S(v))+  \langle \nabla g_S(v)f_\nu(g_S(v)), u-v \rangle +\frac{\sigma}{2} \|u-v\|^2$\\
      \hline
       $L$&$L$ smooth of $f_\nu(g_S(\cdot))$&$
        \left\|g_S(u)\nabla f_\nu(g_S(u))-g_S(v)\nabla f_\nu(g_S(v))\right\|\leq L \left\|u-v\right\|$\\
       \hline
        $\epsilon_{\nu}, \epsilon_{\omega}$&$\left(\epsilon_{\nu}, \epsilon_{\omega}\right)$-uniform stability&\\
       \hline
       $n,m$&$n,m$: the numbers of  $S_\nu$ and $S_\omega$, respectively &\\
       \hline
\end{tabular}
\end{table}
First, we list some signal notations in Table \ref{notations} for our paper setting. To derive the stability and generalization bounds, we give the following lemmas.  

The following lemma is directly adapted from \cite{wang2017stochastic,chen2021solving} where both the population distribution for the random variables $\nu$ and $\omega$ are the uniform distributions over $S_\nu = \{\nu_1,\ldots, \nu_n\}$ and $S_\omega = \{\omega_1, \ldots,\omega_m\}.$  It  states that $y_{t+1}$ behaves similarly to $g_S(x_t) $

\begin{lemma}\label{Mengdi-lemma}
    Let Assumption \ref{assum:1} and \ref{assum:2} \ref{part1} hold and ${(x_t, y_t)}$ be generated by Algorithm \ref{alg:1}. Let $\eta_t= \eta$, and $\beta_t= \beta$ for \(\eta, \beta> 0\). Let \(c> 0\) be an arbitrary constant. 
  \begin{itemize}[leftmargin=*]
    \item With SCGD update, we have
      \begin{align*}
       &\mathbb{E}_A\bigl[\|y_{t+1}-g_S\big(x_t\big)\|^2\bigr] 
    \leq \left(\frac{c}{e}\right)^c (t\beta)^{-c}\mathbb{E}_A[\| y_1- g_S(x_0)\|^2]+  L_f^2L_g^3\frac{\eta^2}{\beta^2}+2V_g\beta.
      \end{align*}
    \item With SCSC update we have
      \begin{align*}
       &\mathbb{E}_A\bigl[\|y_{t+1}-g_S\big(x_t\big)\|^2\bigr] 
    \leq\left(\frac{c}{e}\right)^c (t\beta)^{-c}\mathbb{E}_A\bigl[\|y_{1}-g_S\big(x_0\big)\|^2\bigr]+L_f^2L_g^3\frac{\eta^2}{\beta}+ 2V_g\beta.
      \end{align*}
  \end{itemize}
\end{lemma}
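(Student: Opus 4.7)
The plan is to derive a one-step recursion for $a_t := \mathbb{E}_A[\|y_{t+1} - g_S(x_t)\|^2]$ and unroll it into a geometric sum. For SCGD, first I would rewrite the update in ``error form'' by subtracting $g_S(x_t)$ from both sides and inserting $\pm g_S(x_{t-1})$ to separate the tracking error from the displacement of the target, giving
\begin{equation*}
y_{t+1} - g_S(x_t) = (1-\beta)\bigl(y_t - g_S(x_{t-1})\bigr) + (1-\beta)\bigl(g_S(x_{t-1}) - g_S(x_t)\bigr) + \beta\bigl(g_{\omega_{j_t}}(x_t) - g_S(x_t)\bigr).
\end{equation*}
The last summand has zero conditional mean given $x_t$ and variance at most $\beta^2 V_g$ by Assumption \ref{assum:2}\ref{part1}, while the drift summand is controlled via the Lipschitz property of $g_S$ (inherited from Assumption \ref{assum:1}\ref{assum:1b}) together with the one-step displacement $\|x_t - x_{t-1}\| \le \eta L_g L_f$ obtained from the bounded Jacobian and gradient in Line~9 of Algorithm \ref{alg:1}.

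Next I would take squared norms and condition on the history. The cross term containing $g_{\omega_{j_t}}(x_t) - g_S(x_t)$ vanishes in expectation by unbiasedness, and the remaining cross term between $(y_t - g_S(x_{t-1}))$ and the deterministic drift is absorbed via Young's inequality with a parameter of order $\beta$, producing a recursion of the form
\begin{equation*}
a_t \;\le\; (1-\beta)\, a_{t-1} \;+\; C\,\eta^2\beta^{-1} L_f^2 L_g^{4} \;+\; \beta^2 V_g
\end{equation*}
for an absolute constant $C$. Unrolling yields a steady-state component of order $\eta^2\beta^{-2} L_f^2 L_g^{4}$ plus $2 V_g\beta$, and the initial-condition prefactor is converted into the stated polynomial rate using the elementary bound $(1-\beta)^t \le e^{-\beta t} \le (c/e)^c (t\beta)^{-c}$ valid for any $c>0$, which is what produces the $\bigl(c/e\bigr)^c (t\beta)^{-c}$ factor in the statement.

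For SCSC the decomposition is similar but exploits the stochastic correction. Setting $u_t := g_{\omega_{j_t}}(x_t) - g_{\omega_{j_t}}(x_{t-1})$, a short algebraic rearrangement gives
\begin{equation*}
y_{t+1} - g_S(x_t) = (1-\beta)\bigl(y_t - g_S(x_{t-1})\bigr) + (1-\beta)\bigl(u_t - \mathbb{E}_{j_t}[u_t]\bigr) + \beta\bigl(g_{\omega_{j_t}}(x_t) - g_S(x_t)\bigr),
\end{equation*}
so \emph{both} of the latter two summands are conditionally mean zero. After squaring, the cross term with $(y_t - g_S(x_{t-1}))$ vanishes outright and no Young's inequality is needed on that piece. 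The variance $\mathbb{E}\|u_t - \mathbb{E}_{j_t}[u_t]\|^2 \le L_g^2\|x_t - x_{t-1}\|^2 \le \eta^2 L_f^2 L_g^{4}$ then feeds directly into the drift, improving the steady-state from $\eta^2\beta^{-2}$ to $\eta^2\beta^{-1}$, exactly the gap reported between the two bounds.

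The hard part will be bookkeeping the SCGD cross term: the Young parameter must be chosen so that the prefactor $(1-\beta)^2(1+\beta)$ is dominated by $1-\beta$, preserving one-step contraction, while the reciprocal parameter inflates the deterministic drift by only a factor $\beta^{-1}$. Everything else reduces to routine constant-tracking and to invoking the $(c/e)^c (t\beta)^{-c}$ bound for the decaying initial condition.
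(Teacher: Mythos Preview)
Your proposal is correct and follows essentially the same route as the paper: establish the one-step contraction $a_t \le (1-\beta)a_{t-1} + \text{(drift)} + \text{(variance)}$, telescope, and convert $(1-\beta)^t \le e^{-\beta t}$ into the polynomial $(c/e)^c(t\beta)^{-c}$ via Lemma~\ref{lem:sum_prod}. The only difference is cosmetic: the paper imports the one-step recursions directly as black boxes (Lemma~2 of \cite{wang2017stochastic} for SCGD and Lemma~1 of \cite{chen2021solving} for SCSC), whereas you re-derive them from the explicit error decompositions; your identification of the SCSC mechanism (both perturbations conditionally mean zero, so no Young penalty and hence $\eta^2/\beta$ rather than $\eta^2/\beta^2$) is exactly the point.
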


The next lemma was established in \cite{schmidt2011convergence} and this lemma was used in \cite{wang2022stability}.
\begin{lemma} \label{recursion lemma}
    Assume that the non-negative sequence ${u_t: t\in\mathbb{N} }$ satisfies the following recursive inequality for all $t \in \mathbb{N}$,
    \begin{align*}
        u_t^2 \leq S_t+\sum_{\tau=1}^{t-1} \alpha_\tau u_\tau.
    \end{align*}
    where $\{S_\tau: \tau \in \mathbb{N}\}$ is an increasing sequence, $S_0 \geq u_0^2$ and $\alpha_\tau$ for any $\tau \in \mathbb{N}. $ Then, the following inequality holds true:
    \begin{align*}
        u_t \leq \sqrt{S_t}+\sum_{\tau=1}^{t-1} \alpha_\tau.
    \end{align*}
\end{lemma}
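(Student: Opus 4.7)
The plan is to prove the claim by strong induction on $t$, with the target upper bound $U_t := \sqrt{S_t} + \sum_{\tau=1}^{t-1}\alpha_\tau$. For the base case $t=0$ (equivalently $t=1$, where the running sum is empty), the hypothesis $S_0 \geq u_0^2$ combined with $u_0 \geq 0$ immediately yields $u_0 \leq \sqrt{S_0} = U_0$, and the monotonicity of $\{S_t\}$ gives $u_0 \leq \sqrt{S_t}$ for all $t$. This covers the initialization of the induction.

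For the inductive step, I would assume $u_\tau \leq U_\tau$ for every $\tau < t$ and substitute into the recursion to obtain
\begin{align*}
u_t^2 \;\leq\; S_t + \sum_{\tau=1}^{t-1}\alpha_\tau\Bigl(\sqrt{S_\tau} + \sum_{k=1}^{\tau-1}\alpha_k\Bigr).
\end{align*}
Using that $\{S_\tau\}$ is increasing, $\sqrt{S_\tau} \leq \sqrt{S_t}$, and symmetrizing the double sum via
\begin{align*}
\sum_{\tau=1}^{t-1}\alpha_\tau\sum_{k=1}^{\tau-1}\alpha_k \;=\; \sum_{1\leq k < \tau \leq t-1}\alpha_k\alpha_\tau \;\leq\; \tfrac{1}{2}\Bigl(\sum_{\tau=1}^{t-1}\alpha_\tau\Bigr)^{2},
\end{align*}
I would arrive at
\begin{align*}
u_t^2 \;\leq\; S_t + \sqrt{S_t}\sum_{\tau=1}^{t-1}\alpha_\tau + \tfrac{1}{2}\Bigl(\sum_{\tau=1}^{t-1}\alpha_\tau\Bigr)^{2}.
\end{align*}
Comparing this with the explicit expansion $U_t^2 = S_t + 2\sqrt{S_t}\sum_{\tau=1}^{t-1}\alpha_\tau + (\sum_{\tau=1}^{t-1}\alpha_\tau)^2$, each term on the right-hand side of the bound on $u_t^2$ is dominated by the corresponding term of $U_t^2$, hence $u_t^2 \leq U_t^2$, and taking square roots (both sides nonnegative) gives $u_t \leq U_t$, closing the induction.

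The main obstacle, if one could call it that, lies in handling the double sum cleanly: naively bounding $\sum_\tau \alpha_\tau \sum_{k<\tau}\alpha_k$ by $(\sum_\tau \alpha_\tau)^2$ loses a factor of two and would force a larger bound than claimed, so the symmetrization step that exploits the strict ordering $k<\tau$ is essential to make the induction close with the exact constants in $U_t$. Everything else amounts to monotonicity of $S_t$ and a direct algebraic comparison of two quadratics in $\sum_{\tau=1}^{t-1}\alpha_\tau$, both of which are routine.
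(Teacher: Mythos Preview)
Your induction argument is correct. The paper itself does not prove this lemma but defers to \cite{schmidt2011convergence}; the proof there takes a different route. It introduces $m_t=\max_{0\le \tau\le t}u_\tau$, observes that for every $s\le t$ one has $u_s^2\le S_t+m_t\sum_{\tau=1}^{t-1}\alpha_\tau$, whence $m_t^2-A_t m_t-S_t\le 0$ with $A_t=\sum_{\tau=1}^{t-1}\alpha_\tau$, and then solves this quadratic to get $m_t\le \tfrac12\bigl(A_t+\sqrt{A_t^2+4S_t}\bigr)\le A_t+\sqrt{S_t}$. Your approach avoids the auxiliary maximum and the quadratic formula, trading them for the symmetrization bound $\sum_{\tau}\alpha_\tau\sum_{k<\tau}\alpha_k\le\tfrac12 A_t^2$ inside a strong induction; both arguments rely on monotonicity of $\{S_t\}$ and nonnegativity of the $\alpha_\tau$ in the same places, and both yield the identical constant-sharp conclusion.
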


\begin{lemma} \label{lem:sum_prod}
  For any $\nu, c>0$, we have
    \begin{equation}
      e^{-\nu x} \le  \bigl(\frac{c}{\nu e} \bigr)^{c} x^{-c}
    \end{equation}
\end{lemma}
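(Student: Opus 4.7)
The plan is to prove the inequality by noting that it is equivalent to a one-variable optimization problem. Specifically, rearranging the claim, it suffices to show
\begin{equation*}
  h(x) := x^{c} e^{-\nu x} \le \left(\frac{c}{\nu e}\right)^{c} \quad \text{for all } x > 0.
\end{equation*}
So I would reduce everything to bounding the maximum of $h$ on $(0, \infty)$.

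Next, I would differentiate: $h'(x) = x^{c-1} e^{-\nu x}\bigl(c - \nu x\bigr)$, which vanishes only at $x^{*} = c/\nu$ on $(0, \infty)$. Since $h'(x) > 0$ for $x < x^{*}$ and $h'(x) < 0$ for $x > x^{*}$, and $h(x) \to 0$ as $x \to 0^{+}$ or $x \to \infty$, the point $x^{*} = c/\nu$ is the global maximizer. Plugging in gives $h(x^{*}) = (c/\nu)^{c} e^{-c} = (c/(\nu e))^{c}$, which establishes the desired bound. One could also present this via the log-transform $\log h(x) = c \log x - \nu x$, whose concavity makes the first-order condition automatically global, but this is essentially the same calculation.

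There is no real obstacle here; the lemma is a standard single-variable calculus fact and the proof is three lines. The only thing worth flagging is the (implicit) requirement $x > 0$ so that $x^{-c}$ is defined and $h$ is smooth; this matches how the lemma is applied (with $x = t\beta$ or a similar positive quantity in the recursions of Lemma \ref{Mengdi-lemma}).
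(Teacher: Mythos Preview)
Your proof is correct. The paper does not actually supply a proof of this lemma; it is stated without proof and simply invoked in the derivation of Lemma~\ref{Mengdi-lemma} and in the optimization-error bounds, so there is nothing to compare against beyond noting that your elementary calculus argument (maximizing $x^{c}e^{-\nu x}$ at $x^{*}=c/\nu$) is exactly the standard justification.
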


\begin{lemma}   \label{lem:weighted_avg}
  Let \(\{a_i\}_{i= 1}^T, \{b_i\}_{i= 1}^T\) be two sequences of positive real numbers such that \(a_i\leq a_{i+ 1}\) and \(b_i\geq b_{i+ 1}\) for all \(i\). Then we have
  \begin{equation}  \label{eq:opt_sconvex:15}
      \frac{\sum_{i= 1}^T a_ib_i}{\sum_{i= 1}^T a_i}\leq \frac{\sum_{i= 1}^T b_i}{T}.
  \end{equation}
\end{lemma}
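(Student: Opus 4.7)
The plan is to recognize this as (essentially) Chebyshev's sum inequality for a non-decreasing sequence $\{a_i\}$ and a non-increasing sequence $\{b_i\}$, and to prove it by the standard ``double-sum symmetrization'' trick. Clearing the positive denominator $\sum_{i=1}^T a_i$ and multiplying through by $T$, the inequality \eqref{eq:opt_sconvex:15} is equivalent to
\begin{equation*}
T\sum_{i=1}^T a_i b_i \;\leq\; \Bigl(\sum_{i=1}^T a_i\Bigr)\Bigl(\sum_{j=1}^T b_j\Bigr),
\end{equation*}
so I would aim to prove this reformulation.

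The main step is to write both sides as double sums over $(i,j)\in[T]\times[T]$. The left side equals $\sum_{i,j} a_i b_i$ and the right side equals $\sum_{i,j} a_i b_j$, so their difference is $\sum_{i,j} a_i(b_i - b_j)$. Relabeling $(i,j)\mapsto(j,i)$ in this last sum, I also get $\sum_{i,j} a_j(b_j - b_i)$. Adding the two equal expressions yields
\begin{equation*}
2\Bigl[T\sum_{i=1}^T a_i b_i - \Bigl(\sum_{i=1}^T a_i\Bigr)\Bigl(\sum_{j=1}^T b_j\Bigr)\Bigr] \;=\; \sum_{i,j=1}^T (a_i - a_j)(b_i - b_j).
\end{equation*}

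For every pair $(i,j)$, the monotonicity hypotheses (namely $a_i$ non-decreasing and $b_i$ non-increasing in $i$) force $a_i - a_j$ and $b_i - b_j$ to have opposite signs (or one of them to vanish), so $(a_i - a_j)(b_i - b_j) \leq 0$ for all $i,j$. Hence the right-hand side above is $\leq 0$, which gives the desired inequality, and dividing back by $T\sum_i a_i$ recovers \eqref{eq:opt_sconvex:15}. There is no real obstacle here: the only thing to be careful about is maintaining the correct direction of the two monotonicities so that the product of differences is non-positive, and noting that positivity of the $a_i$ (hence of $\sum_i a_i$) is what lets us divide without flipping the inequality.
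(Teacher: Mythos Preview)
Your proof is correct and takes essentially the same approach as the paper: both clear the denominator, rewrite the difference as a double sum, and use symmetrization to reduce to $\sum_{i,j}(a_i-a_j)(b_i-b_j)\le 0$, which follows from the opposite monotonicities. The only cosmetic difference is that the paper collapses the symmetric double sum to $\sum_{i<j}(a_i-a_j)(b_i-b_j)$ rather than keeping the full sum and dividing by $2$.
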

\begin{proof}
  To show \eqref{eq:opt_sconvex:15}, it suffices to show
  \begin{equation*}
    \sum_{i= 1}^T a_ib_i \sum_{j= 1}^T 1\leq \sum_{j= 1}^T a_j\sum_{i= 1}^T b_i.
  \end{equation*}
  Rearranging the summation, it suffices to show
  \begin{equation*}
    \sum_{i= 1}^T \sum_{j= 1}^T a_ib_i- \sum_{i= 1}^T \sum_{j= 1}^T a_j b_i\leq 0.
  \end{equation*}
  The above inequality can be rewritten as
  \begin{equation*}
      0\geq \sum_{i= 1}^T \sum_{j= 1}^T (a_i- a_j)b_i= \sum_{i= 1}^T \sum_{j= i+ 1}^T (a_i- a_j)(b_i- b_j),
  \end{equation*}
  where the last equality holds due to the symmetry between \(i\) and \(j\). Since for \(i< j\) we have \(a_i\leq a_j\) and \(b_i\geq b_j\), we know the above inequality holds, and thus \eqref{eq:opt_sconvex:15} holds. Then we complete the proof.
\end{proof}

\subsection{Proof of Lemma \ref{Mengdi-lemma}}

The proof of Lemma \ref{Mengdi-lemma} leverages the following results.

% Notice that, from Assumption \ref{assum:2}, we have  $\sup_{x\in \X}\EX_\omega\bigl[\| g_\omega(x) - g(x)\|^2 \bigr]\leq V_g$. If we estimate  $\mathbb{E}_A[\| y_{t+ 1}- g_S(x_t)\|^2| \mathcal{F}_t]$, we  have to estimate $\EX_A[\|g_{\omega_{j_t}}(x_t)-g_S(x_t)\|^2]$ from  Lemma 2 in \cite{wang2017stochastic}. Now we give a connection between this two variance terms.
% \begin{align*}
%     &\EX_{j_t}[\|g_{\omega_{j_t}}(x_t)-g_S(x_t)\|^2]\\
%     &= \EX_{j_t}[\|g_{\omega_{j_t}}(x_t)-g(x_t)+g(x_t)-g_S(x_t)\|^2]\\
%     &=\EX_{j_t}[\|g_{\omega_{j_t}}(x_t)-g(x_t)\|^2]+\EX_{j_t}[\|g(x_t)-g_S(x_t)\|^2]-2\EX_{j_t}[\langle g_{\omega_{j_t}}(x_t)-g(x_t),  g_S(x_t)-g(x_t)\rangle]\\
%     &\leq \EX_{j_t}[\|g_{\omega_{j_t}}(x_t)-g(x_t)\|^2]+\EX_{j_t}[\|g(x_t)-g_{S}(x_t)\|^2]+2\EX_{j_t}[\|g(x_t)-g_S(x_t)\|^2]\\
%     &\leq \EX_{j_t}[\|g_{\omega_{j_t}}(x_t)-g(x_t)\|^2]+\EX_{j_t}[\|g(x_t)-\frac{1}{m}\sum_{j=1}^{m}g_{\omega_{j_t}}(x_t)\|^2]+2\EX_{j_t}[\|g(x_t)-\frac{1}{m}\sum_{j=1}^{m}g_{\omega_{j_t}}(x_t)\|^2]\\
%     &=\EX_{j_t}[\|g_{\omega_{j_t}}(x_t)-g(x_t)\|^2]+\frac{1}{m}\sum_{j=1}^{m}\EX_{j_t}[\|g(x_t)-g_{\omega_{j_t}}(x_t)\|^2]+2\frac{1}{m}\sum_{j=1}^{m}\EX_{j_t}[\|g(x_t)-g_{\omega_{j_t}}(x_t)\|^2]
% \end{align*}
% Since $\EX_\omega[\|g(A(S))-g_\omega(A(S))\|^2]\leq V_g$, %we get $\EX_{j_t}[\|g_{\omega_{j_t}}(x_t)-g_S(x_t)\|^2]\leq 4V_g$, then we the following two lemmas. 
\begin{lemma}[Lemma 2 in \cite{wang2017stochastic}]
  Suppose Assumption \ref{assum:1} \ref{assum:1b} and \ref{assum:2} \ref{part1} hold for the empirical risk $F_S$. By running Algorithm \ref{alg:1} with SCGD update, we have
  \begin{equation}  \label{eq:opt_convex:4}
    \mathbb{E}_A[\| y_{t+ 1}- g_S(x_t)\|^2| \mathcal{F}_t]\leq (1- \beta_t)\| y_t- g_S(x_{t- 1})\|^2+ \frac{L_g^2}{\beta_t}\| x_t- x_{t- 1}\|^2+ 2V_g\beta_t^2
  \end{equation}
\end{lemma}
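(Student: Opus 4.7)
The plan is to bound the conditional second moment of the tracking error by first decomposing the SCGD update $y_{t+1} = (1-\beta_t) y_t + \beta_t g_{\omega_{j_t}}(x_t)$ into a ``bias'' part that is $\mathcal{F}_t$-measurable and a ``noise'' part that is conditionally mean-zero. Adding and subtracting $g_S(x_{t-1})$ and $g_S(x_t)$ yields
\begin{align*}
y_{t+1}-g_S(x_t) &= (1-\beta_t)\bigl(y_t-g_S(x_{t-1})\bigr) + (1-\beta_t)\bigl(g_S(x_{t-1})-g_S(x_t)\bigr)\\
&\quad + \beta_t\bigl(g_{\omega_{j_t}}(x_t)-g_S(x_t)\bigr).
\end{align*}
Since $j_t$ is sampled uniformly from $[1,m]$ independently of $\mathcal{F}_t$, we have $\mathbb{E}[g_{\omega_{j_t}}(x_t)\mid \mathcal{F}_t]=g_S(x_t)$, so the third term has conditional mean zero and its cross terms with the first two vanish after taking $\mathbb{E}_A[\,\cdot\mid\mathcal{F}_t]$.

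Next I would handle the two resulting pieces. For the deterministic (given $\mathcal{F}_t$) part, I would apply Young's inequality $\|a+b\|^2\le (1+\gamma)\|a\|^2+(1+1/\gamma)\|b\|^2$ with $\gamma=\beta_t/(1-\beta_t)$ to the sum inside the $(1-\beta_t)$ factor, giving
\begin{align*}
&\bigl\|(1-\beta_t)(y_t-g_S(x_{t-1}))+(1-\beta_t)(g_S(x_{t-1})-g_S(x_t))\bigr\|^2\\
&\quad\le (1-\beta_t)\|y_t-g_S(x_{t-1})\|^2+\tfrac{(1-\beta_t)^2}{\beta_t}\|g_S(x_{t-1})-g_S(x_t)\|^2.
\end{align*}
Since each $g_{\omega_j}$ is $L_g$-Lipschitz by Assumption~\ref{assum:1}\ref{assum:1b}, so is the average $g_S$, and therefore $\|g_S(x_{t-1})-g_S(x_t)\|\le L_g\|x_{t-1}-x_t\|$; using $(1-\beta_t)^2\le 1$ turns this piece into $(1-\beta_t)\|y_t-g_S(x_{t-1})\|^2+\tfrac{L_g^2}{\beta_t}\|x_t-x_{t-1}\|^2$.

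For the stochastic part I would invoke Assumption~\ref{assum:2}\ref{part1}: since $j_t$ is uniform on $[m]$, the conditional second moment equals the empirical variance at $x_t$, hence
\[
\beta_t^2\,\mathbb{E}_A\bigl[\|g_{\omega_{j_t}}(x_t)-g_S(x_t)\|^2\bigm|\mathcal{F}_t\bigr]=\beta_t^2\cdot\tfrac{1}{m}\sum_{j=1}^m\|g_{\omega_j}(x_t)-g_S(x_t)\|^2\le 2V_g\beta_t^2,
\]
where the factor $2$ can be absorbed by a mild Young-type loosening if needed (e.g., splitting the noise contribution before/after taking the average, or bounding the cross-term contribution by Cauchy--Schwarz rather than using exact orthogonality). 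Summing the two pieces gives the stated inequality.

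The main obstacle is really a bookkeeping one: choosing the Young parameter so that the $(1-\beta_t)$ factor in front of $\|y_t-g_S(x_{t-1})\|^2$ is obtained exactly (rather than $(1-\beta_t)^2$), which is critical for later use of the lemma as a contraction recursion in proving Lemma~\ref{Mengdi-lemma}. Once the split and the Young parameter $\gamma=\beta_t/(1-\beta_t)$ are in place, the Lipschitzness of $g_S$ and the variance bound complete the proof without further difficulty.
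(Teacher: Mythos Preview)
Your proposal is correct and follows the standard approach; the paper does not supply its own proof of this lemma but simply cites it from \cite{wang2017stochastic}, and your decomposition plus Young's inequality with $\gamma=\beta_t/(1-\beta_t)$ is exactly how that result is obtained.

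One small remark: your hedging about the factor $2$ in the variance term is unnecessary. With exact orthogonality (which you already have, since $A$ is $\mathcal{F}_t$-measurable and $\mathbb{E}[B\mid\mathcal{F}_t]=0$), the noise contribution is $\beta_t^2\,\mathbb{E}_A[\|g_{\omega_{j_t}}(x_t)-g_S(x_t)\|^2\mid\mathcal{F}_t]\le V_g\beta_t^2$, which is already \emph{stronger} than the stated $2V_g\beta_t^2$. There is no loosening to perform; the weaker bound follows trivially. You can drop the parenthetical about Cauchy--Schwarz or splitting the noise.
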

\begin{lemma}[Lemma 1 in \cite{chen2021solving}]
  Suppose Assumption \ref{assum:1} \ref{assum:1b} and \ref{assum:2} \ref{part1} hold for the empirical risk $F_S$. By running Algorithm \ref{alg:1} with SCSC update, we have
  \begin{equation}  \label{eq:opt_convex:5}
    \mathbb{E}_A[\| y_{t+ 1}- g_S(x_t)\|^2| \mathcal{F}_t]\leq (1- \beta_t)\| y_t- g_S(x_{t- 1})\|^2+ L_g^2\| x_t- x_{t- 1}\|^2+ 2V_g\beta_t^2
  \end{equation}
\end{lemma}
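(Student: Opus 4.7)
The plan is to decompose $y_{t+1} - g_S(x_t)$ into an $\mathcal{F}_t$-measurable piece plus conditionally zero-mean noise, so that the cross terms vanish after squaring and taking conditional expectation. First, I would substitute the SCSC update
$y_{t+1} = (1-\beta_t)(y_t + g_{\omega_{j_t}}(x_t) - g_{\omega_{j_t}}(x_{t-1})) + \beta_t g_{\omega_{j_t}}(x_t)$,
subtract $g_S(x_t)$ from both sides, and after adding and subtracting $(1-\beta_t)g_S(x_{t-1})$ rearrange to obtain
\[ y_{t+1} - g_S(x_t) = (1-\beta_t)\bigl(y_t - g_S(x_{t-1})\bigr) + \xi_t - (1-\beta_t)\zeta_t, \]
where $\xi_t := g_{\omega_{j_t}}(x_t) - g_S(x_t)$ and $\zeta_t := g_{\omega_{j_t}}(x_{t-1}) - g_S(x_{t-1})$. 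Since $j_t$ is drawn uniformly on $[1,m]$ and independently of $\mathcal{F}_t$, both $\xi_t$ and $\zeta_t$ have conditional mean zero, while $y_t - g_S(x_{t-1})$ is $\mathcal{F}_t$-measurable.

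Squaring this identity and taking $\mathbb{E}[\cdot \mid \mathcal{F}_t]$ annihilates the cross term against the deterministic component, leaving
\[ \mathbb{E}[\|y_{t+1} - g_S(x_t)\|^2 \mid \mathcal{F}_t] = (1-\beta_t)^2 \|y_t - g_S(x_{t-1})\|^2 + \mathbb{E}[\|\xi_t - (1-\beta_t)\zeta_t\|^2 \mid \mathcal{F}_t]. \]
For the residual, I would use the algebraic rewrite $\xi_t - (1-\beta_t)\zeta_t = (1-\beta_t)(\xi_t - \zeta_t) + \beta_t \xi_t$ and exploit the defining feature of SCSC: the difference $\xi_t - \zeta_t = (g_{\omega_{j_t}}(x_t) - g_{\omega_{j_t}}(x_{t-1})) - (g_S(x_t) - g_S(x_{t-1}))$ is the deviation of a Lipschitz-controlled increment from its own conditional mean. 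By Assumption \ref{assum:1}\ref{assum:1b}, this gives
\[ \mathbb{E}[\|\xi_t - \zeta_t\|^2 \mid \mathcal{F}_t] \le \mathbb{E}[\|g_{\omega_{j_t}}(x_t) - g_{\omega_{j_t}}(x_{t-1})\|^2 \mid \mathcal{F}_t] \le L_g^2\|x_t - x_{t-1}\|^2. \]
Crucially, this bound carries no $1/\beta_t$ factor, which is precisely the SCSC improvement over SCGD (where a Young-type absorption of $(1-\beta_t)(g_S(x_{t-1}) - g_S(x_t))$ is forced, producing the $L_g^2/\beta_t$ coefficient in \eqref{eq:opt_convex:4}). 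For the remaining piece, Assumption \ref{assum:2}\ref{part1} gives $\mathbb{E}[\|\xi_t\|^2 \mid \mathcal{F}_t] \le V_g$.

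The main obstacle is that $\xi_t - \zeta_t$ and $\xi_t$ share the same sample $j_t$ and are therefore correlated, so their inner product in the expansion of $\|(1-\beta_t)(\xi_t - \zeta_t) + \beta_t \xi_t\|^2$ does not vanish in conditional expectation. I would resolve this by applying either the convexity bound $\|(1-\beta_t) a + \beta_t b\|^2 \le (1-\beta_t)\|a\|^2 + \beta_t\|b\|^2$, which cleanly produces a convex combination of the two variance estimates, or a Young-type absorption that sends the cross term into constant multiples of the two squared norms. Combining the resulting variance bound with $(1-\beta_t)^2 \le 1-\beta_t$ for $\beta_t \in (0,1)$ then delivers the stated recursion.
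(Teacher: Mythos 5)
First, note that the paper never proves this lemma itself: it is imported verbatim as Lemma~1 of \cite{chen2021solving}, so your derivation can only be compared against the standard argument rather than an in-paper proof. Your skeleton is exactly that argument and is correct as far as it goes: the identity $y_{t+1}-g_S(x_t)=(1-\beta_t)(y_t-g_S(x_{t-1}))+\xi_t-(1-\beta_t)\zeta_t$ checks out (the deterministic pieces cancel), $y_t-g_S(x_{t-1})$ is $\mathcal{F}_t$-measurable while $\xi_t$ and $\zeta_t$ are conditionally centered because $j_t$ is uniform on $[m]$ and independent of $\mathcal{F}_t$, so the main cross term dies and you are left with $(1-\beta_t)^2\|y_t-g_S(x_{t-1})\|^2+\mathbb{E}[\|(1-\beta_t)(\xi_t-\zeta_t)+\beta_t\xi_t\|^2\mid\mathcal{F}_t]$. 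Your bounds $\mathbb{E}[\|\xi_t-\zeta_t\|^2\mid\mathcal{F}_t]\le L_g^2\|x_t-x_{t-1}\|^2$ (variance is at most the second moment, then Lipschitzness of $g_\omega$) and $\mathbb{E}[\|\xi_t\|^2\mid\mathcal{F}_t]\le V_g$ are also correct, and you rightly flag the residual cross term between $\xi_t-\zeta_t$ and $\xi_t$ as the only delicate point.

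The gap is that you leave the decisive step as a choice between two inequalities, and your first choice does not prove the lemma. The convexity bound $\|(1-\beta_t)a+\beta_t b\|^2\le(1-\beta_t)\|a\|^2+\beta_t\|b\|^2$ produces a variance contribution of $\beta_t V_g$, not $2V_g\beta_t^2$. This is not cosmetic: the whole point of the $\beta_t^2$ scaling is that after the geometric telescoping in the proof of Lemma~\ref{Mengdi-lemma} one uses $\sum_{i}(1-\beta)^{t-i}\beta^2\le\beta$ to get the vanishing term $2V_g\beta$, whereas $\sum_i(1-\beta)^{t-i}\beta\le 1$ would leave a non-vanishing $O(V_g)$ tracking-error floor and break every downstream optimization and stability bound. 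You therefore must commit to the second option, $\|(1-\beta_t)(\xi_t-\zeta_t)+\beta_t\xi_t\|^2\le 2(1-\beta_t)^2\|\xi_t-\zeta_t\|^2+2\beta_t^2\|\xi_t\|^2$, which preserves $2V_g\beta_t^2$ at the price of a factor $2$ in front of $L_g^2\|x_t-x_{t-1}\|^2$. That factor $2$ is absent from the statement as printed, but it is harmless for everything the paper does with the lemma (it only rescales the $\eta^2/\beta$ term of Lemma~\ref{Mengdi-lemma} inside a big-$O$); with the elementary tools available here I do not see how to obtain the coefficients $L_g^2$ and $2V_g\beta_t^2$ simultaneously for all $\beta_t\in(0,1)$, since the correlated cross term cannot be split that favorably by Young's inequality when $\beta_t$ is small. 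So: fix the final step to the Young route and state the result with the factor $2$, or note explicitly that the constant in front of $\|x_t-x_{t-1}\|^2$ is immaterial for the sequel.
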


Now we are ready to prove Lemma \ref{Mengdi-lemma}.
\begin{proof}[Proof of Lemma \ref{Mengdi-lemma}]
  We first present the proof for the SCGD update. Taking the expectation with respect to the internal randomness of the algorithm over \eqref{eq:opt_convex:4} and noting that \(\mathbb{E}_A[\|x_t- x_{t- 1}\|^2]\leq L_f^2L_g^2\eta_{t- 1}^2\), we get
  \begin{equation*}
    \mathbb{E}_A[\| y_{t+ 1}- g_S(x_t)\|^2]\leq (1- \beta_t)\EX_{A}[\| y_t- g_S(x_{t- 1})\|^2]+ \frac{L_f^2L_g^3\eta_{t- 1}^2}{\beta_t}+ 2V_g\beta_t^2.
  \end{equation*}
  Telescoping the above inequality from \(1\) to \(t\) yields
  \begin{align*}
    &\mathbb{E}_A[\| y_{t+ 1}- g_S(x_t)\|^2] \\
    \leq& \prod_{i= 1}^{t}(1- \beta_i)\mathbb{E}_A[\| y_1- g_S(x_0)\|^2]+ L_f^2L_g^3 \sum_{i= 1}^{t}\prod_{j= i+ 1}^{t} (1- \beta_j)\frac{\eta_{i- 1}^2}{\beta_i}+ 2V_g \sum_{i= 1}^{t}\prod_{j= i+ 1}^{t} (1- \beta_j)\beta_i^2.
  \end{align*}
  Note that \(\prod_{i= K}^N (1- \beta_i)\leq \exp(- \sum_{i= K}^{N} \beta_i)\) for all \(K\leq N\) and \(\beta_i> 0\), then setting \(\eta_t= \eta, \beta_t= \beta\), thus we have
  \begin{align*}
    &\mathbb{E}_A[\| y_{t+ 1}- g_S(x_t)\|^2] 
    \leq \exp(-\beta t)\mathbb{E}_A[\| y_1- g_S(x_0)\|^2]+ \sum_{i= 1}^{t} (1-\beta)^{t-i}(L_g^3L_f^2\frac{\eta^2}{\beta}+2V_g\beta^2).
  \end{align*}
  Using  Lemma \ref{lem:sum_prod} with \(\nu= 1\), we get
  \begin{align*}
    \mathbb{E}_A[\| y_{t+ 1}- g_S(x_t)\|^2] 
    \leq \left(\frac{c}{e}\right)^c (t\beta)^{-c}\mathbb{E}_A[\| y_1- g_S(x_0)\|^2]+  L_g^3L_f^2\frac{\eta^2}{\beta^2}+2V_g\beta,
  \end{align*}
  where the inequality holds for $\sum_{i= 1}^{t} (1-\beta)^{t-i}\leq\frac{1}{\beta}$. Then we get the desired result for the SCGD update. Next we present the proof for the SCSC update. Taking the total expectation with respect to the internal randomness of the algorithm over \eqref{eq:opt_convex:5} and noting that \(\mathbb{E}_A[\|x_t- x_{t- 1}\|^2]\leq L_fL_g\eta_{t- 1}^2\), we get
  \begin{equation*}
    \mathbb{E}_A[\| y_{t+ 1}- g_S(x_t)\|^2]\leq (1- \beta_t)\mathbb{E}_A[\| y_t- g_S(x_{t- 1})\|^2]+ L_f^2L_g^3\eta_{t- 1}^2+ 2V_g\beta_t^2.
  \end{equation*}
  Telescoping the above inequality from \(1\) to \(t\) yields
  \begin{align*}
    &\mathbb{E}_A[\| y_{t+ 1}- g_S(x_t)\|^2] \\
    \leq& \prod_{i= 1}^{t}(1- \beta_i)\mathbb{E}_A[\| y_1- g_S(x_0)\|^2]+ L_f^2L_g^3 \sum_{i= 1}^{t}\prod_{j= i+ 1}^{t} (1- \beta_j)\eta_{i- 1}^2+ 2V_g \sum_{i= 1}^{t}\prod_{j= i+ 1}^{t} (1- \beta_j)\beta_i^2.
  \end{align*}
  Note that \(\prod_{i= K}^N (1- \beta_i)\leq \exp(- \sum_{i= K}^{N} \beta_i)\) for all \(K\leq N\) and \(\beta_i> 0\), then setting \(\eta_t= \eta, \beta_t= \beta\), thus we have
  \begin{align*}
    &\mathbb{E}_A[\| y_{t+ 1}- g_S(x_t)\|^2] \\
    \leq& \exp(-t \beta)\mathbb{E}_A[\| y_1- g_S(x_0)\|^2]+ \sum_{i= 1}^{t} (1-\beta)^{t-i}(L_g^3L_f^2\eta^2+2V_g\beta^2).
  \end{align*}
   Using  Lemma \ref{lem:sum_prod} with \(\nu= 1\), we get
  \begin{align*}
    \mathbb{E}_A[\| y_{t+ 1}- g_S(x_t)\|^2] 
    \leq \left(\frac{c}{e}\right)^c (t\beta)^{-c}\mathbb{E}_A[\| y_1- g_S(x_0)\|^2]+  L_g^3L_f^2\frac{\eta^2}{\beta}+2V_g\beta,
  \end{align*}
  where the inequality holds for $\sum_{i= 1}^{t} (1-\beta)^{t-i}\leq\frac{1}{\beta}$. 
 Then we get the desired result for the SCSC update. Then we complete the proof.
\end{proof}

\section{Proof for Section \ref{sec:problem}}

\begin{proof}[Proof of Theorem \ref{thm:1}] Write  
    \begin{align*} & \EX_{S,A}\Big[  F(A(S))  - F_S(A(S)) \Big] =  \EX_{S,A}\Big[  \EX_\nu [f_\nu \bigl( g(\bx) \bigr)] - \frac{1}{n}\sum_{i=1}^n f_{\nu_i}\bigl( \frac{1}{m} \sum_{j=1}^m g_{\omega_j}(\bx) \bigr)\Big]\\
    & = \EX_{S,A}\Big[ \EX_\nu [f_\nu \bigl( g(A(S))\bigr)]  - \frac{1}{n}\sum_{i=1}^n f_{\nu_i}\bigl( g(A(S)) \bigr) \Big] \\ & +  \EX_{S,A}\Big[ \frac{1}{n}\sum_{i=1}^n f_{\nu_i}\bigl( g(A(S)) \bigr) - \frac{1}{n}\sum_{i=1}^n f_{\nu_i}\bigl( \frac{1}{m} \sum_{j=1}^m g_{\omega_j}(A(S)) \bigr)\Big] \\
    & \le \EX_{S,A}\Big[ \EX_\nu [f_\nu \bigl( g(A(S))\bigr)]  - \frac{1}{n}\sum_{i=1}^n f_{\nu_i}\bigl( g(A(S)) \bigr) \Big] \\ & +  \EX_{S,A}\Big[ \frac{1}{n}\sum_{i=1}^n \Big( f_{\nu_i}\bigl( g(A(S)) \bigr) - f_{\nu_i}\bigl( \frac{1}{m} \sum_{j=1}^m g_{\omega_j}(A(S)) \bigr) \Big)\Big].  
     \numberthis \label{eq:inter1} 
    \end{align*}
    Now we estimate the two terms on the right-hand side of \eqref{eq:inter1}. Define $S^{ \prime,\nu}=\{\nu_1^{\prime},\nu_2^{\prime},...,\nu_n^{\prime},\omega_1,\omega_2,...,\omega_m\}$. In particular, we have that 
    \begin{align*} 
    & \EX_{S,A}\big[ \EX_\nu [f_\nu ( g(A(S)))]  - \frac{1}{n}\sum_{i=1}^n f_{\nu_i}( g(A(S)) ) \big] \\ & = 
    \EX_{S,A, S^{\prime,\nu}}\big[ \frac{1}{n}\sum_{i=1}^n f_{\nu_i}( g(A(S^{i,\nu})))  - \frac{1}{n}\sum_{i=1}^n f_{\nu_i}( g(A(S)) ) \big]  \\ 
    & = \EX_{S,A, S^{\prime,\nu}}\big[ \frac{1}{n}\sum_{i=1}^n \big( f_{\nu_i}( g(A(S^{i,\nu}))  -  f_{\nu_i}( g(A(S)) \big) \big] \\
    & \le L_f \|g(A(S^{i,\nu})) - g(A(S))  \| \le L_f L_g \|A(S^{i,\nu}) -A(S) \|. \numberthis \label{eq:fnu}
    \end{align*}
    Furthermore, 
    \begin{align*}
    & \EX_{S,A}\big[ \frac{1}{n}\sum_{i=1}^n \big( f_{\nu_i}\bigl( g(A(S)) \bigr) - f_{\nu_i}\bigl( \frac{1}{m} \sum_{j=1}^m g_{\omega_j}(A(S)) \bigr) \big)\big] \\
     & \le L_f \EX_{S,A}\big[\big\|g(A(S))  - \frac{1}{m} \sum_{j=1}^m g_{\omega_j}(A(S)) \big \|\big]. \numberthis\label{secdterm}
     \end{align*}
    Now it is sufficient to estimate the term $\EX_{S,A}\big[\big\|g(A(S))  - \frac{1}{m} \sum_{j=1}^m g_{\omega_j}(A(S)) \big \|\big].$
    Note that, in general, $g$ is a mapping from $\R^p$ to $\R^d$. To this end, we will use some ideas from \cite{bousquet2020sharper}. To this end, we write
    % \yiming{Yang Ming: Please estimate the term $\EX_{S,A}\Big[\big\|g(A(S))  - \frac{1}{m} \sum_{j=1}^m g_{\omega_j}(A(S)) \big \|\Big] \le 2 \gep_\omega +  \sqrt{m^{-1} \EX_{S, A}[\var_\omega(g_\omega(A(S)))}]$ using the ideas from \cite{bousquet2019sharper}.  You can refer to the recent paper by Yunwen:  Stability and Generalization of
    % Stochastic Optimization with Nonconvex and
    % Nonsmooth Problems \url{https://arxiv.org/abs/2206.07082?context=cs}(see the proof of Theorem 2 there).}
    \begin{align*}
    &g(A(S))-\frac{1}{m} \sum_{j=1}^m g_{\omega_j}(A(S))\\
    &= \frac{1}{m} \sum_{j=1}^m \mathbb{E}_{\omega,\omega_{j}^{\prime}}\left[g_{\omega}(A(S))-g_{\omega}\left(A\left(S^{j,\omega}\right)\right)\right]
    +\frac{1}{m}\sum_{j=1}^m \mathbb{E}_{\omega_j^{\prime}}[\mathbb{E}_{\omega}\left[g_{\omega}\left(A\left(S^{j,\omega}\right)\right)\right]-g_{\omega_j}\left(A\left(S^{j,\omega}\right)\right]\\
    &+\frac{1}{m} \sum_{j=1}^m \mathbb{E}_{\omega_{j}^{\prime}}\left[g_{\omega_{j} }\left(A\left(S^{j,\omega}\right)\right)- g_{\omega_j}(A(S))\right].
    \end{align*}
    It then follows that:
    \begin{align*}
    &\|g(A(S))-\frac{1}{m} \sum_{j=1}^m g_{\omega_j}(A(S))\|\leq \frac{1}{m} \sum_{j=1}^m \mathbb{E}_{\omega,\omega_{j}^{\prime}} \|g_{\omega}(A(S))-g_{\omega}(A(S^{j,\omega}))\|
    \\ & +\frac{1}{m}\|\sum_{j=1}^m \mathbb{E}_{\omega_j^{\prime}}[\mathbb{E}_{\omega}[g_{\omega}(A(S^{j,\omega}))]-g_{\omega_j}(A(S^{j,\omega}))]\|+\frac{1}{m} \sum_{j=1}^m \mathbb{E}_{\omega_{j}^{\prime}}\|g_{\omega_{j} }\left(A\left(S^{j,\omega}\right)\right)- g_{\omega_j}(A(S))\|.
    \end{align*}
    Note $S$ and $S^{j,\omega}$ differ by a single example. By the assumption on stability and Definition \ref{def:stability}, we further get 
        \begin{align*}
        &\mathbb{E}_{S, A}[\|g(A(S))-\frac{1}{m} \sum_{j=1}^m g_{\omega_j}(A(S))\|]\\
        &\leq \mathbb{E}_{S,A}[\frac{1}{m}\|\sum_{j=1}^m \mathbb{E}_{\omega_j^{\prime}}[\mathbb{E}_{\omega}[g_{\omega}(A(S^{j,\omega}))]-g_{\omega_j}(A(S^{j,\omega}))]\|]+2L_g\epsilon_{\omega}.
        \numberthis \label{eq:inter2} 
        \end{align*}
    Next step, we need to estimate $\|\sum_{j=1}^m \mathbb{E}_{\omega_j^{\prime}}[\mathbb{E}_{\omega}\left[g_{\omega}\left(A\left(S^{j,\omega}\right)\right)\right]-g_{\omega_j}\left(A\left(S^{j,\omega}\right)\right)]\|.$\\
    % As the similar proof technique with paper[yunwen](\url{https://arxiv.org/abs/2206.07082?context=cs}), we can set $\xi_i(S)$ as a function of S as follows
    Using a similar proof technique in paper \cite{lei2022nonsmooth}, we can set $\xi_j(S)$ as a function of S as follows
        \begin{align*}
        \xi_j(S) =	\mathbb{E}_{\omega_j^{\prime}}[\mathbb{E}_{\omega}\left[g_{\omega}\left(A\left(S^{j,\omega}\right)\right)\right]-g_{\omega_j}\left(A\left(S^{j,\omega}\right)\right)].
        \end{align*}
    Notice that: 	
    \begin{align*}
    \mathbb{E}_{S, A}[\|\sum_{j=1}^m \xi_j(S)\|^2]= \mathbb{E}_{S,A}[\sum_{j=1}^m\|\xi_j(S)\|^2]+\sum_{j, i \in[m]: j \neq i} \mathbb{E}_{S,A}[\langle\xi_j(S), \xi_i(S)\rangle].
        \numberthis \label{eq:inter3} 
    \end{align*}
    According to the definition of $\xi_j(S)$ and the Cauchy-Schwarz inequality, we know
    \begin{align*}
& \mathbb{E}_{S,A}[\sum_{j=1}^m\|\xi_j(S)\|^2]
    =\sum_{j=1}^m \mathbb{E}_{S,A}[\| \mathbb{E}_{\omega_j^{\prime}}[\mathbb{E}_{\omega}[g_{\omega}(A(S^{j,\omega}))]-g_{\omega_j}(A(S^{j,\omega}))] \|^2]\\
    &\leq \sum_{j=1}^m \mathbb{E}_{S,A}[\| \mathbb{E}_{\omega}[g_{\omega}(A(S^{j,\omega}))]-g_{\omega_j}(A(S^{j,\omega})) \|^2] \\& =\sum_{j=1}^m \mathbb{E}_{S,A}[\| \mathbb{E}_{\omega}[g_{\omega}(A(S))]-g_{\omega_{j}^{\prime}}(A(S)) \|^2]=m\mathbb{E}_{S,A}\left[\text{Var}_\omega(g_\omega(A(S)))\right],\numberthis \label{eq:inter4} 
    \end{align*}
    where the variance term $\text{Var}_\omega(g_\omega(A(S)))=\mathbb{E}_\omega\left[\left\|g(A(S))-g_\omega(A(S))\right\|^2\right].$\\
    Next, we will estimate the second term on the right-hand side of \eqref{eq:inter3}. 
    To this end, we define
    \begin{align*}
    &S^{i,\omega}=\left\{\omega_1, \ldots, \omega_{i-1}, \omega_i^{\prime},\omega_{i+1}, \ldots, \omega_m, \nu_1,\ldots, \nu_n\right\};\\
    &S^{i,j,\omega}=\left\{\omega_1, \ldots ,\omega_{i-1}, \omega_i^{\prime},\omega_{i+1}, \ldots ,\omega_{j-1}, \omega_j^{\prime}, \omega_{j+1}, \ldots, \omega_m,\nu_1,\ldots, \nu_n\right\}.
    \end{align*}
    Due to the symmetry between $\omega$ and $\omega_{j}$, we can have
    \begin{align*}
    \mathbb{E}_{w_j}\left[\xi_j(S)\right]=0,\forall j \in[m]\numberthis \label{interse_0}
    \end{align*}
    If $j\neq i$,we have
        \begin{align*}
            \mathbb{E}_{S,A}\left[\left\langle\xi_j\left(S^{i,\omega}\right), \xi_i(S)\right\rangle\right]
            &=\mathbb{E}_{S,A}\mathbb{E}_{\omega_i}\left[\left\langle\xi_j\left(S^{i,\omega}\right), \xi_i(S)\right\rangle\right]\\
            &=\mathbb{E}_{S, A}\left[\left\langle\xi_j\left(S^{i,\omega}\right), \mathbb{E}_{\omega_i}\left[\xi_i(S)\right]\right\rangle\right]=0,
        \end{align*}
    where the second equality holds since the $\xi_j\left(S^{i,\omega}\right)$ is independent of $\omega_{i}$ and the last identity follows from $\mathbb{E}_{w_i}\left[\xi_i(S)\right]=0$ due to \eqref{interse_0} . In  a similar way, we can get the following equations for $j\neq i$
         \begin{align*}
                   \mathbb{E}_{S, A}\left[\left\langle\xi_j(S), \xi_i\left(S^{j,\omega}\right)\right\rangle\right]
             &=\mathbb{E}_{S,A}\mathbb{E}_{\omega_j}\left[\left\langle\xi_j(S), \xi_i\left(S^{j,\omega}\right)\right\rangle\right]\\
             &=\mathbb{E}_{S, A}\left[\left\langle \mathbb{E}_{\omega_j}\left[\xi_j(S)\right], \xi_i\left(S^{j,\omega}\right)\right\rangle\right]=0,
         \end{align*}
     and 
          \begin{align*}
                \mathbb{E}_{S, A}\left[\left\langle\xi_j\left(S^{i,\omega}\right), \xi_i\left(S^{j,\omega}\right)\right\rangle\right]
                &=\mathbb{E}_{S,A}\mathbb{E}_{\omega_j}\left[\left\langle\xi_j\left(S^{i,\omega}\right), \xi_i\left(S^{j,\omega}\right)\right\rangle\right]\\
                &=\mathbb{E}_{S, A}\left[\left\langle \mathbb{E}_{\omega_j}\left[\xi_j\left(S^{i,\omega}\right)\right], \xi_i\left(S^{j,\omega}\right)\right\rangle\right]=0.
          \end{align*}
    Combining the above identities, we have $j\neq i$
          \begin{align*}
               & \mathbb{E}_{S,A}\left[\left\langle\xi_j(S), \xi_i(S)\right\rangle\right]=\mathbb{E}_{S,A}\left[\left\langle\xi_j(S)-\xi_j\left(S^{i,\omega}\right), \xi_i(S)-\xi_i\left(S^{j,\omega}\right)\right\rangle\right]\\
               &\leq\mathbb{E}_{S , A}\left[\left\|\xi_j(S)-\xi_j\left(S^{i,\omega}\right)\right\| \cdot\left\|\xi_i(S)-\xi_i\left(S^{j,\omega}\right)\right\|\right]\\
               &\leq\frac{1}{2} \mathbb{E}_{S,A}\left[\left\|\xi_j(S)-\xi_j\left(S^{i,\omega}\right)\right\|^2\right]
               +\frac{1}{2} \mathbb{E}_{S, A}\left[\left\|\xi_i(S)-\xi_i\left(S^{j,\omega}\right)\right\|^2 \right],
               \numberthis \label{eq:inter5} 
           \end{align*}
       where the third inequality use $a b \leq \frac{1}{2}\left(a^2+b^2\right)$. With the definition of $\xi_j(S)$, $S^{i,\omega}$ and $S^{i,j,\omega}$, we can have the following identity for $j\neq i$
        \begin{align*}
&\mathbb{E}_{S,A}\left[\left\|\xi_j(S)-\xi_j\left(S^{i,\omega}\right)\right\|^2\right]\\
           &=	\mathbb{E}_{S,A}\left[\|   \mathbb{E}_{\omega_j^{\prime}}[\mathbb{E}_{\omega}\left[g_{\omega}\left(A\left(S^{j,\omega}\right)\right)\right]-g_{\omega_j}\left(A\left(S^{j,\omega}\right)\right)]
-\mathbb{E}_{\omega_j^{\prime}}\left[\mathbb{E}_\omega\left[g_\omega\left(A\left(S^{i,j,\omega}\right)\right)\right]-g_{\omega_ j}\left(A\left(S^{i,j,\omega}\right)\right)\right]\|^2\right]\\
           &= \mathbb{E}_{S, A}\left[\|\mathbb{E}_{\omega_j^{\prime}} \mathbb{E}_\omega\left[g_\omega\left(A\left(S^{j,\omega}\right)\right)-g_\omega\left(A\left(S^{i,j,\omega}\right)\right)\right]+\mathbb{E}_{\omega_j^{\prime}}\left[g_{\omega_j}\left(A\left(S^{i,j,\omega}\right)\right)-g_{\omega_j}\left(A\left(S^{j,\omega}\right)\right)\right]\|^{2}\right].
        \end{align*}
     Then using the elementary inequality $(a+b)^2 \leq 2\left(a^2+b^2\right)$ and the Cauchy-Schwarz inequality, we get
      \begin{align*}
        &\mathbb{E}_{S,A}\left[\left\|\xi_j(S)-\xi_j\left(S^{i,\omega}\right)\right\|^2\right]\\
        &\leq 2\mathbb{E}_{S, A}\left[\|g_\omega\left(A\left(S^{j,\omega}\right)\right)-g_\omega\left(A\left(S^{i,j,\omega}\right)\right)\|^2\right]+2 \mathbb{E}_{S, A}\left[\|g_{\omega_j}\left(A\left(S^{i,j,\omega}\right)\right)-g_{\omega_j}\left(A\left(S^{j,\omega}\right)\right)\|^2\right]\\
        &\leq 2 \mathbb{E}_{S,A}\left[L_g^2\left\|A\left(S^{j,\omega}\right)-A\left(S^{i,j,\omega}\right)\right\|^2\right]+2 \mathbb{E}_{S,A}\left[L_g^2\left\|A\left(S^{i,j,\omega}\right)-A\left(S^{j,\omega}\right)\right\|^2\right].
      \end{align*}
    Since $S^{i,\omega}$ and $S^{i,j,\omega}$ differ by one example, it follows from the definition of stability, we can have
      \begin{align*}
         \mathbb{E}_{S,A}\left[\left\|\xi_j(S)-\xi_j\left(S^{i,\omega}\right)\right\|^2\right]\leq 4 L_g^2 \epsilon_{\omega}^2,   \forall j \neq i.
      \end{align*}
    In a similar way, we can have 
        \begin{align*}
          \mathbb{E}_{S, A}\left[\left\|\xi_i(S)-\xi_i\left(S^{j,\omega}\right)\right\|^2 \right]\leq 4 L_g^2 \epsilon_{\omega}^2,   \forall j \neq i.
        \end{align*}
    Combining above two inequalities into \eqref{eq:inter5}, we get 
        \begin{align*}
           \sum_{j, i \in[m]: j \neq i} \mathbb{E}_{S,A}\left[\left\langle\xi_j(S), \xi_i(S)\right\rangle\right]\leq 4m(m-1) L_g^2 \epsilon_{\omega}^2,   \forall j \neq i.
               \numberthis \label{eq:inter6} 
        \end{align*}
    Then combining the \eqref{eq:inter6} and \eqref{eq:inter4} into \eqref{eq:inter3}, we can have 
        \begin{align*}
         \mathbb{E}_{S, A}\bigl[\|\sum_{j=1}^m \xi_j(S)\|^2\bigr]=m\mathbb{E}_{S,A}\left[\text{Var}_\omega(g_\omega(A(S)))\right]
             +4m (m-1)L_g^2 \epsilon_{\omega}^2.
        \end{align*}
        Then we get
        \begin{align*}
             \mathbb{E}_{S, A}\bigl[\|\sum_{j=1}^m \xi_j(S)\|\bigr]\leq( \mathbb{E}_{S, A}\bigl[\|\sum_{j=1}^m \xi_j(S)\|^2\bigr])^{1/2}\leq \sqrt{m\mathbb{E}_{S,A}\left[\text{Var}_\omega(g_\omega(A(S)))\right]}+2m L_g\epsilon_\omega,
        \end{align*}
     plugging the above inequality back into \eqref{eq:inter2}, we get
        \begin{align*}
             \mathbb{E}_{S, A}\bigl[\|g(A(S))-\frac{1}{m} \sum_{j=1}^m g_{\omega_j}(A(S))\|\bigr]\leq\sqrt{m^{-1}\mathbb{E}_{S,A}\left[\text{Var}_\omega(g_\omega(A(S)))\right]}+4L_g \epsilon_\omega.\numberthis \label{g_gS}
           \end{align*}
       
       Using the result \eqref{g_gS} into \eqref{secdterm} and then combining with the result \eqref{eq:fnu} into \eqref{eq:inter1}, we get final result
       \begin{align*}
           \EX_{S,A}\Big[  F(A(S))  - F_S(A(S)) \Big] \leq L_f L_g\epsilon_\nu+4L_fL_g\epsilon_\omega+L_f\sqrt{m^{-1}\mathbb{E}_{S,A}\left[\text{Var}_\omega(g_\omega(A(S)))\right]}
       \end{align*}
        where $\text{Var}_\omega(g_\omega(A(S)))=\mathbb{E}_\omega\left[\left\|g(A(S))-g_\omega(A(S))\right\|^2\right]$.
\end{proof} 

\section{Proof for the Convex Setting}\label{pr_convex}

\subsection{Stability}\label{pr_stab_con}

 \begin{proof}[Proof of Theorem \ref{thm:cov_sta}]
 For any $k\in [n]$, define $S^{k,\nu}=\{\nu_1,...,\nu_{k-1}, \nu_{k}^{\prime},\nu_{k+1},...,\nu_{n},\omega_1,...,\omega_{m}\}$ as formed from $S_\nu$ by replacing the $k$-th element. For any $l\in [m]$, define $S^{l,\omega}=\{\nu_1,...,\nu_n,\omega_1,...,\omega_{l-1}, \omega_{l}^{\prime},\omega_{l+1},...,\omega_{m}\}$ as formed from $S_\omega$ by replacing the $l$-th element.  Let $\{x_{t+1}\}$ and $\{y_{t+1}\}$ be produced by Algorithm \ref{alg:1} based on $S$,  $\{x_{t+1}^{k,\nu}\}$ and $\{y_{t+1}^{k,\nu}\}$ be produced by Algorithm \ref{alg:1} based on $S^{k,\nu}$,  $\{x_{t+1}^{l,\omega}\}$ and $\{y_{t+1}^{l,\omega}\}$ be produced by Algorithm \ref{alg:1} based on $S^{l,\omega}$. Let $x_0=x_0^{k,\nu}$ and $x_0=x_0^{l,\omega}$ be 
 starting points in $\mathcal{X}$. Since changing one sample data can happen in either $S_\nu$ or $S_\omega$,  we   estimate $\mathbb{E}_{A}\bigl[\|x_{t+1}-x_{t+1}^{k,\nu}\|\bigr]$ and $\mathbb{E}_{A}\bigl[\|x_{t+1}-x_{t+1}^{l,\omega}\|\bigr]$ as follows.

\noindent{\bf Estimation of $\mathbb{E}_{A}\bigl[\|x_{t+1}-x_{t+1}^{k,\nu}\|\bigr]$}  
   
We begin with the estimation of the term $\mathbb{E}_{A}\bigl[\|x_{t+1}-x_{t+1}^{k,\nu}\|\bigr]$. For this purpose, we will consider two cases, i.e., $i_t \neq k $ and $i_t = k $. 
    
\textbf{\quad Case 1 ($i_t \neq k $). }      ~~If $i_t \neq k $, we have
\begin{align}
    & \|x_{t+1}-x_{t+1}^{k,\nu}\|^2 \leq \| x_t-\eta_t \nabla g_{\omega_{j_t}}\left(x_t\right) \nabla f_{\nu_{i_ t}}\left(y_{t+1}\right)-x_t^{k,\nu}+\eta_t \nabla g_{\omega_{j_t}}(x_t^{k,\nu}) \nabla f_{\nu_{i_t}}(y_{t+1}^{k,\nu})\|^2\notag	\\
    &= \|x_t-x_t^{k,\nu}\|^2-2 \eta_t\langle \nabla g_{\omega_{j_t}}\left(x_t\right) \nabla f_{\nu_{i_ t}}\left(y_{t+1}\right)-\nabla g_{\omega_{j_t}}(x_t^{k,\nu}) \nabla f_{\nu_{i_t}}(y_{t+1}^{k,\nu}),x_t-x_t^{k,\nu}\rangle\notag \\
    &+\eta_t^2\|\nabla g_{\omega_{j_t}}\left(x_t\right) \nabla f_{\nu_{i_ t}}\left(y_{t+1}\right)-\nabla g_{\omega_{j_t}}(x_t^{k,\nu}) \nabla f_{\nu_{i_t}}(y_{t+1}^{k,\nu})\|^2. \label{mid}
\end{align}
 Taking the expectation w.r.t $j_t$ on the both sides of \eqref{mid} implies that
 \begin{align*}
     & \EX_{j_t}\bigl[\|x_{t+1}-x_{t+1}^{k,\nu}\|^2 \bigl]\\
		&\leq \EX_{j_t}\bigl[\|x_t-x_t^{k,\nu}\|^2\bigr]-2 \eta_t\EX_{j_t}\bigl[\langle \nabla g_{\omega_{j_t}}(x_t) \nabla f_{\nu_{i_ t}}(y_{t+1})-\nabla g_{\omega_{j_t}}(x_t^{k,\nu}) \nabla f_{\nu_{i_t}}(y_{t+1}^{k,\nu}),x_t-x_t^{k,\nu}\rangle\bigr]\notag \\
		&+\eta_t^2\EX_{j_t}\bigl[\|\nabla g_{\omega_{j_t}}(x_t) \nabla f_{\nu_{i_ t}}(y_{t+1})-\nabla g_{\omega_{j_t}}(x_t^{k,\nu}) \nabla f_{\nu_{i_t}}(y_{t+1}^{k,\nu})\|^2\bigr].\numberthis \label{expectjtstab}
 \end{align*}
 We first estimate the second term on the right hand side of \eqref{expectjtstab}.  It can be decomposed as 
 \begin{align*}
		-&2\eta_t\EX_{j_t}\bigl[\langle \nabla g_{\omega_{j_t}}(x_t) \nabla f_{\nu_{i_ t}}(y_{t+1})-\nabla g_{\omega_{j_t}}(x_t^{k,\nu}) \nabla f_{\nu_{i_t}}(y_{t+1}^{k,\nu}),x_t-x_t^{k,\nu}\rangle \bigr] \\
		=&-2\eta_t\EX_{j_t}\bigl[\langle \nabla g_{\omega_{j_t}}(x_t) \nabla f_{\nu_{i_ t}}(y_{t+1})-\nabla g_{\omega_{j_t}}(x_t) \nabla f_{\nu_{i_ t}}(g_S(x_t)),x_t-x_t^{k,\nu}\rangle \bigr] \\
		&-2\eta_t\EX_{j_t}\bigl[\langle \nabla g_{\omega_{j_t}}(x_t) \nabla f_{\nu_{i_ t}}(g_S(x_t))-\nabla g_S(x_t) \nabla f_{\nu_{i_ t}}(g_S(x_t)),x_t-x_t^{k,\nu}\rangle \bigr] \\
		&-2\eta_t\EX_{j_t}\bigl[\langle\nabla g_S(x_t) \nabla f_{\nu_{i_ t}}(g_S(x_t))-\nabla g_S(x_t^{k,\nu}) \nabla f_{\nu_{i_t}}(g_S(x_t^{k,\nu})),x_t-x_t^{k,\nu} \rangle \bigr] \\
		&-2\eta_t\EX_{j_t}\bigl[\langle\nabla g_S(x_t^{k,\nu}) \nabla f_{\nu_{i_ t}}(g_S(x_t^{k,\nu}))-\nabla g_{\omega_{j_t}}(x_t^{k,\nu}) \nabla f_{\nu_{i_t}}(g_S(x_t^{k,\nu})),x_t-x_t^{k,\nu}  \rangle\bigr]\\
		&-2\eta_t\EX_{j_t}\bigl[\langle\nabla g_{\omega_{j_t}}(x_t^{k,\nu}) \nabla f_{\nu_{i_t}}(g_S(x_t^{k,\nu})) -\nabla g_{\omega_{j_t}}(x_t^{k,\nu}) \nabla f_{\nu_{i_t}}(y_{t+1}^{k,\nu}),x_t-x_t^{k,\nu} \rangle\bigr] .\numberthis \label{expandmid}
	\end{align*}
Now we estimate the terms on the right hand side of  \eqref{expandmid} one by one. To this end, noticing that $j_t$ is independent of $i_t$ and $x_t$, then   $\EX_{j_t}\bigl[\nabla g_{\omega_{j_t}}(x_t) \nabla f_{\nu_{i_ t}}(g_S(x_t))\bigr]=\nabla g_{S}(x_t) \nabla f_{\nu_{i_ t}}(g_S(x_t))$ holds true. Consequently,  
    \begin{align*} \label{sec_forthmid}
        &-2\eta_t\EX_{j_t}\bigl[\langle \nabla g_{\omega_{j_t}}(x_t) \nabla f_{\nu_{i_ t}}(g_S(x_t))-\nabla g_{S}(x_t) \nabla f_{\nu_{i_ t}}(g_S(x_t)),x_t-x_t^{k,\nu}\rangle \bigr]=0,\\
&-2\eta_t\EX_{j_t}\bigl[\langle\nabla g_{S}(x_t^{k,\nu}) \nabla f_{\nu_{i_t}}(g_S(x_t^{k,\nu})) -\nabla g_{\omega_{j_t}}(x_t^{k,\nu}) \nabla f_{\nu_{i_t}}(g_S(x_t^{k,\nu})),x_t-x_t^{k,\nu} \rangle\bigr]=0.
\numberthis
    \end{align*}
 Then by  Part \ref{part3} of Assumption \ref{assum:2}, we know $f_\nu(g_S(\cdot))$ is  $L$-smooth. Combining this with the convexity of $f_\nu(g_S(\cdot))$ and inequality  \eqref{co-coercive},  we get
 	\begin{align*}
		&\langle \nabla g_S\bigl(x_t\bigr) \nabla f_{\nu_{i_ t}}(g_S(x_t))-\nabla g_S(x_t^{k,\nu}) \nabla f_{\nu_{i_t}}(g_S(x_t^{k,\nu})),x_t-x_t^{k,\nu}\rangle\\
		&\geq \frac{1}{L}\|\nabla g_S(x_t) \nabla f_{\nu_{i_ t}}(g_S(x_t))-\nabla g_S(x_t^{k,\nu}) \nabla f_{\nu_{i_t}}(g_S(x_t^{k,\nu}))\|^2.\numberthis \label{cor_convex}
	\end{align*}
 	Furthermore, noticing that  $x_t$ is independent of $j_t$, we get
  \begin{align*}
       &-2\eta_t\EX_{j_t}\bigl[\langle \nabla g_{\omega_{j_t}}(x_t) \nabla f_{\nu_{i_ t}}(y_{t+1})-\nabla g_{\omega_{j_t}}(x_t) \nabla f_{\nu_{i_ t}}(g_S(x_t)),x_t-x_t^{k,\nu}\rangle \bigr]\\
          & \leq2\eta_t \EX_{j_t}\bigl[\bigl| \langle \nabla g_{\omega_{j_t}}(x_t) (\nabla f_{\nu_{i_ t}}(y_{t+1})-\nabla f_{\nu_{i_ t}}(g_S(x_t))),x_t-x_t^{k,\nu}\rangle \bigr|\bigr] \\
        &\leq2\eta_t\EX_{j_t}\bigl[\|\nabla g_{\omega_{j_t}}(x_t) (\nabla f_{\nu_{i_ t}}(y_{t+1})-\nabla f_{\nu_{i_ t}}(g_S(x_t)))\| \|x_t-x_t^{k,\nu}\|\bigr]\\
		&\leq2\eta_t\EX_{j_t} \bigl[\|\nabla g_{\omega_{j_t}}(x_t)\| \|\nabla f_{\nu_{i_ t}}(y_{t+1})-\nabla f_{\nu_{i_ t}}\bigl(g_S(x_t)\bigr)\| \|x_t-x_t^{k,\nu}\|\bigr]\\
        &\leq C_fL_g2\eta_t\EX_{j_t}\bigl[\|y_{t+1}-g_S(x_t)\|\bigr]\|x_t-x_t^{k,\nu}\|,\numberthis \label{firstmid}
  \end{align*}
  where the last inequality holds by $L_g$ Lipschitz continuity of $g_\omega$ in Assumption \ref{assum:1}\ref{assum:1b} and the $C_f$ Lipschitz continuous gradients of $f_\nu$ in Assumption \ref{assum:2}\ref{part2}.
  Analogous to \eqref{firstmid}, we get
  \begin{align*}
      &-2\eta_t\EX_{j_t}\bigl[\langle\nabla g_{\omega_{j_t}}(x_t^{k,\nu}) \nabla f_{\nu_{i_t}}(g_S(x_t^{k,\nu})) -\nabla g_{\omega_{j_t}}(x_t^{k,\nu}) \nabla f_{\nu_{i_t}}(y_{t+1}^{k,\nu}),x_t-x_t^{k,\nu} \rangle\bigr]\\
      &\leq 2C_fL_g\eta_t\EX_{j_t}\bigl[\|y_{t+1}^{k,\nu}-g_S(x_t^{k,\nu})\|\bigr]\|x_t-x_t^{k,\nu}\|.\numberthis\label{lastmid}
  \end{align*}
Putting \eqref{sec_forthmid}, \eqref{cor_convex}, \eqref{firstmid} and \eqref{lastmid} into \eqref{expandmid},  we get that
     \begin{align*}\label{final_midterm}
         &-2\eta_t\EX_{j_t}\bigl[\langle \nabla g_{\omega_{j_t}}(x_t) \nabla f_{\nu_{i_ t}}(y_{t+1})-\nabla g_{\omega_{j_t}}(x_t^{k,\nu}) \nabla f_{\nu_{i_t}}(y_{t+1}^{k,\nu}),x_t-x_t^{k,\nu}\rangle\bigr]\\         &\leq2C_fL_g\eta_t\EX_{j_t}\bigl[\|y_{t+1}-g_S(x_t)\|\bigr]\|x_t-x_t^{k,\nu}\|+2C_fL_g\eta_t\EX_{j_t}\bigl[\|y_{t+1}^{k,\nu}-g_S(x_t^{k,\nu})\|\bigr]\|x_t-x_t^{k,\nu}\|\\
            &-2\eta_t\frac{1}{L}\|\nabla g_S(x_t) \nabla f_{\nu_{i_ t}}(g_S(x_t))-\nabla g_S(x_t^{k,\nu}) \nabla f_{\nu_{i_t}}(g_S(x_{t}^{k,\nu}))\|^2. \numberthis
     \end{align*}   
We estimate the third term on the right hand side of \eqref{expectjtstab} as follows:
 \begin{align*}
       &\|\nabla g_{\omega_{j_t}}(x_t) \nabla f_{\nu_{i_ t}}(y_{t+1})-\nabla g_{\omega_{j_t}}(x_t^{k,\nu}) \nabla f_{\nu_{i_t}}(y_{t+1}^{k,\nu})\|\\
     &\leq\|\nabla g_{\omega_{j_t}}(x_t) \nabla f_{\nu_{i_ t}}(y_{t+1})-\nabla g_{\omega_{j_t}}(x_t) \nabla f_{\nu_{i_ t}}(g_S(x_t))\|\\
     &+\|\nabla g_{\omega_{j_t}}(x_t) \nabla f_{\nu_{i_ t}}(g_S(x_t))-\nabla g_S(x_t) \nabla f_{\nu_{i_ t}}(g_S(x_t))\|\\
     &+\|\nabla g_S(x_t) \nabla f_{\nu_{i_ t}}(g_S(x_t))-\nabla g_S(x_t^{k,\nu}) \nabla f_{\nu_{i_t}}(g_S(x_t^{k,\nu}))\|\\
     &+\|\nabla g_S(x_t^{k,\nu}) \nabla f_{\nu_{i_ t}}(g_S(x_t^{k,\nu}))-\nabla g_{\omega_{j_t}}(x_t^{k,\nu}) \nabla f_{\nu_{i_t}}(g_S(x_t^{k,\nu}))\|\\
     &+\|\nabla g_{\omega_{j_t}}(x_t^{k,\nu}) \nabla f_{\nu_{i_t}}(g_S(x_t^{k,\nu})) -\nabla g_{\omega_{j_t}}(x_t^{k,\nu}) \nabla f_{\nu_{i_t}}(y_{t+1}^{k,\nu})\|.
 \end{align*}
 Taking square on both sides of the above inequality, we have that  
 	\begin{align*}
		\eta_t^2&\|\nabla g_{\omega_{j_t}}(x_t) \nabla f_{\nu_{i_ t}}(y_{t+1})-\nabla g_{\omega_{j_t}}(x_t^{k,\nu}) \nabla f_{\nu_{i_t}}(y_{t+1}^{k,\nu})\|^2\\
		\leq& 4\eta_t^2 C_f^2\|\nabla g_{\omega_{j_t}}(x_t) (y_{t+1}-g_S(x_t))\|^2+4\eta_t^2 C_f^2\|\nabla g_{\omega_{j_t}}(x_t) (g_S(x_t^{k,\nu})-y_{t+1}^{k,\nu})\|^2\\
		&+8\eta_t^2\|(\nabla g_{\omega_{j_t}}(x_t)-\nabla g_S(x_t)) \nabla f_{\nu_{i_ t}}(g_S(x_t))\|^2\\
		&+8\eta_t^2\|(\nabla g_{\omega_{j_t}}(x_t^{k,\nu})-\nabla g_S(x_t^{k,\nu})) \nabla f_{\nu_{i_ t}}(g_S(x_t^{k,\nu}))\|^2\\
		&+4\eta_t^2\|\nabla g_S(x_t) \nabla f_{\nu_{i_ t}}(g_S(x_t))-\nabla g_S(x_t^{k,\nu}) \nabla f_{\nu_{i_t}}(g_S(x_t^{k,\nu}))\|^2,\numberthis \label{thir_term}
	\end{align*}
	where we have used the fact that $(\sum_{i=1}^5 a_i)^2\leq 4 a_1^2+4 a_2^2 + 4a_3^2 + 8a_4^2 + 8 a_5^2$ and part \ref{part2} of Assumption \ref{assum:2}, i.e.,  $C_f$-Lipschitz continuity of $\nabla f_\nu$. Taking the expectation w.r.t. $j_t$ on both sides of \eqref{thir_term}, there holds
	\begin{align*}
	&\mathbb{E}_{{j_t}}\bigl[\eta_t^2\|\nabla g_{\omega_{j_t}}(x_t) \nabla f_{\nu_{i_ t}}(y_{t+1})-\nabla g_{\omega_{j_t}}(x_t^{k,\nu}) \nabla f_{\nu_{i_t}}(y_{t+1}^{k,\nu})\|^2\bigr]\\
		\leq&4\eta_t^2 C_f^2\EX_{j_t}\bigl[\|\nabla g_{\omega_{j_t}}(x_t) \|^2\|y_{t+1}-g_S(x_t)\|^2\bigr]+4\eta_t^2 C_f^2\EX_{j_t}\bigl[\|\nabla g_{\omega_{j_t}}(x_t) \|^2\|g_S(x_t^{k,\nu})-y_{t+1}^{k,\nu}\|^2\bigr]
		\\
        &+8\eta_t^2\EX_{j_t}\bigl[\|\nabla g_{\omega_{j_t}}(x_t)-\nabla g_S(x_t)\|^2\| \nabla f_{\nu_{i_ t}}(g_S(x_t))\|^2\bigr]\\
		&+8\eta_t^2\EX_{j_t}\bigl[\|\nabla g_{\omega_{j_t}}(x_t^{k,\nu})-\nabla g_S(x_t^{k,\nu})\|^2\| \nabla f_{\nu_{i_ t}}(g_S(x_t^{k,\nu}))\|^2\bigr]\\
        &+4\eta_t^2\|\nabla g_S(x_t) \nabla f_{\nu_{i_ t}}(g_S(x_t))-\nabla g_S(x_t^{k,\nu}) \nabla f_{\nu_{i_t}}(g_S(x_t^{k,\nu}))\|\\
        \leq& 4\eta_t^2 C_f^2L_g^2  \EX_{j_t}\bigl[\|y_{t+1}-g_S(x_t)\|^2\bigr]+4\eta_t^2 C_f^2L_g^2\EX_{j_t}\bigl[\|y_{t+1}^{k,\nu}-g_S(x_t^{k,\nu})\|^2\bigr]+16\eta_t^2L_f^2 C_g\\
        &+4\eta_t^2\|\nabla g_S(x_t) \nabla f_{\nu_{i_ t}}(g_S(x_t))-\nabla g_S(x_t^{k,\nu}) \nabla f_{\nu_{i_t}}(g_S(x_t^{k,\nu}))\|^2, \numberthis\label{third_term}
	\end{align*}
	where the second inequality follows from the Lipschitz continuity of $f_\nu$ and $g_{\omega}$ according to Assumption \ref{assum:1} as well as part \ref{part1b}
 of Assumption \ref{assum:2}. 
 
Putting (\ref{final_midterm}) and (\ref{third_term}) back into (\ref{expectjtstab}) implies that 
	\begin{align*}
		&\mathbb{E}_{{j_t}}\bigl[\|x_{t+1}-x_{t+1}^{k,\nu}\|^2\bigr]\\
  &\leq \|x_t-x_t^{k,\nu}\|^2+2C_fL_g\eta_t\EX_{j_t}\bigl[\|y_{t+1}-g_S\|\bigr]\|x_t-x_t^{k,\nu}\|\\
  &+2C_fL_g\eta_t\EX_{j_t}\bigl[\|y_{t+1}^{k,\nu}-g_S(x_t^{k,\nu})\|\bigr]\|x_t-x_t^{k,\nu}\|\\
            &+(4\eta_t^2-2\eta_t\frac{1}{L})\|\nabla g_S(x_t) \nabla f_{\nu_{i_ t}}(g_S(x_t))-\nabla g_S(x_t^{k,\nu}) \nabla f_{\nu_{i_t}}(g_S(x_{t}^{k,\nu}))\|^2\\
		&+4\eta_t^2 C_f^2L_g^2  \EX_{j_t}\bigl[\left\|y_{t+1}-g_S\left(x_t\right)\right\|^2\bigr]+4\eta_t^2 C_f^2L_g^2\EX_{j_t}\bigl[\|y_{t+1}^{k,\nu}-g_S(x_t^{k,\nu})\|^2\bigr]+16\eta_t^2L_f^2 C_g\\
 & \leq\|x_t-x_t^{k,\nu}\|^2+2C_fL_g\eta_t\EX_{j_t}\bigl[\|y_{t+1}-g_S(x_t)\|\bigr]\|x_t-x_t^{k,\nu}\|\\
  &+2C_fL_g\eta_t\EX_{j_t}\bigl[\|y_{t+1}^{k,\nu}-g_S(x_t^{k,\nu})\|\bigr]\|x_t-x_t^{k,\nu}\|\\
		&+4\eta_t^2 C_f^2L_g ^2 \EX_{j_t}\bigl[\left\|y_{t+1}-g_S\left(x_t\right)\right\|^2\bigr]+4\eta_t^2 C_f^2L_g^2\EX_{j_t}\bigl[\|y_{t+1}^{k,\nu}-g_S(x_t^{k,\nu})\|^2\bigr]+16\eta_t^2L_f^2 C_g, 
	\end{align*}
 where in the second inequality we have used  the fact that $\eta_t\leq\frac{1}{2L}$.

 \medskip 
 	\textbf{\quad Case 2 ($i_t = k $).}~~  If $i_t = k $, we have	
 \begin{align*}
		&\|x_{t+1}-x_{t+1}^{k,\nu}\| =\| x_t-\eta_t \nabla g_{\omega_{j_t}}(x_t) \nabla f_{\nu_{i_ t}}(y_{t+1})-x_t^{k,\nu}+\eta_t \nabla g_{\omega_{j_t}}(x_t^{k,\nu}) \nabla f_{\nu_{i_t}^{\prime}}(y_{t+1}^{k,\nu})\| \\
		&\leq \|x_t-x_t^{k,\nu}\|+\eta_t\|\nabla g_{\omega_{j_t}}(x_t) \nabla f_{\nu_{i_ t}}(y_{t+1})-\nabla g_{\omega_{j_t}}(x_t^{k,\nu}) \nabla f_{\nu_{i_t}^{\prime}}(y_{t+1}^{k,\nu})\|\\
        &\leq\|x_t-x_t^{k,\nu}\|+\eta_t\|\nabla g_{\omega_{j_t}}(x_t)\|\| \nabla f_{\nu_{i_ t}}(y_{t+1})\|+\eta_t\|\nabla g_{\omega_{j_t}}(x_t^{k,\nu}) \|\|\nabla f_{\nu_{i_t}^{\prime}}(y_{t+1}^{k,\nu})\|\\
        &\leq \|x_t-x_t^{k,\nu}\|+2L_g L_f\eta_t,
	\end{align*}
 where in the third inequality we have used Assumption \ref{assum:1}, i.e., the  Lipschitz continuity of $f_\nu$ and $g_{\omega}$.  Taking the square of the terms on both sides of the above inequality and taking the expectation w.r.t.  $j_t$ yield that  
%\begin{align*}\|x_{t+1}-x_{t+1}^{k,\nu}\|^2\leq\|x_t-x_t^{k,\nu}\|^2+4L_gL_f\eta_t\|x_t-x_t^{k,\nu}\|+4L_g^2 L_f^2\eta_t^2.\numberthis \label{i_tequk}\end{align*}
\begin{align*}
\EX_{j_t}\bigl[\|x_{t+1}-x_{t+1}^{k,\nu}\|^2\bigr]\leq\|x_t-x_t^{k,\nu}\|^2+4L_gL_f\eta_t\|x_t-x_t^{k,\nu}\|+4L_g^2 L_f^2\eta_t^2.\numberthis\label{conveitcase_2}
\end{align*}
\smallskip  
Combining {\bf Case 1} and {\bf Case 2} together, we have that 
\begin{align*}
	 &\mathbb{E}_{{j_t}}\bigl[\|x_{t+1}-x_{t+1}^{k,\nu}\|^2\bigr]\leq \|x_t-x_t^{k,\nu}\|^2+2C_fL_g\eta_t\EX_{j_t}\bigl[\|y_{t+1}-g_S(x_t)\|\bigr]\|x_t-x_t^{k,\nu}\|\\
  &+2C_fL_g\eta_t\EX_{j_t}\bigl[\|y_{t+1}^{k,\nu}-g_S(x_t^{k,\nu})\|\bigr]\|x_t-x_t^{k,\nu}\|\\
		&+4\eta_t^2 C_f^2L_g ^2 \EX_{j_t}\bigl[\|y_{t+1}-g_S(x_t)\|^2\bigr]+4\eta_t^2 C_f^2L_g^2\EX_{j_t}\bigl[\|y_{t+1}^{k,\nu}-g_S(x_t^{k,\nu})\|^2\bigr]+16\eta_t^2L_f^2C_g\\
  &+4L_gL_f\eta_t\|x_t-x_t^{k,\nu}\|\mathbb{I}_{[i_t=k]}+4L_g^2 L_f^2\eta_t^2\mathbb{I}_{[i_t=k]}.\numberthis\label{E_jtlast}
\end{align*}
Taking the expectation w.r.t. $A$ on both sides of \eqref{E_jtlast}, we get that   
\begin{align*}
	 &\mathbb{E}_{A}\bigl[\|x_{t+1}-x_{t+1}^{k,\nu}\|^2\bigr]\\
  \leq&\EX_{A}\bigl[ \|x_t-x_t^{k,\nu}\|^2\bigr]+2C_fL_g\eta_t\EX_{A}\bigl[\EX_{j_t}[\|y_{t+1}-g_S(x_t)\|]\|x_t-x_t^{k,\nu}\|\bigr]\\
  &+2C_fL_g\eta_t\EX_{A}\bigl[\EX_{j_t}[\|y_{t+1}^{k,\nu}-g_S(x_t^{k,\nu})\|]\|x_t-x_t^{k,\nu}\|\bigr]\\
		&+4\eta_t^2 C_f^2L_g^2  \EX_{A}\bigl[\|y_{t+1}-g_S(x_t)\|^2\bigr]+4\eta_t^2 C_f^2L_g^2\EX_{A}\bigl[\|y_{t+1}^{k,\nu}-g_S(x_t^{k,\nu})\|^2\bigr]+16\eta_t^2L_f^2 C_g\\
  &+4L_f L_g\eta_t\EX_A[\|x_t-x_t^{k,\nu}\|\mathbb{I}_{[i_t=k]}]+4L_g^2 L_f^2\eta_t^2\EX_A[\mathbb{I}_{[i_t=k]}]\\
  \leq&\EX_{A}\bigl[ \|x_t-x_t^{k,\nu}\|^2\bigr]+2C_fL_g\eta_t(\EX_{A}\bigl[\|y_{t+1}-g_S(x_t)\|^2\bigr])^{1/2}(\EX_{A}\bigl[\|x_t-x_t^{k,\nu}\|^2\bigr])^{1/2}\\
  &+2C_fL_g\eta_t(\EX_{A}\bigl[\|y_{t+1}^{k,\nu}-g_S(x_t^{k,\nu})\|^2\bigr])^{1/2}(\EX_{A}\bigl[\|x_t-x_t^{k,\nu}\|^2\bigr])^{1/2}\\
		&+4\eta_t^2 C_f^2L_g^2  \EX_{A}\bigl[\|y_{t+1}-g_S(x_t)\|^2\bigr]+4\eta_t^2 C_f^2L_g^2\EX_{A}\bigl[\|y_{t+1}^{k,\nu}-g_S(x_t^{k,\nu})\|^2\bigr]+16\eta_t^2L_f^2 C_g\\
  &+4L_fL_g\eta_t\EX_A[\|x_t-x_t^{k,\nu}\|\mathbb{I}_{[i_t=k]}]+4L_g^2 L_f^2\eta_t^2\EX_A[\mathbb{I}_{[i_t=k]}],\numberthis\label{E_Ajtconvex}
\end{align*}

where the second inequality holds by the Cauchy-Schwarz inequality. Observe  that
\begin{align*}
    \EX_A[\|x_t-x_t^{k,\nu}\|\mathbb{I}_{[i_t=k]}]=\EX_A[\|x_t-x_t^{k,\nu}\|\EX_{i_t}[\mathbb{I}_{[i_t=k]}]]=\frac{1}{n}\EX_A[\|x_t-x_t^{k,\nu}\|]\leq\frac{1}{n}(\EX_A[\|x_t-x_t^{k,\nu}\|^2])^{1/2}.
\end{align*}
Note that $\|x_0-x_0^{k,\nu}\|^2=0$. Combining above observation with \eqref{E_Ajtconvex} implies that 
\begin{align*}
	&\mathbb{E}_{A}\bigl[\|x_{t+1}-x_{t+1}^{k,\nu}\|^2\bigr]\\
 &\leq 
 2C_fL_g\sum_{j=1}^{t}\eta_j(\EX_{A}\bigl[\|y_{j+1}-g_S(x_j)\|^2\bigr])^{1/2}\bigl(\EX_A\bigl[\|x_j-x_j^{k,\nu}\|^2\bigr]\bigr)^{1/2}\\
         &+2C_fL_g\sum_{j=1}^{t}\eta_j(\EX_{A}\bigl[\|y_{j+1}^{k,\nu}-g_S(x_j^{k,\nu})\|^2\bigr])^{1/2}\bigl(\EX_A\bigl[\|x_j-x_j^{k,\nu}\|^2\bigr]\bigr)^{1/2}\\
		&+4C_f^2L_g^2 \sum_{j=0}^{t}\eta_j^2  \EX_{A}\bigl[\|y_{j+1}-g_S(x_j)\|^2\bigr]+4C_f^2L_g^2 \sum_{j=0}^{t}\eta_j^2 \EX_{A}\bigl[\|y_{j+1}^{k,\nu}-g_S(x_j^{k,\nu})\|^2\bigr]\\
        &+16L_f^2C_g\sum_{j=0}^{t}\eta_j^2+\frac{4L_gL_f}{n}\sum_{j=1}^{t}\eta_j(\EX_A\bigl[\|x_{j}-x_{j}^{k,\nu}\|^2\bigr])^{1/2}
        +\frac{4L_f^2 L_g^2}{n}\sum_{j=0}^{t}\eta_j^2. \numberthis \label{recursionjtconve}
\end{align*}
For notational convenience, we denote by $u_{t}=(\mathbb{E}_{A}\bigl[\|x_{t}-x_{t}^{k,\nu}\|^2\bigr])^{1/2}$. Using this notation,  from  \eqref{recursionjtconve} we get that 
\begin{align*}
	u_{t}^2& \leq 
 2C_fL_g\sum_{j=1}^{t-1}\eta_j(\EX_{A}\bigl[\|y_{j+1}-g_S(x_j)\|^2\bigr])^{1/2}u_{j}+2C_fL_g\sum_{j=1}^{t-1}\eta_j(\EX_{A}\bigl[\|y_{j+1}^{k,\nu}-g_S(x_j^{k,\nu})\|^2\bigr])^{1/2}u_j\\
		&+4C_f^2L_g^2 \sum_{j=0}^{t-1}\eta_j^2  \EX_{A}\bigl[\|y_{j+1}-g_S(x_j)\|^2\bigr]+4C_f^2L_g ^2\sum_{j=0}^{t-1}\eta_j^2 \EX_{A}\bigl[\|y_{j+1}^{k,\nu}-g_S(x_j^{k,\nu})\|^2\bigr]\\
        &+16L_f^2C_g\sum_{j=0}^{t-1}\eta_j^2+\frac{4L_gL_f}{n}\sum_{j=1}^{t-1}\eta_j u_j
        +\frac{4L_f^2 L_g^2}{n}\sum_{j=0}^{t-1}\eta_j^2.
\end{align*}
We will apply Lemma \ref{recursion lemma} to get the desired estimation from the above recursive inequality. To this end, we define
\begin{align*}
    S_{t}&=4C_f^2L_g^2 \sum_{j=0}^{t-1}\eta_j^2  \EX_{A}\bigl[\left\|y_{j+1}-g_S\left(x_j\right)\right\|^2\bigr]+4C_f^2L_g^2 \sum_{j=0}^{t-1}\eta_j^2 \EX_{A}\bigl[\|y_{j+1}^{k,\nu}-g_S(x_j^{k,\nu})\|^2\bigr]\\ &+\frac{4L_f^2 L_g^2}{n}\sum_{j=0}^{t-1}\eta_j^2+16L_f ^2C_g\sum_{j=0}^{t-1}\eta_j^2,\\
    \alpha_j&=2C_fL_g\eta_j(\EX_{A}\bigl[\|y_{j+1}-g_S(x_j)\|^2\bigr])^{1/2}+2C_fL_g\eta_j(\EX_{A}\bigl[\|y_{j+1}^{k,\nu}-g_S(x_j^{k,\nu})\|^2\bigr])^{1/2}+\frac{4L_gL_f}{n}\eta_j.
\end{align*}
Now applying Lemma \ref{recursion lemma} with $u_{t}$, $S_{t}$ and $\alpha_j$ defined above, we get    
    \begin{align*} \label{eq:stability_delta}
        u_{t}&\leq\sqrt{S_{t}}+\sum_{j=1}^{t-1}\alpha_j\\
        &\leq\bigl(4C_f^2L_g ^2\sum_{j=0}^{t-1}\eta_j^2  \EX_{A}\bigl[\left\|y_{j+1}-g_S\left(x_j\right)\right\|^2\bigr]\bigr)^{1/2}+\bigl(4C_f^2L_g^2 \sum_{j=0}^{t-1}\eta_j^2 \EX_{A}\bigl[\|y_{j+1}^{k,\nu}-g_S(x_j^{k,\nu})\|^2\bigr]\bigr)^{1/2} \\
        &+\bigl(\frac{4L_f^2 L_g^2}{n}\sum_{j=0}^{t-1}\eta_j^2\bigr)^{1/2}+(16L_f ^2C_g\sum_{j=0}^{t-1}\eta_j^2)^{1/2}+2C_fL_g\sum_{j=1}^{t-1}\eta_j(\EX_{A}\bigl[\|y_{j+1}-g_S(x_j)\|^2\bigr])^{1/2}\\
        &+2C_fL_g\sum_{j=1}^{t-1}\eta_j(\EX_{A}\bigl[\|y_{j+1}^{k,\nu}-g_S(x_j^{k,\nu})\|^2\bigr])^{1/2}+\frac{4L_fL_g}{n}\sum_{j=1}^{t-1}\eta_j \\
        &\leq 4C_fL_g\sum_{j=0}^{t-1}\eta_j(\EX_{A}\bigl[\|y_{j+1}-g_S(x_j)\|^2\bigr])^{1/2}
        +4C_fL_g\sum_{j=0}^{t-1}\eta_j(\EX_{A}\bigl[\|y_{j+1}^{k,\nu}-g_S(x_j^{k,\nu})\|^2\bigr])^{1/2}\\
        &+4L_f\sqrt{C_g}(\sum_{j=0}^{t-1}\eta_j^2)^{1/2}+\bigl(\frac{4L_f^2L_g^2}{n}\sum_{j=0}^{t-1}\eta_j^2\bigr)^{1/2}+\frac{4L_fL_g}{n}\sum_{j=0}^{t-1}\eta_j, \numberthis
    \end{align*}
where the second inequality uses the fact that $(\sum_{i=1}^{4}a_i)^{1/2}\leq \sum_{i=1}^{4}(a_i)^{1/2}$ and the last inequality holds by the fact that
\begin{align*}
    \bigl(4C_f^2L_g ^2\sum_{j=0}^{t-1}\eta_j^2  \EX_{A}\bigl[\left\|y_{j+1}-g_S\left(x_j\right)\right\|^2\bigr]\bigr)^{1/2}&\leq2C_fL_g\sum_{j=0}^{t-1}\eta_j(\EX_{A}\bigl[\|y_{j+1}-g_S(x_j)\|^2\bigr])^{1/2},\\
    \bigl(4C_f^2L_g ^2\sum_{j=0}^{t-1}\eta_j^2  \EX_{A}\bigl[\|y_{j+1}^{k,\nu}-g_S(x_j^{k,\nu})\|^2\bigr]\bigr)^{1/2}&\leq2C_fL_g\sum_{j=0}^{t-1}\eta_j(\EX_{A}\bigl[\|y_{j+1}^{k,\nu}-g_S(x_j^{k,\nu})\|^2\bigr])^{1/2}.
\end{align*}
Furthermore,  if $\eta_t=\eta$, it is easy to see that  $\sum_{j=0}^{T-1} (\mathbb{E}_{A}[\|y_{j+1}-g_S(x_j)\|^2])^{1/2} \le \sup_{S} \eta\sum_{j=0}^{T-1} (\mathbb{E}_{A}[\|y_{j+1}-g_S(x_j)\|^2])^{1/2}$ and $\sum_{j=0}^{T-1} (\mathbb{E}_{A}[\|y_{j+1}^{k,\nu}-g_S(x_j^{k,\nu})\|^2])^{1/2} \le \sup_{S} \eta\sum_{j=0}^{T-1} (\mathbb{E}_{A}[\|y_{j+1}-g_S(x_j)\|^2])^{1/2}.$ Consequently, with $T$ iterations,  we obtain that 
\begin{align*}
    u_{T}
    \leq &8C_fL_g\sup_{S} \eta\sum_{j=0}^{T-1} (\mathbb{E}_{A}[\|y_{j+1}-g_S(x_j)\|^2])^{1/2}+4L_f\sqrt{C_g}(\sum_{j=0}^{T-1}\eta^2)^{1/2}+\bigl(\frac{4L_f^2L_g^2}{n}\sum_{j=0}^{T-1}\eta^2\bigr)^{1/2}\\
    &+\frac{4L_gL_f}{n}\sum_{j=0}^{T-1}\eta\\
    \leq&8C_fL_g\sup_{S} \eta\sum_{j=0}^{T-1} (\mathbb{E}_{A}[\|y_{j+1}-g_S(x_j)\|^2])^{1/2}+4L_f\sqrt{C_g}\eta\sqrt{T}+\frac{6L_gL_f}{n}\eta T,
\end{align*}
where the last inequality holds by the fact that
$\bigl(\frac{4L_f^2L_g^2}{n}\sum_{j=0}^{T-1}\eta^2\bigr)^{1/2}= \frac{2L_gL_f}{\sqrt{n}}\eta\sqrt{T}\leq  \frac{2L_fL_g}{n}\eta T$ because often we have $T\ge n$. \\
Since
$\mathbb{E}_{A}\bigl[\|x_{T}-x_{T}^{k,\nu}\|\bigr]\leq u_{T}=(\mathbb{E}_{A}\bigl[\|x_{T}-x_{T}^{k,\nu}\|^2\bigr])^{1/2}$,    we further get 
\begin{align*}  \label{eq:stability_delta:3}
    \mathbb{E}_{A}\bigl[\|x_{T}-x_{T}^{k,\nu}\|\bigr] 
   \leq 8C_fL_g\sup_{S} \eta\sum_{j=0}^{T-1} (\mathbb{E}_{A}[\|y_{j+1}-g_S(x_j)\|^2])^{1/2}
        +4L_f\sqrt{C_g}\eta\sqrt{T}+\frac{6L_fL_g}{n}\eta T.
        \numberthis
\end{align*}
We got the following desired result for { Case 1}: 
\begin{align*}
\mathbb{E}_{A}\bigl[\|x_{T}-x_{T}^{k,\nu}\|\bigr]= \mathcal{O}\Bigl( L_fL_g\frac{T\eta}{n}  
+L_f\sqrt{C_g}\eta\sqrt{T}+ C_fL_g\sup_{S}\sum_{j=0}^{T-1} \eta \bigl(\mathbb{E}_{A}[\| y_{j+1}-g_S(x_j)\|^2 ] \bigr)^{1\over 2}\Bigr).
\end{align*}

Next we move on to the estimation of  $\mathbb{E}_{A}[\|x_{t+1}-x_{t+1}^{l,\omega}\|]$. 

\smallskip 
\noindent{\bf Estimation of $\mathbb{E}_{A}[\|x_{t+1}-x_{t+1}^{l,\omega}\|]$.}  We will estimate it by considering two cases, i.e., $j_t \neq l $ and $j_t =l $.

 \medskip 
\textbf{~ Case 1 ($j_t \neq l $). }~~ If $j_t \neq l $, we have
	\begin{align*}
		&\|x_{t+1}-x_{t+1}^{l,\omega}\|^2 \leq \| x_t-\eta_t \nabla g_{\omega_{j_t}}(x_t) \nabla f_{\nu_{i_ t}}(y_{t+1})-x_t^{l,\omega}+\eta_t \nabla g_{\omega_{j_t}}(x_t^{l,\omega}) \nabla f_{\nu_{i_t}}(y_{t+1}^{l,\omega})\|^2	\\
		&= \|x_t-x_t^{l,\omega}\|^2-2 \eta_t\langle \nabla g_{\omega_{j_t}}(x_t) \nabla f_{\nu_{i_ t}}(y_{t+1})-\nabla g_{\omega_{j_t}}(x_t^{l,\omega}) \nabla f_{\nu_{i_t}}(y_{t+1}^{l,\omega}),x_t-x_t^{l,\omega}\rangle \\
		&+\eta_t^2\|\nabla g_{\omega_{j_t}}(x_t) \nabla f_{\nu_{i_ t}}(y_{t+1})-\nabla g_{\omega_{j_t}}(x_t^{l,\omega}) \nabla f_{\nu_{i_t}}(y_{t+1}^{l,\omega})\|^2 .\numberthis \label{j_tneql}
	\end{align*}
We will estimate the second term and the third one on the right hand side of \eqref{j_tneql} as follows.  First, we estimate the second term. To this end, using similar arguments in \eqref{expandmid}, it can be decomposed as 
\begin{align*}
		-&2\eta_t\langle \nabla g_{\omega_{j_t}}(x_t) \nabla f_{\nu_{i_ t}}(y_{t+1})-\nabla g_{\omega_{j_t}}(x_t^{l,\omega}) \nabla f_{\nu_{i_t}}(y_{t+1}^{l,\omega}),x_t-x_t^{l,\omega}\rangle \notag \\
  =& -2\eta_t  \langle \nabla g_{\omega_{j_t}}(x_t) (\nabla f_{\nu_{i_ t}}(y_{t+1})-\nabla f_{\nu_{i_ t}}(g_S(x_t))),x_t-x_t^{l,\omega}\rangle  \\
            &-2\eta_t\langle \nabla g_{\omega_{j_t}}(x_t) \nabla f_{\nu_{i_ t}}(g_S(x_t))-\nabla g_S(x_t) \nabla f_{\nu_{i_ t}}(g_S(x_t)),x_t-x_t^{l,\omega}\rangle\\
		&-2\eta_t\langle\nabla g_S(x_t) \nabla f_{\nu_{i_ t}}(g_S(x_t))-\nabla g_S(x_t^{l,\omega}) \nabla f_{\nu_{i_t}}(g_S(x_{t}^{l,\omega})), x_t-x_t^{l,\omega}\rangle\\
            &-2\eta_t\langle\nabla g_S(x_t^{l,\omega}) \nabla f_{\nu_{i_ t}}(g_S(x_t^{l,\omega}))-\nabla g_{\omega_{j_t}}(x_t^{l,\omega}) \nabla f_{\nu_{i_t}}(g_S(x_t^{l,\omega})),x_t-x_t^{l,\omega} \notag  \rangle\\
		&-2\eta_t\langle\nabla g_{\omega_{j_t}}(x_t^{l,\omega}) (\nabla f_{\nu_{i_ t}}(g_S(x_t^{l,\omega}))-\nabla f_{\nu_{i_t}}(y_{t+1}^{l,\omega})),x_t-x_t^{l,\omega}\rangle
  \numberthis \label{jtmidterm}.
	\end{align*}
Using the convexity of $f_\nu(g_S(\cdot))$, part \ref{part3} of Assumption \ref{assum:2} and inequality \eqref{co-coercive}, we have
 	\begin{align*}
		&\langle \nabla g_S\bigl(x_t\bigr) \nabla f_{\nu_{i_t}}(g_S(x_t))-\nabla g_S(x_t^{l,\omega}) \nabla f_{\nu_{i_t}}(g_S(x_t^{l,\omega})),x_t-x_t^{l,\omega}\rangle\\
		&\geq \frac{1}{L}\|\nabla g_S(x_t) \nabla f_{\nu_{i_t}}(g_S(x_t))-\nabla g_S(x_t^{l,\omega}) \nabla f_{\nu_{i_t}}(g_S(x_t^{l,\omega}))\|^2.\numberthis\label{cor_convex_jt}
	\end{align*}
Furthermore, using part \ref{assum:1b}  of  Assumption \ref{assum:1}  and part \ref{part2} of Assumption \ref{assum:2}, we get
      \begin{align*}\label{jtcoti_prpety}
          & -2\eta_t  \langle \nabla g_{\omega_{j_t}}(x_t) (\nabla f_{\nu_{i_ t}}(y_{t+1})-\nabla f_{\nu_{i_ t}}(g_S(x_t))),x_t-x_t^{l,\omega}\rangle  \\
        &\leq2\eta_t\|\nabla g_{\omega_{j_t}}(x_t) \bigl(\nabla f_{\nu_{i_ t}}(y_{t+1})-\nabla f_{\nu_{i_ t}}\bigl(g_S(x_t)\bigr)\| \|x_t-x_t^{l,\omega}\|\\
		&\leq 2\eta_t\|\nabla g_{\omega_{j_t}}(x_t)\| \|\nabla f_{\nu_{i_ t}}(y_{t+1})-\nabla f_{\nu_{i_ t}}\bigl(g_S(x_t)\bigr)\| \|x_t-x_t^{l,\omega}\| \\
        &\leq 2\eta_t C_fL_g\|y_{t+1}-g_S(x_t)\|\|x_t-x_t^{l,\omega}\|. \numberthis
      \end{align*}
Likewise,  
  \begin{align*}
      &-2\eta_t\langle\nabla g_{\omega_{j_t}}(x_t^{l,\omega}) (\nabla f_{\nu_{i_ t}}(g_S(x_t^{l,\omega}))-\nabla f_{\nu_{i_t}}(y_{t+1}^{l,\omega})),x_t-x_t^{l,\omega}\rangle\\
        &\leq 2\eta_tC_fL_g\|y_{t+1}^{l,\omega}-g_S(x_t^{l,\omega})\|\|x_t-x_t^{l,\omega}\|.\numberthis\label{jtconti_b}
  \end{align*}
Putting  \eqref{cor_convex_jt}, \eqref{jtcoti_prpety} and \eqref{jtconti_b} into \eqref{jtmidterm} yields that 
	\begin{align*}
		-&2\eta_t\langle \nabla g_{\omega_{j_t}}(x_t) \nabla f_{\nu_{i_ t}}(y_{t+1})-\nabla g_{\omega_{j_t}}(x_t^{l,\omega}) \nabla f_{\nu_{i_t}}(y_{t+1}^{l,\omega}),x_t-x_t^{l,\omega}\rangle \notag \\
  &\leq 2\eta_t C_fL_g\|y_{t+1}-g_S(x_t)\|\|x_t-x_t^{l,\omega}\|+ 2\eta_t C_fL_g\|y_{t+1}^{l,\omega}-g_S(x_t^{l,\omega})\|\|x_t-x_t^{l,\omega}\| \\
            &-2\eta_t\langle \nabla g_{\omega_{j_t}}(x_t) \nabla f_{\nu_{i_ t}}(g_S(x_t))-\nabla g_S(x_t) \nabla f_{\nu_{i_ t}}(g_S(x_t)),x_t-x_t^{l,\omega}\rangle\\
		&-2\eta_t\frac{1}{L}\|\nabla g_S(x_t) \nabla f_{\nu_{i_ t}}(g_S(x_t))-\nabla g_S(x_t^{l,\omega}) \nabla f_{\nu_{i_t}}(g_S(x_{t}^{l,\omega}))\|^2\\
            &-2\eta_t\langle\nabla g_S(x_t^{l,\omega}) \nabla f_{\nu_{i_ t}}(g_S(x_t^{l,\omega}))-\nabla g_{\omega_{j_t}}(x_t^{l,\omega}) \nabla f_{\nu_{i_t}}(g_S(x_t^{l,\omega})),x_t-x_t^{l,\omega}   \rangle
	 \numberthis \label{jtfnlmidterm}.
	\end{align*}
 Next we  will estimate the third term on the right hand side of \eqref{j_tneql}. In analogy  to the argument in \eqref{thir_term}, one can show that
 \begin{align*}
		&\eta_t^2\|\nabla g_{\omega_{j_t}}(x_t) \nabla f_{\nu_{i_ t}}(y_{t+1})-\nabla g_{\omega_{j_t}}(x_t^{l,\omega}) \nabla f_{\nu_{i_t}}(y_{t+1}^{l,\omega})\|^2\\
		&\leq 4\eta_t^2 C_f^2\|\nabla g_{\omega_{j_t}}(x_t) (y_{t+1}-g_S(x_t))\|^2+4\eta_t^2 C_f^2\|\nabla g_{\omega_{j_t}}(x_t) (g_S(x_t^{l,\omega})-y_{t+1}^{l,\omega})\|^2\\
		&+8\eta_t^2\|(\nabla g_{\omega_{j_t}}(x_t)-\nabla g_S(x_t)) \nabla f_{\nu_{i_ t}}(g_S(x_t))\|^2\\
		&+8\eta_t^2\|(\nabla g_{\omega_{j_t}}(x_t^{l,\omega})-\nabla g_S(x_t^{l,\omega})) \nabla f_{\nu_{i_ t}}(g_S(x_t^{l,\omega}))\|^2\\
		&+4\eta_t^2\|\nabla g_S(x_t) \nabla f_{\nu_{i_ t}}(g_S(x_t))-\nabla g_S(x_t^{l,\omega}) \nabla f_{\nu_{i_t}}(g_S(x_t^{l,\omega}))\|^2\\
  &\leq 4\eta_t^2 C_f^2L_g^2\| y_{t+1}-g_S(x_t)\|^2+4\eta_t^2 C_f^2L_g^2\|\nabla g_S(x_t^{l,\omega})-y_{t+1}^{l,\omega}\|^2\\
		&+8L_f^2\eta_t^2\|\nabla g_{\omega_{j_t}}(x_t)-\nabla g_S(x_t) \|^2+8L_f^2\eta_t^2\|\nabla g_{\omega_{j_t}}(x_t^{l,\omega})-\nabla g_S(x_t^{l,\omega}) \|^2\\
		&+4\eta_t^2\|\nabla g_S(x_t) \nabla f_{\nu_{i_ t}}(g_S(x_t))-\nabla g_S(x_t^{l,\omega}) \nabla f_{\nu_{i_t}}(g_S(x_t^{l,\omega}))\|^2, \numberthis \label{jtthir_term}
	\end{align*}
 where, in the second inequality, we have used  Assumption \ref{assum:1}.

Putting the results \eqref{jtfnlmidterm} and \eqref{jtthir_term} into \eqref{j_tneql} implies that 
	\begin{align*}
		&\|x_{t+1}-x_{t+1}^{l,\omega}\|^2\\ 
		 &\leq \|x_{t}-x_{t}^{l,\omega}\|^2 +2\eta_t C_fL_g\|y_{t+1}-g_S(x_t)\|\|x_t-x_t^{l,\omega}\|+ 2\eta_t C_fL_g\|y_{t+1}^{l,\omega}-g_S(x_t^{l,\omega})\|\|x_t-x_t^{l,\omega}\| \\
            &-2\eta_t\langle \nabla g_{\omega_{j_t}}(x_t) \nabla f_{\nu_{i_ t}}(g_S(x_t))-\nabla g_S(x_t) \nabla f_{\nu_{i_ t}}(g_S(x_t)),x_t-x_t^{l,\omega}\rangle\\
            &-2\eta_t\langle\nabla g_S(x_t^{l,\omega}) \nabla f_{\nu_{i_ t}}(g_S(x_t^{l,\omega}))-\nabla g_{\omega_{j_t}}(x_t^{l,\omega}) \nabla f_{\nu_{i_t}}(g_S(x_t^{l,\omega})),x_t-x_t^{l,\omega} \notag  \rangle\\
            &+(4\eta_t^2-2\eta_t\frac{1}{L})\|\nabla g_S(x_t) \nabla f_{\nu_{i_ t}}(g_S(x_t))-\nabla g_S(x_t^{l,\omega}) \nabla f_{\nu_{i_t}}(g_S(x_{t}^{l,\omega}))\|^2\\
            &+4\eta_t^2 C_f^2L_g^2\| y_{t+1}-g_S(x_t)\|^2+4\eta_t^2 C_f^2L_g^2\|g_S(x_t^{l,\omega})-y_{t+1}^{l,\omega}\|^2\\
		&+8L_f^2\eta_t^2\|\nabla g_{\omega_{j_t}}(x_t)-\nabla g_S(x_t) \|^2+8L_f^2\eta_t^2\|\nabla g_{\omega_{j_t}}(x_t^{l,\omega})-\nabla g_S(x_t^{l,\omega}) \|^2\\
   &\leq \|x_{t}-x_{t}^{l,\omega}\|^2 +2\eta_t C_fL_g\|y_{t+1}-g_S(x_t)\|\|x_t-x_t^{l,\omega}\|+ 2\eta_t C_fL_g\|y_{t+1}^{l,\omega}-g_S(x_t^{l,\omega})\|\|x_t-x_t^{l,\omega}\| \\
            &-2\eta_t\langle \nabla g_{\omega_{j_t}}(x_t) \nabla f_{\nu_{i_ t}}(g_S(x_t))-\nabla g_S(x_t) \nabla f_{\nu_{i_ t}}(g_S(x_t)),x_t-x_t^{l,\omega}\rangle\\
            &-2\eta_t\langle\nabla g_S(x_t^{l,\omega}) \nabla f_{\nu_{i_ t}}(g_S(x_t^{l,\omega}))-\nabla g_{\omega_{j_t}}(x_t^{l,\omega}) \nabla f_{\nu_{i_t}}(g_S(x_t^{l,\omega})),x_t-x_t^{l,\omega} \notag  \rangle\\
             &+4\eta_t^2 C_f^2L_g^2\| y_{t+1}-g_S(x_t)\|^2+4\eta_t^2 C_f^2L_g^2\|g_S(x_t^{l,\omega})-y_{t+1}^{l,\omega}\|^2\\
		&+8L_f^2\eta_t^2\|\nabla g_{\omega_{j_t}}(x_t)-\nabla g_S(x_t) \|^2+8L_f^2\eta_t^2\|\nabla g_{\omega_{j_t}}(x_t^{l,\omega})-\nabla g_S(x_t^{l,\omega}) \|^2, 
		\numberthis \label{j_tfnlneql}
	\end{align*}
 where we have used the fact that  $\eta_t\leq\frac{1}{2L}$ in the second inequality.

  \smallskip 
  
 \textbf{\quad Case 2 ($j_t=l$).}~~ If $j_t=l$, from Assumption \ref{assum:1} we have that
  \begin{align*}
		&\|x_{t+1}-x_{t+1}^{l,\omega}\| =\| x_t-\eta_t \nabla g_{\omega_{j_t}}(x_t) \nabla f_{\nu_{i_ t}}(y_{t+1})-x_t^{l,\omega}+\eta_t \nabla g_{\omega_{j_t}^{\prime}}(x_t^{l,\omega}) \nabla f_{\nu_{i_t}}(y_{t+1}^{l,\omega})\| \\
		&\leq \|x_t-x_t^{l,\omega}\|+\|\nabla g_{\omega_{j_t}}(x_t)\nabla f_{\nu_{i_ t}}(y_{t+1})+\nabla g_{\omega_{j_t}^{\prime}}(x_t^{l,\omega}) \nabla f_{\nu_{i_t}}(y_{t+1}^{l,\omega})\|\\
		&\leq\|x_t-x_t^{l,\omega}\|+\eta_t\|\nabla g_{\omega_{j_t}}(x_t)\|\| \nabla f_{\nu_{i_ t}}(y_{t+1})\|+\eta_t\|\nabla g_{\omega_{j_t}^{\prime}}(x_t^{l,\omega})\|\| \nabla f_{\nu_{i_ t}}(g_S(x_t^{l,\omega}))\| \\
  &\leq\|x_t-x_t^{l,\omega}\|+2\eta_t L_g L_f.
	\end{align*}
Therefore,
 \begin{align*}
 &\|x_{t+1}-x_{t+1}^{l,\omega}\|^2\leq \|x_{t}-x_{t}^{l,\omega}\|^2+4L_gL_f\eta_t\|x_{t}-x_{t}^{l,\omega}\|+4\eta_t^2L_g^2L_f^2.\numberthis \label{i_tequl}
	\end{align*}
 Combining {\bf Case 1} and {\bf Case 2} together, we obtain
 \begin{align*}
      &\|x_{t+1}-x_{t+1}^{l,\omega}\|^2\\
        &\leq \|x_{t}-x_{t}^{l,\omega}\|^2+2C_fL_g\eta_t\left\|y_{t+1}-g_S\left(x_t\right)\right\|\|x_{t}-x_{t}^{l,\omega}\|
    +2C_fL_g\eta_t\|y_{t+1}^{l,\omega}-g_S(x_t^{l,\omega})\|\|x_{t}-x_{t}^{l,\omega}\|\\
        & -2\eta_t\langle \nabla g_{\omega_{j_t}}\left(x_t\right) \nabla f_{\nu_{i_ t}}\left(g_S\left(x_t\right)\right)-\nabla g_S\left(x_t\right) \nabla f_{\nu_{i_ t}}\left(g_S\left(x_t\right)\right),x_t-x_t^{l,\omega}\rangle\mathbb{I}_{[j_t\neq l]}\\
            &-2\eta_t\langle\nabla g_S(x_t^{l,\omega}) \nabla f_{\nu_{i_ t}}(g_S(x_t^{l,\omega}))-\nabla g_{\omega_{j_t}}(x_t^{l,\omega}) \nabla f_{\nu_{i_t}}(g_S(x_t^{l,\omega})),x_t-x_t^{l,\omega}  \rangle\mathbb{I}_{[j_t\neq l]}\\
            &+8L_f^2\eta_t^2\left\|\nabla g_{\omega_{j_t}}\left(x_t\right)-\nabla g_S\left(x_t\right) \right\|^2+8L_f^2\eta_t^2\|\nabla g_{\omega_{j_t}}(x_t^{l,\omega})-\nabla g_S(x_t^{l,\omega}) \|^2\\
             &+4\eta_t^2 C_f^2L_g^2\| y_{t+1}-g_S(x_t)\|^2+4\eta_t^2 C_f^2L_g^2\|g_S(x_t^{l,\omega})-y_{t+1}^{l,\omega}\|^2\\
             &+4\eta_tL_gL_f\|x_{t}-x_{t}^{l,\omega}\|\mathbb{I}_{[j_t=l]}+4\eta_t^2L_g^2 L_f^2\mathbb{I}_{[j_t= l]} .\numberthis \label{comjt}
 \end{align*}
 Taking the expectation w.r.t. $A$ on both sides of \eqref{comjt} yields that 
      \begin{align*}
      &\EX_{A}\bigl[\|x_{t+1}-x_{t+1}^{l,\omega}\|^2\bigr]\\
        &\leq \EX_{A}\bigl[\|x_{t}-x_{t}^{l,\omega}\|^2\bigr]+2C_fL_g\eta_t\EX_{A}\bigl[\|y_{t+1}-g_S(x_t)\|\|x_{t}-x_{t}^{l,\omega}\|\bigr]
    \\ & +2C_fL_g\eta_t\EX_{A}\bigl[\|y_{t+1}^{l,\omega}-g_S(x_t^{l,\omega})\|\|x_{t}-x_{t}^{l,\omega}\|\bigr]\\
        & -2\eta_t\EX_{A}\bigl[\langle \nabla g_{\omega_{j_t}}\left(x_t\right) \nabla f_{\nu_{i_ t}}\left(g_S\left(x_t\right)\right)-\nabla g_S\left(x_t\right) \nabla f_{\nu_{i_ t}}\left(g_S\left(x_t\right)\right),x_t-x_t^{l,\omega}\rangle\mathbb{I}_{[j_t\neq l]}\bigr]\\
            &-2\eta_t\EX_{A}\bigl[\langle\nabla g_S(x_t^{l,\omega}) \nabla f_{\nu_{i_ t}}(g_S(x_t^{l,\omega}))-\nabla g_{\omega_{j_t}}(x_t^{l,\omega}) \nabla f_{\nu_{i_t}}(g_S(x_t^{l,\omega})),x_t-x_t^{l,\omega}  \rangle\mathbb{I}_{[j_t\neq l]}\bigr]\\
            &+8L_f^2\eta_t^2\EX_{A}\bigl[\|\nabla g_{\omega_{j_t}}(x_t)-\nabla g_S(x_t) \|^2]+8L_f^2\eta_t^2\EX_{A}\bigl[\|\nabla g_{\omega_{j_t}}(x_t^{l,\omega})-\nabla g_S(x_t^{l,\omega}) \|^2]\\
             &+4\eta_t^2 C_f^2L_g^2\EX_{A}\bigl[\| y_{t+1}-g_S(x_t)\|^2\bigr]+4\eta_t^2 C_f^2L_g^2\EX_{A}\bigl[\|g_S(x_t^{l,\omega})-y_{t+1}^{l,\omega}\|^2\bigr]\\
        &+4\eta_tL_fL_g\EX_{A}\bigl[\|x_{t}-x_{t}^{l,\omega}\|\mathbb{I}_{[j_t=l]}\bigr]+4\eta_t^2L_g^2 L_f^2\EX_{A}\bigl[\mathbb{I}_{[j_t= l]} \bigr].\numberthis \label{expejt}
 \end{align*}
We will estimate the terms on the right hand side of the above inequality.  To this end, denote 
  \begin{align*}
      T_1:&=\langle \nabla g_{\omega_{j_t}}\left(x_t\right) \nabla f_{\nu_{i_ t}}\left(g_S\left(x_t\right)\right)-\nabla g_S\left(x_t\right) \nabla f_{\nu_{i_ t}}\left(g_S\left(x_t\right)\right),x_t-x_t^{l,\omega}\rangle,\\
      T_2:&=\langle\nabla g_S(x_t^{l,\omega}) \nabla f_{\nu_{i_ t}}(g_S(x_t^{l,\omega}))-\nabla g_{\omega_{j_t}}(x_t^{l,\omega}) \nabla f_{\nu_{i_t}}(g_S(x_t^{l,\omega})),x_t-x_t^{l,\omega}  \rangle.
  \end{align*} 
  Taking the expectation w.r.t. $A$ on  both sides of the above identity, we have 
  \begin{align*}
      \EX_A[T_1]&=\EX_A[\langle \nabla g_{\omega_{j_t}}\left(x_t\right) \nabla f_{\nu_{i_ t}}\left(g_S\left(x_t\right)\right)-\nabla g_S\left(x_t\right) \nabla f_{\nu_{i_ t}}\left(g_S\left(x_t\right)\right),x_t-x_t^{l,\omega}\rangle]\\
      &=\EX_A[\langle \EX_{j_t}[\nabla g_{\omega_{j_t}}\left(x_t\right) \nabla f_{\nu_{i_ t}}\left(g_S\left(x_t\right)\right)]-\nabla g_S\left(x_t\right) \nabla f_{\nu_{i_ t}}\left(g_S\left(x_t\right)\right),x_t-x_t^{l,\omega}\rangle]\\
      &=\EX_A[\langle \nabla g_S\left(x_t\right) \nabla f_{\nu_{i_ t}}\left(g_S\left(x_t\right)\right)-\nabla g_S\left(x_t\right) \nabla f_{\nu_{i_ t}}\left(g_S\left(x_t\right)\right),x_t-x_t^{l,\omega}\rangle]\\
      &=0\numberthis\label{T_1},
  \end{align*}
  where the second identity holds true since $j_t$ is independent of $i_t$ and $x_t$. 
  Therefore,  
  \begin{align*}
      -2\eta_t\EX_A[T_1\mathbb{I}_{[j_t\neq l]} ]&= -2\eta_t\EX_A[T_1\mathbb{I}_{[j_t\neq l]} ]+ 2\eta_t\EX_A[T_1\mathbb{I}_{[j_t= l]} ] -2\eta_t\EX_A[T_1\mathbb{I}_{[j_t= l]} ]\\
      &=(-2\eta_t\EX_A[T_1\mathbb{I}_{[j_t\neq l]} ]-2\eta_t\EX_A[T_1\mathbb{I}_{[j_t= l]}] )+2\eta_t\EX_A[T_1\mathbb{I}_{[j_t= l]} ]\\
      &=-2\eta_t\EX_A[T_1 ]+2\eta_t\EX_A[T_1\mathbb{I}_{[j_t= l]} ]\\
      &=2\eta_t\EX_A[T_1\mathbb{I}_{[j_t= l]} ].\numberthis \label{E_T1}
  \end{align*}

  We further get the following estimation
  \begin{align*}
       &-2\eta_t\EX_{A}\bigl[\langle \nabla g_{\omega_{j_t}}\left(x_t\right) \nabla f_{\nu_{i_ t}}\left(g_S\left(x_t\right)\right)-\nabla g_S\left(x_t\right) \nabla f_{\nu_{i_ t}}\left(g_S\left(x_t\right)\right),x_t-x_t^{l,\omega}\rangle\mathbb{I}_{[j_t\neq l]}\bigr]\\
       &=2\eta_t\EX_{A}\bigl[\langle \nabla g_{\omega_{j_t}}\left(x_t\right) \nabla f_{\nu_{i_ t}}\left(g_S\left(x_t\right)\right)-\nabla g_S\left(x_t\right) \nabla f_{\nu_{i_ t}}\left(g_S\left(x_t\right)\right),x_t-x_t^{l,\omega}\rangle\mathbb{I}_{[j_t= l]}\bigr]\\
         &\leq 2\eta_t\EX_{A}\bigl[\| \nabla g_{\omega_{j_t}}\left(x_t\right) \nabla f_{\nu_{i_ t}}\left(g_S\left(x_t\right)\right)-\nabla g_S\left(x_t\right) \nabla f_{\nu_{i_ t}}\left(g_S\left(x_t\right)\right)\|\|x_t-x_t^{l,\omega}\|\mathbb{I}_{[j_t= l]}\bigr]\\
     &\leq2\eta_t\EX_{A}\bigl[(\| \nabla g_{\omega_{j_t}}\left(x_t\right) \|\|\nabla f_{\nu_{i_ t}}\left(g_S\left(x_t\right)\right)\|+\|\nabla g_S\left(x_t\right)\|\| \nabla f_{\nu_{i_ t}}\left(g_S\left(x_t\right)\right)\|)\|x_t-x_t^{l,\omega}\|\mathbb{I}_{[j_t= l]}\bigr]\\
     &\leq 4\eta_t L_g L_f\EX_{A}\bigl[\|x_t-x_t^{l,\omega}\|\mathbb{I}_{[j_t= l]}\bigr], \numberthis \label{forthE_A}
  \end{align*}
  where the last inequality holds true due to  Assumption \ref{assum:1}. Similar to estimations of \eqref{T_1} , \eqref{E_T1} and \eqref{forthE_A}, one can show that  
  \begin{align*}
   &-2\eta_t\EX_A[T_2\mathbb{I}_{[j_t\neq l]} ]\\ &=-2\eta_t\EX_A[\langle\nabla g_S(x_t^{l,\omega}) \nabla f_{\nu_{i_ t}}(g_S(x_t^{l,\omega}))-\nabla g_{\omega_{j_t}}(x_t^{l,\omega}) \nabla f_{\nu_{i_t}}(g_S(x_t^{l,\omega})),x_t-x_t^{l,\omega}  \rangle\mathbb{I}_{[j_t\neq l]}]\\
      &=2\eta_t\EX_{A}\bigl[\langle \nabla g_S(x_t^{l,\omega}) \nabla f_{\nu_{i_ t}}(g_S(x_t^{l,\omega}))-\nabla g_{\omega_{j_t}}(x_t^{l,\omega}) \nabla f_{\nu_{i_t}}(g_S(x_t^{l,\omega})),x_t-x_t^{l,\omega}\rangle\mathbb{I}_{[j_t= l]}\bigr]\\
      &\leq 4\eta_tL_g L_f\EX_{A}\bigl[\|x_t-x_t^{l,\omega}\|\mathbb{I}_{[j_t= l]}\bigr].\numberthis\label{fifthE_A}
  \end{align*}
Substituting \eqref{forthE_A}
 and \eqref{fifthE_A} into \eqref{expejt} and noting that $C_g$ represents the empirical variance associated with the gradient of the inner function as given in part \ref{part1b} of  Assumption \ref{assum:2}, we obtain 
  \begin{align*}
      \EX_{A}&\bigl[\|x_{t+1}-x_{t+1}^{l,\omega}\|^2\bigr]\\
        \leq &\EX_{A}\bigl[\|x_{t}-x_{t}^{l,\omega}\|^2\bigr]+2C_fL_g\eta_t\EX_{A}\bigl[\|y_{t+1}-g_S(x_t)\|\|x_{t}-x_{t}^{l,\omega}\|\bigr] \\ &+2C_fL_g\eta_t\EX_{A}\bigl[\|y_{t+1}^{l,\omega}-g_S(x_t^{l,\omega})\|\|x_{t}-x_{t}^{l,\omega}\|\bigr]\\
           &+4\eta_t^2 C_f^2L_g^2\EX_{A}\bigl[\|y_{t+1}-g_S\left(x_t\right)\|^2\bigr]+4\eta_t^2 C_f^2L_g^2\EX_{A}\bigl[\|y_{t+1}^{l,\omega}-g_S(x_t^{l,\omega})\|^2\bigr]\\
        &+16\eta_t^2L_f^2C_g+12\eta_tL_gL_f\EX_{A}\bigl[\|x_{t}-x_{t}^{l,\omega}\|\mathbb{I}_{[j_t=l]}\bigr]
        +4\eta_t^2L_g^2 L_f^2\EX_{A}\bigl[\mathbb{I}_{[j_t= l]} \bigr]\\
        \leq& \EX_{A}\bigl[\|x_{t}-x_{t}^{l,\omega}\|^2\bigr]+2C_fL_g\eta_t(\EX_{A}\bigl[\|y_{t+1}-g_S(x_t)\|^2\bigr])^{1/2}(\EX_{A}\bigl[\|x_{t}-x_{t}^{l,\omega}\|^2\bigr])^{1/2}\\
    &+2C_fL_g\eta_t(\EX_{A}\bigl[\|y_{t+1}^{l,\omega}-g_S(x_t^{l,\omega})\|^2\bigr])^{1/2}(\EX_{A}\bigl[\|x_{t}-x_{t}^{l,\omega}\|^2\bigr])^{1/2}\\
        &+4\eta_t^2 C_f^2L_g^2\EX_{A}\bigl[\|y_{t+1}-g_S\left(x_t\right)\|^2\bigr]+4\eta_t^2 C_f^2L_g^2\EX_{A}\bigl[\|y_{t+1}^{l,\omega}-g_S(x_t^{l,\omega})\|^2\bigr]\\
        &+16\eta_t^2L_f^2C_g+12\eta_t L_gL_f\EX_{A}\bigl[\|x_t-x_t^{l,\omega}\|\mathbb{I}_{[j_t= l]}\bigr]
        +4\eta_t^2L_g^2 L_f^2\EX_{A}\bigl[\mathbb{I}_{[j_t= l]} \bigr],\numberthis \label{aexpejt}     
 \end{align*}
 where the second inequality holds by the Cauchy-Schwarz inequality. Observe that 
 \begin{align*}
     \EX_{A}\bigl[\|x_t-x_t^{l,\omega}\|\mathbb{I}_{[j_t= l]}\bigr] & =\EX_{A}\bigl[\|x_t-x_t^{l,\omega}\|\EX_{j_t}[\mathbb{I}_{[j_t= l]}]\bigr]\\ 
     & =\frac{1}{m}\EX_{A}\bigl[\|x_t-x_t^{l,\omega}\|\bigr]\leq\frac{1}{m}(\EX_{A}\bigl[\|x_t-x_t^{l,\omega}\|^2\bigr])^{1/2}.
 \end{align*}
Note that $\|x_{0}-x_{0}^{l,\omega}\|^2=0$. Combining the above two estimations together implies that  
  \begin{align*}
      &\EX_{A}\bigl[\|x_{t+1}-x_{t+1}^{l,\omega}\|^2\bigr]\leq 2C_fL_g\sum_{i=1}^{t}\eta_i(\EX_{A}\bigl[\|y_{i+1}-g_S(x_i)\|^2\bigr])^{1/2}(\EX_{A}\bigl[\|x_{i}-x_{i}^{l,\omega}\|^2\bigr])^{1/2}\\
    &+2C_fL_g\sum_{i=1}^{t}\eta_i(\EX_{A}\bigl[\|y_{i+1}^{l,\omega}-g_S(x_i^{l,\omega})\|^2\bigr])^{1/2}(\EX_{A}\bigl[\|x_{i}-x_{i}^{l,\omega}\|^2\bigr])^{1/2}\\
        &+4\sum_{i=0}^{t}\eta_i^2 C_f^2L_g^2\EX_{A}\bigl[\|y_{i+1}-g_S\left(x_i\right)\|^2\bigr]+4\sum_{i=0}^{t}\eta_i^2 C_f^2L_g^2\EX_{A}\bigl[\|y_{i+1}^{l,\omega}-g_S\left(x_i^{l,\omega}\right)\|^2\bigr]\\
        &+16L_f^2C_g\sum_{i=0}^{t}\eta_i^2+\frac{12L_gL_f}{m}\sum_{i=1}^{t}\eta_i(\EX_{A}\bigl[\|x_{i}-x_{i}^{l,\omega}\|^2\bigr])^{1/2}
        +\frac{4L_g^2 L_f^2}{m}\sum_{i=0}^{t}\eta_i^2.  
 \end{align*}
Again, for notational convenience, let  $u_{t}=(\EX_{A}\bigl[\|x_{t}-x_{t}^{l,\omega}\|^2\bigr])^{1/2}$. The above estimation can be rewritten as 
 \begin{align*}\label{inter1-yy}
      u_{t}^2&\leq 2C_fL_g\sum_{i=1}^{t-1}\eta_i(\EX_{A}\bigl[\|y_{i+1}-g_S(x_i)\|^2\bigr])^{1/2}u_i+2C_fL_g\sum_{i=1}^{t-1}\eta_i(\EX_{A}\bigl[\|y_{i+1}^{l,\omega}-g_S(x_i^{l,\omega})\|^2\bigr])^{1/2}u_i\\
        &+4\sum_{i=0}^{t-1}\eta_i^2 C_f^2L_g^2\EX_{A}\bigl[\|y_{i+1}-g_S\left(x_i\right)\|^2\bigr]+4\sum_{i=0}^{t-1}\eta_i^2 C_f^2L_g^2\EX_{A}\bigl[\|y_{i+1}^{l,\omega}-g_S(x_i^{l,\omega})\|^2\bigr]\\
        &+16L_f^2C_g\sum_{i=0}^{t-1}\eta_i^2+\frac{12L_f L_g}{m}\sum_{i=1}^{t-1}\eta_i u_i
        +\frac{4L_g^2 L_f^2}{m}\sum_{i=0}^{t-1}\eta_i^2. \numberthis 
 \end{align*}
 We will use Lemma \ref{recursion lemma} to get the desired estimation.  For this purpose, define  
\begin{align*}
    S_{t}&=4\sum_{i=0}^{t-1}\eta_i^2 C_f^2L_g^2\EX_{A}\bigl[\|y_{i+1}-g_S\left(x_i\right)\|^2\bigr]+4\sum_{i=0}^{t-1}\eta_i^2 C_f^2L_g^2\EX_{A}\bigl[\|y_{i+1}^{l,\omega}-g_S(x_i^{l,\omega})\|^2\bigr]\\
           &+16L_f^2C_g\sum_{i=0}^{t-1}\eta_i^2+\frac{4L_g^2 L_f^2}{m}\sum_{i=0}^{t-1}\eta_i^2,\\
 \alpha_i&=2C_fL_g\eta_i(\EX_{A}\bigl[\|y_{i+1}-g_S(x_i)\|^2\bigr])^{1/2}+2C_fL_g\eta_i(\EX_{A}\bigl[\|y_{i+1}^{l,\omega}-g_S(x_i^{l,\omega})\|^2\bigr])^{1/2}+\frac{12L_gL_f}{m}\eta_i. 
\end{align*}
Now applying Lemma \ref{recursion lemma} with $u_{t}$, $S_{t}$ and $\alpha_i$ define as above to \eqref{inter1-yy}, we get
\begin{align*}
    u_{t}&\leq\sqrt{S_{t}}+\sum_{i=1}^{t-1}\alpha_i\\
    &\leq 2C_fL_g\sum_{i=1}^{t-1}\eta_i(\EX_{A}\bigl[\|y_{i+1}-g_S(x_i)\|^2\bigr])^{1/2}+2C_fL_g\sum_{i=1}^{t-1}\eta_i(\EX_{A}\bigl[\|y_{i+1}^{l,\omega}-g_S(x_i^{l,\omega})\|^2\bigr])^{1/2}\\
        &+(4\sum_{i=0}^{t-1}\eta_i^2 C_f^2L_g^2\EX_{A}\bigl[\|y_{i+1}-g_S\left(x_i\right)\|^2\bigr])^{1/2}+(4\sum_{i=0}^{t-1}\eta_i^2 C_f^2L_g^2\EX_{A}\bigl[\|y_{i+1}^{l,\omega}-g_S(x_i^{l,\omega})\|^2\bigr])^{1/2}\\
        &+(16L_f^2 C_g\sum_{i=0}^{t-1}\eta_i^2)^{1/2}+\frac{12L_fL_g}{m}\sum_{i=1}^{t-1}\eta_i
        +(\frac{4L_g L_f}{m}\sum_{i=0}^{t-1}\eta_i^2)^{1/2}\\
        &\leq 
       4C_fL_g\sum_{i=0}^{t-1}\eta_j(\EX_{A}\bigl[\|y_{i+1}-g_S(x_i)\|^2\bigr])^{1/2}
        +4C_fL_g\sum_{i=0}^{t-1}\eta_i(\EX_{A}\bigl[\|y_{i+1}^{l,\omega}-g_S(x_i^{l,\omega})\|^2\bigr])^{1/2}\\
        &+(16L_f^2C_g\sum_{i=0}^{t-1}\eta_i^2)^{1/2}+\frac{12L_gL_f}{m}\sum_{i=0}^{t-1}\eta_i
        +(\frac{4L_g^2 L_f^2}{m}\sum_{i=0}^{t-1}\eta_i^2)^{1/2},\numberthis \label{aeq:stability_delta}
\end{align*}
where the second inequality uses the fact that $(\sum_{i=1}^{4}a_i)^{1/2}\leq \sum_{i=1}^{4}(a_i)^{1/2}$ and the last inequality holds by the fact that $ 
    \bigl(4C_f^2L_g^2 \sum_{i=0}^{t-1}\eta_i^2  \EX_{A}\bigl[\left\|y_{i+1}-g_S\left(x_i\right)\right\|^2\bigr]\bigr)^{1/2}\leq2C_fL_g\sum_{i=0}^{t-1}\eta_i(\EX_{A}\bigl[\|y_{i+1}-g_S(x_i)\|^2\bigr])^{1/2}$  and $ 
    (4\sum_{i=0}^{t-1}\eta_i^2 C_f^2L_g^2\EX_{A}\bigl[\|y_{i+1}^{l,\omega}-g_S(x_i^{l,\omega})\|^2\bigr])^{1/2}\leq 2C_fL_g\sum_{i=0}^{t-1}\eta_i(\EX_{A}\bigl[\|y_{i+1}^{l,\omega}-g_S(x_i^{l,\omega})\|^2\bigr])^{1/2}.$ 
    
If $\eta_i =\eta$,  note that $\eta\sum_{i=0}^{T-1} (\mathbb{E}_{A}[\|y_{j+1}-g_S(x_i)\|^2])^{1/2}\leq\sup_{S} \eta\sum_{i=0}^{T-1} (\mathbb{E}_{A}[\|y_{i+1}-g_S(x_i)\|^2])^{1/2}$ and $\eta\sum_{i=0}^{T-1} (\mathbb{E}_{A}[\|y_{i+1}^{l,\omega}-g_S(x_i^{l,\omega})\|^2])^{1/2}\leq \sup_{S} \eta\sum_{i=0}^{T-1} (\mathbb{E}_{A}[\|y_{i+1}-g_S(x_i)\|^2])^{1/2}$. Consequently, with $T$ iterations, we further obtain that
\begin{align*}
  u_T\leq& 8C_fL_g\sup_{S} \eta\sum_{i=0}^{T-1} (\mathbb{E}_{A}[\|y_{i+1}-g_S(x_i)\|^2])^{1/2}+(16L_f^2C_g\sum_{i=0}^{T-1}\eta^2)^{1/2}+\frac{12L_fL_g}{m}\sum_{i=0}^{T-1}\eta\\
        &+(\frac{4L_g^2 L_f^2}{m}\sum_{i=0}^{T-1}\eta^2)^{1/2}\\
        \leq&8C_fL_g\sup_{S} \eta\sum_{i=0}^{T-1} (\mathbb{E}_{A}[\|y_{i+1}-g_S(x_i)\|^2])^{1/2}+4L_f\sqrt{C_g}\eta\sqrt{T}+\frac{14L_gL_f}{m}\eta T. 
\end{align*}
where the last inequality holds by the fact that 
$\bigl(\frac{4L_f^2L_g^2}{m}\sum_{i=0}^{T-1}\eta^2\bigr)^{1/2}= \frac{2L_fL_g}{\sqrt{m}}\eta\sqrt{T}\leq  \frac{2L_fL_g}{m}\eta T$ because often we have $T\ge m$. Noting that  
$\mathbb{E}_{A}\bigl[\|x_{T}-x_{T}^{l,\omega}\|\bigr]\leq u_T=(\mathbb{E}_{A}\bigl[\|x_{T}-x_{T}^{l,\omega}\|^2\bigr])^{1/2}$, we further get
\begin{align*}  \label{eq:stability_omega:3}
\mathbb{E}_{A}\bigl[\|x_{T}-x_{T}^{l,\omega}\|\bigr] 
   & \leq 8C_fL_g\sup_{S} \eta\sum_{i=0}^{T-1} (\mathbb{E}_{A}[\|y_{i+1}-g_S(x_i)\|^2])^{1/2}\\ &+4L_f\sqrt{C_g}\eta\sqrt{T}+\frac{14L_fL_g}{m}\eta T. 
   \numberthis
\end{align*}
Equivalently,  
\begin{align*}
\mathbb{E}_{A}\bigl[\|x_{T}-x_{T}^{l,\omega}\|\bigr] =  \mathcal{O}\Bigl(\frac{L_fL_g}{m} \eta T 
+L_f\sqrt{C_g}\eta\sqrt{T} + \sup_{S}C_fL_g\sum_{i=0}^{T-1} \eta \bigl(\mathbb{E}_{A}[\| y_{i+1}-g_S(x_i)\|^2 ] \bigr)^{1\over 2}\Bigr).
\end{align*}
Now we combine the above results for estimating
$\mathbb{E}_{A}\bigl[\|x_{T}-x_{T}^{k,\nu}\|\bigr]$ and $\mathbb{E}_{A}\bigl[\|x_{T}-x_{T}^{l,\omega}\|\bigr]$ and conclude that 
\begin{align}\label{convexorder}\displaystyle
    \epsilon_{\nu}+\epsilon_{\omega}=  \mathcal{O}\Bigl( \frac{L_fL_g}{n} \eta T  
+\frac{L_fL_g}{m} \eta T+L_f\sqrt{C_g}\eta\sqrt{T} +C_fL_g \sup_{S} \sum_{j=0}^{T-1} \eta \bigl(\mathbb{E}_{A}[\| y_{j+1}-g_S(x_j)\|^2 ] \bigr)^{1\over 2}\Bigr).
\end{align}
The proof is completed. 
      \end{proof}
Next we move on to the proof of Corollary \ref{cor:1}
\begin{proof}[Proof of Corollary \ref{cor:1}]
    Considering the constant step size $\eta_t=\eta$, and with the result of the SCGD update in Lemma \ref{Mengdi-lemma}, we have  
    \begin{align*}
        \epsilon_\nu+\epsilon_{\omega}&= \mathcal{O}(\eta T n^{-1}+\eta T m^{-1}+\eta T^{\frac{1}{2}}+\eta\sum_{j=1} ^{T-1}(j^{-c/2}\beta^{-c/2}+\eta/\beta+\beta^{1/2}))\\
        &=\mathcal{O}(\eta T n^{-1}+\eta T m^{-1}+\eta T^{\frac{1}{2}}+\eta T^{-c/2+1}\beta^{-c/2}+\eta^2 \beta^{-1}T+\eta\beta^{1/2}T).
    \end{align*}
    %As long as $c > 2$, $\eta^2\beta^{-1}T$ controls the term $\eta T^{-c/2+1}\beta^{-c/2}$, hence the dominating terms are $\mathcal{O}(\eta T n^{-1}+\eta T m^{-1}+\eta T^{\frac{1}{2}}+\eta^2 \beta^{-1}T+\eta\beta^{1/2}T)$.
    
    With the result of the SCSC update in Lemma \ref{Mengdi-lemma}, we have 
    \begin{align*}
        \epsilon_\nu+\epsilon_{\omega}&= \mathcal{O}(\eta T n^{-1}+\eta T m^{-1}+\eta T^{\frac{1}{2}}+\eta\sum_{j=1} ^{T-1}(j^{-c/2}\beta^{-c/2}+\eta\beta^{-\frac{1}{2}}+\beta^{1/2}))\\
        &=\mathcal{O}(\eta T n^{-1}+\eta T m^{-1}+\eta T^{\frac{1}{2}}+\eta T^{-c/2+1}\beta^{-c/2}+\eta^2 \beta^{-\frac{1}{2}}T+\eta\beta^{1/2}T).
    \end{align*}
    
%As long as $c > 2$, the dominating terms are $\mathcal{O}(\eta T n^{-1}+\eta T m^{-1}+\eta T^{\frac{1}{2}}+\eta^2 \beta^{-\frac{1}{2}}T+\eta\beta^{1/2}T)$. The proof is completed.
\end{proof}
\subsection{Optimization}\label{pf_opt_cv}

\begin{lemma} \label{lem:opt_convex:1}
  Suppose Assumptions \ref{assum:1} and \ref{assum:2} \ref{part2} holds for the empirical risk $F_S$, By running Algorithm \ref{alg:1}, we have for any \(\gamma_t> 0\)
  \begin{align*}  \label{eq:opt_convex:1}
      \EX_A[\|x_{t+1}- x_*^S\|^2| \mathcal{F}_t] 
      \leq& \bigl(1+ \frac{C_fL_g^2\eta_t}{\gamma_t}\bigr)\|x_t- x_*^S\|^2+ L_f^2L_g^2\eta_t^2 - 2\eta_t(F_S(x_t)- F_S(x_*^S)) \\ & + \gamma_t C_f\eta_t\EX_A[\|g_S(x_t)- y_{t+1}\|^2| \mathcal{F}_t]. \numberthis 
  \end{align*}
  where \(\mathcal{F}_t\) is the \(\sigma\)-field generated by \(\{\omega_{j_0}, \ldots, \omega_{j_{t-1}}, \nu_{i_0}, \ldots, \nu_{i_{t- 1}}\}\).
\end{lemma}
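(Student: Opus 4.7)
The plan is to start from the standard one-step descent argument: by the non-expansiveness of $\Pi_\X$ and the update rule, we have
\begin{align*}
\|x_{t+1}- x_*^S\|^2 \le\;& \|x_t- x_*^S\|^2 -2\eta_t \langle \nabla g_{\omega_{j_t}}(x_t)\nabla f_{\nu_{i_t}}(y_{t+1}),\, x_t- x_*^S\rangle \\
&+ \eta_t^2 \|\nabla g_{\omega_{j_t}}(x_t)\nabla f_{\nu_{i_t}}(y_{t+1})\|^2.
\end{align*}
The squared-norm term is immediate: by the Lipschitz properties in Assumption \ref{assum:1}, $\|\nabla g_{\omega_{j_t}}(x_t)\|\le L_g$ and $\|\nabla f_{\nu_{i_t}}(y_{t+1})\|\le L_f$, producing the $L_f^2 L_g^2 \eta_t^2$ term. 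Everything else hinges on conditionally expecting the cross term.

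The key manoeuvre is to exploit the independence of $\nu_{i_t}$ from $(\omega_{j_t},\mathcal{F}_t)$ and then decompose the inner gradient around the ``true'' argument $g_S(x_t)$. First take expectation over $\nu_{i_t}$ to replace $\nabla f_{\nu_{i_t}}(y_{t+1})$ with $\nabla f_S(y_{t+1})$, and then write
\begin{equation*}
\nabla g_{\omega_{j_t}}(x_t)\nabla f_S(y_{t+1}) = \nabla g_{\omega_{j_t}}(x_t)\nabla f_S(g_S(x_t)) + \nabla g_{\omega_{j_t}}(x_t)\bigl(\nabla f_S(y_{t+1})- \nabla f_S(g_S(x_t))\bigr).
\end{equation*}
Taking expectation over $\omega_{j_t}$ in the first piece gives $\nabla g_S(x_t)\nabla f_S(g_S(x_t)) = \nabla F_S(x_t)$, after which convexity of $F_S$ yields the target term $-2\eta_t (F_S(x_t)- F_S(x_*^S))$.

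The residual piece is where the tracking error $\|y_{t+1}- g_S(x_t)\|$ enters and is the only real obstacle, because $y_{t+1}$ is correlated with $\omega_{j_t}$ through the moving-average update, so we cannot simply centre it. Instead we bound it deterministically: Cauchy--Schwarz combined with the $L_g$-Lipschitzness of $g_\omega$ (giving $\|\nabla g_{\omega_{j_t}}(x_t)\|\le L_g$) and the $C_f$-Lipschitzness of $\nabla f_S$ from Assumption \ref{assum:2}\ref{part2} gives
\begin{equation*}
\bigl|\langle \nabla g_{\omega_{j_t}}(x_t)(\nabla f_S(y_{t+1})- \nabla f_S(g_S(x_t))),\, x_t- x_*^S\rangle\bigr| \le L_g C_f \|y_{t+1}- g_S(x_t)\|\,\|x_t- x_*^S\|.
\end{equation*}
Young's inequality $2ab \le a^2/\gamma_t + \gamma_t b^2$ with $a = L_g\|x_t- x_*^S\|$ and $b = \|y_{t+1}- g_S(x_t)\|$ then splits this cleanly into a $\frac{C_f L_g^2 \eta_t}{\gamma_t}\|x_t- x_*^S\|^2$ contribution (absorbed into the leading coefficient on $\|x_t- x_*^S\|^2$) and a $\gamma_t C_f \eta_t \EX[\|y_{t+1}- g_S(x_t)\|^2\mid\mathcal{F}_t]$ contribution, which are exactly the two remaining terms of \eqref{eq:opt_convex:1}. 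Collecting everything completes the proof; the free parameter $\gamma_t$ will later be tuned against the tracking-error bound from Lemma \ref{Mengdi-lemma}.
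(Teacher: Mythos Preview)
Your proposal is correct and follows essentially the same route as the paper: non-expansiveness of the projection, the Lipschitz bound $L_f^2L_g^2\eta_t^2$ on the squared gradient norm, the decomposition of the cross term around $g_S(x_t)$ so that the conditional expectation produces $\nabla F_S(x_t)$ and convexity yields $-2\eta_t(F_S(x_t)-F_S(x_*^S))$, and finally Cauchy--Schwarz plus Young with parameter $\gamma_t$ on the residual. The only cosmetic difference is that you average over $\nu_{i_t}$ first (replacing $\nabla f_{\nu_{i_t}}$ by $\nabla f_S$) before decomposing, whereas the paper decomposes with $\nu_{i_t}$ still present and takes the joint conditional expectation in one shot; since the $C_f$-Lipschitz bound holds uniformly in $\nu$, both orderings give identical estimates.
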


The proof of Lemma \ref{lem:opt_convex:1} is deferred to the end of this subsection. Now we are ready to prove the convergence of Algorithm \ref{alg:1} for the convex case.
\begin{proof}[Proof of Theorem \ref{thm:opt_convex}]
  We first present the proof for the SCGD update. Taking the total expectation with respect to the internal randomness of \(A\) on both sides of \eqref{eq:opt_convex:1}, we get
    \begin{align*} \label{eq:opt_convex:2}
        \mathbb{E}_A[\|x_{t+1}- x_*^S\|^2]\leq& \mathbb{E}_A[\|x_t- x_*^S\|^2]+ L_f^2L_g^2\eta_t^2- 2\eta_t\mathbb{E}_A[F_S(x_t)- F_S(x_*^S)] \\
        &+ \gamma_t C_f\eta_t\mathbb{E}_A[\|g_S(x_t)- y_{t+1}\|^2]+ \frac{C_fL_g^2\eta_t}{\gamma_t}\mathbb{E}_A[\|x_t- x_*^S\|^2].
        \numberthis
    \end{align*}
  Setting \(\eta_t= \eta, \beta_t= \beta\) and $\gamma_t= \frac{\beta_t}{\eta_t}= \frac{\beta}{\eta}$, plugging Lemma \ref{Mengdi-lemma} into \eqref{eq:opt_convex:2}, we have
  \begin{align*}
    \mathbb{E}_A[\|x_{t+1}- x_*^S\|^2]\leq& \mathbb{E}_A[\|x_t- x_*^S\|^2]+ L_f^2L_g^2\eta^2- 2\eta \mathbb{E}_A[F_S(x_t)- F_S(x_*^S)] \\
    &+ C_f\beta\left( (\frac{c}{e})^cD_y (t\beta)^{-c} +L_g^3 L_f^2\frac{\eta^2}{\beta^2}+2V_g\beta\right)+ C_fL_g^2\mathbb{E}_A[\|x_t- x_*^S\|^2]\frac{\eta^2}{\beta}.
  \end{align*}
  Setting \(\eta= T^{-a}, \beta= T^{-b}\), telescoping the above inequality for $t= 1, \cdots, T$, and noting that \(\mathbb{E}_A[\|x_t- x_*^S\|^2]\) is bounded by \(D_x\), we get
  % \begin{equation}  \label{eq:opt_convex:11}
  %     \begin{aligned}
  %       2 T^{-a}\sum_{t= 1}^{T} \mathbb{E}_A[F_S(x_t)- F_S(x_*^S)]\leq& D_x+ L_f^2L_g^2 T^{1- 2a}+ (\frac{c}{e})^cC_fD_y T^{-b(1- c)} \sum_{t= 1}^T t^{-c} + 2C_fV_g T^{1- 2b} \\
  %       &+ C_fL_f^2L_g^2^2 T^{1+ b- 2a}+ C_fL_gD_x T^{1+ b- 2a}.
  %     \end{aligned}
  % \end{equation}
  % From the choice of $A(S)$ and the convexity of $F_S$, noting that \(\sum_{t= 1}^{T} t^{-z}= \mathcal{O}(T^{1- z})\) for \(z\in (-1, 0)\) and \(\sum_{t= 1}^{T} t^{-1}= \mathcal{O}(\log T)\), we get
  % \begin{align*}
  %   &\mathbb{E}_A[F_S(A(S))- F_S(x_*^S)] \\
  %   =& \mathcal{O}\left(T^{a- 1}+ T^{-a}+ T^{(1- b)(1- c)+ a- 1}(\log T)^{\mathbb{I}_{c= 1}}+ T^{a- 2b}+ T^{b- a}\right).
  % \end{align*}
  \begin{align*}   \label{eq:opt_convex:11}
    2 \eta\sum_{t= 1}^{T} \mathbb{E}_A[F_S(x_t)- F_S(x_*^S)]\leq& D_x+ L_f^2L_g^2 \eta^2 T+ (\frac{c}{e})^cC_fD_y \beta^{1- c} \sum_{t= 1}^T t^{-c} + 2C_fV_g \beta^2 T \\
    &+ C_fL_f^2L_g^3 \eta^2\beta^{-1} T+ C_fL_g^2D_x \eta^2\beta^{-1} T.
    \numberthis
  \end{align*}
  From the choice of $A(S)$ and the convexity of $F_S$, noting that \(\sum_{t= 1}^{T} t^{-z}= \mathcal{O}(T^{1- z})\) for \(z\in (0, 1)\cup (1, \infty)\) and \(\sum_{t= 1}^{T} t^{-1}= \mathcal{O}(\log T)\), as long as \(c\neq 1\) we get
  \begin{align*}
    &\mathbb{E}_A[F_S(A(S))- F_S(x_*^S)] \\
    =& \mathcal{O}\Bigl(D_x(\eta T)^{-1}+ L_f^2L_g^2\eta+ C_fD_y(\beta T)^{1-c}(\eta T)^{-1}+ C_fV_g\beta^2 \eta^{-1}+ C_fL_f^2L_g^3D_x\eta \beta^{-1}\Bigr).
  \end{align*}
  Then we get the desired result for the SCGD update. Next we present the proof for the SCSC update. Setting \(\eta_t= \eta, \beta_t= \beta\) and $\gamma_t= \frac{1}{\sqrt{\beta_t}}= \frac{1}{\sqrt{\beta}}$, plugging Lemma \ref{Mengdi-lemma} into \eqref{eq:opt_convex:2}, we have
  \begin{align*}
    \mathbb{E}_A[\|x_{t+1}- x_*^S\|^2]\leq& \mathbb{E}_A[\|x_t- x_*^S\|^2]+ L_f^2L_g^2\eta^2- 2\eta \mathbb{E}_A[F_S(x_t)- F_S(x_*^S)] \\
    &+ C_f\frac{\eta}{\sqrt{\beta}}\left( (\frac{c}{e})^cD_y (t\beta)^{-c}+L_g^3 L_f^2\frac{\eta^2}{\beta}+2V_g\beta\right)+ C_fL_g^2\mathbb{E}_A[\|x_t- x_*^S\|^2]\eta\sqrt{\beta}.
  \end{align*}
  Setting \(\eta= T^{-a}, \beta= T^{-b}\), telescoping the above inequality for $t= 1, \cdots, T$, and noting that \(\mathbb{E}_A[\|x_t- x_*^S\|^2]\) is bounded by \(D_x\), we get
  % \begin{align*}
  %   2 T^{-a}\sum_{t= 1}^{T} \mathbb{E}_A[F_S(x_t)- F_S(x_*^S)]\leq& D_x+ L_f^2L_g^2 T^{-2a}+ (\frac{c}{e})^cC_fD_y T^{\frac{b}{2}- a+ bc} \sum_{t= 1}^T t^{-c}+ 2C_fV_g T^{-a- \frac{b}{2}} \\
  %   &+ C_fL_f^2L_g^2^2 T^{\frac{3b}{2}- 3a}+ C_fL_gD_x T^{-a- \frac{b}{2}}.
  % \end{align*}
  % From the choice of $A(S)$ and the convexity of $F_S$, noting that \(\sum_{t= 1}^{T} t^{-z}= \mathcal{O}(T^{1- z})\) for \(z\in (-1, 0)\) and \(\sum_{t= 1}^{T} t^{-1}= \mathcal{O}(\log T)\), we get
  % \begin{align*}
  %   &\mathbb{E}_A[F_S(A(S))- F_S(x_*^S)] \\
  %   =& \mathcal{O}\left(T^{a- 1}+ T^{-a}+ T^{\frac{b}{2}+ (b- 1)c}(\log T)^{\mathbb{I}_{c= 1}}+ T^{-\frac{b}{2}}+ T^{\frac{3b}{2}- 2a}\right).
  % \end{align*}
  \begin{align*}   \label{eq:opt_convex:12}
    2 \eta\sum_{t= 1}^{T} \mathbb{E}_A[F_S(x_t)- F_S(x_*^S)]\leq& D_x+ L_f^2L_g^2 \eta^2 T+ (\frac{c}{e})^cC_fD_y \eta\beta^{-\frac{1}{2}- c} \sum_{t= 1}^T t^{-c} + 2C_fV_g \eta\beta^{\frac{1}{2}} T \\
    &+ C_fL_f^2L_g^3 \eta^3\beta^{-\frac{3}{2}} T+ C_fL_g^2D_x \eta\beta^{\frac{1}{2}} T.
    \numberthis
  \end{align*}
  From the choice of $A(S)$ and the convexity of $F_S$, noting that \(\sum_{t= 1}^{T} t^{-z}= \mathcal{O}(T^{1- z})\) for \(z\in (0, 1)\cup (1, \infty)\) and \(\sum_{t= 1}^{T} t^{-1}= \mathcal{O}(\log T)\), as long as \(c> 2\) we get
  \begin{align*}
    &\mathbb{E}_A[F_S(A(S))- F_S(x_*^S)] \\
    =& \mathcal{O}\Bigl(D_x(\eta T)^{-1}+ L_f^2L_g^2\eta+ C_fD_y(\beta T)^{-c}\beta^{-\frac{1}{2}}+ C_fV_g\beta^{\frac{1}{2}}+ C_fL_f^2L_g^3\eta^2 \beta^{-\frac{3}{2}}+ C_fL_g^2D_x \beta^{\frac{1}{2}}\Bigr).
  \end{align*}
We have completed the proof.
\end{proof}

\begin{proof}[Proof of Lemma \ref{lem:opt_convex:1}]
  From Algorithm \ref{alg:1} we have
    \begin{align*}
      &\|x_{t+1}- x_*^S\|^2 \\
      \leq& \|x_t- \eta_t\nabla g_{\omega_{j_t}}(x_t)\nabla f_{\nu_{i_t}}(y_{t+1})- x_*^S\|^2 \\
      =& \|x_t- x_*^S\|^2+ \eta_t^2\|\nabla g_{\omega_{j_t}}(x_t)\nabla f_{\nu_{i_t}}(y_{t+1})\|^2- 2\eta_t\langle x_t- x_*^S, \nabla g_{\omega_{j_t}}(x_t)\nabla f_{\nu_{i_t}}(y_{t+1})\rangle \\
      =& \|x_t- x_*^S\|^2+ \eta_t^2\|\nabla g_{\omega_{j_t}}(x_t)\nabla f_{\nu_{i_t}}(y_{t+1})\|^2- 2\eta_t\langle x_t- x_*^S, \nabla g_{\omega_{j_t}}(x_t)\nabla f_{\nu_{i_t}}(g_S(x_t))\rangle + u_t, \\
    \end{align*}
  where
  \begin{equation*}
    u_t:= 2\eta_t\langle x_t- x_*^S, \nabla g_{\omega_{j_t}}(x_t)\nabla f_{\nu_{i_t}}(g_S(x_t))- \nabla g_{\omega_{j_t}}(x_t)\nabla f_{\nu_{i_t}}(y_{t+1})\rangle.
  \end{equation*}
  Let \(\mathcal{F}_t\) be the \(\sigma\)-field generated by \(\{\omega_{j_0}, \ldots, \omega_{j_{t-1}}, \nu_{i_0}, \ldots, \nu_{i_{t- 1}}\}\). Taking the expectation with respect to the internal randomness of the algorithm and using Assumption \ref{assum:1}, we have
    \begin{align*} \label{eq:x_{t+1}-x_*2}
      &\EX_A[\|x_{t+1}- x_*^S\|^2| \mathcal{F}_t] \\
      \leq& \|x_t- x_*^S\|^2+ L_f^2L_g^2\eta_t^2- 2\eta_t\EX_A[\langle x_t- x_*^S, \nabla g_{\omega_{j_t}}(x_t)\nabla f_{\nu_{i_t}}(g_S(x_t))\rangle|\mathcal{F}_t]+ \EX_A[u_t| \mathcal{F}_t] \\
      =& \|x_t- x_*^S\|^2+ L_f^2L_g^2\eta_t^2- 2\eta_t\langle x_t- x_*^S, \nabla F_S(x_t)\rangle+ \EX_A[u_t| \mathcal{F}_t] \\
      \leq& \|x_t- x_*^S\|^2+ L_f^2L_g^2\eta_t^2- 2\eta_t(F_S(x_t)- F_S(x_*^S))+ \EX_A[u_t| \mathcal{F}_t], \\
      \numberthis
    \end{align*}
  where the last inequality comes from the convexity of $F_S$. From the Cauchy-Schwarz inequality, Young's inequality, Assumption \ref{assum:1} \ref{assum:1b} and \ref{assum:2} \ref{part2} we have,   for all $\gamma_t> 0$, that   
    \begin{align*}\label{eq:u_t2}
      &2\eta_t\langle x_t- x_*^S, \nabla g_{\omega_{j_t}}(x_t)\nabla f_{\nu_{i_t}}(g_S(x_t))- \nabla g_{\omega_{j_t}}(x_t)\nabla f_{\nu_{i_t}}(y_{t+1})\rangle \\
      \leq& 2\eta_t\|x_t- x_*^S\|\|\nabla g_{\omega_{j_t}}(x_t)\|\|\nabla f_{\nu_{i_t}}(g_S(x_t))- \nabla f_{\nu_{i_t}}(y_{t+1})\| \\
      \leq& 2C_f\eta_t\|x_t- x_*^S\|\|\nabla g_{\omega_{j_t}}(x_t)\|\|g_S(x_t)- y_{t+1}\| \\
      \leq& 2C_f\eta_t\Bigl(\frac{\|x_t- x_*^S\|^2\|\nabla g_{\omega_{j_t}}(x_t)\|^2}{2\gamma_t}+ \frac{\gamma_t}{2}\|g_S(x_t)- y_{t+1}\|^2\Bigr) \\
      \leq& \frac{C_fL_g^2\eta_t}{\gamma_t}\|x_t- x_*^S\|^2+ \gamma_t C_f\eta_t\|g_S(x_t)- y_{t+1}\|^2. \numberthis
    \end{align*}
  Substituting (\ref{eq:u_t2}) into (\ref{eq:x_{t+1}-x_*2}), we get
    \begin{align*}
      \EX_A[\|x_{t+1}- x_*^S\|^2| \mathcal{F}_t] 
      \leq& \Bigl(1+ \frac{C_fL_g^2\eta_t}{\gamma_t}\Bigr)\|x_t- x_*^S\|^2+ L_f^2L_g^2\eta_t^2- 2\eta_t(F_S(x_t)- F_S(x_*^S)) \\ & +  \gamma_t C_f\eta_t\EX_A[\|g_S(x_t)- y_{t+1}\|^2| \mathcal{F}_t]. \numberthis
    \end{align*}
The proof is completed.
\end{proof}

\subsection{Excess Generalization}\label{gene_cv}

\begin{proof}[Proof of Theorem \ref{thm:gen_convex}]
    We first present the proof for the SCGD update. Setting \(\eta_t= \eta, \beta_t= \beta\) for \(\eta, \beta> 0\), from \eqref{eq:stability_delta:3} and \eqref{eq:stability_omega:3} we get for all \(t\)
    \begin{align*}\label{eq:stability_delta:2}
       & \EX_A[\|x_{t}- x_{t}^{k, \nu}\|] 
       \leq 8C_fL_g\sup_{S} \eta\sum_{j=0}^{t-1} (\mathbb{E}_{A}[\|y_{j+1}-g_S(x_j)\|^2])^{1/2}
        +4L_f\sqrt{C_g}\eta\sqrt{t}+\frac{6L_fL_g}{n}\eta t.
        \numberthis 
    \end{align*}
    and 
    \begin{align*}\label{eq:stability_omega:2}
       &\EX_A[\|x_{t}- x_{t}^{l, \omega}\|]  
       \leq 8C_fL_g\sup_{S} \eta\sum_{i=0}^{t-1} (\mathbb{E}_{A}[\|y_{i+1}-g_S(x_i)\|^2])^{1/2}+4L_f\sqrt{C_g}\eta\sqrt{t}+\frac{14L_fL_g}{m}\eta t. \numberthis
    \end{align*}
    Plugging Lemma \ref{Mengdi-lemma} with SCGD update into \eqref{eq:stability_delta:2} and \eqref{eq:stability_omega:2}, then we have
    \begin{align*}
       \EX_A[\|x_{t}- x_{t}^{k, \nu}\|]\leq& 8C_fL_g \eta\sum_{j= 1}^{t- 1} \sqrt{\left(\frac{c}{e}\right)^cD_y (j\beta)^{-c}+  L_fL_g^2\frac{\eta^2}{\beta^2}+2V_g\beta} \\
       &+ 4L_f\sqrt{C_g} \eta \sqrt{t}+ \frac{6L_fL_g}{n} \eta t+ 8C_fL_gD_y\eta.
    \end{align*}
    and
    \begin{align*}
       \EX_A[\|x_{t}- x_{t}^{l, \omega}\|]\leq& 8C_fL_g \eta\sum_{j= 1}^{t- 1} \sqrt{\left(\frac{c}{e}\right)^cD_y (j\beta)^{-c}+  L_fL_g^2\frac{\eta^2}{\beta^2}+2V_g\beta} \\
       &+ 4L_f\sqrt{C_g} \eta \sqrt{t}+ \frac{14L_fL_g}{m} \eta t+ 8C_fL_gD_y\eta.
    \end{align*}
    From the fact that \(\sqrt{a+ b}\leq \sqrt{a}+ \sqrt{b}\) we get
    \begin{align*}
       \EX_A[\|x_{t}- x_{t}^{k, \nu}\|]\leq& 8C_fL_g\sqrt{\left(\frac{c}{e}\right)^cD_y} \eta\beta^{-\frac{c}{2}}\sum_{j= 1}^{t- 1} j^{-\frac{c}{2}}+ 8C_f\sqrt{L_f} L_g^2\frac{\eta^2}{\beta} t+ 8C_fL_g\sqrt{2V_g}\eta\sqrt{\beta} t \\
       &+ 4L_f\sqrt{C_g} \eta \sqrt{t}+ \frac{6L_fL_g}{n} \eta t+ 8C_fL_gD_y\eta.
    \end{align*}
    and
    \begin{align*}
       \EX_A[\|x_{t}- x_{t}^{l, \omega}\|]\leq& 8C_fL_g\sqrt{\left(\frac{c}{e}\right)^cD_y} \eta\beta^{-\frac{c}{2}}\sum_{j= 1}^{t- 1} j^{-\frac{c}{2}}+ 8C_f\sqrt{L_f} L_g^2\frac{\eta^2}{\beta} t+ 8C_fL_g\sqrt{2V_g}\eta\sqrt{\beta} t \\
       &+ 4L_g\sqrt{C_g} \eta \sqrt{t}+ \frac{14L_fL_g}{m} \eta t+ 8C_fL_gD_y\eta.
    \end{align*}
    Thus we get
    \begin{align*}
      &\EX_A[\|x_{t}- x_{t}^{k, \nu}\|]+ 4\EX_A[\|x_{t}- x_{t}^{l, \omega}\|] \\
      \leq& 40C_fL_g\sqrt{\left(\frac{c}{e}\right)^cD_y} \eta\beta^{-\frac{c}{2}}\sum_{j= 1}^t j^{-\frac{c}{2}}+ 40C_f\sqrt{L_f} L_g^2\frac{\eta^2}{\beta} t+ 40C_fL_g\sqrt{2V_g}\eta\sqrt{\beta} t \\
      &+ 20L_f\sqrt{C_g} \eta \sqrt{t}+ \frac{6L_fL_g}{n} \eta t+ \frac{56L_fL_g}{m} \eta t+ 40C_fL_gD_y \eta.
    \end{align*}
    Using Theorem \ref{thm:1}, we have
    \begin{align*} \label{eq:opt_convex:9}
        &\EX_{S, A} \left[F(x_t)- F_S(x_t)\right] \\
        \leq& 40C_fL_g\sqrt{\left(\frac{c}{e}\right)^cD_y}L_fL_g \eta\beta^{-\frac{c}{2}}\sum_{j= 1}^t j^{-\frac{c}{2}}+ 40C_f\sqrt{L_f} L_fL_g^3\frac{\eta^2}{\beta} t+ 40C_f\sqrt{2V_g}L_fL_g^2\eta\sqrt{\beta} t \\
        &+ 20\sqrt{C_g}L_f^2L_g \eta \sqrt{t}+ \frac{6L_fL_g}{n} \eta t+ \frac{56L_fL_g}{m} \eta t+ 40C_fL_gD_y \eta+ L_f\sqrt{\frac{\EX_{S, A}[\var_\omega(g_\omega(x_t))]}{m}}.
        \numberthis
    \end{align*}
    From \eqref{eq:opt_convex:11} we get
      \begin{align*}   \label{eq:opt_convex:10}
        \sum_{t= 1}^{T} \mathbb{E}_{S, A}[F_S(x_t)- F_S(x_*^S)]\leq& D_x \eta^{-1}+ L_fL_g  \eta T+ (\frac{c}{e})^cC_fD_y \eta^{-1}\beta^{1- c} \sum_{t= 1}^T t^{-c} + 2C_fV_g \eta^{-1}\beta^2 T \\
        &+ C_fL_fL_g^2 \eta\beta^{-1}T+ C_fL_gD_x \eta\beta^{-1}T.
        \numberthis
      \end{align*}
    Setting \(\eta= T^{-a}\) and \(\beta= T^{-b}\) in \eqref{eq:opt_convex:9} with \(a, b\in (0, 1]\) and telescoping from \(t= 1, \ldots, T\), then adding the result with \eqref{eq:opt_convex:10}, and using the fact $F_S(x_*^S)\leq F_S(x_*)$, we get
      \begin{align*}   \label{eq:opt_convex:3}
          &\sum_{t= 1}^{T} \mathbb{E}_{S, A}[F(x_t)- F(x_*)] \\
        \leq& 40C_fL_g\sqrt{\left(\frac{c}{e}\right)^cD_y}L_fL_g T^{-a+ \frac{bc}{2}} \sum_{t= 1}^T\sum_{j= 1}^t j^{-\frac{c}{2}}+ 40C_f\sqrt{L_f} L_fL_g^3 T^{b- 2a} \sum_{t= 1}^T t \\
        &+ 40C_f\sqrt{2V_g}L_fL_g^2 T^{-a- \frac{b}{2}} \sum_{t= 1}^T t+ 20\sqrt{C_g}L_f^2L_g T^{-a} \sum_{t= 1}^T \sqrt{t}+ \frac{6L_f^2L_g^2}{n} T^{-a} \sum_{t= 1}^T t \\
        &+ \frac{56L_f^2L_g^2}{m} T^{-a} \sum_{t= 1}^T t+ 40C_fL_gD_y T^{1- a}+ L_f\sum_{t= 1}^T\sqrt{\frac{\EX_{S, A}[\var_\omega(g_\omega(x_t))]}{m}} \\
        &+ D_xT^a+ L_fL_g T^{1- a}+ (\frac{c}{e})^cC_fD_y T^{-b(1- c)+ a} \sum_{t= 1}^T t^{-c} \\
        &+ 2C_fV_g T^{1- 2b+ a}+ C_fL_fL_g^2 T^{1+ b- a}+ C_fL_gD_x T^{1+ b- a}.
        \numberthis
      \end{align*}
    Noting that \(\sum_{t= 1}^{T} t^{-z}= \mathcal{O}(T^{1- z})\) for \(z\in (-1, 0)\cup (-\infty, -1)\) and \(\sum_{t= 1}^{T} t^{-1}= \mathcal{O}(\log T)\), we have
    \begin{align*}
      \sum_{t= 1}^T \sum_{j= 1}^T j^{-\frac{c}{2}}= \mathcal{O}\left(\sum_{t= 1}^T t^{1- \frac{c}{2}} (\log t)^{\mathbb{I}_{c= 2}}\right)= \mathcal{O}\left(T^{2- \frac{c}{2}} (\log T)^{\mathbb{I}_{c= 2}}\right).
    \end{align*}
    With the same derivation we can get the bounds on other terms on the right hand side of \eqref{eq:opt_convex:3}. Then we get
    \begin{align*}
        &\sum_{t= 1}^{T} \mathbb{E}_{S, A}[F(x_t)- F(x_*)] \\
        =& \mathcal{O}\left(T^{2- a- \frac{c(1- b)}{2}} (\log T)^{\mathbb{I}_{c= 2}}+ T^{2+ b- 2a}+ T^{2- a- \frac{b}{2}}+ T^{\frac{3}{2}- a}\right. \\
        &\left.+ n^{-1}T^{2- a}+ m^{-1}T^{2- a}+ T^{1- a}+ m^{-\frac{1}{2}}T+ T^a+ T^{1- a}+ T^{(1- b)(1- c)+ a} (\log T)^{\mathbb{I}_{c= 1}}\right. \\
        &\left.+ T^{1- 2b+ a}+ T^{1+ b- a}\right).
    \end{align*}
    Dividing both sides of \eqref{eq:opt_convex:3} with \(T\), then from the choice of \(A(S)\) we get
    \begin{align*}
        &\EX_{S,A}\Big[F(A(S)) - F(x_*) \Big] \\
        =& \mathcal{O}\left(T^{1- a- \frac{c(1- b)}{2}} (\log T)^{\mathbb{I}_{c= 2}}+ T^{1+ b- 2a}+ T^{1- a- \frac{b}{2}}+ T^{\frac{1}{2}- a}\right. \\
        &\left.+ n^{-1}T^{1- a}+ m^{-1}T^{1- a}+ T^{-a}+ m^{-\frac{1}{2}}+ T^{a- 1}+ T^{- a}+ T^{(1- b)(1- c)+ a- 1} (\log T)^{\mathbb{I}_{c= 1}}\right. \\
        &\left.+ T^{- 2b+ a}+ T^{b- a}\right).
    \end{align*}
    Since \(a, b\in (0, 1]\), as long as we have \(c> 2\), the dominating terms are
    \begin{equation*}
        \mathcal{O}(T^{1- a- \frac{b}{2}}), \quad \mathcal{O}(T^{1+ b- 2a}), \quad \mathcal{O}(n^{-1}T^{1- a}), \quad \mathcal{O}(m^{-1}T^{1- a}), \quad \mathcal{O}(T^{a-1}), \quad \mathcal{O}(T^{a-2b}).
    \end{equation*}
    Setting $a= \frac{6}{7}$ and $b= \frac{4}{7}$ yields
    \begin{equation*}
        \EX_{S,A}\Big[F(A(S)) - F(x_*) \Big]= \mathcal{O}(T^{-\frac{1}{7}}+ \frac{T^{\frac{1}{7}}}{n}+ \frac{T^{\frac{1}{7}}}{m}+ \frac{1}{\sqrt{m}}).
    \end{equation*}
    Setting $T= \mathcal{O}(\max\{n^{3.5}, m^{3.5}\})$ yields the following bound
    \begin{equation*}
        \EX_{S,A}\Big[F(A(S)) - F(x_*) \Big]= \mathcal{O}(\frac{1}{\sqrt{n}}+ \frac{1}{\sqrt{m}}).
    \end{equation*}
    Then we get the desired result for the SCGD update. Next we present the proof for the SCSC update. Plugging Lemma \ref{Mengdi-lemma} with SCSC update into \eqref{eq:stability_delta:2} and \eqref{eq:stability_omega:2}, then we have
    \begin{align*}
       \EX_A[\|x_{t}- x_{t}^{k, \nu}\|]\leq& 8C_fL_g \eta\sum_{j= 1}^{t- 1} \sqrt{\left(\frac{c}{e}\right)^cD_y (j\beta)^{-c}+  L_fL_g^2\frac{\eta^2}{\beta}+2V_g\beta} \\
       &+ 4L_f\sqrt{C_g} \eta \sqrt{t}+ \frac{6L_fL_g}{n} \eta t+ 8C_fL_gD_y\eta.
    \end{align*}
    and
    \begin{align*}
       \EX_A[\|x_{t}- x_{t}^{l, \omega}\|]\leq& 8C_fL_g \eta\sum_{j= 1}^{t- 1} \sqrt{\left(\frac{c}{e}\right)^cD_y (j\beta)^{-c}+  L_fL_g^2\frac{\eta^2}{\beta}+2V_g\beta} \\
       &+ 4L_f\sqrt{C_g} \eta \sqrt{t}+ \frac{14L_fL_g}{m} \eta t+ 8C_fL_gD_y\eta.
    \end{align*}
    From the fact that \(\sqrt{a+ b}\leq \sqrt{a}+ \sqrt{b}\) we get
    \begin{align*}
       \EX_A[\|x_{t}- x_{t}^{k, \nu}\|]\leq& 8C_fL_g\sqrt{\left(\frac{c}{e}\right)^cD_y} \eta\beta^{-\frac{c}{2}}\sum_{j= 1}^{t- 1} j^{-\frac{c}{2}}+ 8C_f\sqrt{L_f} L_g^2\frac{\eta^2}{\sqrt{\beta}} t+ 8C_fL_g\sqrt{2V_g}\eta\sqrt{\beta} t \\
       &+ 4L_f\sqrt{C_g} \eta \sqrt{t}+ \frac{6L_fL_g}{n} \eta t+ 8C_fL_gD_y\eta.
    \end{align*}
    and
    \begin{align*}
       \EX_A[\|x_{t}- x_{t}^{l, \omega}\|]\leq& 8C_fL_g\sqrt{\left(\frac{c}{e}\right)^cD_y} \eta\beta^{-\frac{c}{2}}\sum_{j= 1}^{t- 1} j^{-\frac{c}{2}}+ 8C_f\sqrt{L_f} L_g^2\frac{\eta^2}{\sqrt{\beta}} t+ 8C_fL_g\sqrt{2V_g}\eta\sqrt{\beta} t \\
       &+ 4L_g\sqrt{C_g} \eta \sqrt{t}+ \frac{14L_fL_g}{m} \eta t+ 8C_fL_gD_y\eta.
    \end{align*}
    Thus we get
    \begin{align*}
      &\EX_A[\|x_{t}- x_{t}^{k, \nu}\|]+ 4\EX_A[\|x_{t}- x_{t}^{l, \omega}\|] \\
      \leq& 40C_fL_g\sqrt{\left(\frac{c}{e}\right)^cD_y} \eta\beta^{-\frac{c}{2}}\sum_{j= 1}^t j^{-\frac{c}{2}}+ 40C_f\sqrt{L_f} L_g^2\frac{\eta^2}{\sqrt{\beta}} t+ 40C_fL_g\sqrt{2V_g}\eta\sqrt{\beta} t \\
      &+ 20L_f\sqrt{C_g} \eta \sqrt{t}+ \frac{6L_fL_g}{n} \eta t+ \frac{56L_fL_g}{m} \eta t+ 40C_fL_gD_y \eta.
    \end{align*}
    Using Theorem \ref{thm:1}, we have
    \begin{align*} \label{eq:opt_convex:13}
        &\EX_{S, A} \left[F(x_t)- F_S(x_t)\right] \\
        \leq& 40C_fL_g\sqrt{\left(\frac{c}{e}\right)^cD_y}L_fL_g \eta\beta^{-\frac{c}{2}}\sum_{j= 1}^t j^{-\frac{c}{2}}+ 40C_f\sqrt{L_f} L_fL_g^3\frac{\eta^2}{\sqrt{\beta}} t+ 40C_f\sqrt{2V_g}L_fL_g^2\eta\sqrt{\beta} t \\
        &+ 20\sqrt{C_g}L_f^2L_g \eta \sqrt{t}+ \frac{6L_fL_g}{n} \eta t+ \frac{56L_fL_g}{m} \eta t+ 40C_fL_gD_y \eta+ L_f\sqrt{\frac{\EX_{S, A}[\var_\omega(g_\omega(x_t))]}{m}}.
        \numberthis
    \end{align*}
    From \eqref{eq:opt_convex:12} we get
  \begin{align*}   \label{eq:opt_convex:14}
    \sum_{t= 1}^{T} \mathbb{E}_{S, A}[F_S(x_t)- F_S(x_*^S)]\leq& D_x \eta^{-1}+ L_fL_g \eta T+ (\frac{c}{e})^cC_fD_y \beta^{-\frac{1}{2}- c} \sum_{t= 1}^T t^{-c} + 2C_fV_g \beta^{\frac{1}{2}}T \\
    &+ C_fL_fL_g^2 \eta^2\beta^{-\frac{3}{2}} T+ C_fL_gD_x \beta^{\frac{1}{2}} T.
    \numberthis
  \end{align*}
    Setting \(\eta= T^{-a}\) and \(\beta= T^{-b}\) in \eqref{eq:opt_convex:9} with \(a, b\in (0, 1]\) and telescoping from \(t= 1, \ldots, T\), then adding the result with \eqref{eq:opt_convex:10}, and using the fact $F_S(x_*^S)\leq F_S(x_*)$, we get
  \begin{align*}   \label{eq:opt_convex:15}
      &\sum_{t= 1}^{T} \mathbb{E}_{S, A}[F(x_t)- F(x_*)] 
    \leq  40C_fL_g\sqrt{\left(\frac{c}{e}\right)^cD_y}L_fL_g T^{-a+ \frac{bc}{2}} \sum_{t= 1}^T\sum_{j= 1}^t j^{-\frac{c}{2}} \\ & + 40C_f\sqrt{L_f} L_fL_g^3 T^{\frac{b}{2}- 2a} \sum_{t= 1}^T t + 40C_f\sqrt{2V_g}L_fL_g^2 T^{-a- \frac{b}{2}} \sum_{t= 1}^T t \\ & + 20\sqrt{C_g}L_f^2L_g T^{-a} \sum_{t= 1}^T \sqrt{t} 
    + \frac{6L_f^2L_g^2}{n} T^{-a} \sum_{t= 1}^T t+ \frac{56L_f^2L_g^2}{m} T^{-a} \sum_{t= 1}^T t+ 40C_fL_gD_y T^{1- a} \\
    &+ L_f\sum_{t= 1}^T\sqrt{\frac{\EX_{S, A}[\var_\omega(g_\omega(x_t))]}{m}}+ D_xT^a+ L_fL_g T^{1- a}+ (\frac{c}{e})^cC_fD_y T^{b(\frac{1}{2}+ c)} \sum_{t= 1}^T t^{-c} \\
    &+ 2C_fV_g T^{1- \frac{b}{2}}+ C_fL_fL_g^2 T^{1+ \frac{3}{2}b- 2a}+ C_fL_gD_x T^{1- \frac{b}{2}}.
    \numberthis
  \end{align*}
    Noting that \(\sum_{t= 1}^{T} t^{-z}= \mathcal{O}(T^{1- z})\) for \(z\in (-1, 0)\cup (-\infty, -1)\) and \(\sum_{t= 1}^{T} t^{-1}= \mathcal{O}(\log T)\), we have
    \begin{align*}
      \sum_{t= 1}^T \sum_{j= 1}^T j^{-\frac{c}{2}}= \mathcal{O}\left(\sum_{t= 1}^T t^{1- \frac{c}{2}} (\log t)^{\mathbb{I}_{c= 2}}\right)= \mathcal{O}\left(T^{2- \frac{c}{2}} (\log T)^{\mathbb{I}_{c= 2}}\right).
    \end{align*}
    With the same derivation for estimating other terms on the right hand side of \eqref{eq:opt_convex:15}, we get
    \begin{align*}
        &\sum_{t= 1}^{T} \mathbb{E}_{S, A}[F(x_t)- F(x_*)] 
        = \mathcal{O}\left(T^{2- a- \frac{c(1- b)}{2}} (\log T)^{\mathbb{I}_{c= 2}}+ T^{2+ \frac{b}{2}- 2a}+ T^{2- a- \frac{b}{2}}+ T^{\frac{3}{2}- a}\right. \\
        &\left.+ n^{-1}T^{2- a}+ m^{-1}T^{2- a}+ T^{1- a}+ m^{-\frac{1}{2}}T+ T^a+ T^{1- a}+ T^{1- (1- b)c+ \frac{b}{2}} (\log T)^{\mathbb{I}_{c= 1}} \right. \\
        &\left.+ T^{1- \frac{b}{2}}+ T^{1+ \frac{3}{2}b- 2a}+ T^{1- \frac{b}{2}}\right).
    \end{align*}
    Dividing both sides of \eqref{eq:opt_convex:15} with \(T\), then from the choice of \(A(S)\) we get
    \begin{align*}
        &\EX_{S,A}\Big[F(A(S)) - F(x_*) \Big] 
        = \mathcal{O}\left(T^{1- a- \frac{c(1- b)}{2}} (\log T)^{\mathbb{I}_{c= 2}}+ T^{1+ \frac{b}{2}- 2a}+ T^{1- a- \frac{b}{2}}+ T^{\frac{1}{2}- a}\right. \\
        &\left.+ n^{-1}T^{1- a}+ m^{-1}T^{1- a}+ T^{-a}+ m^{-\frac{1}{2}}+ T^{a- 1}+ T^{- a}+ T^{- (1- b)c+ \frac{b}{2}} (\log T)^{\mathbb{I}_{c= 1}} \right. \\
        &\left.+ T^{- \frac{b}{2}}+ T^{\frac{3}{2}b- 2a}+ T^{- \frac{b}{2}}\right).
    \end{align*}
    Since \(a, b\in (0, 1]\), as long as we have \(c> 4\), the dominating terms are
        $ \mathcal{O}(T^{1- a- \frac{b}{2}}), \quad \mathcal{O}(T^{1+ \frac{b}{2}- 2a}), \quad \mathcal{O}(n^{-1}T^{1- a}), \quad \mathcal{O}(m^{-1}T^{1- a}), \quad  \mathcal{O}(T^{a-1})$,  and $ \mathcal{O}(T^{\frac{3}{2}b- 2a}).$ 
    Setting $a= b= \frac{4}{5}$ yields
    \begin{equation*}
        \EX_{S,A}\Big[F(A(S)) - F(x_*) \Big]= \mathcal{O}(T^{-\frac{1}{5}}+ \frac{T^{\frac{1}{5}}}{n}+ \frac{T^{\frac{1}{5}}}{m}+ \frac{1}{\sqrt{m}}).
    \end{equation*}
   Choosing  $T= \mathcal{O}(\max\{n^{2.5}, m^{2.5}\})$ yields the following bound
    \begin{equation*}
        \EX_{S,A}\Big[F(A(S)) - F(x_*) \Big]= \mathcal{O}(\frac{1}{\sqrt{n}}+ \frac{1}{\sqrt{m}}).
    \end{equation*}
Therefore,  we get the desired result for the SCSC update. The proof is complete. 
\end{proof}

\section{Proof for the Strongly Convex Setting} 

\subsection{Stability}\label{pf_strcon_stab}

\begin{proof}[Proof of Theorem \ref{thm:stab_sconvex}]
 %    Since $f_\nu(g_S(\cdot))$ is strongly convex, then $\varphi\left(x\right)=f_\nu(g_S(x))\left(x\right)-\frac{\sigma}{2}\left\|x\right\|^2$ is convex with $\left(L-\sigma\right)$-smooth. From inequality \eqref{sm+str}, we have
	% \begin{align}
	% &\langle \nabla g_S\left(x_t\right) \nabla f_{\nu_{i_ t}}\left(g_S\left(x_t\right)\right)-\nabla g_S\left(x_t^{k,\nu}\right) \nabla f_{\nu_{i_t}}\left(g_S\left(x_t^{k,\nu}\right)\right),x_t-x_t^{k,\nu}\rangle\notag \\
	% &\geq \frac{L\sigma}{L+\sigma}\left\|x_t-x_t^{k,\nu}\right\|^2+\frac{1}{L+\sigma}\|\nabla g_S\left(x_t\right) \nabla f_{\nu_{i_ t}}\left(g_S\left(x_t\right)\right)-\nabla g_S\left(x_t^{k,\nu}\right) \nabla f_{\nu_{i_t}}\left(g_S\left(x_t^{k,\nu}\right)\right)\|^2.\label{str convex}
 %    \end{align}
    The proof is analogous to the convex case.  For any $k\in [n]$, define $S^{k,\nu}=\{\nu_1,...,\nu_{k-1}, \nu_{k}^{\prime},\nu_{k+1},...,\nu_{n},\omega_1,...,\omega_{m}\}$ as formed from $S_\nu$ by replacing the $k$-th element. For any $l\in [m]$, define $S^{l,\omega}=\{\nu_1,...,\nu_n,\omega_1,...,\omega_{l-1}, \omega_{l}^{\prime},\omega_{l+1},...,\omega_{m}\}$ as formed from $S_\omega$ by replacing the $l$-th element.  Let $\{x_{t+1}\}$ and $\{y_{t+1}\}$ be produced by Algorithm \ref{alg:1} based on $S$,  $\{x_{t+1}^{k,\nu}\}$ and $\{y_{t+1}^{k,\nu}\}$ be produced by Algorithm \ref{alg:1} based on $S^{k,\nu}$,  $\{x_{t+1}^{l,\omega}\}$ and $\{y_{t+1}^{l,\omega}\}$ be produced by Algorithm \ref{alg:1} based on $S^{l,\omega}$. Let $x_0=x_0^{k,\nu}$ and $x_0=x_0^{l,\omega}$ be starting points in $\mathcal{X}$. Since changing one sample data can happen in either $S_\nu$ or $S_\omega$, we need to consider the $\mathbb{E}_{A}\bigl[\|x_{t+1}-x_{t+1}^{k,\nu}\|\bigr]$ and $\mathbb{E}_{A}\bigl[\|x_{t+1}-x_{t+1}^{l,\omega}\|\bigr]$. 
    
    \noindent{\bf Estimation of $\mathbb{E}_{A}\bigl[\|x_{t+1}-x_{t+1}^{k,\nu}\|\bigr]$}
    
   We begin with the estimation of the term $\mathbb{E}_{A}\bigl[\|x_{t+1}-x_{t+1}^{k,\nu}\|\bigr]$.  For this purpose, we will consider two cases, i.e., $i_t\neq k$ and $i_t=k$. 
   
    \smallskip 
    \textbf{\quad Case 1 ($i_t\neq k$). } 
    ~~If $i_t \neq k $, we have
    	\begin{align}
		\|x_{t+1}-x_{t+1}^{k,\nu}\|^2 &\leq \| x_t-\eta_t \nabla g_{\omega_{j_t}}(x_t) \nabla f_{\nu_{i_ t}}(y_{t+1})-x_t^{k,\nu}+\eta_t \nabla g_{\omega_{j_t}}(x_t^{k,\nu}) \nabla f_{\nu_{i_t}}(y_{t+1}^{k,\nu})\|^2\notag	\\
		&= \|x_t-x_t^{k,\nu}\|^2-2 \eta_t\langle \nabla g_{\omega_{j_t}}(x_t) \nabla f_{\nu_{i_ t}}(y_{t+1})-\nabla g_{\omega_{j_t}}(x_t^{k,\nu}) \nabla f_{\nu_{i_t}}(y_{t+1}^{k,\nu}),x_t-x_t^{k,\nu}\rangle\notag \\
		&+\eta_t^2\|\nabla g_{\omega_{j_t}}(x_t) \nabla f_{\nu_{i_ t}}(y_{t+1})-\nabla g_{\omega_{j_t}}(x_t^{k,\nu}) \nabla f_{\nu_{i_t}}(y_{t+1}^{k,\nu})\|^2. \label{str_mid}
	\end{align}
  Taking the expectation w.r.t. $j_t$ on the both sides of \eqref{str_mid} implies that 
  \begin{align*}
     & \EX_{j_t}\bigl[\|x_{t+1}-x_{t+1}^{k,\nu}\|^2 \bigl]\\
		&\leq \EX_{j_t}\bigl[\|x_t-x_t^{k,\nu}\|^2\bigr]-2 \eta_t\EX_{j_t}\bigl[\langle \nabla g_{\omega_{j_t}}(x_t) \nabla f_{\nu_{i_ t}}(y_{t+1})-\nabla g_{\omega_{j_t}}(x_t^{k,\nu}) \nabla f_{\nu_{i_t}}(y_{t+1}^{k,\nu}),x_t-x_t^{k,\nu}\rangle\bigr]\notag \\
		&+\eta_t^2\EX_{j_t}\bigl[\|\nabla g_{\omega_{j_t}}(x_t) \nabla f_{\nu_{i_ t}}(y_{t+1})-\nabla g_{\omega_{j_t}}(x_t^{k,\nu}) \nabla f_{\nu_{i_t}}(y_{t+1}^{k,\nu})\|\bigr].\numberthis \label{strexpectjtstab}
 \end{align*}
 We first estimate the second term on the right hand side of \eqref{strexpectjtstab}. It can be decomposed as
 \begin{align*}
		-&2\eta_t\EX_{j_t}\bigl[\langle \nabla g_{\omega_{j_t}}(x_t) \nabla f_{\nu_{i_ t}}(y_{t+1})-\nabla g_{\omega_{j_t}}(x_t^{k,\nu}) \nabla f_{\nu_{i_t}}(y_{t+1}^{k,\nu}),x_t-x_t^{k,\nu}\rangle \bigr] \\
		=&-2\eta_t\EX_{j_t}\bigl[\langle \nabla g_{\omega_{j_t}}(x_t) \nabla f_{\nu_{i_ t}}(y_{t+1})-\nabla g_{\omega_{j_t}}(x_t) \nabla f_{\nu_{i_ t}}(g_S(x_t)),x_t-x_t^{k,\nu}\rangle \bigr] \\
		&-2\eta_t\EX_{j_t}\bigl[\langle \nabla g_{\omega_{j_t}}(x_t) \nabla f_{\nu_{i_ t}}(g_S(x_t))-\nabla g_S(x_t) \nabla f_{\nu_{i_ t}}(g_S(x_t)),x_t-x_t^{k,\nu}\rangle \bigr] \\
		&-2\eta_t\EX_{j_t}\bigl[\langle\nabla g_S(x_t) \nabla f_{\nu_{i_ t}}(g_S(x_t))-\nabla g_S(x_t^{k,\nu}) \nabla f_{\nu_{i_t}}(g_S(x_t^{k,\nu})),x_t-x_t^{k,\nu} \rangle \bigr] \\
		&-2\eta_t\EX_{j_t}\bigl[\langle\nabla g_S(x_t^{k,\nu}) \nabla f_{\nu_{i_ t}}(g_S(x_t^{k,\nu}))-\nabla g_{\omega_{j_t}}(x_t^{k,\nu}) \nabla f_{\nu_{i_t}}(g_S(x_t^{k,\nu})),x_t-x_t^{k,\nu}  \rangle\bigr]\\
		&-2\eta_t\EX_{j_t}\bigl[\langle\nabla g_{\omega_{j_t}}(x_t^{k,\nu}) \nabla f_{\nu_{i_t}}(g_S(x_t^{k,\nu})) -\nabla g_{\omega_{j_t}}(x_t^{k,\nu}) \nabla f_{\nu_{i_t}}(y_{t+1}^{k,\nu}),x_t-x_t^{k,\nu} \rangle\bigr] .\numberthis \label{strexpandmid}
	\end{align*}
We will estimate the terms on the right hand side of the above equality.     Indeed, from part \ref{part3} of Assumption \ref{assum:2}, we know that $f_\nu(g_S(\cdot))$ is $L$-smooth. This combined with the strongly convexity of $f_\nu(g_S(\cdot))$ and inequality \eqref{sm+str} implied that 
    \begin{align*}
	&\langle \nabla g_S(x_t) \nabla f_{\nu_{i_ t}}(g_S(x_t))-\nabla g_S(x_t^{k,\nu}) \nabla f_{\nu_{i_t}}(g_S(x_t^{k,\nu}), x_t-x_t^{k,\nu}\rangle \\
	&\geq \frac{L\sigma}{L+\sigma}\|x_t-x_t^{k,\nu}\|^2+\frac{1}{L+\sigma}\|\nabla g_S(x_t) \nabla f_{\nu_{i_ t}}(g_S(x_t))-\nabla g_S(x_t^{k,\nu}) \nabla f_{\nu_{i_t}}(g_S(x_t^{k,\nu}))\|^2.\numberthis \label{corstrconvex}
    \end{align*}
Substituting \eqref{sec_forthmid}, \eqref{firstmid}, \eqref{lastmid} and \eqref{corstrconvex} into \eqref{strexpandmid}, we get that  
\begin{align*}\label{strfinal_midterm}
         &-2\eta_t\EX_{j_t}\bigl[\langle \nabla g_{\omega_{j_t}}\left(x_t\right) \nabla f_{\nu_{i_ t}}\left(y_{t+1}\right)-\nabla g_{\omega_{j_t}}(x_t^{k,\nu}) \nabla f_{\nu_{i_t}}(y_{t+1}^{k,\nu}),x_t-x_t^{k,\nu}\rangle\bigr]\\
         &\leq2C_fL_g\eta_t\EX_{j_t}\bigl[\|y_{t+1}-g_S(x_t)\|\bigr]\|x_t-x_t^{k,\nu}\|-\frac{2L\eta_t\sigma}{L+\sigma}\|x_t-x_t^{k,\nu}\|^2\\
            &-2\eta_t\frac{1}{L+\sigma}\|\nabla g_S\left(x_t\right) \nabla f_{\nu_{i_ t}}\left(g_S\left(x_t\right)\right)-\nabla g_S(x_t^{k,\nu}) \nabla f_{\nu_{i_t}}(g_S(x_{t}^{k,\nu}))\|^2\\
         &+2C_fL_g\eta_t\EX_{j_t}\bigl[\|y_{t+1}^{k,\nu}-g_S(x_t^{k,\nu})\|\bigr]\|x_t-x_t^{k,\nu}\|. \numberthis
     \end{align*}
    Furthermore, similar to the argument for \eqref{third_term}, we take the expectation w.r.t. $j_t$ of the third term on the right hand side of \eqref{str_mid} and then obtain that 
    		\begin{align*}
    	&\mathbb{E}_{{j_t}}[\eta_t^2\|\nabla g_{\omega_{j_t}}\left(x_t\right) \nabla f_{\nu_{i_ t}}\left(y_{t+1}\right)-\nabla g_{\omega_{j_t}}(x_t^{k,\nu}) \nabla f_{\nu_{i_t}}(y_{t+1}^{k,\nu})\|^2]\\
     &\leq4\eta_t^2 C_f^2L_g^2  \EX_{j_t}\bigl[\left\|y_{t+1}-g_S\left(x_t\right)\right\|^2\bigr]+4\eta_t^2 C_f^2L_g^2\EX_{j_t}\bigl[\|y_{t+1}^{k,\nu}-g_S(x_t^{k,\nu})\|^2\bigr]+16\eta_t^2L_f^2 C_g\\
        &+4\eta_t^2\|\nabla g_S\left(x_t\right) \nabla f_{\nu_{i_ t}}\left(g_S\left(x_t\right)\right)-\nabla g_S(x_t^{k,\nu}) \nabla f_{\nu_{i_t}}(g_S(x_t^{k,\nu}))\|^2. \numberthis\label{str_third_term}
    		\end{align*}
    Putting  \eqref{strfinal_midterm} and \eqref{str_third_term} back into \eqref{strexpectjtstab} implies that
 \begin{align*}
		&\mathbb{E}_{{j_t}}\bigl[\|x_{t+1}-x_{t+1}^{k,\nu}\|^2\bigr]\\
  &\leq (1-\frac{2L\sigma\eta_t}{L+\sigma}) \|x_t-x_t^{k,\nu}\|^2+2C_fL_g\eta_t\EX_{j_t}\bigl[\|y_{t+1}-g_S(x_t)\|\bigr]\|x_t-x_t^{k,\nu}\|\\
         &+2C_fL_g\eta_t\EX_{j_t}\bigl[\|y_{t+1}^{k,\nu}-g_S(x_t^{k,\nu})\|\bigr]\|x_t-x_t^{k,\nu}\|\\
         &+(4\eta_t^2-2\eta\frac{1}{L+\sigma})\|\nabla g_S\left(x_t\right) \nabla f_{\nu_{i_ t}}\left(g_S\left(x_t\right)\right)-\nabla g_S(x_t^{k,\nu}) \nabla f_{\nu_{i_t}}(g_S(x_t^{k,\nu}))\|^2\\
		&+4\eta_t^2 C_f^2L_g ^2 \EX_{j_t}\bigl[\left\|y_{t+1}-g_S\left(x_t\right)\right\|^2\bigr]+4\eta_t^2 C_f^2L_g^2\EX_{j_t}\bigl[\|y_{t+1}^{k,\nu}-g_S(x_t^{k,\nu})\|^2\bigr]+16\eta_t^2L_f ^2C_g\\
  &\leq (1-\frac{2L\sigma\eta_t}{L+\sigma}) \|x_t-x_t^{k,\nu}\|^2+2C_fL_g\eta_t\EX_{j_t}\bigl[\|y_{t+1}-g_S(x_t)\|\bigr]\|x_t-x_t^{k,\nu}\|\\
       &+2C_fL_g\eta_t\EX_{j_t}\bigl[\|y_{t+1}^{k,\nu}-g_S(x_t^{k,\nu})\|\bigr]\|x_t-x_t^{k,\nu}\|\\
		&+4\eta_t^2 C_f^2L_g^2  \EX_{j_t}\bigl[\left\|y_{t+1}-g_S\left(x_t\right)\right\|^2\bigr]+4\eta_t^2 C_f^2L_g^2\EX_{j_t}\bigl[\|y_{t+1}^{k,\nu}-g_S(x_t^{k,\nu})\|^2\bigr]+16\eta_t^2L_f^2 C_g,
	\end{align*}
where in the second inequality we have used the fact that  $\eta_t\leq\frac{1}{2\left(L+\sigma\right)}$. 

   \textbf{Case 2 ($i_t = k $). }      ~~If $i_t = k $,  in analogy to the argument in \eqref{conveitcase_2}, we have
 \begin{align*}
\EX_{j_t}\bigl[\|x_{t+1}-x_{t+1}^{k,\nu}\|^2\bigr]\leq\|x_t-x_t^{k,\nu}\|^2+4L_gL_f\eta_t\|x_t-x_t^{k,\nu}\|+4L_g^2L_f^2\eta_t^2.\numberthis\label{strconveitcase_2}
\end{align*}
    Combining the results of  {\bf Case 1} and {\bf Case 2} and taking the expectation w.r.t. $A$, we have that
    \begin{align*}
	&\mathbb{E}_{A}[\|x_{t+1}-x_{t+1}^{k,\nu}\|^2]\\
       &\leq\bigl(1-2\eta_t\frac{L\sigma}{L+\sigma}+\frac{2\eta_tL\sigma}{n(L+\sigma)}\bigr)\EX_A\bigl[\|x_t-x_t^{k,\nu}\|^2\bigr] \\ &+2C_fL_g\eta_t(\EX_{A}\bigl[\|y_{t+1}-g_S(x_t)\|^2\bigr])^{1/2}\bigl(\EX_A\bigl[\|x_t-x_t^{k,\nu}\|^2\bigr]\bigr)^{1/2}\\
         &+2C_fL_g\eta_t(\EX_{A}\bigl[\|y_{t+1}^{k,\nu}-g_S(x_t^{k,\nu})\|^2\bigr])^{1/2}\bigl(\EX_A\bigl[\|x_t-x_t^{k,\nu}\|^2\bigr]\bigr)^{1/2}\\
		&+4\eta_t^2 C_f^2L_g^2  \EX_{A}\bigl[\left\|y_{t+1}-g_S\left(x_t\right)\right\|^2\bigr]+4\eta_t^2 C_f^2L_g^2\EX_{A}\bigl[\|y_{t+1}^{k,\nu}-g_S(x_t^{k,\nu})\|^2\bigr]\\
        &+16\eta_t^2L_f^2 C_g+4\eta_tL_gL_f\EX_A\bigl[\|x_{t}-x_{t}^{k,\nu}\|\mathbb{ I } _{[i_t= k]}\bigr]
        +4\eta_t^2L_f^2 L_g^2\EX_A\bigl[\mathbb{ I } _{[i_t= k]}\bigr].\numberthis\label{strknu}
\end{align*}
  Note that  $\eta_t\frac{L\sigma}{L+\sigma}\ge \frac{2\eta_tL\sigma}{n(L+\sigma)}$ as $n \ge 2$.  We further  get that $1-2\eta_t\frac{L\sigma}{L+\sigma}+\frac{2\eta_tL\sigma}{n(L+\sigma)}\leq 1-\eta_t\frac{L\sigma}{L+\sigma}$. Observe that $
        \EX_A\bigl[\|x_{t}-x_{t}^{k,\nu}\|\mathbb{ I } _{[i_t= k]}\bigr]=\frac{1}{n}\EX_A\bigl[\|x_{t}-x_{t}^{k,\nu}\|\bigr]\leq\frac{1}{n}(\EX_A\bigl[\|x_{t}-x_{t}^{k,\nu}\|^2\bigr])^{1/2}.$ 
 If $\eta_t=\eta$, combining the above observations with \eqref{strknu} implies that
    \begin{align*}
     &\mathbb{E}_{A}[\|x_{t+1}-x_{t+1}^{k,\nu}\|^2]\\
     &\leq 
 2C_fL_g\sum_{j=1}^{t}(1-\eta\frac{L\sigma}{L+\sigma})^{t-j}\eta(\EX_{A}\bigl[\|y_{j+1}-g_S(x_j)\|^2\bigr])^{1/2}\bigl(\EX_A\bigl[\|x_j-x_j^{k,\nu}\|^2\bigr]\bigr)^{1/2}\\
         &+2C_fL_g\sum_{j=1}^{t}(1-\eta\frac{L\sigma}{L+\sigma})^{t-j}\eta(\EX_{A}\bigl[\|y_{j+1}^{k,\nu}-g_S(x_j^{k,\nu})\|^2\bigr])^{1/2}\bigl(\EX_A\bigl[\|x_j-x_j^{k,\nu}\|^2\bigr]\bigr)^{1/2}  \\
		&+4C_f^2L_g^2 \sum_{j=0}^{t}(1-\eta\frac{L\sigma}{L+\sigma})^{t-j}\eta^2  \EX_{A}\bigl[\|y_{j+1}-g_S(x_j)\|^2\bigr] \\ &+4C_f^2L_g ^2\sum_{j=0}^{t}(1-\eta\frac{L\sigma}{L+\sigma})^{t-j}\eta^2 \EX_{A}\bigl[\|y_{j+1}^{k,\nu}-g_S(x_j^{k,\nu})\|^2\bigr]  \\
        &+16L_f^2 C_g\sum_{j=0}^{t}(1-\eta\frac{L\sigma}{L+\sigma})^{t-j}\eta^2+\frac{4L_gL_f}{n}\sum_{j=1}^{t}(1-\eta\frac{L\sigma}{L+\sigma})^{t-j}\eta(\EX_A\bigl[\|x_{j}-x_{j}^{k,\nu}\|^2\bigr])^{1/2}\\
        &+\frac{4L_f^2 L_g^2}{n}\sum_{j=0}^{t}(1-\eta\frac{L\sigma}{L+\sigma})^{t-j}\eta^2 . \numberthis \label{recursionjtstconve}
    \end{align*}  
   Again, for notatioanl convenience, let $u_{t}=( \mathbb{E}_{A}[\|x_{t}-x_{t}^{k,\nu}\|^2])^{1/2}$. The above estimation can be equivalently rewritten as
  \begin{align*}
     &u_{t}^2\leq 
 2C_fL_g\sum_{j=1}^{t-1}(1-\eta\frac{L\sigma}{L+\sigma})^{t-j-1}\eta(\EX_{A}\bigl[\|y_{j+1}-g_S(x_j)\|^2\bigr])^{1/2}u_j\\
 &+2C_fL_g\sum_{j=1}^{t-1}(1-\eta\frac{L\sigma}{L+\sigma})^{t-j-1}\eta(\EX_{A}\bigl[\|y_{j+1}^{k,\nu}-g_S(x_j^{k,\nu})\|^2\bigr])^{1/2}u_j \\
		&+4C_f^2L_g^2 \sum_{j=0}^{t-1}(1-\eta\frac{L\sigma}{L+\sigma})^{t-j-1}\eta^2  \EX_{A}\bigl[\|y_{j+1}-g_S(x_j)\|^2\bigr] \\ & +4C_f^2L_g ^2\sum_{j=0}^{t-1}(1-\eta\frac{L\sigma}{L+\sigma})^{t-j-1}\eta^2\EX_{A}\bigl[\|y_{j+1}^{k,\nu}-g_S(x_j^{k,\nu})\|^2\bigr] +16L_f^2 C_g\eta^2\sum_{j=0}^{t-1}(1-\eta\frac{L\sigma}{L+\sigma})^{t-j-1} \\ &+\frac{4L_gL_f}{n}\eta\sum_{j=1}^{t-1}(1-\eta\frac{L\sigma}{L+\sigma})^{t-j-1}u_j +\frac{4L_f^2 L_g^2}{n}\eta^2\sum_{j=0}^{t-1}(1-\eta\frac{L\sigma}{L+\sigma})^{t-j-1}.
         \numberthis \label{brecursionjtstconve}
    \end{align*}
  Note that
  $ 
      16L_f^2 C_g\eta^2\sum_{j=0}^{t-1}(1-\eta\frac{L\sigma}{L+\sigma})^{t-j-1} \leq16L_f^2 C_g\eta^2\frac{L+\sigma}{L\eta\sigma}=16L_f^2 C_g\frac{{L+\sigma}}{L\sigma}\eta$ and $ 
      \frac{4L_f^2 L_g^2}{n}\eta^2\sum_{j=0}^{t-1}(1-\eta\frac{L\sigma}{L+\sigma})^{t-j-1}\leq \frac{4L_f^2 L_g^2}{n}\eta^2\frac{L+\sigma}{L\eta\sigma}=\frac{4L_f^2 L_g^2}{n}\frac{L+\sigma}{L\sigma}\eta.$ 
Furthermore, define 
  \begin{align*}
      S_{t}&=4C_f^2L_g^2 \sum_{j=0}^{t-1}(1-\eta\frac{L\sigma}{L+\sigma})^{t-j-1}\eta^2  \EX_{A}\bigl[\|y_{j+1}-g_S(x_j)\|^2\bigr]\\
      &+4C_f^2L_g ^2\sum_{j=0}^{t-1}(1-\eta\frac{L\sigma}{L+\sigma})^{t-j-1}\eta^2\EX_{A}\bigl[\|y_{j+1}^{k,\nu}-g_S(x_j^{k,\nu})\|^2\bigr]+16L_f^2 C_g\frac{{L+\sigma}}{L\sigma}\eta+\frac{4L_f^2 L_g^2}{n}\frac{L+\sigma}{L\sigma}\eta,\\
        \alpha_j&=2C_fL_g(1-\eta\frac{L\sigma}{L+\sigma})^{t-j-1}\eta(\EX_{A}\bigl[\|y_{j+1}-g_S(x_j)\|^2\bigr])^{1/2}\\
        &+2C_fL_g(1-\eta\frac{L\sigma}{L+\sigma})^{t-j-1}\eta(\EX_{A}\bigl[\|y_{j+1}^{k,\nu}-g_S(x_j^{k,\nu})\|^2\bigr])^{1/2}+\frac{4L_gL_f}{n}(1-\eta\frac{L\sigma}{L+\sigma})^{t-j}\eta . 
  \end{align*}
   Now applying Lemma \ref{recursion lemma} with $u_{t}$, $S_t$ and $\alpha_j$ defined above to \eqref{brecursionjtstconve}, we get
  \begin{align*}
      &u_{t}\leq\sqrt{S_{t}}+\sum_{j=1}^{t-1}\alpha_{j}\\
    &\leq2C_fL_g(\sum_{j=0}^{t-1}  (1-\eta\frac{L\sigma}{L+\sigma})^{t-j-1}\eta^2\EX_{A}\bigl[\left\|y_{j+1}-g_S\left(x_j\right)\right\|^2\bigr] )^{1/2}\\
    &+2C_fL_g(\sum_{j=0}^{t-1}(1-\eta\frac{L\sigma}{L+\sigma})^{t-j-1}\eta^2\EX_{A}\bigl[\|y_{j+1}^{k,\nu}-g_S(x_j^{k,\nu})\|^2\bigr])^{1/2}\\
    &+2C_fL_g\sum_{j=1}^{t-1}(1-\eta\frac{L\sigma}{L+\sigma})^{t-j-1}\eta(\EX_{A}\bigl[\|y_{j+1}-g_S(x_j)\|^2\bigr])^{1/2} \\
    &+2C_fL_g\sum_{j=1}^{t-1}(1-\eta\frac{L\sigma}{L+\sigma})^{t-j-1}\eta(\EX_{A}\bigl[\|y_{j+1}^{k,\nu}-g_S(x_j^{k,\nu})\|^2\bigr])^{1/2} +4L_f\sqrt{ C_g\frac{L+\sigma}{L \sigma}}\sqrt{\eta}\\
    &+2L_gL_f\sqrt{\frac{L+\sigma}{L\sigma}}\sqrt{\frac{\eta}{n}}
    +\frac{4L_gL_f(L+\sigma)}{nL\sigma}\\
   \end{align*}
  where the last inequality uses the fact that $(\sum_{i=1}^{4}a_i)^{1/2}\leq \sum_{i=1}^{4}(a_i)^{1/2}$ and we use the fact that $\frac{4L_gL_f}{n}\eta\sum_{j=1}^{t-1}(1-\eta\frac{L\sigma}{L+\sigma})^{t-j-1}\leq \frac{4L_gL_f(L+\sigma)}{nL\sigma} $. 
   Note that $\EX_{A}\bigl[\|y_{j+1}-g_S(x_j)\|^2\bigr]\leq \sup_{S}  \mathbb{E}_{A}[\|y_{j+1}-g_S(x_j)\|^2] $ and $\EX_{A}\bigl[\|y_{j+1}^{k,\nu}-g_S(x_j^{k,\nu})\|^2\bigr]\leq \sup_{S}  \mathbb{E}_{A}[\|y_{j+1}-g_S(x_j)\|^2]$. 
Consequently, with $T$ iterations, since $\mathbb{E}_{A}\bigl[\|x_{T}-x_{T}^{k,\nu}\|\bigr]\leq u_T=(\mathbb{E}_{A}\bigl[\|x_{T}-x_{T}^{k,\nu}\|^2\bigr])^{1/2}$, we further obtain 
\begin{align*}
    &\mathbb{E}_{A}\bigl[\|x_{T}-x_{T}^{k,\nu}\|\bigr]\leq  4C_fL_g\eta\sup_S(\sum_{j=0}^{T-1} (1-\eta\frac{L\sigma}{L+\sigma})^{T-j-1}\EX_{A}\bigl[\left\|y_{j+1}-g_S\left(x_j\right)\right\|^2\bigr] )^{1/2}\\
    &+4C_fL_g\eta\sup_{S}\sum_{j=0}^{T-1}(1-\eta\frac{L\sigma}{L+\sigma})^{T-j-1}\bigl(\mathbb{E}_{A}[\| y_{j+1}-g_S(x_j)\|^2 ] \bigr)^{1/ 2}+4L_f\sqrt{ C_g\frac{L+\sigma}{L \sigma}}\sqrt{\eta}\\
    &+2L_fL_g\sqrt{\frac{L+\sigma}{L\sigma}}\sqrt{\frac{\eta}{n}}
    +\frac{4L_gL_f(L+\sigma)}{ nL\sigma}.\numberthis \label{eq:stabilitysrtconv}
\end{align*}
  
    \noindent{\bf Estimation of  $\mathbb{E}_{A }[\|x_{t+1}-x_{t+1}^{l,\omega}\|]$}

    Likewise, we will estimate $\mathbb{E}_{A }[\|x_{t+1}-x_{t+1}^{l,\omega}\|]$ by considering two cases, i.e., $j_t\neq l$ and $j_t=l$.

     \textbf{~Case 1 ($j_t\neq l$).} ~~ If $j_t\neq l$, we have
       \begin{align*}
		&\|x_{t+1}-x_{t+1}^{l,\omega}\|^2 \leq \| x_t-\eta_t \nabla g_{\omega_{j_t}}(x_t) \nabla f_{\nu_{i_ t}}(y_{t+1})-x_t^{l,\omega}+\eta_t \nabla g_{\omega_{j_t}}(x_t^{l,\omega}) \nabla f_{\nu_{i_t}}(y_{t+1}^{l,\omega})\|^2	\\
		&= \|x_t-x_t^{l,\omega}\|^2-2 \eta_t\langle \nabla g_{\omega_{j_t}}\left(x_t\right) \nabla f_{\nu_{i_ t}}\left(y_{t+1}\right)-\nabla g_{\omega_{j_t}}(x_t^{l,\omega}) \nabla f_{\nu_{i_t}}(y_{t+1}^{l,\omega}),x_t-x_t^{l,\omega}\rangle \\
		&+\eta_t^2\|\nabla g_{\omega_{j_t}}\left(x_t\right) \nabla f_{\nu_{i_ t}}\left(y_{t+1}\right)-\nabla g_{\omega_{j_t}}(x_t^{l,\omega}) \nabla f_{\nu_{i_t}}(y_{t+1}^{l,\omega})\|^2 .\numberthis \label{strj_tneql}
	\end{align*}
 We first estimate the second term on the right hand side of \eqref{strj_tneql}. It can be decomposed as 
 \begin{align*}
		-&2\eta_t\langle \nabla g_{\omega_{j_t}}(x_t) \nabla f_{\nu_{i_ t}}(y_{t+1})-\nabla g_{\omega_{j_t}}(x_t^{l,\omega}) \nabla f_{\nu_{i_t}}(y_{t+1}^{l,\omega}),x_t-x_t^{l,\omega}\rangle \notag \\
  =& -2\eta_t  \langle \nabla g_{\omega_{j_t}}(x_t) (\nabla f_{\nu_{i_ t}}(y_{t+1})-\nabla f_{\nu_{i_ t}}(g_S(x_t))),x_t-x_t^{l,\omega}\rangle  \\
            &-2\eta_t\langle \nabla g_{\omega_{j_t}}(x_t) \nabla f_{\nu_{i_ t}}(g_S(x_t))-\nabla g_S(x_t) \nabla f_{\nu_{i_ t}}(g_S(x_t)),x_t-x_t^{l,\omega}\rangle\\
		&-2\eta_t\langle\nabla g_S(x_t) \nabla f_{\nu_{i_ t}}(g_S(x_t))-\nabla g_S(x_t^{l,\omega}) \nabla f_{\nu_{i_t}}(g_S(x_{t}^{l,\omega})), x_t-x_t^{l,\omega}\rangle\\
            &-2\eta_t\langle\nabla g_S(x_t^{l,\omega}) \nabla f_{\nu_{i_ t}}(g_S(x_t^{l,\omega}))-\nabla g_{\omega_{j_t}}(x_t^{l,\omega}) \nabla f_{\nu_{i_t}}(g_S(x_t^{l,\omega})),x_t-x_t^{l,\omega} \notag  \rangle\\
		&-2\eta_t\langle\nabla g_{\omega_{j_t}}(x_t^{l,\omega}) (\nabla f_{\nu_{i_ t}}(g_S(x_t^{l,\omega}))-\nabla f_{\nu_{i_t}}(y_{t+1}^{l,\omega})),x_t-x_t^{l,\omega}\rangle
  \numberthis \label{strjtmidterm}.
	\end{align*}
From the strongly convexity of $f_\nu(g_S(\cdot))$, part \ref{part3} of Assumption \ref{assum:2} and inequality \eqref{sm+str}, we have
 \begin{align}
	&\langle \nabla g_S\left(x_t\right) \nabla f_{\nu_{i_ t}}\left(g_S\left(x_t\right)\right)-\nabla g_S(x_t^{l,\omega}) \nabla f_{\nu_{i_t}}(g_S(x_t^{l,\omega})),x_t-x_t^{l,\omega}\rangle\notag \\
	&\geq \frac{L\sigma}{L+\sigma}\|x_t-x_t^{l,\omega}\|^2+\frac{1}{L+\sigma}\|\nabla g_S\left(x_t\right) \nabla f_{\nu_{i_ t}}\left(g_S\left(x_t\right)\right)-\nabla g_S(x_t^{l,\omega}) \nabla f_{\nu_{i_t}}(g_S(x_t^{l,\omega}))\|^2.\label{jt_str_corci}
    \end{align}
Plugging  \eqref{jtcoti_prpety}, \eqref{jtconti_b} and \eqref{jt_str_corci} into \eqref{strjtmidterm} implies that 
  \begin{align*}
		-&2\eta_t\langle \nabla g_{\omega_{j_t}}(x_t) \nabla f_{\nu_{i_ t}}(y_{t+1})-\nabla g_{\omega_{j_t}}(x_t^{l,\omega}) \nabla f_{\nu_{i_t}}(y_{t+1}^{l,\omega}),x_t-x_t^{l,\omega}\rangle \notag \\
  &\leq 2\eta_t C_fL_g\|y_{t+1}-g_S(x_t)\|\|x_t-x_t^{l,\omega}\|+ 2\eta_t C_fL_g\|y_{t+1}^{l,\omega}-g_S(x_t^{l,\omega})\|\|x_t-x_t^{l,\omega}\| \\
            &-2\eta_t\langle \nabla g_{\omega_{j_t}}(x_t) \nabla f_{\nu_{i_ t}}(g_S(x_t))-\nabla g_S(x_t) \nabla f_{\nu_{i_ t}}(g_S(x_t)),x_t-x_t^{l,\omega}\rangle\\
		&-2\eta_t\frac{L\sigma}{L+\sigma}\|x_t-x_t^{l,\omega}\|^2-2\eta_t\frac{1}{L}\|\nabla g_S(x_t) \nabla f_{\nu_{i_ t}}(g_S(x_t))-\nabla g_S(x_t^{l,\omega}) \nabla f_{\nu_{i_t}}(g_S(x_{t}^{l,\omega}))\|^2\\
            &-2\eta_t\langle\nabla g_S(x_t^{l,\omega}) \nabla f_{\nu_{i_ t}}(g_S(x_t^{l,\omega}))-\nabla g_{\omega_{j_t}}(x_t^{l,\omega}) \nabla f_{\nu_{i_t}}(g_S(x_t^{l,\omega})),x_t-x_t^{l,\omega} \notag  \rangle
	 \numberthis \label{strjtfnlmidterm}.
	\end{align*}
 Next we estimate the last term on the right hand side of \eqref{strj_tneql}. Using arguments similar to that for  \eqref{jtthir_term}, we have
 \begin{align*}
		&\eta_t^2\|\nabla g_{\omega_{j_t}}(x_t) \nabla f_{\nu_{i_ t}}(y_{t+1})-\nabla g_{\omega_{j_t}}(x_t^{l,\omega}) \nabla f_{\nu_{i_t}}(y_{t+1}^{l,\omega})\|^2\\
		&\leq 4\eta_t^2 C_f^2L_g^2\|y_{t+1}-g_S(x_t)\|^2+4L_g^2\eta_t^2 C_f^2\| g_S(x_t^{l,\omega})-y_{t+1}^{l,\omega}\|^2\\
		&+8L_f^2\eta_t^2\|\nabla g_{\omega_{j_t}}(x_t)-\nabla g_S(x_t) \|^2+8L_f^2\eta_t^2\|\nabla g_{\omega_{j_t}}(x_t^{l,\omega})-\nabla g_S(x_t^{l,\omega}) \|^2\\
		&+4\eta_t^2\|\nabla g_S(x_t) \nabla f_{\nu_{i_ t}}(g_S(x_t))-\nabla g_S(x_t^{l,\omega}) \nabla f_{\nu_{i_t}}(g_S(x_t^{l,\omega}))\|^2.\numberthis \label{strjtthir_term}
	\end{align*}
Putting \eqref{strjtfnlmidterm} and \eqref{strjtthir_term} into \eqref{strj_tneql} and noting that  $\eta_t\leq\frac{1}{2\left(L+\sigma\right)}$, we get 
    \begin{align*}
		&\|x_{t+1}-x_{t+1}^{l,\omega}\|^2\leq(1-\frac{2L\sigma\eta_t}{L+\sigma}) \|x_{t}-x_{t}^{l,\omega}\|^2 +2\eta_t C_fL_g\|y_{t+1}-g_S(x_t)\|\|x_t-x_t^{l,\omega}\| \\
            &+ 2\eta_t C_fL_g\|y_{t+1}^{l,\omega}-g_S(x_t^{l,\omega})\|\|x_t-x_t^{l,\omega}\|  \\ &  -2\eta_t\langle \nabla g_{\omega_{j_t}}\left(x_t\right) \nabla f_{\nu_{i_ t}}\left(g_S\left(x_t\right)\right)-\nabla g_S\left(x_t\right) \nabla f_{\nu_{i_ t}}\left(g_S\left(x_t\right)\right),x_t-x_t^{l,\omega}\rangle\\
            &-2\eta_t\langle\nabla g_S(x_t^{l,\omega}) \nabla f_{\nu_{i_ t}}(g_S(x_t^{l,\omega}))-\nabla g_{\omega_{j_t}}(x_t^{l,\omega}) \nabla f_{\nu_{i_t}}(g_S(x_t^{l,\omega})),x_t-x_t^{l,\omega} \notag  \rangle.\\
            &+4\eta_t^2 C_f^2L_g^2\left\| y_{t+1}-g_S\left(x_t\right)\right\|^2+4\eta_t^2 C_f^2L_g^2\|g_S(x_t^{l,\omega})-y_{t+1}^{l,\omega}\|^2\\
		&+8L_f^2\eta_t^2\|\nabla g_{\omega_{j_t}}(x_t)-\nabla g_S(x_t) \|^2+8L_f^2\eta_t^2\|\nabla g_{\omega_{j_t}}(x_t^{l,\omega})-\nabla g_S(x_t^{l,\omega}) \|^2.
		\numberthis \label{strj_tfnlneql}
	\end{align*}

  \smallskip 
     \textbf{~ Case 2 ($j_t= l$).}~~ If $j_t= l$, using the argument similar to \eqref{i_tequl}, it is easy to see that 
     \begin{align*}
 &\|x_{t+1}-x_{t+1}^{l,\omega}\|^2\leq \|x_{t}-x_{t}^{l,\omega}\|^2+4L_gL_f\eta_t\|x_{t}-x_{t}^{l,\omega}\|+4\eta_t^2L_g^2L_f^2.\numberthis \label{strj_tequl}
	\end{align*}
Combining \text{\bf Case 1} and \text{\bf Case 2}  and taking the expectation w.r.t. $A$ on both sides and together with part \ref{part1b} of Assumption \ref{assum:2} , we have 
 \begin{align*}
      &\EX_{A}\bigl[\|x_{t+1}-x_{t+1}^{l,\omega}\|^2\bigr]\\
        &\leq(1-\frac{2L\sigma\eta_t}{L+\sigma}+\frac{2\eta_tL\sigma}{m(L+\sigma)})\EX_{A}\bigl[\|x_{t}-x_{t}^{l,\omega}\|^2\bigr]+2C_fL_g\eta_t\EX_{A}\bigl[\|y_{t+1}-g_S(x_t)\|\|x_{t}-x_{t}^{l,\omega}\|\bigr]\\
    &+2C_fL_g\eta_t\EX_{A}\bigl[\|y_{t+1}^{l,\omega}-g_S(x_t^{l,\omega})\|\|x_{t}-x_{t}^{l,\omega}\|\bigr]+4\eta_t^2 C_f^2L_g^2\EX_{A}\bigl[\|y_{t+1}-g_S\left(x_t\right)\|^2\bigr]\\
    &+4\eta_t^2 C_f^2L_g^2\EX_{A}\bigl[\|y_{t+1}^{l,\omega}-g_S(x_t^{l,\omega})\|^2\bigr]+16C_gL_f^2\eta_t^2 \\
    &-2\eta_t\EX_{A}\bigl[\langle \nabla g_{\omega_{j_t}}\left(x_t\right) \nabla f_{\nu_{i_ t}}\left(g_S\left(x_t\right)\right)-\nabla g_S\left(x_t\right) \nabla f_{\nu_{i_ t}}\left(g_S\left(x_t\right)\right),x_t-x_t^{l,\omega}\rangle\mathbb{I}_{[j_t\neq l]}\bigr]\\
            &-2\eta_t\EX_{A}\bigl[\langle\nabla g_S(x_t^{l,\omega}) \nabla f_{\nu_{i_ t}}(g_S(x_t^{l,\omega}))-\nabla g_{\omega_{j_t}}(x_t^{l,\omega}) \nabla f_{\nu_{i_t}}(g_S(x_t^{l,\omega})),x_t-x_t^{l,\omega}  \rangle\mathbb{I}_{[j_t\neq l]}\bigr]\\
        &+4\eta_tL_fL_g\EX_{A}\bigl[\|x_{t}-x_{t}^{l,\omega}\|\mathbb{I}_{[j_t=l]}\bigr]+4\eta_t^2L_g^2L_f^2\EX_{A}\bigl[\mathbb{I}_{[j_t= l]} \bigr].\numberthis \label{strexpejt}
 \end{align*}
 Note that  $\eta_t\frac{L\sigma}{L+\sigma}\ge \frac{2\eta_tL\sigma}{m(L+\sigma)}$ as $m \ge 2$.  We further  get that $1-2\eta_t\frac{L\sigma}{L+\sigma}+\frac{2\eta_tL\sigma}{m(L+\sigma)}\leq 1-\eta_t\frac{L\sigma}{L+\sigma}$. Plugging   \eqref{forthE_A} and \eqref{fifthE_A} into \eqref{strexpejt} implies that 
\begin{align*}
     &\EX_{A}\bigl[\|x_{t+1}-x_{t+1}^{l,\omega}\|^2\bigr]\\
     &\leq(1-\frac{L\sigma\eta_t}{L+\sigma})\EX_{A}\bigl[\|x_{t}-x_{t}^{l,\omega}\|^2\bigr]+2C_fL_g\eta_t\EX_{A}\bigl[\|y_{t+1}-g_S(x_t)\|\|x_{t}-x_{t}^{l,\omega}\|\bigr]\\
    &+2C_fL_g\eta_t\EX_{A}\bigl[\|y_{t+1}^{l,\omega}-g_S(x_t^{l,\omega})\|\|x_{t}-x_{t}^{l,\omega}\|\bigr]+4\eta_t^2 C_f^2L_g^2\EX_{A}\bigl[\|y_{t+1}-g_S\left(x_t\right)\|^2\bigr]\\
    &+4\eta_t^2 C_f^2L_g^2\EX_{A}\bigl[\|y_{t+1}^{l,\omega}-g_S(x_t^{l,\omega})\|^2\bigr]+16C_gL_f^2\eta_t^2 \\      &+12\eta_tL_fL_g\EX_{A}\bigl[\|x_{t}-x_{t}^{l,\omega}\|\mathbb{I}_{[j_t=l]}\bigr]+4\eta_t^2L_g^2L_f^2\EX_{A}\bigl[\mathbb{I}_{[j_t= l]} \bigr]\\
      &\leq(1-\frac{L\sigma\eta_t}{L+\sigma}) \EX_{A}\bigl[\|x_{t}-x_{t}^{l,\omega}\|^2\bigr]+2C_fL_g\eta_t(\EX_{A}\bigl[\|y_{t+1}-g_S(x_t)\|^2\bigr])^{1/2}(\EX_{A}\bigl[\|x_{t}-x_{t}^{l,\omega}\|^2\bigr])^{1/2}\\
    &+2C_fL_g\eta_t(\EX_{A}\bigl[\|y_{t+1}^{l,\omega}-g_S(x_t^{l,\omega})\|^2\bigr])^{1/2}(\EX_{A}\bigl[\|x_{t}-x_{t}^{l,\omega}\|^2\bigr])^{1/2}\\
        &+4\eta_t^2 C_f^2L_g^2\EX_{A}\bigl[\|y_{t+1}-g_S\left(x_t\right)\|^2\bigr]+4\eta_t^2 C_f^2L_g^2\EX_{A}\bigl[\|y_{t+1}^{l,\omega}-g_S(x_t^{l,\omega})\|^2\bigr]+16C_g L_f^2\eta_t^2\\
        &+12\eta_tL_fL_g\EX_{A}\bigl[\|x_{t}-x_{t}^{l,\omega}\|\mathbb{I}_{[j_t=l]}\bigr]
        +4\eta_t^2L_g^2 L_f^2\EX_{A}\bigl[\mathbb{I}_{[j_t= l]} \bigr],
\end{align*}
where the second inequality holds by the Cauchy-Schwarz inequality.  
In addition, observe that   
\begin{align*}
    \EX_{A}\bigl[\|x_{t}-x_{t}^{l,\omega}\|\mathbb{I}_{[j_t=l]}\bigr]&=\EX_{A}\bigl[\|x_{t}-x_{t}^{l,\omega}\|\EX_{j_t}[\mathbb{I}_{[j_t=l]}]\bigr]\\&=\frac{1}{m}\EX_{A}\bigl[\|x_{t}-x_{t}^{l,\omega}\|\bigr]\leq\frac{1}{m}(\EX_{A}\bigl[\|x_{t}-x_{t}^{l,\omega}\|^2\bigr])^{1/2}. 
\end{align*}
If $\eta_t=\eta$, using the above observations, noting $\|x_{0}-x_{0}^{l,\omega}\|^2=0$, we can obtain 
  \begin{align*}
     &\mathbb{E}_{A}[\|x_{t+1}-x_{t+1}^{l,\omega}\|^2]\\
     &\leq 
 2C_fL_g\sum_{i=1}^{t}(1-\frac{L\sigma\eta}{L+\sigma})^{t-i}\eta(\EX_{A}\bigl[\|y_{i+1}-g_S(x_i)\|^2\bigr])^{1/2}\bigl(\EX_A\bigl[\|x_i-x_i^{l,\omega}\|^2\bigr]\bigr)^{1/2}\\
         &+2C_fL_g\sum_{i=1}^{t}(1-\frac{L\sigma\eta}{L+\sigma})^{t-i}\eta(\EX_{A}\bigl[\|y_{i+1}^{l,\omega}-g_S(x_i^{l,\omega})\|^2\bigr])^{1/2}\bigl(\EX_A\bigl[\|x_i-x_i^{l,\omega}\|^2\bigr]\bigr)^{1/2}  \\
		&+4C_f^2L_g^2 \sum_{i=0}^{t}(1-\frac{L\sigma\eta}{L+\sigma})^{t-i}\eta^2  \EX_{A}\bigl[\|y_{i+1}-g_S(x_i)\|^2\bigr]\\
  &+4C_f^2L_g^2 \sum_{i=0}^{t}(1-\frac{L\sigma\eta}{L+\sigma})^{t-i}\eta^2 \EX_{A}\bigl[\|y_{i+1}^{l,\omega}-g_S(x_i^{l,\omega})\|^2\bigr]  \\
        &+16L_f^2 C_g\sum_{i=0}^{t}(1-\frac{L\sigma\eta}{L+\sigma})^{t-i}\eta^2+\frac{12L_gL_f}{m}\sum_{i=1}^{t}(1-\frac{L\sigma\eta}{L+\sigma})^{t-i}\eta(\EX_A\bigl[\|x_{i}-x_{i}^{l,\omega}\|^2\bigr])^{1/2} \\
        &+\frac{4L_f^2 L_g^2}{m}\sum_{i=0}^{t}(1-\frac{L\sigma\eta}{L+\sigma})^{t-i}\eta^2 . \numberthis \label{arecursionitstconve}
    \end{align*}  
For notional convenience, let  $u_t=( \mathbb{E}_{A}[\|x_{t}-x_{t}^{l,\omega}\|^2])^{1/2}$.  Therefore, \eqref{arecursionitstconve} can be equivalently rewritten as 
      \begin{align*}\label{recursionitstconve}
       u_t^2&\leq2C_fL_g\sum_{i=1}^{t-1}(1-\frac{L\sigma\eta}{L+\sigma})^{t-i-1}\eta(\EX_{A}\bigl[\|y_{i+1}-g_S(x_i)\|^2\bigr])^{1/2}u_i\\
       &+2C_fL_g\sum_{i=1}^{t-1}(1-\frac{L\sigma\eta}{L+\sigma})^{t-i-1}\eta(\EX_{A}\bigl[\|y_{i+1}^{l,\omega}-g_S(x_i^{l,\omega})\|^2\bigr])^{1/2}u_i \\
       &+4C_f^2L_g^2 \sum_{i=0}^{t-1}(1-\frac{L\sigma\eta}{L+\sigma})^{t-i-1}\eta^2  \EX_{A}\bigl[\|y_{i+1}-g_S(x_i)\|^2\bigr] \\
        & +4C_f^2L_g^2 \sum_{i=0}^{t-1}(1-\frac{L\sigma\eta}{L+\sigma})^{t-i-1}\eta^2 \EX_{A}\bigl[\|y_{i+1}^{l,\omega}-g_S(x_i^{l,\omega})\|^2\bigr]   +16L_f^2 C_g\sum_{i=0}^{t-1}(1-\frac{L\sigma\eta}{L+\sigma})^{t-i-1}\eta^2 \\
        &+\frac{12L_gL_f}{m}\sum_{i=1}^{t-1}(1-\frac{L\sigma\eta}{L+\sigma})^{t-i-1}\eta u_i +\frac{4L_f^2 L_g^2}{m}\sum_{i=0}^{t-1}(1-\frac{L\sigma\eta}{L+\sigma})^{t-i-1}\eta^2 .\numberthis  
    \end{align*}
We will use Lemma \ref{recursion lemma} to get the desired result. To this end, notice that  
\begin{align*}
      16L_f^2 C_g\eta^2\sum_{i=0}^{t-1}(1-\eta\frac{L\sigma}{L+\sigma})^{t-i-1}&\leq16L_f^2 C_g\eta^2\frac{L+\sigma}{L\eta\sigma}=16L_f^2 C_g\frac{{L+\sigma}}{L\sigma}\eta, \\
      \frac{4L_f^2 L_g^2}{m}\eta^2\sum_{i=0}^{t-1}(1-\eta\frac{L\sigma}{L+\sigma})^{t-i-1}&\leq \frac{4L_f^2 L_g^2}{m}\eta^2\frac{L+\sigma}{L\eta\sigma}=\frac{4L_f^2 L_g^2}{m}\frac{L+\sigma}{L\sigma}\eta.
  \end{align*}
  Moreover, we define
    \begin{align*}
        S_t&=4C_f^2L_g^2 \sum_{i=0}^{t-1}(1-\frac{L\sigma\eta}{L+\sigma})^{t-i-1}\eta^2  \EX_{A}\bigl[\|y_{i+1}-g_S(x_i)\|^2\bigr] \\
        & +4C_f^2L_g^2 \sum_{i=0}^{t-1}(1-\frac{L\sigma\eta}{L+\sigma})^{t-i-1}\eta^2 \EX_{A}\bigl[\|y_{i+1}^{l,\omega}-g_S(x_i^{l,\omega})\|^2\bigr] \\
        &+16L_f^2 C_g\frac{{L+\sigma}}{L\sigma}\eta+\frac{4L_f^2 L_g^2}{m}\frac{L+\sigma}{L\sigma}\eta, \\
  \alpha_i  & =2C_fL_g(1-\frac{L\sigma\eta}{L+\sigma})^{t-i-1}\eta(\EX_{A}\bigl[\|y_{i+1}-g_S(x_i)\|^2\bigr])^{1/2}\\
       &+2C_fL_g(1-\frac{L\sigma\eta}{L+\sigma})^{t-i-1}\eta(\EX_{A}\bigl[\|y_{i+1}^{l,\omega}-g_S(x_i^{l,\omega})\|^2\bigr])^{1/2}
    +\frac{12L_gL_f}{m}(1-\frac{L\sigma\eta}{L+\sigma})^{t-i-1}\eta. 
    \end{align*}
    Applying Lemma \ref{recursion lemma} with $u_t$, $S_t$ and $\alpha_i$ defined as above to \eqref{recursionitstconve}, we get
    \begin{align*}
        &u_t\leq\sqrt{S_t}+\sum_{i=1}^{t-1}\alpha_i\\
        &\leq (4C_f^2L_g^2 \sum_{i=0}^{t-1}(1-\frac{L\sigma\eta}{L+\sigma})^{t-i-1}\eta^2  \EX_{A}\bigl[\|y_{i+1}-g_S(x_i)\|^2\bigr])^{1/2}\\
        &+(4C_f^2L_g^2 \sum_{i=0}^{t-1}(1-\frac{L\sigma\eta}{L+\sigma})^{t-i-1}\eta^2 \EX_{A}\bigl[\|y_{i+1}^{l,\omega}-g_S(x_i^{l,\omega})\|^2\bigr] )^{1/2}\\
      &+2C_fL_g\sum_{i=1}^{t-1}(1-\frac{L\sigma\eta}{L+\sigma})^{t-i-1}\eta(\EX_{A}\bigl[\|y_{i+1}-g_S(x_i)\|^2\bigr])^{1/2}\\
       &+2C_fL_g\sum_{i=1}^{t-1}(1-\frac{L\sigma\eta}{L+\sigma})^{t-i-1}\eta(\EX_{A}\bigl[\|y_{i+1}^{l,\omega}-g_S(x_i^{l,\omega})\|^2\bigr])^{1/2}\\
    &+4L_f\sqrt{C_g\frac{L+\sigma}{L\sigma}}\sqrt{\eta}+2L_fL_g\sqrt{\frac{L+\sigma}{L\sigma}}\sqrt{\frac{\eta}{m}}+\frac{12L_gL_f(L+\sigma)}{m L\sigma}, 
    \end{align*}
  where  we have used the fact that $(\sum_{i=1}^{4}a_i)^{1/2}\leq \sum_{i=1}^{4}(a_i)^{1/2}$ and  $\frac{12L_gL_f}{m}\sum_{i=0}^{t-1}(1-\frac{L\sigma\eta}{L+\sigma})^{t-i-1}\eta\leq \frac{12L_gL_f(L+\sigma)}{m L\sigma} $. 
 
Note that $\EX_{A}\bigl[\|y_{i+1}-g_S(x_i)\|^2\bigr]\leq \sup_{S}  \mathbb{E}_{A}[\|y_{i+1}-g_S(x_i)\|^2] $ and $\EX_{A}\bigl[\|y_{i+1}^{l,\omega}-g_S(x_i^{l,\omega})\|^2\bigr]\leq \sup_{S}  \mathbb{E}_{A}[\|y_{i+1}-g_S(x_i)\|^2]$. 
Consequently, with $T$ iterations, since $\mathbb{E}_{A}\bigl[\|x_{T}-x_{T}^{l,\omega}\|\bigr]\leq u_T=(\mathbb{E}_{A}\bigl[\|x_{T}-x_{T}^{l,\omega}\|^2\bigr])^{1/2}$, we further obtain 
\begin{align*}
    &\mathbb{E}_{A}\bigl[\|x_{T}-x_{T}^{l,\omega}\|\bigr]\\
    &\leq 4C_fL_g\eta\sup_S(\sum_{i=0}^{T-1}(1-\frac{L\sigma\eta}{L+\sigma})^{T-i-1} \EX_{A}\bigl[\|y_{i+1}-g_S(x_i)\|^2\bigr])^{1/2} \\
    &+4C_fL_g\eta\sup_S\sum_{i=0}^{T-1}(1-\frac{L\sigma\eta}{L+\sigma})^{T-i-1}\eta(\EX_{A}\bigl[\|y_{i+1}-g_S(x_i)\|^2\bigr])^{1/2}\\
    &+4L_f\sqrt{C_g\frac{L+\sigma}{L\sigma}}\sqrt{\eta}+2L_fL_g\sqrt{\frac{L+\sigma}{L\sigma}}\sqrt{\frac{\eta}{m}}+\frac{12L_gL_f(L+\sigma)}{m L\sigma}.\numberthis \label{beq:stabilitysrtconv}
\end{align*}
Combining the estimations for  $\mathbb{E}_{A}\bigl[\|x_{T}-x_{T}^{k,\nu}\|\bigr]$ and $\mathbb{E}_{A}\bigl[\|x_{T}-x_{T}^{l,\omega}\|\bigr]$, we obtain
\begin{align*}
\epsilon_{\nu}+\epsilon_{\omega} &\leq 8C_fL_g\eta\sup_S(\sum_{j=0}^{T-1} (1-\eta\frac{L\sigma}{L+\sigma})^{T-j-1}\EX_{A}\bigl[\left\|y_{j+1}-g_S\left(x_j\right)\right\|^2\bigr] )^{1/2}\\
    &+8C_fL_g\eta\sup_{S}\sum_{j=0}^{T-1}(1-\eta\frac{L\sigma}{L+\sigma})^{T-j-1}\bigl(\mathbb{E}_{A}[\| y_{j+1}-g_S(x_j)\|^2 ] \bigr)^{1/ 2}\\
    &+8L_f\sqrt{ C_g\frac{L+\sigma}{L \sigma}}\sqrt{\eta}+2L_fL_g\sqrt{\frac{L+\sigma}{L\sigma}}\sqrt{\frac{\eta}{n}}
    +\frac{4L_gL_f(L+\sigma)}{ nL\sigma}\\
    &+2L_fL_g\sqrt{\frac{L+\sigma}{L\sigma}}\sqrt{\frac{\eta}{m}}+\frac{12L_gL_f(L+\sigma)}{m L\sigma}\\
    &\leq16C_fL_g\eta\sup_{S}\sum_{j=0}^{T-1}(1-\eta\frac{L\sigma}{L+\sigma})^{T-j-1}\bigl(\mathbb{E}_{A}[\| y_{j+1}-g_S(x_j)\|^2 ] \bigr)^{1\over 2}\\
    &+8L_f\sqrt{ C_g\frac{L+\sigma}{L \sigma}}\sqrt{\eta}+2L_fL_g\sqrt{\frac{L+\sigma}{L\sigma}}\sqrt{\frac{\eta}{n}}
    +\frac{4L_gL_f(L+\sigma)}{ nL\sigma}\\
    &+2L_fL_g\sqrt{\frac{L+\sigma}{L\sigma}}\sqrt{\frac{\eta}{m}}+\frac{12L_gL_f(L+\sigma)}{m L\sigma}. \numberthis \label{finalstabstr}
\end{align*}
Next we will verify why the second inequality of \eqref{finalstabstr} holds true. 
With the result of SCGD update in Lemma \ref{Mengdi-lemma}, we have
\begin{align*}
    &\eta(\sum_{j=0}^{T-1} (1-\eta\frac{L\sigma}{L+\sigma})^{T-j-1}\EX_{A}\bigl[\left\|y_{j+1}-g_S\left(x_j\right)\right\|^2\bigr] )^{1/2}\\
    &\leq\eta(\sum_{j=1}^{T-1}   (1-\eta\frac{L\sigma}{L+\sigma})^{T-j-1}((\frac{c}{e})^c (j\beta)^{-c}\mathbb{E}_A[\| y_1- g_S(x_0)\|^2]+  L_f^2 L_g^3\frac{\eta^2}{\beta^2}+2V_g\beta) )^{1/2}\\
    &\leq \eta(\sum_{j=1}^{T-1} (1-\eta\frac{L\sigma}{L+\sigma})^{T-j-1}(L_f^2 L_g^3\frac{\eta^2}{\beta^2}+2V_g\beta))^{1/2}+\eta((\frac{c}{e})^cD_y\sum_{j=0}^{T-1} (1-\eta\frac{L\sigma}{L+\sigma})^{T-j-1}(j\beta)^{-c})^{1/2}\\
    &\leq \frac{L_fL_g\sqrt{L_g(L+\sigma)}}{\sqrt{L\sigma}}\frac{\eta^{3/2}}{\beta}+\sqrt{\frac{2V_g(L+\sigma)}{L\sigma}}\sqrt{\eta\beta}+(\frac{c}{e})^{\frac{c}{2}}\sqrt{D_y}\frac{\sqrt{(L+\sigma)\eta}}{\sqrt{L\sigma}}T^{-\frac{c}{2}}\beta^{-\frac{c}{2}},\numberthis \label{applymengdi1}
   \end{align*} 
   where the last inequality holds by the fact that $\sum_{j=0}^{T-1}  (1-\eta\frac{L\sigma}{L+\sigma})^{T-j-1}\leq\frac{L+\sigma}{\eta L\sigma}$ and Lemma \ref{lem:weighted_avg}. To see this, $(\sum_{j=1}^{T-1}  (1-\eta\frac{L\sigma}{L+\sigma})^{T-j-1}(j\beta)^{-c})^{1/2}\leq (\frac{\sum_{j=1}^{T-1}  (1-\eta\frac{L\sigma}{L+\sigma})^{T-j-1}\sum_{j=1}^{T-1}(j\beta)^{-c} }{T})^{1/2}\leq (\frac{T^{-c+1}\beta^{-c}(L+\sigma)}{T\eta L\sigma})^{1/2}=\frac{T^{-\frac{c}{2}}\beta^{-\frac{c}{2}}\sqrt{L+\sigma}}{\sqrt{\eta L\sigma}}$.
And
   \begin{align*}
    &\eta\sum_{j=0}^{T-1}(1-\eta\frac{L\sigma}{L+\sigma})^{T-j-1}\bigl(\mathbb{E}_{A}[\| y_{j+1}-g_S(x_j)\|^2 ] \bigr)^{1\over 2}\\
    &\leq \eta\sum_{j=1}^{T-1}(1-\eta\frac{L\sigma}{L+\sigma})^{T-j-1}((\frac{c}{e})^c (j\beta)^{-c}\mathbb{E}_A[\| y_1- g_S(x_0)\|^2]+  L_f^2 L_g^3\frac{\eta^2}{\beta^2}+2V_g\beta)^{1/2}\\
    &\leq \eta\sum_{j=1}^{T-1}  (1-\eta\frac{L\sigma}{L+\sigma})^{T-j-1}(\sqrt{L_g}L_gL_f\frac{\eta}{\beta}+\sqrt{2V_g}\sqrt{\beta})+(\frac{c}{e})^{\frac{c}{2}}\sqrt{D_y}\eta\sum_{j=1}^{T-1}  (1-\eta\sigma)^{T-j-1}(j\beta)^{-\frac{c}{2}}\\
    &\leq \frac{\sqrt{L_g}L_gL_f(L+\sigma)}{L\sigma}\frac{\eta}{\beta}+\frac{\sqrt{2V_g}(L+\sigma)}{L\sigma}\sqrt{\beta}+(\frac{c}{e})^{\frac{c}{2}}\frac{\sqrt{D_y}(L+\sigma)}{L\sigma} T^{-\frac{c}{2}}\beta^{-\frac{c}{2}},
    \numberthis \label{applymengdi2}
\end{align*}
where the last inequality holds by the fact that $\sum_{j=0}^{T-1}  (1-\eta\frac{L\sigma}{L+\sigma})^{T-j-1}\leq\frac{L+\sigma}{\eta L\sigma}$ and Lemma \ref{lem:weighted_avg}. To see this   $\sum_{j=1}^{T-1}  (1-\eta\frac{L\sigma}{L+\sigma})^{T-j-1}(j\beta)^{-\frac{c}{2}}\leq \frac{\sum_{j=1}^{T-1}  (1-\eta\frac{L\sigma}{L+\sigma})^{T-j-1}\sum_{j=1}^{T-1}(j\beta)^{-\frac{c}{2}} }{T}\leq\frac{T^{-\frac{c}{2}} \beta^{-\frac{c}{2}}(L+\sigma)}{\eta L \sigma}$. Comparing the result \eqref{applymengdi1} and \eqref{applymengdi2}, the dominating terms are \eqref{applymengdi2}. We can show that with result of SCSC update in Lemma \ref{Mengdi-lemma}, the dominating term is $\eta\sum_{j=0}^{T-1}(1-\eta\frac{L\sigma}{L+\sigma})^{T-j-1}\bigl(\mathbb{E}_{A}[\| y_{j+1}-g_S(x_j)\|^2 ] \bigr)^{1\over 2}$.\\
Since often we have $\eta\leq\min(\frac{1}{n}, \frac{1}{m})$, then $\frac{\sqrt{\eta}}{\sqrt{n}}\leq\frac{1}{n}$. Consequently, we get that $\sqrt{\frac{L+\sigma}{L\sigma}}\sqrt{\frac{\eta}{n}}\leq \frac{(L+\sigma)}{ nL\sigma}$. And $\frac{\sqrt{\eta}}{\sqrt{m}}\leq\frac{1}{m}$, $\sqrt{\frac{L+\sigma}{L\sigma}}\sqrt{\frac{\eta}{m}}\leq \frac{(L+\sigma)}{ m L\sigma}$.  We further get the final stability result for $\sigma$-strongly convex setting which holds for SCGD and SCSC in Theorem \ref{thm:stab_sconvex}
\begin{align*}
\epsilon_{\nu}+\epsilon_{\omega}= \mathcal{O}\Big(\frac{L_gL_f(L+\sigma)}{\sigma L m}+&\frac{L_gL_f(L+\sigma)}{\sigma Ln}+\frac{L_f\sqrt{C_g(L+\sigma)\eta}}{\sqrt{\sigma L}}\\
&+C_fL_g\eta\sup_{S}\sum_{j=1}^{T}(1-\eta\frac{L\sigma}{L+\sigma})^{T-j}\bigl(\mathbb{E}_{A}[\| y_{j+1}-g_S(x_j)\|^2 ] \bigr)^{1\over 2}\Bigr).\numberthis \label{finalstrstath5} 
\end{align*}
This completes the proof. 
\end{proof}

Next we move on to the Corollary \ref{cor:2}
\begin{proof}[Proof of Corollary \ref{cor:2}]
    Putting the result \eqref{applymengdi2}  to \eqref{finalstrstath5}, we get stability result of SCGD for strongly convex problems
\begin{align*}
    \epsilon_\nu + \epsilon_\omega= \mathcal{O}\bigl(n^{-1}+m^{-1}+\eta^{\frac{1}{2}}+\eta\beta^{-1}+\beta^{\frac{1}{2}}+T^{-\frac{c}{2}}\beta^{-\frac{c}{2}}\bigr).
\end{align*}
With SCSC update in Lemma \ref{Mengdi-lemma}, with a same progress, we have stability result of SCSC for strongly convex problems
\begin{align*}
    \epsilon_{\nu}+\epsilon_{\omega}=\mathcal{O}(n^{-1}+ m^{-1}+\eta^{1/2}+\eta \beta^{-1/2}+\beta^{1/2}+T^{-\frac{c}{2}} \beta^{-\frac{c}{2}}).
\end{align*}
\end{proof}

\subsection{Optimization}

\begin{lemma} \label{lem:opt_sconvex:1}
  Suppose Assumptions \ref{assum:1} \ref{assum:1b} and \ref{assum:2} \ref{part2} holds and \(F_S\) is \(\sigma\)-strongly convex. By running Algorithm \ref{alg:1}, we have for any \(x\in \mathcal{X}\)
    \begin{align*}  \label{eq:opt_sconvex:1}
      \mathbb{E}_A[\|x_{t+1}- x\|^2| \mathcal{F}_t]\leq& (1- \frac{\sigma\eta_t}{2})\|x_t- x\|^2+ \eta_t^2 \mathbb{E}_A [\|\nabla g_{\omega_{j_t}}(x_t)\nabla f_{\nu_{i_t}}(y_{t+1})\|^2| \mathcal{F}_t] \\
      &- 2\eta_t(F_S(x_t)- F_S(x))+ 2C_f^2L_g^2\frac{\eta_t}{\sigma}\mathbb{E}_A[\|g_S(x_t)- y_{t+1}\|^2| \mathcal{F}_t]. \\
      \numberthis
    \end{align*}
  where \(\mathbb{E}_A\) denotes the expectation taken with respect to the randomness of the algorithm, and \(\mathcal{F}_t\) is the \(\sigma\)-field generated by \(\{\omega_{j_0}, \ldots, \omega_{j_{t-1}}, \nu_{i_0}, \ldots, \nu_{i_{t- 1}}\}\).
\end{lemma}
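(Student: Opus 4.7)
The plan is to follow the template of Lemma~\ref{lem:opt_convex:1}, adapting it so that strong convexity absorbs an $\eta_t\sigma/2$ worth of $\|x_t-x\|^2$ while the Young's inequality parameter is tuned to leave exactly $\eta_t\sigma/2$ on the other side.

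First I would start from the update in line 9 of Algorithm~\ref{alg:1} and use non-expansiveness of $\Pi_{\mathcal{X}}$ (noting $x\in\mathcal{X}$) to get
\begin{align*}
\|x_{t+1}-x\|^2 &\leq \|x_t - x\|^2 - 2\eta_t\langle \nabla g_{\omega_{j_t}}(x_t)\nabla f_{\nu_{i_t}}(y_{t+1}), x_t-x\rangle \\
&\quad + \eta_t^2 \|\nabla g_{\omega_{j_t}}(x_t)\nabla f_{\nu_{i_t}}(y_{t+1})\|^2.
\end{align*}
Next, as in the proof of Lemma~\ref{lem:opt_convex:1}, I would split the inner product by inserting $\nabla f_{\nu_{i_t}}(g_S(x_t))$, writing
\begin{equation*}
\langle \nabla g_{\omega_{j_t}}(x_t)\nabla f_{\nu_{i_t}}(y_{t+1}), x_t-x\rangle = \langle \nabla g_{\omega_{j_t}}(x_t)\nabla f_{\nu_{i_t}}(g_S(x_t)), x_t-x\rangle - \tfrac{1}{2\eta_t} u_t,
\end{equation*}
where $u_t$ is the analogue defined in Lemma~\ref{lem:opt_convex:1}. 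Since $j_t$ and $i_t$ are sampled independently given $\mathcal{F}_t$, and $g_S(x_t)$ depends only on $\mathcal{F}_t$, taking $\mathbb{E}_A[\,\cdot\,|\mathcal{F}_t]$ of the first inner product gives $\langle \nabla F_S(x_t),x_t-x\rangle$.

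At this point the key new ingredient enters: instead of mere convexity, I would invoke $\sigma$-strong convexity of $F_S$ to obtain
\begin{equation*}
\langle \nabla F_S(x_t), x_t-x\rangle \geq F_S(x_t) - F_S(x) + \tfrac{\sigma}{2}\|x_t-x\|^2.
\end{equation*}
For the $u_t$ term, I would bound it as in \eqref{eq:u_t2} using Cauchy--Schwarz, the $L_g$-bound on $\|\nabla g_{\omega_{j_t}}\|$ from Assumption~\ref{assum:1}\ref{assum:1b}, and the $C_f$-Lipschitzness of $\nabla f_\nu$ from Assumption~\ref{assum:2}\ref{part2}, yielding $\mathbb{E}_A[|u_t|\,|\,\mathcal{F}_t] \leq 2\eta_t C_f L_g \|x_t-x\|\,\mathbb{E}_A[\|g_S(x_t)-y_{t+1}\|\,|\,\mathcal{F}_t]$.

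The only subtlety, and the step that makes the bound come out cleanly, is the choice of Young's inequality parameter: applying $2ab\leq \alpha a^2 + b^2/\alpha$ with $a=\|x_t-x\|$, $b=\eta_t C_f L_g\|g_S(x_t)-y_{t+1}\|$ and $\alpha=\eta_t\sigma/2$ gives
\begin{equation*}
2\eta_t\,\mathbb{E}_A[|u_t|\,|\,\mathcal{F}_t] \leq \tfrac{\eta_t\sigma}{2}\|x_t-x\|^2 + \tfrac{2 C_f^2 L_g^2 \eta_t}{\sigma}\,\mathbb{E}_A[\|g_S(x_t)-y_{t+1}\|^2\,|\,\mathcal{F}_t].
\end{equation*}
Combining with the $-\eta_t\sigma\|x_t-x\|^2$ from strong convexity leaves $(1-\eta_t\sigma/2)\|x_t-x\|^2$ in front, matching \eqref{eq:opt_sconvex:1}. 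The main ``obstacle'' is really just making sure the Young parameter is $\eta_t\sigma/2$ (not $\eta_t\sigma$) so that a positive fraction of the strong-convexity contraction survives while the error coefficient becomes the stated $2C_f^2 L_g^2\eta_t/\sigma$; every other step is a direct transcription of the convex-case calculation.
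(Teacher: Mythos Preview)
Your proposal is correct and matches the paper's proof essentially step for step: expand the projected update, split off $u_t$, take conditional expectation to get $\langle\nabla F_S(x_t),x_t-x\rangle$, invoke $\sigma$-strong convexity, and apply Young's inequality with the parameter chosen so that the $\tfrac{\eta_t\sigma}{2}\|x_t-x\|^2$ cost is absorbed by the $-\eta_t\sigma\|x_t-x\|^2$ from strong convexity (the paper writes the Young parameter as $\gamma=2C_fL_g^2/\sigma$, which is equivalent to your $\alpha=\eta_t\sigma/2$). One cosmetic slip: in your final display the left-hand side should read $\mathbb{E}_A[u_t\,|\,\mathcal{F}_t]$, not $2\eta_t\,\mathbb{E}_A[|u_t|\,|\,\mathcal{F}_t]$, since with your choices $2ab$ already equals the bound on $u_t$ (which itself carries the $2\eta_t$ factor); the right-hand side and the ensuing combination are correct.
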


The proof of Lemma \ref{lem:opt_sconvex:1} is deferred to the end of this subsection. Now we are ready to prove the convergence of Algorithm \ref{alg:1} for strongly convex problems.

\begin{proof}[Proof of Theorem \ref{thm:opt_sconvex}]
  We first present the proof for the SCGD update. Taking full expectation over \eqref{eq:opt_sconvex:1} with \(x = x_*^S\) and using Assumption \ref{assum:1}, we get
    \begin{align*} \label{eq:opt_sconvex:4}
        \mathbb{E}_A[\|x_{t+1}- x_*^S\|^2]\leq& (1- \frac{\sigma\eta_t}{2})\mathbb{E}_A[\|x_t- x_*^S\|^2]+ L_f^2L_g^2\eta_t^2- 2\eta_t\mathbb{E}_A[F_S(x_t)- F_S(x_*^S)] \\
        &+ 2C_f^2L_g^2\frac{\eta_t}{\sigma}\mathbb{E}_A[\|g_S(x_t)- y_{t+1}\|^2].
        \numberthis
    \end{align*}
  Setting $\eta_t= \eta$ and $\beta_t= \beta$, plugging Lemma \ref{Mengdi-lemma} into \eqref{eq:opt_sconvex:4}, and letting \(D_y:= \mathbb{E}_A[\| y_1- g_S(x_0)\|^2]\), we have
  \begin{align*}
    \mathbb{E}_A[\|x_{t+1}- x_*^S\|^2]\leq& (1- \frac{\sigma\eta}{2})\mathbb{E}_A[\|x_t- x_*^S\|^2]+ L_f^2L_g^2\eta^2- 2\eta\mathbb{E}_A[F_S(x_t)- F_S(x_*^S)] \\
    &+ \frac{2C_f^2L_g^2\eta}{\sigma}\left( (\frac{c}{e})^cD_y (t\beta)^{-c}+L_g^3 L_f^2 \frac{\eta^2}{\beta^2}+ 2V_g \beta\right).
  \end{align*}
  Multiplying the above inequality with \(\left( 1- \frac{\sigma\eta}{2}\right)^{T- t}\) and telescoping for \(t= 1, \ldots, T\), we get
  \begin{align*}
    &2\eta \sum_{t= 1}^{T}\left( 1- \frac{\sigma\eta}{2}\right)^{T- t}\mathbb{E}_A[F_S(x_t)- F_S(x_*^S)] \\
    \leq& \left( 1- \frac{\sigma\eta}{2}\right)^T\mathbb{E}_A[\| x_1- x_*^S\|^2]+ L_f^2L_g^2 \eta^2 \sum_{t= 1}^{T} \left( 1- \frac{\sigma\eta}{2}\right)^{T- t} \\
    &+ \frac{2C_f^2L_g^2D_y}{\sigma}\left( \frac{c}{e}\right)^c \eta\beta^{-c} \sum_{t= 1}^{T} \left( 1- \frac{\sigma\eta}{2}\right)^{T- t}t^{-c} \\
    &+ \frac{4C_f^2L_g^2V_g}{\sigma} \eta\beta \sum_{t= 1}^{T} \left( 1- \frac{\sigma\eta}{2}\right)^{T- t}+ \frac{2C_f^2L_f^2L_g^5}{\sigma} \frac{\eta^3}{\beta^2} \sum_{t= 1}^{T} \left( 1- \frac{\sigma\eta}{2}\right)^{T- t}.
  \end{align*}
  Note that we have
  \begin{equation*}
    \mathbb{E}_A[\| x_1- x_*^S\|^2]\leq \mathbb{E}_A[\| x_0- x_*^S- \eta \nabla g_{\omega_{j_0}}(x_0) \nabla f_{\nu_{i_0}}(y_1)\|^2]\leq 2\mathbb{E}_A[\| x_0- x_*^S\|^2]+ 2L_f^2L_g^2\eta_t^2
  \end{equation*}
  Combining the above two inequalities yields
  \begin{align*}
    &2\eta \sum_{t= 1}^{T}\left( 1- \frac{\sigma\eta}{2}\right)^{T- t}\mathbb{E}_A[F_S(x_t)- F_S(x_*^S)] \\
    \leq& 2\left( 1- \frac{\sigma\eta}{2}\right)^T\mathbb{E}_A[\| x_0- x_*^S\|^2]+ 2L_f^2L_g^2 \eta^2 \sum_{t= 1}^{T} \left( 1- \frac{\sigma\eta}{2}\right)^{T- t} \\
    &+ \frac{2C_f^2L_g^2D_y}{\sigma}\left( \frac{c}{e}\right)^c \eta\beta^{-c} \sum_{t= 1}^{T} \left( 1- \frac{\sigma\eta}{2}\right)^{T- t}t^{-c} \\
    &+ \frac{4C_f^2L_g^2V_g}{\sigma} \eta\beta \sum_{t= 1}^{T} \left( 1- \frac{\sigma\eta}{2}\right)^{T- t}+ \frac{2C_f^2L_f^2L_g^5}{\sigma} \frac{\eta^3}{\beta^2} \sum_{t= 1}^{T} \left( 1- \frac{\sigma\eta}{2}\right)^{T- t}.
  \end{align*}
  From Lemma \ref{lem:sum_prod} we know \((1- \frac{\sigma\eta}{2})^T\leq \exp(-\frac{\sigma\eta T}{2})\leq (\frac{2c}{e\sigma})^c (\eta T)^{-c}\). Also we have \(\sum_{t= 1}^{T} (1- \frac{\sigma\eta}{2})^{T- t}= \frac{1- (1- \frac{\sigma\eta}{2})^{T-1}}{1- (1- \frac{\sigma\eta}{2})}\leq \frac{2}{\sigma\eta}\). Dividing both sides of the above inequality by \(2\eta\), and letting \(D_x:= \mathbb{E}_A[\| x_0- x_*^S\|^2]\), we get
    \begin{align*} \label{eq:opt_sconvex:5}
      &\sum_{t= 1}^{T}\left( 1- \frac{\sigma\eta}{2}\right)^{T- t}\mathbb{E}_A[F_S(x_t)- F_S(x_*^S)] \\
      \leq& \left(\frac{2c}{e\sigma}\right)^cD_x \eta^{-c- 1}T^{-c}+ \frac{2L_f^2L_g^2}{\sigma}+ \frac{C_f^2L_g^2D_y}{\sigma}\left( \frac{c}{e}\right)^c \beta^{-c} \sum_{t= 1}^{T} \left( 1- \frac{\sigma\eta}{2}\right)^{T- t}t^{-c} \\
      &+ \frac{4C_f^2L_g^2V_g}{\sigma^2} \frac{\beta}{\eta}+ \frac{2C_f^2L_f^2L_g^5}{\sigma^2} \frac{\eta}{\beta^2}.
      \numberthis
    \end{align*}
  Dividing both sides of \eqref{eq:opt_sconvex:5} by \(\sum_{t= 1}^{T} (1- \frac{\sigma\eta}{2})^{T- t}\), noting that for \((\eta (T- 1))^{-1}\leq \frac{\sigma}{2}\) we have \((1- \frac{\sigma\eta}{2})^{T- 1}\leq \exp(- \frac{\sigma\eta (T- 1)}{2})\leq \frac{1}{2}\), and thus \(\sum_{t= 1}^{T} (1- \frac{\sigma\eta}{2})^{T- t}= \frac{1- (1- \frac{\sigma\eta}{2})^{T-1}}{1- (1- \frac{\sigma\eta}{2})}\geq \frac{1}{\sigma\eta}\), from the choice of $A(S)$ and convexity of $F_S$ we get
    \begin{align*} \label{eq:opt_sconvex:6}
      \mathbb{E}_A[F_S(A(S))- F_S(x_*^S)]\leq& (\frac{2c}{e\sigma})^{c-1}D_x (\eta T)^{-c}+ \frac{C_f^2L_g^2D_y}{\sigma}\left( \frac{c}{e}\right)^c \beta^{-c} \frac{\sum_{t= 1}^{T} \left( 1- \frac{\sigma\eta}{2}\right)^{T- t}t^{-c}}{\sum_{t= 1}^{T} \left( 1- \frac{\sigma\eta}{2}\right)^{T- t}} \\
      &+ 2L_f^2L_g^2\eta+ \frac{4C_f^2L_g^2V_g}{\sigma} \beta+ \frac{2C_f^2L_f^2L_g^5}{\sigma} \frac{\eta^2}{\beta^2}.
      \numberthis
    \end{align*}
  Note that \(\left( 1- \frac{\sigma\eta}{2}\right)^{T- t}\) is non-decreasing with respect to \(t\) and for \(c> 0\), \(t^{-c}\) is non-increasing with respect to \(t\). Then from Lemma \ref{lem:weighted_avg} we have
  \begin{equation*}
      \frac{\sum_{t= 1}^{T} \left( 1- \frac{\sigma\eta}{2}\right)^{T- t}t^{-c}}{\sum_{t= 1}^{T} \left( 1- \frac{\sigma\eta}{2}\right)^{T- t}}\leq \frac{\sum_{t= 1}^{T} t^{-c}}{T}
  \end{equation*}
  Thus \eqref{eq:opt_sconvex:6} simplifies to
  \begin{align*}
    \mathbb{E}_A[F_S(A(S))- F_S(x_*^S)]\leq& (\frac{2c}{e\sigma})^{c-1}D_x (\eta T)^{-c}+ 2L_f^2L_g^2\eta+ \frac{C_f^2L_g^2D_y}{\sigma}\left( \frac{c}{e}\right)^c \beta^{-c}T^{-1} \sum_{t= 1}^{T} t^{-c} \\
    &+ \frac{4C_f^2L_g^2V_g}{\sigma} \beta+ \frac{2C_f^2L_f^2L_g^5}{\sigma} \frac{\eta^2}{\beta^2}.
  \end{align*}
  Note that \(\sum_{t= 1}^{T} t^{-z}= \mathcal{O}(T^{1- z})\) for \(z\in (0, 1)\cup (1, \infty)\) and \(\sum_{t= 1}^{T} t^{-1}= \mathcal{O}(\log T)\). As long as \(c\neq 1\) we get
  % \begin{equation*}
  %   \mathbb{E}_A[F_S(A(S))- F_S(x_*^S)]\leq \mathcal{O}\left( T^{-c(1- a)}+ T^{-a}+ T^{-a- (1- b)c}+ T^{-b}+ T^{2b- 2a}\right).
  % \end{equation*}
  \begin{align*}
    &\mathbb{E}_A[F_S(A(S))- F_S(x_*^S)] \\
    =& \mathcal{O}\left( D_x(\eta T)^{-c}+ 2L_f^2L_g^2\eta+ \frac{C_f^2L_g^2D_y}{\sigma}(\beta T)^{-c}+ \frac{C_f^2L_g^2V_g}{\sigma}\beta+ \frac{C_f^2L_f^2L_g^5}{\sigma}\eta^2\beta^{-2}\right).
  \end{align*}
  Then we get the desired result for the SCGD update. Next we present the proof for the SCSC update. Setting $\eta_t= \eta$ and $\beta_t= \beta$. Plugging Lemma \ref{Mengdi-lemma} into \eqref{eq:opt_sconvex:4}, and letting \(D_y:= \mathbb{E}_A[\| y_1- g_S(x_0)\|^2]\), we have
  \begin{align*}
    \mathbb{E}_A[\|x_{t+1}- x_*^S\|^2]\leq& (1- \frac{\sigma\eta}{2})\mathbb{E}_A[\|x_t- x_*^S\|^2]+ L_f^2L_g^2\eta^2- 2\eta\mathbb{E}_A[F_S(x_t)- F_S(x_*^S)] \\
    &+ \frac{2C_f^2L_g^2\eta}{\sigma}\left( (\frac{c}{e})^cD_y (t\beta)^{-c}+L_g^3 L_f^2 \frac{\eta^2}{\beta}+ 2V_g \beta\right).
  \end{align*}
  Telescoping the above inequality for $t= 1, \cdots, T$, and rearranging the terms, we get
    \begin{align*}
      &2\eta \sum_{t= 1}^{T}\left( 1- \frac{\sigma\eta}{2}\right)^{T- t}\mathbb{E}_A[F_S(x_t)- F_S(x_*^S)] \\
      \leq& \left( 1- \frac{\sigma\eta}{2}\right)^T\mathbb{E}_A[\| x_1- x_*^S\|^2]+ L_f^2L_g^2 \eta^2 \sum_{t= 1}^{T} \left( 1- \frac{\sigma\eta}{2}\right)^{T- t} \\
      &+ \frac{2C_f^2L_g^2D_y}{\sigma}\left( \frac{c}{e}\right)^c \eta\beta^{-c} \sum_{t= 1}^{T} \left( 1- \frac{\sigma\eta}{2}\right)^{T- t}t^{-c} \\
      &+ \frac{4C_f^2L_g^2V_g}{\sigma} \eta\beta \sum_{t= 1}^{T} \left( 1- \frac{\sigma\eta}{2}\right)^{T- t}+ \frac{2C_f^2L_f^2L_g^5}{\sigma} \frac{\eta^3}{\beta} \sum_{t= 1}^{T} \left( 1- \frac{\sigma\eta}{2}\right)^{T- t} \\
      \leq& 2\left( 1- \frac{\sigma\eta}{2}\right)^T\mathbb{E}_A[\| x_0- x_*^S\|^2]+ 2L_f^2L_g^2 \eta^2 \sum_{t= 1}^{T} \left( 1- \frac{\sigma\eta}{2}\right)^{T- t} \\
      &+ \frac{2C_f^2L_g^2D_y}{\sigma}\left( \frac{c}{e}\right)^c \eta\beta^{-c} \sum_{t= 1}^{T} \left( 1- \frac{\sigma\eta}{2}\right)^{T- t}t^{-c} \\
      &+ \frac{4C_f^2L_g^2V_g}{\sigma} \eta\beta \sum_{t= 1}^{T} \left( 1- \frac{\sigma\eta}{2}\right)^{T- t}+ \frac{2C_f^2L_f^2L_g^5}{\sigma} \frac{\eta^3}{\beta} \sum_{t= 1}^{T} \left( 1- \frac{\sigma\eta}{2}\right)^{T- t} \\
    \end{align*}
  From Lemma \ref{lem:sum_prod} we know \((1- \frac{\sigma\eta}{2})^T\leq \exp(-\frac{\sigma\eta T}{2})\leq (\frac{2c}{e\sigma})^c (\eta T)^{-c}\). Also we have \(\sum_{t= 1}^{T} (1- \frac{\sigma\eta}{2})^{T- t}= \frac{1- (1- \frac{\sigma\eta}{2})^{T-1}}{1- (1- \frac{\sigma\eta}{2})}\leq \frac{2}{\sigma\eta}\). Dividing both sides of the above inequality by \(2\eta\), and letting \(D_x:= \mathbb{E}_A[\| x_0- x_*^S\|^2]\), we get
    \begin{align*} \label{eq:opt_sconvex:11}
      &\sum_{t= 1}^{T}\left( 1- \frac{\sigma\eta}{2}\right)^{T- t}\mathbb{E}_A[F_S(x_t)- F_S(x_*^S)] \\
      \leq& \left(\frac{2c}{e\sigma}\right)^cD_x \eta^{-c- 1}T^{-c}+ \frac{2L_f^2L_g^2}{\sigma}+ \frac{C_f^2L_g^2D_y}{\sigma}\left( \frac{c}{e}\right)^c \beta^{-c} \sum_{t= 1}^{T} \left( 1- \frac{\sigma\eta}{2}\right)^{T- t}t^{-c} \\
      &+ \frac{4C_f^2L_g^2V_g}{\sigma^2} \frac{\beta}{\eta}+ \frac{2C_f^2L_f^2L_g^5}{\sigma^2} \frac{\eta}{\beta}.
      \numberthis
    \end{align*}
  Dividing both sides of \eqref{eq:opt_sconvex:11} by \(\sum_{t= 1}^{T} (1- \frac{\sigma\eta}{2})^{T- t}\), noting that for \((\eta (T- 1))^{-1}\leq \frac{\sigma}{2}\) we have \((1- \frac{\sigma\eta}{2})^{T- 1}\leq \exp(- \frac{\sigma\eta (T- 1)}{2})\leq \frac{1}{2}\), and thus \(\sum_{t= 1}^{T} (1- \frac{\sigma\eta}{2})^{T- t}= \frac{1- (1- \frac{\sigma\eta}{2})^{T-1}}{1- (1- \frac{\sigma\eta}{2})}\geq \frac{1}{\sigma\eta}\), from the choice of $A(S)$ and convexity of $F_S$ we get
  \begin{align*}
    &\mathbb{E}_A[F_S(A(S))- F_S(x_*^S)] \\
    \leq& (\frac{2c}{e\sigma})^{c-1}D_x (\eta T)^{-c}+ 2L_f^2L_g^2\eta+ \frac{C_f^2L_g^2D_y}{\sigma}\left( \frac{c}{e}\right)^c \beta^{-c} \frac{\sum_{t= 1}^{T} \left( 1- \frac{\sigma\eta}{2}\right)^{T- t}t^{-c}}{\sum_{t= 1}^{T} \left( 1- \frac{\sigma\eta}{2}\right)^{T- t}} \\
    &+ \frac{4C_f^2L_g^2V_g}{\sigma} \beta+ \frac{2C_f^2L_f^2L_g^5}{\sigma} \frac{\eta^2}{\beta} \\
    \leq& (\frac{2c}{e\sigma})^{c-1}D_x (\eta T)^{-c}+ 2L_f^2L_g^2\eta+ \frac{C_f^2L_g^2D_y}{\sigma}\left( \frac{c}{e}\right)^c \beta^{-c}T^{-1} \sum_{t= 1}^{T} t^{-c} \\
    &+ \frac{4C_f^2L_g^2V_g}{\sigma} \beta+ \frac{2C_f^2L_f^2L_g^5}{\sigma} \frac{\eta^2}{\beta},
  \end{align*}
  where the last inequality comes from Lemma \ref{lem:weighted_avg}. Noting that \(\sum_{t= 1}^{T} t^{-z}= \mathcal{O}(T^{1- z})\) for \(z\in (0, 1)\cup (1, \infty)\) and \(\sum_{t= 1}^{T} t^{-1}= \mathcal{O}(\log T)\), as long as \(c\neq 1\) we get
  % \begin{equation*}
  %   \mathbb{E}_A[F_S(A(S))- F_S(x_*^S)]\leq \mathcal{O}\left( T^{-c(1- a)}+ T^{-a}+ T^{-a- (1- b)c}+ T^{-b}+ T^{b- 2a}\right).
  % \end{equation*}
  \begin{align*}
    &\mathbb{E}_A[F_S(A(S))- F_S(x_*^S)] \\
    =& \mathcal{O}\left( D_x(\eta T)^{-c}+ 2L_f^2L_g^2\eta+ \frac{C_f^2L_g^2D_y}{\sigma}(\beta T)^{-c}+ \frac{C_f^2L_g^2V_g}{\sigma}\beta+ \frac{C_f^2L_f^2L_g^5}{\sigma}\eta^2\beta^{-1}\right).
  \end{align*}
  Then we get the desired result for the SCSC update. Then we complete the proof.
\end{proof}

\begin{proof}[Proof of Lemma \ref{lem:opt_sconvex:1}]
  From Algorithm \ref{alg:1} we have for any \(x\in \mathcal{X}\)
    \begin{align*}
      &\|x_{t+1}- x\|^2 \\
      \leq& \|x_t- \eta_t\nabla g_{\omega_{j_t}}(x_t)\nabla f_{\nu_{i_t}}(y_{t+1})- x\|^2 \\
      =& \|x_t- x\|^2+ \eta_t^2\|\nabla g_{\omega_{j_t}}(x_t)\nabla f_{\nu_{i_t}}(y_{t+1})\|^2- 2\eta_t\langle x_t- x, \nabla g_{\omega_{j_t}}(x_t)\nabla f_{\nu_{i_t}}(y_{t+1})\rangle \\
      =& \|x_t- x\|^2+ \eta_t^2\|\nabla g_{\omega_{j_t}}(x_t)\nabla f_{\nu_{i_t}}(y_{t+1})\|^2- 2\eta_t\langle x_t- x, \nabla g_{\omega_{j_t}}(x_t)\nabla f_{\nu_{i_t}}(g_S(x_t))\rangle + u_t, \\
    \end{align*}
  where
  \begin{equation*}
    u_t:= 2\eta_t\langle x_t- x, \nabla g_{\omega_{j_t}}(x_t)\nabla f_{\nu_{i_t}}(g_S(x_t))- \nabla g_{\omega_{j_t}}(x_t)\nabla f_{\nu_{i_t}}(y_{t+1})\rangle.
  \end{equation*}
  Let \(\mathcal{F}_t\) be the \(\sigma\)-field generated by \(\{\omega_{j_0}, \ldots, \omega_{j_{t-1}}, \nu_{i_0}, \ldots, \nu_{i_{t- 1}}\}\). Taking expectation with respect to the randomness of the algorithm conditioned on \(\mathcal{F}_t\), we have
    \begin{align*} \label{eq:opt_sconvex:2}
      &\mathbb{E}_A[\|x_{t+1}- x\|^2| \mathcal{F}_t] \\
      \leq& \|x_t- x\|^2+ \eta_t^2 \mathbb{E}_A [\|\nabla g_{\omega_{j_t}}(x_t)\nabla f_{\nu_{i_t}}(y_{t+1})\|^2| \mathcal{F}_t] \\
      &- 2\eta_t \mathbb{E}_A[\langle x_t- x, \nabla g_{\omega_{j_t}}(x_t)\nabla f_{\nu_{i_t}}(g_S(x_t))\rangle|\mathcal{F}_t]+ \mathbb{E}_A[u_t| \mathcal{F}_t] \\
      =& \|x_t- x\|^2+ \eta_t^2 \mathbb{E}_A [\|\nabla g_{\omega_{j_t}}(x_t)\nabla f_{\nu_{i_t}}(y_{t+1})\|^2| \mathcal{F}_t]- 2\eta_t\langle x_t- x, \nabla F_S(x_t)\rangle+ \mathbb{E}_A[u_t| \mathcal{F}_t] \\
      \leq& \|x_t- x\|^2+ \eta_t^2 \mathbb{E}_A [\|\nabla g_{\omega_{j_t}}(x_t)\nabla f_{\nu_{i_t}}(y_{t+1})\|^2| \mathcal{F}_t]- 2\eta_t(F_S(x_t)- F_S(x)+ \frac{\sigma}{2}\|x_t- x\|^2) \\
      &+ \mathbb{E}_A[u_t| \mathcal{F}_t], \\
      \numberthis
    \end{align*}
  where the last inequality comes from the strong convexity of $F_S$. Note that from Cauchy-Schwartz inequality, Young's inequality, Assumption \ref{assum:1} \ref{assum:1b} and \ref{assum:2} \ref{part2} we have
    \begin{align*} \label{eq:opt_sconvex:3}
      u_t \leq& 2\eta_t\|x_t- x\|\|\nabla g_{\omega_{j_t}}(x_t)\|\|\nabla f_{\nu_{i_t}}(g_S(x_t))- \nabla f_{\nu_{i_t}}(y_{t+1})\| \\
      \leq& 2C_f\eta_t\|x_t- x\|\|\nabla g_{\omega_{j_t}}(x_t)\|\|g_S(x_t)- y_{t+1}\| \\
      \leq& 2C_f\eta_t\left(\frac{\|x_t- x\|^2\|\nabla g_{\omega_{j_t}}(x_t)\|^2}{2\gamma}+ \frac{\gamma}{2}\|g_S(x_t)- y_{t+1}\|^2\right) \\
      \leq& \frac{C_fL_g^2\eta_t}{\gamma}\|x_t- x\|^2+ \gamma C_f\eta_t\|g_S(x_t)- y_{t+1}\|^2
      \numberthis
    \end{align*}
  for any $\gamma> 0$. Substituting (\ref{eq:opt_sconvex:3}) into (\ref{eq:opt_sconvex:2}), we get
    \begin{align*}
      \mathbb{E}_A[\|x_{t+1}- x\|^2| \mathcal{F}_t]\leq& \left(1+ \frac{C_fL_g^2\eta_t}{\gamma}- \sigma\eta_t\right)\|x_t- x\|^2+ \eta_t^2 \mathbb{E}_A [\|\nabla g_{\omega_{j_t}}(x_t)\nabla f_{\nu_{i_t}}(y_{t+1})\|^2| \mathcal{F}_t] \\
      &- 2\eta_t(F_S(x_t)- F_S(x))+  \gamma C_f\eta_tE[\|g_S(x_t)- y_{t+1}\|^2| \mathcal{F}_t]. \\
    \end{align*}
  Setting $\gamma = \frac{2C_fL_g^2}{\sigma}$, we have
    \begin{align*}
      \mathbb{E}_A[\|x_{t+1}- x\|^2| \mathcal{F}_t]\leq& (1- \frac{\sigma\eta_t}{2})\|x_t- x\|^2+ \eta_t^2 \mathbb{E}_A [\|\nabla g_{\omega_{j_t}}(x_t)\nabla f_{\nu_{i_t}}(y_{t+1})\|^2| \mathcal{F}_t] \\
      &- 2\eta_t(F_S(x_t)- F_S(x))+ 2C_f^2L_g^2\frac{\eta_t}{\sigma}\mathbb{E}_A[\|g_S(x_t)- y_{t+1}\|^2| \mathcal{F}_t]. \\
    \end{align*}
  Then we complete the proof.
\end{proof}

\subsection{Generalization}

\begin{proof}[Proof of Theorem \ref{thm:gen_sconvex}]
  We first present the proof for the SCGD update.
  From the stability results \eqref{eq:stabilitysrtconv}, \eqref{beq:stabilitysrtconv} and \eqref{finalstabstr} we get
    \begin{align*}
        &\mathbb{E}_A[\|x_{t}- x_{t}^{k, \nu}\|]+ 4\mathbb{E}_A[\|x_{t}- x_{t}^{l, \omega}\|] \\
        \leq& 40C_fL_g\eta\sup_{S}\sum_{j=0}^{t-1}(1-\eta \frac{L\sigma}{L+ \sigma})^{t-j- 1}\bigl(\mathbb{E}_{A}[\| y_{j+1}-g_S(x_j)\|^2 ] \bigr)^{1\over 2}+ 20L_f\sqrt{ C_g\frac{L+\sigma}{L \sigma}}\sqrt{\eta} \\
        & +2L_fL_g\sqrt{\frac{L+\sigma}{L\sigma}}\sqrt{\frac{\eta}{n}}
        +\frac{4L_gL_f(L+\sigma)}{ nL\sigma}+ 8L_fL_g\sqrt{\frac{L+\sigma}{L\sigma}}\sqrt{\frac{\eta}{m}}+\frac{48L_gL_f(L+\sigma)}{m L\sigma}.
    \end{align*}
    Plugging \eqref{applymengdi2} into the above inequality, we get
  \begin{align*}
    &\mathbb{E}_A[\|x_{t}- x_{t}^{k, \nu}\|]+ 4\mathbb{E}_A[\|x_{t}- x_{t}^{l, \omega}\|] \\
    \leq& 40C_f L_g \frac{\sqrt{L_g}L_gL_f(L+ \sigma)}{L\sigma}\frac{\eta}{\beta}+ 40C_fL_g \frac{\sqrt{2V_g}(L+ \sigma)}{L\sigma}\sqrt{\beta}\\
  &+ 40C_fL_g(\frac{c}{e})^{\frac{c}{2}}\frac{D_y(L+ \sigma)}{L\sigma} t^{-\frac{c}{2}}\beta^{-\frac{c}{2}}+ 20L_f\sqrt{ C_g\frac{L+\sigma}{L \sigma}}\sqrt{\eta}+2L_fL_g\sqrt{\frac{L+\sigma}{L\sigma}}\sqrt{\frac{\eta}{n}} \\
  &+ \frac{4L_gL_f(L+\sigma)}{ nL\sigma}+ 8L_fL_g\sqrt{\frac{L+\sigma}{L\sigma}}\sqrt{\frac{\eta}{m}}+\frac{48L_gL_f(L+\sigma)}{m L\sigma}.
  \end{align*}
  Using Theorem \ref{thm:1}, we have
    \begin{align*} \label{eq:opt_sconvex:7}
      &\mathbb{E}_{S, A} \left[F(x_t)- F_S(x_t)\right] \\
      \leq& 40C_f  \frac{\sqrt{L_g}L_g^3L_f^2(L+ \sigma)}{L\sigma}\frac{\eta}{\beta}+ 40C_fL_g^2L_f \frac{\sqrt{2V_g}(L+ \sigma)}{L\sigma}\sqrt{\beta}\\
      &+ 40C_fL_g^2L_f(\frac{c}{e})^{\frac{c}{2}}\frac{D_y(L+ \sigma)}{L\sigma} t^{-\frac{c}{2}}\beta^{-\frac{c}{2}}+ 20L_f^2L_g\sqrt{ C_g\frac{L+\sigma}{L \sigma}}\sqrt{\eta}+2L_f^2L_g^2\sqrt{\frac{L+\sigma}{L\sigma}}\sqrt{\frac{\eta}{n}} \\
      &+ \frac{4L_g^2L_f^2(L+\sigma)}{ nL\sigma}+ 8L_f^2L_g^2\sqrt{\frac{L+\sigma}{L\sigma}}\sqrt{\frac{\eta}{m}}+\frac{48L_g^2L_f^2(L+\sigma)}{m L\sigma}+L_f\sqrt{\frac{\mathbb{E}_{S, A}[\mathrm{Var}_\omega(g_\omega(x_t))]}{m}}.
        \numberthis
    \end{align*}
    From \eqref{eq:opt_sconvex:5} we get
      \begin{align*} \label{eq:opt_sconvex:8}
        &\sum_{t= 1}^{T}\left( 1- \frac{\sigma\eta}{2}\right)^{T- t}\mathbb{E}_{S, A}[F_S(x_t)- F_S(x_*^S)] \\
      \leq& \left(\frac{2c}{e\sigma}\right)^cD_x \eta^{-c- 1}T^{-c}+ \frac{2L_fL_g}{\sigma}+ \frac{C_f^2L_gD_y}{\sigma}\left( \frac{c}{e}\right)^c \beta^{-c} \sum_{t= 1}^{T} \left( 1- \frac{\sigma\eta}{2}\right)^{T- t}t^{-c} \\
      &+ \frac{4C_f^2L_gV_g}{\sigma^2} \frac{\beta}{\eta}+ \frac{2C_f^2L_fL_g^3}{\sigma^2} \frac{\eta}{\beta^2}.
      \numberthis
      \end{align*}
  Multiplying both sides of \eqref{eq:opt_sconvex:7} with \(\left( 1- \frac{\sigma\eta}{2}\right)^{T- t}\), telescoping from \(t= 1, \ldots, T\), then adding the result with \eqref{eq:opt_sconvex:8}, and using the fact $F_S(x_*^S)\leq F_S(x_*)$, we get
  \begin{align*}
      &\sum_{t= 1}^{T}\left( 1- \frac{\sigma\eta}{2}\right)^{T- t}\mathbb{E}_{S, A}[F(x_t)- F(x_*)] \\
      \leq& 40C_f  \frac{\sqrt{L_g}L_g^3L_f^2(L+ \sigma)}{L\sigma}\frac{\eta}{\beta} \sum_{t= 1}^{T}\left( 1- \frac{\sigma\eta}{2}\right)^{T- t}+  40C_fL_g^2L_f \frac{\sqrt{2V_g}(L+ \sigma)}{L\sigma}\sqrt{\beta} \sum_{t= 1}^{T}\left( 1- \frac{\sigma\eta}{2}\right)^{T- t} \\
      &+ 40C_fL_g^2L_f(\frac{c}{e})^{\frac{c}{2}}\frac{D_y(L+ \sigma)}{L\sigma}\beta^{-\frac{c}{2}} \sum_{t= 1}^{T}\left( 1- \frac{\sigma\eta}{2}\right)^{T- t} t^{-\frac{c}{2}}+ \frac{4L_f^2L_g^2(L+ \sigma)}{L\sigma n} \sum_{t= 1}^{T}\left( 1- \frac{\sigma\eta}{2}\right)^{T- t} \\
      &+ 20L_f^2L_g\sqrt{ C_g\frac{L+\sigma}{L \sigma}}\sqrt{\eta} \sum_{t= 1}^{T}\left( 1- \frac{\sigma\eta}{2}\right)^{T- t}+ 2L_f^2L_g^2\sqrt{\frac{L+\sigma}{L\sigma}}\sqrt{\frac{\eta}{n}}\sum_{t= 1}^{T}\left( 1- \frac{\sigma\eta}{2}\right)^{T- t} \\
      &+ \frac{48L_g^2L_f^2(L+ \sigma)}{L\sigma m}\sum_{t= 1}^{T}\left( 1- \frac{\sigma\eta}{2}\right)^{T- t}+ 8L_f^2L_g^2\sqrt{\frac{L+\sigma}{L\sigma}}\sqrt{\frac{\eta}{m}} \sum_{t= 1}^{T}\left( 1- \frac{\sigma\eta}{2}\right)^{T- t} \\
      &+ L_f\sum_{t= 1}^{T}\left( 1- \frac{\sigma\eta}{2}\right)^{T- t} \sqrt{\frac{\mathbb{E}_{S, A}[\mathrm{Var}_\omega(g_\omega(x_t))]}{m}}+ \left(\frac{2c}{e\sigma}\right)^cD_x \eta^{-c- 1}T^{-c}+ \frac{2L_fL_g}{\sigma} \\
    &+ \frac{C_f^2L_gD_y}{\sigma}\left( \frac{c}{e}\right)^c \beta^{-c} \sum_{t= 1}^{T} \left( 1- \frac{\sigma\eta}{2}\right)^{T- t}t^{-c}+ \frac{4C_f^2L_gV_g}{\sigma^2} \frac{\beta}{\eta}+ \frac{2C_f^2L_fL_g^3}{\sigma^2} \frac{\eta}{\beta^2}.
  \end{align*}
  Dividing both sides of the above inequality by \(\sum_{t= 1}^{T}\left( 1- \frac{\sigma\eta}{2}\right)^{T- t}\),
  % setting \(\eta= c_1T^{-a}\) and \(\beta= c_2T^{-b}\) with \(a, b\in (0, 1]\) and \(c_1, c_2> 0\) are constants dependent on the problem and will be specified later.
  and setting \(\eta= T^{-a}\) and \(\beta= T^{-b}\) with \(a, b\in (0, 1]\), then from the choice of \(A(S)\) and convexity of \(F\) and Lemma \ref{lem:weighted_avg}, noting that \(\sum_{t= 1}^{T} (1- \frac{\sigma\eta}{2})^{T- t}= \frac{1- (1- \frac{\sigma\eta}{2})^{T- 1}}{1- (1- \frac{\sigma\eta}{2})}\geq \frac{1}{\sigma\eta}\) for \((\eta (T- 1))^{-1}\leq \frac{\sigma}{2}\), we get
  \begin{align*}
      &\mathbb{E}_{S, A}[F(A(S))- F(x_*)] \\
        \leq& 40C_f  \frac{\sqrt{L_g}L_g^3L_f^2(L+ \sigma)}{L\sigma} T^{b- a}+ 40C_fL_g^2L_f \frac{\sqrt{2V_g}(L+ \sigma)}{L\sigma} T^{-\frac{b}{2}}+ \frac{4L_g^2L_f^2(L+\sigma)}{ nL\sigma} \\
        &+ 40C_fL_g^2L_f(\frac{c}{e})^{\frac{c}{2}}\frac{D_y(L+ \sigma)}{L\sigma} T^{\frac{bc}{2}- 1} \sum_{t= 1}^{T} t^{-\frac{c}{2}}+ 2L_f^2L_g^2\sqrt{\frac{L+\sigma}{L\sigma}}\frac{1}{\sqrt{n}} T^{-\frac{a}{2}} \\
        &+ 20L_f^2L_g\sqrt{ C_g\frac{L+\sigma}{L \sigma}} T^{-\frac{a}{2}}+ 8L_f^2L_g^2\sqrt{\frac{L+\sigma}{L\sigma}}\frac{1}{\sqrt{m}} T^{-\frac{a}{2}}+\frac{48L_g^2L_f^2(L+\sigma)}{m L\sigma} \\
      &+ L_f\left( \sum_{t= 1}^T\left( 1- \frac{\sigma\eta}{2}\right)^{T- t}\sqrt{\frac{\mathbb{E}_{S, A}[\mathrm{Var}_\omega(g_\omega(x_t))]}{m}}\right) / \left( \sum_{t= 1}^{T} \left( 1- \frac{\sigma\eta}{2}\right)^{T- t}\right) \\
        &+ (\frac{2c}{e\sigma})^{c-1}D_x T^{-c(1- a)}+ 2L_fL_g T^{-a}+ \frac{C_f^2L_gD_y}{\sigma}\left( \frac{c}{e}\right)^c T^{bc-1} \sum_{t= 1}^{T} t^{-c}+ \frac{4C_f^2L_gV_g}{\sigma} T^{-b} \\
        &+ \frac{2C_f^2L_fL_g^3}{\sigma} T^{2b- 2a}.
  \end{align*}
  Noting that \(\sum_{t= 1}^{T} t^{-z}= \mathcal{O}(T^{1- z})\) for \(z\in (-1, 0)\cup (-\infty, -1)\) and \(\sum_{t= 1}^{T} t^{-1}= \mathcal{O}(\log T)\), we have
    \begin{align*}
        &\mathbb{E}_{S, A}\Big[F(A(S)) - F(x_*) \Big] \\
      =& \mathcal{O}\left(T^{b- a}+ T^{-\frac{b}{2}}+ T^{\frac{c}{2}(b- 1)} (\log T)^{\mathbb{I}_{c= 2}}+ n^{-1}+ n^{-\frac{1}{2}}T^{-\frac{a}{2}}+ T^{-\frac{a}{2}}+ m^{-\frac{1}{2}}T^{-\frac{a}{2}}\right. \\
    &\left.+ m^{-1}+ m^{-\frac{1}{2}}+ T^{c(a- 1)}+ T^{-a}+ T^{c(b- 1)} (\log T)^{\mathbb{I}_{c= 1}}+ T^{-b}+ T^{2b- 2a}\right).
    \end{align*}
    Since \(a, b\in (0, 1]\), setting \(c= 3\), the dominating terms are
    \begin{equation*}
        \mathcal{O}(T^{b- a}), \quad \mathcal{O}(T^{-\frac{b}{2}}), \quad \mathcal{O}(T^{\frac{3}{2}(b- 1)}), \quad \mathcal{O}(T^{-\frac{a}{2}}), \quad \mathcal{O}(T^{3(a- 1)}).
    \end{equation*}
    Setting $a= \frac{9}{10}$ and $b= \frac{3}{5}$ yields
    \begin{equation*}
        \mathbb{E}_{S, A}\Big[F(A(S)) - F(x_*) \Big]= \mathcal{O}(T^{-\frac{3}{10}}).
    \end{equation*}
    Setting $T= \mathcal{O}(\max\{n^{\frac{5}{3}}, m^{\frac{5}{3}}\})$ yields the following bound
    \begin{equation*}
        \mathbb{E}_{S, A}\Big[F(A(S)) - F(x_*) \Big]= \mathcal{O}(\frac{1}{\sqrt{n}}+ \frac{1}{\sqrt{m}}).
    \end{equation*}
    % To make \(\eta, \beta\) satisfy the inequalities in Lemma \ref{lem:sconv_bounded_xy}, we can set \(c_1, c_2\) as
    % \begin{equation*}
    %     c_1= \min\left\{\frac{c_2}{2\sigma}, \frac{c_2}{2C_fL_g}, \frac{1}{4L}, \frac{\sigma D}{48}, \frac{\sigma\sqrt{D}c_2}{28L\sqrt{L_g}}\right\}, \quad c_2= \frac{\sigma^2 D}{384C_f^2L_gV_g}
    % \end{equation*}
    Then we get the desired result for the SCGD update. Next we present the proof for the SCSC update.
  With the same derivation as the SCGD case, we get
  \begin{align*}
    &\mathbb{E}_A[\|x_{t}- x_{t}^{k, \nu}\|]+ 4\mathbb{E}_A[\|x_{t}- x_{t}^{l, \omega}\|]
    \leq 40C_f L_g \frac{\sqrt{L_g}L_gL_f(L+ \sigma)}{L\sigma}\frac{\eta}{\sqrt{\beta}}  \\ &  + 40C_fL_g \frac{\sqrt{2V_g}(L+ \sigma)}{L\sigma}\sqrt{\beta}+ 40C_fL_g(\frac{c}{e})^{\frac{c}{2}}\frac{D_y(L+ \sigma)}{L\sigma} t^{-\frac{c}{2}}\beta^{-\frac{c}{2}}\\
  &+ 20L_f\sqrt{ C_g\frac{L+\sigma}{L \sigma}}\sqrt{\eta}+2L_fL_g\sqrt{\frac{L+\sigma}{L\sigma}}\sqrt{\frac{\eta}{n}}+ \frac{4L_gL_f(L+\sigma)}{ nL\sigma}+ 8L_fL_g\sqrt{\frac{L+\sigma}{L\sigma}}\sqrt{\frac{\eta}{m}} \\
  &+\frac{48L_gL_f(L+\sigma)}{m L\sigma}.
  \end{align*}
  Using Theorem \ref{thm:1}, we have
    \begin{align*} \label{eq:opt_sconvex:21}
      &\mathbb{E}_{S, A} \left[F(x_t)- F_S(x_t)\right] 
      \leq  40C_f  \frac{\sqrt{L_g}L_g^3L_f^2(L+ \sigma)}{L\sigma}\frac{\eta}{\sqrt{\beta}}+ 40C_fL_g^2L_f \frac{\sqrt{2V_g}(L+ \sigma)}{L\sigma}\sqrt{\beta} \\
      &+ 40C_fL_g^2L_f(\frac{c}{e})^{\frac{c}{2}}\frac{D_y(L+ \sigma)}{L\sigma} t^{-\frac{c}{2}}\beta^{-\frac{c}{2}}+ 20L_f^2L_g\sqrt{ C_g\frac{L+\sigma}{L \sigma}}\sqrt{\eta} +2L_f^2L_g^2\sqrt{\frac{L+\sigma}{L\sigma}}\sqrt{\frac{\eta}{n}} \\
      &+ \frac{4L_g^2L_f^2(L+\sigma)}{ nL\sigma} + 8L_f^2L_g^2\sqrt{\frac{L+\sigma}{L\sigma}}\sqrt{\frac{\eta}{m}}+\frac{48L_g^2L_f^2(L+\sigma)}{m L\sigma} +L_f\sqrt{\frac{\mathbb{E}_{S, A}[\mathrm{Var}_\omega(g_\omega(x_t))]}{m}}.
        \numberthis
    \end{align*}
    From \eqref{eq:opt_sconvex:11} we get
      \begin{align*} \label{eq:opt_sconvex:22}
        &\sum_{t= 1}^{T}\left( 1- \frac{\sigma\eta}{2}\right)^{T- t}\mathbb{E}_{S, A}[F_S(x_t)- F_S(x_*^S)] \\
      \leq& \left(\frac{2c}{e\sigma}\right)^cD_x \eta^{-c- 1}T^{-c}+ \frac{2L_fL_g}{\sigma}+ \frac{C_f^2L_gD_y}{\sigma}\left( \frac{c}{e}\right)^c \beta^{-c} \sum_{t= 1}^{T} \left( 1- \frac{\sigma\eta}{2}\right)^{T- t}t^{-c} \\
      &+ \frac{4C_f^2L_gV_g}{\sigma^2} \frac{\beta}{\eta}+ \frac{2C_f^2L_fL_g^3}{\sigma^2} \frac{\eta}{\beta}.
      \numberthis
      \end{align*}
  Multiplying both sides of \eqref{eq:opt_sconvex:21} with \(\left( 1- \frac{\sigma\eta}{2}\right)^{T- t}\), telescoping from \(t= 1, \ldots, T\), then adding the result with \eqref{eq:opt_sconvex:22}, and using the fact $F_S(x_*^S)\leq F_S(x_*)$, we get
  \begin{align*}
      &\sum_{t= 1}^{T}\left( 1- \frac{\sigma\eta}{2}\right)^{T- t}\mathbb{E}_{S, A}[F(x_t)- F(x_*)] \\
      \leq& 40C_f  \frac{\sqrt{L_g}L_g^3L_f^2(L+ \sigma)}{L\sigma}\frac{\eta}{\sqrt{\beta}} \sum_{t= 1}^{T}\left( 1- \frac{\sigma\eta}{2}\right)^{T- t}+  40C_fL_g^2L_f \frac{\sqrt{2V_g}(L+ \sigma)}{L\sigma}\sqrt{\beta} \sum_{t= 1}^{T}\left( 1- \frac{\sigma\eta}{2}\right)^{T- t} \\
      &+ 40C_fL_g^2L_f(\frac{c}{e})^{\frac{c}{2}}\frac{D_y(L+ \sigma)}{L\sigma}\beta^{-\frac{c}{2}} \sum_{t= 1}^{T}\left( 1- \frac{\sigma\eta}{2}\right)^{T- t} t^{-\frac{c}{2}}+ \frac{4L_f^2L_g^2(L+ \sigma)}{L\sigma n} \sum_{t= 1}^{T}\left( 1- \frac{\sigma\eta}{2}\right)^{T- t} \\
      &+ 20L_f^2L_g\sqrt{ C_g\frac{L+\sigma}{L \sigma}}\sqrt{\eta} \sum_{t= 1}^{T}\left( 1- \frac{\sigma\eta}{2}\right)^{T- t}+2L_f^2L_g^2\sqrt{\frac{L+\sigma}{L\sigma}}\sqrt{\frac{\eta}{n}}\sum_{t= 1}^{T}\left( 1- \frac{\sigma\eta}{2}\right)^{T- t} \\
      &+ \frac{48L_g^2L_f^2(L+ \sigma)}{L\sigma m}\sum_{t= 1}^{T}\left( 1- \frac{\sigma\eta}{2}\right)^{T- t}+ 8L_f^2L_g^2\sqrt{\frac{L+\sigma}{L\sigma}}\sqrt{\frac{\eta}{m}} \sum_{t= 1}^{T}\left( 1- \frac{\sigma\eta}{2}\right)^{T- t} \\
      &+ L_f\sum_{t= 1}^{T}\left( 1- \frac{\sigma\eta}{2}\right)^{T- t} \sqrt{\frac{\mathbb{E}_{S, A}[\mathrm{Var}_\omega(g_\omega(x_t))]}{m}}+ \left(\frac{2c}{e\sigma}\right)^cD_x \eta^{-c- 1}T^{-c}+ \frac{2L_fL_g}{\sigma} \\
    &+ \frac{C_f^2L_gD_y}{\sigma}\left( \frac{c}{e}\right)^c \beta^{-c} \sum_{t= 1}^{T} \left( 1- \frac{\sigma\eta}{2}\right)^{T- t}t^{-c}+ \frac{4C_f^2L_gV_g}{\sigma^2} \frac{\beta}{\eta}+ \frac{2C_f^2L_fL_g^3}{\sigma^2} \frac{\eta}{\beta}.
  \end{align*}
  Dividing both sides of the above inequality by \(\sum_{t= 1}^{T}\left( 1- \frac{\sigma\eta}{2}\right)^{T- t}\),
  % setting \(\eta= c_1T^{-a}\) and \(\beta= c_2T^{-b}\) with \(a, b\in (0, 1]\) and \(c_1, c_2> 0\) are constants dependent on the problem and will be specified later.
  and setting \(\eta= T^{-a}\) and \(\beta= T^{-b}\) with \(a, b\in (0, 1]\), then from the choice of \(A(S)\) and convexity of \(F\) and Lemma \ref{lem:weighted_avg}, noting that \(\sum_{t= 1}^{T} (1- \frac{\sigma\eta}{2})^{T- t}= \frac{1- (1- \frac{\sigma\eta}{2})^{T- 1}}{1- (1- \frac{\sigma\eta}{2})}\geq \frac{1}{\sigma\eta}\) for \((\eta (T- 1))^{-1}\leq \frac{\sigma}{2}\), we get
  \begin{align*}
      &\mathbb{E}_{S, A}[F(A(S))- F(x_*)] \\
        \leq& 40C_f  \frac{\sqrt{L_g}L_g^3L_f^2(L+ \sigma)}{L\sigma} T^{\frac{b}{2}- a}+ 40C_fL_g^2L_f \frac{\sqrt{2V_g}(L+ \sigma)}{L\sigma} T^{-\frac{b}{2}}+ \frac{4L_g^2L_f^2(L+\sigma)}{ nL\sigma} \\
        &+ 40C_fL_g^2L_f(\frac{c}{e})^{\frac{c}{2}}\frac{D_y(L+ \sigma)}{L\sigma} T^{\frac{bc}{2}- 1} \sum_{t= 1}^{T} t^{-\frac{c}{2}}+ 2L_f^2L_g^2\sqrt{\frac{L+\sigma}{L\sigma}}\frac{1}{\sqrt{n}} T^{-\frac{a}{2}} \\
        &+ 20L_f^2L_g\sqrt{ C_g\frac{L+\sigma}{L \sigma}} T^{-\frac{a}{2}}+ 8L_f^2L_g^2\sqrt{\frac{L+\sigma}{L\sigma}}\frac{1}{\sqrt{m}} T^{-\frac{a}{2}}+\frac{48L_g^2L_f^2(L+\sigma)}{m L\sigma} \\
        &+ L_f\left( \sum_{t= 1}^T\left( 1- \frac{\sigma\eta}{2}\right)^{T- t}\sqrt{\frac{\mathbb{E}_{S, A}[\mathrm{Var}_\omega(g_\omega(x_t))]}{m}}\right) / \left( \sum_{t= 1}^{T} \left( 1- \frac{\sigma\eta}{2}\right)^{T- t}\right) \\
        &+ (\frac{2c}{e\sigma})^{c-1}D_x T^{-c(1- a)}+ 2L_fL_g T^{-a}+ \frac{C_f^2L_gD_y}{\sigma}\left( \frac{c}{e}\right)^c T^{bc-1} \sum_{t= 1}^{T} t^{-c}+ \frac{4C_f^2L_gV_g}{\sigma} T^{-b} \\
        &+ \frac{2C_f^2L_fL_g^3}{\sigma} T^{b- 2a}.
  \end{align*}
  Noting that \(\sum_{t= 1}^{T} t^{-z}= \mathcal{O}(T^{1- z})\) for \(z\in (-1, 0)\cup (-\infty, -1)\) and \(\sum_{t= 1}^{T} t^{-1}= \mathcal{O}(\log T)\), we have
    \begin{align*}
        &\mathbb{E}_{S, A}\Big[F(A(S)) - F(x_*) \Big] \\
      =& \mathcal{O}\left(T^{\frac{b}{2}- a}+ T^{-\frac{b}{2}}+ T^{\frac{c}{2}(b- 1)} (\log T)^{\mathbb{I}_{c= 2}}+ n^{-1}+ n^{-\frac{1}{2}}T^{-\frac{a}{2}}+ T^{-\frac{a}{2}}+ m^{-\frac{1}{2}}T^{-\frac{a}{2}}\right. \\
    &\left.+ m^{-1}+ m^{-\frac{1}{2}}+ T^{c(a- 1)}+ T^{-a}+ T^{c(b- 1)} (\log T)^{\mathbb{I}_{c= 1}}+ T^{-b}+ T^{b- 2a}\right).
    \end{align*}
    Since \(a, b\in (0, 1]\), setting \(c= 6\), the dominating terms are
    \begin{equation*}
        \mathcal{O}(T^{\frac{b}{2}- a}), \quad \mathcal{O}(T^{-\frac{b}{2}}), \quad \mathcal{O}(T^{3(b- 1)}), \quad \mathcal{O}(T^{-\frac{a}{2}}), \quad \mathcal{O}(T^{6(a- 1)}).
    \end{equation*}
    Setting $a= b= \frac{6}{7}$ yields
    \begin{equation*}
        \mathbb{E}_{S, A}\Big[F(A(S)) - F(x_*) \Big]= \mathcal{O}(T^{-\frac{3}{7}}).
    \end{equation*}
    Setting $T= \mathcal{O}(\max\{n^{\frac{7}{6}}, m^{\frac{7}{6}}\})$ yields the following bound
    \begin{equation*}
        \mathbb{E}_{S, A}\Big[F(A(S)) - F(x_*) \Big]= \mathcal{O}(\frac{1}{\sqrt{n}}+ \frac{1}{\sqrt{m}}).
    \end{equation*}
    % To make \(\eta, \beta\) satisfy the inequalities in Lemma \ref{lem:sconv_bounded_xy}, we can set \(c_1, c_2\) as
    % \begin{equation*}
    %     c_1= \min\left\{\frac{c_2}{2\sigma}, \frac{\sqrt{c_2}}{2C_fL_g}, \frac{1}{4L}, \frac{\sigma D}{48}, \frac{\sigma\sqrt{D}\sqrt{c_2}}{28L\sqrt{L_g}}\right\}, \quad c_2= \frac{\sigma^2 D}{384C_f^2L_gV_g}
    % \end{equation*}
    Then we get the desired result for the SCSC update. We have completed the proof.
\end{proof}

\end{document}